\documentclass{article}


\newif\ifPublicDefined
\ifcsname Public\endcsname
  \PublicDefinedtrue
\else
  \PublicDefinedfalse
\fi

\usepackage{microtype}
\usepackage{graphicx}
\usepackage{subfigure}
\usepackage{booktabs} 



\usepackage{amsmath}
\usepackage{amssymb}
\usepackage{mathtools}
\usepackage{amsthm}

\theoremstyle{plain}
\newtheorem{theorem}{Theorem}[section]
\newtheorem{proposition}[theorem]{Proposition}
\newtheorem{lemma}[theorem]{Lemma}
\newtheorem{corollary}[theorem]{Corollary}
\newtheorem{example}[theorem]{Example}
\theoremstyle{definition}
\newtheorem{definition}[theorem]{Definition}

\theoremstyle{remark}
\newtheorem{remark}[theorem]{Remark}

\usepackage[utf8]{inputenc} 
\usepackage[T1]{fontenc}    
\usepackage{url}            

\usepackage{amsfonts}       
\usepackage{nicefrac}       
\usepackage{xcolor}         
\usepackage[normalem]{ulem} 
\usepackage{comment}

\usepackage{textcomp}
\usepackage{extpfeil}
\usepackage{float}
\usepackage{yfonts}
\usepackage{csquotes}
\usepackage{lipsum}
\usepackage{fancyhdr}
\usepackage{centernot}

\usepackage{tikz}
\usepackage{circledsteps}
\usepackage{silence}
\usepackage{etoolbox}
\newcommand{\J}{J}
\newcommand{\JKL}{\J_\mathrm{KL}}


\newcommand{\States}{\mathcal{S}}
\newcommand{\Actions}{\mathcal{A}}
\newcommand{\reward}{R}
\newcommand{\Rewards}{\mathcal{R}}
\newcommand{\RLReturns}{\mathcal{G}}
\newcommand{\Policies}{\Pi}
\newcommand{\DistributionsOverSet}[1]{\Delta(#1)}
\newcommand{\TransitionDistribution}{\tau}
\newcommand{\InitStateDistribution}{\mu_0} 
\newcommand{\discount}{\gamma}

\newcommand{\MDP}{\langle \States, \Actions, \TransitionDistribution, \InitStateDistribution, \reward, \discount \rangle}

\newcommand{\MBandit}{\langle \States, \Actions, \InitStateDistribution, \reward \rangle}


\newcommand{\SxA}{{\States{\times}\Actions}}


\newcommand{\V}{V}

\newcommand{\RLReturn}{G}

\newcommand{\Vfor}[1]{\V^{#1}}

\newcommand{\Reg}[2]{\mathrm{Reg}^{#1}\left(#2\right)}

\newcommand{\Regf}[1]{\mathrm{Reg}^{#1}}  

\newcommand{\range}{\mathrm{range}\ }

\newcommand{\policy}{\pi}

\newcommand{\D}[1]{D^{#1}}
\newcommand{\DD}[1]{\widetilde{D}^{#1}}  

\newcommand{\Expect}[2]{\mathbb{E}_{#1}\left[{#2}\right]}
\newcommand{\DKL}[2]{\mathbb{D_{\text{KL}}}\left({#1}||{#2}\right)}

\newcommand{\R}{\mathcal{R}}
\newcommand{\Reals}{\mathbb{R}}
\newcommand{\Trajectory}{\xi}
\newcommand{\PotentialShaping}{\Phi} 
\newcommand{\NormalCone}[2]{N_{#1}(#2)}
\newcommand{\choice}[1]{P_{#1}}

\newcommand{\supp}{\mathrm{supp}\ }
\newcommand{\diag}[1]{\mathrm{diag}\left(#1\right)}
\newcommand{\cone}[1]{\mathrm{cone}\left(#1\right)}
\newcommand{\POS}{\mathcal{P}} 
\DeclareMathOperator*{\argmax}{arg\,max}
\DeclareMathOperator*{\argmin}{arg\,min}
\DeclareMathOperator*{\Span}{span}

\newcommand{\B}[2]{\mathcal{B}_{#1}\left( #2 \right)}
\newcommand{\distD}[1]{d^{#1}}
\newcommand{\distDKL}[1]{d^{#1}_{\mathrm{KL}}}
\newcommand{\KL}{D_{\mathrm{KL}}}
\newcommand{\Ent}{H}
\DeclareMathOperator{\ang}{ang}
\DeclareMathOperator{\proj}{proj}  
\newcommand{\Pro}{P} 

\newcommand{\C}{C}

\newcommand{\mref}{\mathrm{ref}}
\newcommand{\rlhf}{\mathrm{rlhf}} 
\newcommand{\closeness}[1]{\mathcal{C}(#1)}
\newcommand{\unsafe}[1]{\mathcal{U}_{#1}}
\newcommand{\unsafeD}{\mathrm{\mathbf{unsafe}}(\reward, \epsilon, L)}
\newcommand{\unsafeDR}{\mathrm{\mathbf{unsafe}}(\reward, \epsilon, L, \lambda, \omega)}
\newcommand{\unsafeRLHF}[1]{\mathrm{\mathbf{unsafe}}^{\mathrm{RLHF}}\big(\reward, #1, L, \lambda, \DKL{\cdot}{\policy_\mref}\big)}
\newcommand{\safeRLHF}[1]{\mathrm{\mathbf{safe}}^{\mathrm{RLHF}}\big(\reward, #1, L, \lambda, \DKL{\cdot}{\policy_\mref}\big)}
\newcommand{\safeD}{\mathrm{\mathbf{safe}}(\reward, \epsilon, L)}
\newcommand{\safeDR}{\mathrm{\mathbf{safe}}(\reward, \epsilon, L, \lambda, \omega)}

\newcommand{\abs}{\mathrm{abs}}
\newcommand{\vertices}{\mathrm{vertices}}

\newcommand{\reg}{\omega} 

\DeclareMathOperator{\safe}{safe}
\DeclareMathOperator{\unsa}{uns.}
\DeclareMathOperator{\help}{help}
\DeclareMathOperator{\refu}{ref.}

\DeclareMathOperator{\MSE}{MSE}
\DeclareMathOperator{\MAE}{MAE}

\setlength {\marginparwidth }{2cm}


\newcommand{\IfPublic}[1]{
    \ifPublicDefined
    #1
    \fi
}


\newcommand{\nocontentsline}[3]{}
\newcommand{\tocless}[2]{\bgroup\let\addcontentsline=\nocontentsline#1{#2}\egroup}
\newcommand{\toclesslab}[3]{\bgroup\let\addcontentsline=\nocontentsline#1{#2\label{#3}}\egroup}

\makeatletter
\newcommand{\alglinelabel}[1]{%
  \addtocounter{ALC@line}{-2}%
  \refstepcounter{ALC@line}%
  \label{#1}%
  \addtocounter{ALC@line}{1}%
}
\makeatother

%
\usepackage[accepted]{icml2025}

\usepackage[breaklinks]{hyperref}

\usepackage[capitalize,noabbrev]{cleveref}

\crefname{ALC@line}{line}{lines}
\Crefname{ALC@line}{Line}{Lines}

\icmltitlerunning{The Perils of Optimizing Learned Reward Functions}

\begin{document}

\twocolumn[
\icmltitle{The Perils of Optimizing Learned Reward Functions: \texorpdfstring{\\}{ } Low Training Error Does Not Guarantee Low Regret}

\icmlsetsymbol{equal}{*}

\begin{icmlauthorlist}
\icmlauthor{Lukas Fluri}{equal,ethz}
\icmlauthor{Leon Lang}{equal,uva}
\icmlauthor{Alessandro Abate}{oxford}
\icmlauthor{Patrick Forré}{uva}
\icmlauthor{David Krueger}{cambridge}
\icmlauthor{Joar Skalse}{oxford}
\end{icmlauthorlist}

\icmlaffiliation{ethz}{ETH Zurich}
\icmlaffiliation{uva}{University of Amsterdam}
\icmlaffiliation{oxford}{Oxford University}
\icmlaffiliation{cambridge}{University of Cambridge}
\icmlcorrespondingauthor{Lukas Fluri}{lukas.fluri@inf.ethz.ch}
\icmlcorrespondingauthor{Leon Lang}{l.lang@uva.nl}

\icmlkeywords{ICML, Reward learning, RLHF, RL, Safety, Distributional shift, Generalization, Learning Theory}

\vskip 0.3in
]

\printAffiliationsAndNotice{\icmlEqualContribution}

\begin{abstract}
In reinforcement learning, specifying reward functions that capture the intended task can be very challenging. Reward learning aims to address this issue by \emph{learning} the reward function. However, a learned reward model may have a low error on the data distribution, and yet subsequently produce a policy with large regret. We say that such a reward model has an \emph{error-regret mismatch}. The main source of an error-regret mismatch is the distributional shift that commonly occurs during policy optimization. In this paper, we mathematically show that a sufficiently low expected test error of the reward model guarantees low worst-case regret, but that for any \emph{fixed} expected test error, there exist realistic data distributions that allow for error-regret mismatch to occur. We then show that similar problems persist even when using policy regularization techniques, commonly employed in methods such as RLHF. We hope our results stimulate the theoretical and empirical study of improved methods to learn reward models, and better ways to measure their quality reliably.
\end{abstract}

\toclesslab\section{Introduction}{sec:main_introduction}

To solve a sequential decision problem with reinforcement learning (RL), we must first formalize that decision problem using a \emph{reward function} \citep{sutton2018}. However, for complex tasks, reward functions are often hard to specify correctly~\citep{krakovna2020}. To solve this problem, it is increasingly popular to \emph{learn} reward functions with \emph{reward learning algorithms}, instead of specifying the reward functions manually. There are many different reward learning algorithms (e.g. ~\citet{ng2000,tung2018,brown2019brex,palan2019}), with one of the most popular being \emph{reinforcement learning from human feedback} (RLHF) \citep{christiano2017deep,ibarz2018}.

For any learning algorithm, it is a crucial question whether or not that learning algorithm is guaranteed to converge to a ``good'' solution. 
For example, in the case of supervised learning for classification, it can be shown that a learning algorithm that produces a model with a low \emph{empirical error} (i.e., training error) is likely to have a low \emph{expected error} (i.e., test error), given a sufficient amount of training data and assuming that both the training data and the test data is drawn i.i.d.\ from a single stationary distribution \citep{intro_to_CLT}. In the case of supervised learning and standard assumptions, we can therefore be confident that a learning algorithm will converge to a good model, provided that it is given a sufficient amount of training data.

\begin{figure}
    \centering
    \includegraphics[width=\linewidth]{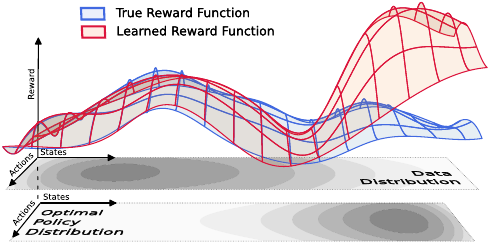}
    \caption{Reward models (red function) are commonly trained by supervised learning to approximate some latent, true reward (blue function).
      Given enough data, one can hope that the reward model is close to the true reward function on average over the training data distribution (upper gray layer) --- the expected \emph{error} is low. 
However, low expected error only guarantees a good approximation to the true reward function in areas with high coverage by the data distribution! 
Optimizing an RL policy to maximize the learned reward model then induces a distribution shift which can lead the policy to exploit uncertainties of the learned reward model in low-probability areas of the transition space (lower gray layer). 
In the worst case, this can lead to high \emph{regret}.
We refer to this phenomenon as \emph{error-regret mismatch}.}
    \label{fig:main_figure}
\end{figure}

Since reward models are also typically learned by supervised learning, we might assume that classical learning-theoretic guarantees carry over.
However, these guarantees only ensure that the reward model is approximately correct \emph{relative to the training distribution}.
But after reward learning, we optimize a policy to maximize the learned reward, which effectively leads to a \emph{distributional shift}.
This raises the worry that the trained policy can exploit regions of the state space with abnormally high learned rewards if those regions have a low data coverage during training.
In this case, we can have reward models that have both a low error on the training distribution and an optimal policy with large regret, a phenomenon we call \emph{error-regret mismatch}.
We visualize this concern in Figure~\ref{fig:main_figure}.

To illustrate this concern, imagine training a chatbot to be helpful, honest, and harmless~\citep{Askell2021}. We know that the chatbot will face various unsafe queries during deployment (e.g. ``how to build a bomb'') and so on such queries we train a reward model to penalize helpful answers and highly reward refusals.

Unfortunately, 
all unsafe prompts 
can be answered in various distinct ``styles'' (e.g., different languages~\citep{verma2025hidden}). 
Consequently, at least one specific harmful answer will likely be very rare in the reward model's training data. 
The learned reward model can then erroneously assign a high reward to this rare, harmful answer without a significant increase in its training error (as this answer is infrequent in training). 
During policy optimization, the policy may exploit this flaw, choosing the harmful answer the reward model mistakenly prefers. 
This can result in a 
harmful chatbot with high true regret, despite the reward model having low error on the training data distribution, an example of error-regret mismatch.
We illustrate this concern in detail in~\Cref{sec:conceptual_example}.

To single out the issue of error-regret mismatch in our theoretical analysis, we take the goals of classical learning theory as a given and show that \emph{they are not enough to ensure low regret}.
More precisely, in probably approximately correct (PAC) learning~\citep{intro_to_CLT} the goal is to derive a sample size that guarantees a certain likelihood (``P'') of an approximately correct (``AC'') model on new data points sampled from the training distribution.
In our results, we assume that we \emph{already have} an approximately correct reward model on a data distribution, and then investigate what we can or can not conclude about the regret of policies trained to maximize the modeled reward. 

Our theoretical analysis shows that guarantees in policy regret are very sensitive to the data distribution used to train the reward model, leading to our notions of \emph{safe} and \emph{unsafe data distributions}.
Moreover, we find evidence that some MDPs are in a certain sense ``too large'' to allow for safe data distributions.
We establish for general MDPs:
\begin{enumerate}
  \item As the error of a learned reward model on a data distribution goes to zero, the worst-case regret of optimizing a policy according to that reward model also goes to zero (\Cref{theorem:small_epsilon_makes_safe,pro:tight_regret_bound_main}).
    \item However, for any $\epsilon > 0$, whenever a data distribution has sufficiently low coverage of some bad policy, it is \emph{unsafe}; in other words, there exists a reward model that achieves an expected error of $\epsilon$ but has a high-regret optimal policy (\Cref{theorem:interpretable_negative_result}), a case of error-regret mismatch.
    \item As a consequence, when an MDP has a large number of independent bad policies, \emph{every} data distribution is unsafe (\Cref{corollary:all_unsafe}). 
    \item More precisely, 
      we derive a set of linear constraints 
      that precisely characterize the safe data distributions for a given MDP (\Cref{theorem:safe_linear_constraints}).
\end{enumerate}
We then investigate the case of \emph{regularized} policy optimization (including KL-regularized policy optimization, which is commonly used in methods such as RLHF).
We derive regularized versions of~\Cref{theorem:small_epsilon_makes_safe,theorem:interpretable_negative_result} in~\Cref{theorem:positive_result_KL_regularized} and~\Cref{thm:putting_negativity_together}.
This shows that regularization alone is no principled solution to error-regret mismatch.

We then develop several generalizations of our results for different types of data sources for reward model training, such as preferences over trajectories and trajectory scoring (\Cref{sec:generalization_negative_results_main}). 
Lastly, motivated by the recent success of large language models~\citep{chatgpt,Google2023b,Anthropic2023}, we provide an analysis for the special case of RLHF 
 in the contextual bandit case where we prove a stronger version (\Cref{corollary:rlhf_negative_results_simpler_main}) of the failure mode already discussed in \Cref{thm:putting_negativity_together} for general MDPs.

\toclesslab\subsection{Related work}{sec:main_related_work}
\textit{Note: We provide a more extensive related work section in \Cref{sec:app_related_work}}

In offline reinforcement learning, we aim to learn low-regret policies for an MDP $\MDP$ where the only information about the reward function $\reward$ (and sometimes transition distribution $\TransitionDistribution$ ~\citep{reb2_wang2022gap,reb5_uehara2021pessimistic}) stems from a dataset $\{(s,a,r)_i\}_{i=1}^n$ sampled from some data distribution $D \in \DistributionsOverSet{\SxA}$. A key research question is understanding which conditions on $D$ allow learning a \emph{near-optimal policy} (i.e., a policy with regret smaller than some $L\in[0,1]$) with an \emph{efficient sample complexity}, where sample-efficient usually means polynomial in $L^{-1}$ and some other parameters of the MDP and $D$. Existing theoretical work primarily falls into two categories, covering both \emph{MDPs}~\citep{reb1_foster2021offline,reb2_wang2022gap,reb3_wang2020statistical,reb4_amortila2020variant,reb5_uehara2021pessimistic,reb6_uehara2021finite} and \emph{contextual bandits}~\citep{nika2024reward,cen2024value}:

\textbf{Lower bound results} prove that various data-coverage conditions are insufficient for sample-efficient offline RL by establishing a lower bound on the number of samples required to achieve low policy regret. In particular, research in this area~\citep{reb1_foster2021offline,reb2_wang2022gap,reb3_wang2020statistical,reb4_amortila2020variant,nika2024reward} identifies adversarial MDPs that satisfy specific data-coverage conditions and yet for which achieving low regret is either computationally intractable due to excessive sample requirements~\citep{reb1_foster2021offline,reb2_wang2022gap,reb3_wang2020statistical,nika2024reward} or fundamentally impossible regardless of sample size~\citep{reb4_amortila2020variant}.

\textbf{Upper bound results}, on the other hand, establish positive guarantees under specific structural assumptions. Research in this category~\citep{reb2_wang2022gap,reb3_wang2020statistical,reb5_uehara2021pessimistic,nika2024reward,cen2024value,reb9_song2024importance} develops algorithms with provable upper-bounds on the number of required samples to achieve low policy regret. This is usually done by making structural assumptions about the MDP, reward learning process, or policy optimization approach.

In reward learning, compared to offline RL, one first learns a \emph{reward model} from a dataset, which is typically sampled via various strategies from a data distribution $D \in \DistributionsOverSet{\SxA}$ (cf.~\Cref{sec:problem_formulation}).
Intuitively, as in offline RL, the quality of a reward model is influenced by two key factors: the dataset size $n$ and the dataset quality, specifically how well the data distribution $D$ \emph{covers} the data space $\SxA$. Indeed, prior work confirms this intuition, with most works (see~\citet{nika2024reward} for a recent example) showing that the regret is dependent on \emph{a)} the inverse dataset size $\frac{1}{n}$, \emph{b)} some measure of the coverage of $D$, and \emph{c)} some structural assumptions of the specific approach. Such structural assumptions may include: realizability of function classes~\citep{reb2_wang2022gap,reb5_uehara2021pessimistic,reb1_foster2021offline,nika2024reward}, linear function approximation~\citep{nika2024reward,cen2024value,reb2_wang2022gap}, and various constraints on reward- or policy functions~\citep{reb3_wang2020statistical,reb5_uehara2021pessimistic,nika2024reward}.

Our paper differs from these works in a key aspect: We explicitly analyze how the reward modeling error is related to the final policy regret, rather than focusing on the number of samples.
This also allows us to study \emph{adversarial guarantees} of low policy regret (given low reward modeling error), whereas prior work considers probabilistic guarantees when sampling from the data distribution. 
The most relevant work studying a similar setup to ours is~\citet{reb9_song2024importance}. Their setup in section 3, combined with their Assumption 4.3, perfectly recovers our safe distribution definition (see \Cref{definition:unsafe_data_distribution}) when applied to the special case of RLHF and when using the mean squared error metric. Their Theorem 4.2 demonstrates that $\mathrm{Regret} \in \mathcal{O}\big(\mathrm{Cov} \cdot \sqrt{\epsilon}\big)$, where $\mathrm{Cov}$ is some measure of coverage and $\epsilon$ the error in the reward function, and where the square root emerges from using the mean squared error during the reward learning step (see \Cref{sec:MSE_instead_of_MAE} for how this relates to our results).

While ~\citet{reb9_song2024importance} focus on RLHF with mean-squared error metric, we provide similar results for general classes of regularized and unregularized policy optimization (for both MDPs and contextual bandits), and show how these regret guarantees automatically generalize to a wide range of different error metrics for different reward learning methods. 
For our initial guarantees (\Cref{theorem:small_epsilon_makes_safe,pro:tight_regret_bound_main,theorem:positive_result_KL_regularized}) we use the coverage condition $\min_{(s,a)}D(s,a) > 0$ and phrase the required reward learning error $\epsilon$ in terms of this coverage. Since we assume that all states of our MDPs are reachable, this is equivalent to a full coverage condition (see Table 1 of ~\citet{reb5_uehara2021pessimistic} for an overview of different coverage conditions). We then show that for fixed $\epsilon$, a too small coverage leads to possibly high regret
(\Cref{theorem:interpretable_negative_result,corollary:rlhf_negative_results_simpler_main,thm:putting_negativity_together}). Finally, we fully generalize our results from \Cref{theorem:small_epsilon_makes_safe,pro:tight_regret_bound_main,theorem:interpretable_negative_result,corollary:all_unsafe} into a single theorem (\Cref{theorem:safe_linear_constraints}) which allows us to determine for \emph{arbitrary} data distributions whether they give rise to worst-case safety for fixed error $\epsilon$ and required regret $L$. To the best of our knowledge, we are the first work to achieve such fine-grained safety results. 

Several approaches have been proposed to address the issue of out-of-distribution robustness in reward learning, such as ensembles of conservative reward models~\citep{coste2023reward}, averaging weights of multiple reward models~\citep{rame2024warm}, iteratively updating training labels~\citep{zhu2024iterative}, on-policy reward learning~\citep{lang2024fine}, and distributionally robust planning~\citep{zhan2023provable}. Recently, ~\citet{kwa2024catastrophic} show that RLHF and Conditioning can be provably safe under fairly strong structural assumptions—such as deterministic transitions, light-tailed reward errors, and independence between true and proxy rewards. Furthermore, ~\citet{laidlaw2024correlated} consider a setting where the learned and true reward functions are positively correlated under a reference policy. They prove that maximizing the proxy reward with a chi-squared divergence penalty yields regret no worse than that of the reference policy. In experiments, they approximate this regularized objective and report favorable results.

\toclesslab\section{Preliminaries}{sec:main_preliminaries}
A \emph{Markov Decision Process} (MDP) is a tuple $\MDP$ where $\States$ is a set of \emph{states}, $\Actions$ is a
set of \emph{actions}, $\TransitionDistribution: \SxA \to \DistributionsOverSet{\States}$ is a \emph{transition function}, $\InitStateDistribution \in \DistributionsOverSet{S}$ is an \emph{initial state distribution},
$\reward: \SxA \to \Reals$ is a \emph{reward function}, and $\discount \in (0,1)$ is a \emph{discount rate}. 
We define the \emph{range} of a reward function $\reward$ as $\range \reward \coloneqq \max_{(s,a) \in \SxA} \reward(s,a) - \min_{(s,a) \in \SxA} \reward(s,a)$.

A \emph{policy} is a function
$\policy: \States \to \DistributionsOverSet{\Actions}$. We denote the set of all policies by $\Policies$. A \emph{trajectory} $\Trajectory = \langle s_0, a_0, s_1, a_1, ...\rangle$ is a possible path in
an MDP. The \emph{return function} $\RLReturn$ gives the cumulative discounted reward of a trajectory, $\RLReturn(\Trajectory) = \sum_{t=0}^\infty \gamma^t \reward(s_t, a_t)$,
and the \emph{evaluation function} $\J$ gives the expected trajectory return given a policy, $\J(\policy) = \Expect{\Trajectory \sim \policy}{\RLReturn(\Trajectory)}$. 
A policy maximizing $\J$ is an \emph{optimal policy}. 
We define the \emph{regret} of a policy $\pi$ with respect to reward function $\reward$ as 

\begin{equation*}
  \Reg{\reward}{\pi} \coloneqq \frac{\max_{\pi' \in \Policies} \J_\reward(\pi') - \J_\reward(\pi)}{\max_{\pi' \in \Policies} \J_\reward(\pi') - \min_{\pi' \in \Policies} \J_\reward(\pi')} \in [0, 1].
\end{equation*}

Here, $\J_\reward$ is the policy evaluation function for $\reward$.
We choose the regret $\text{Reg}^\reward$ as our main performance metric of policies in this paper, which is justified by the fact that it is a normalized version of the policy evaluation $J_R$.

In this paper, we assume that $\States$ and $\Actions$ are finite, and that all states are reachable under $\TransitionDistribution$ and $\InitStateDistribution$. 
We also assume that $\max \J_\reward - \min \J_\reward \neq 0$ (since the reward function would otherwise be trivial).
Note that this implies that $\range \reward > 0$, and that $\Reg{\reward}{\policy}$ is well-defined.

The \emph{state-action occupancy measure} is a function $\eta: \Policies \to \Reals^{|\SxA|}$ mapping each policy $\policy \in \Pi$ to the corresponding "state-action occupancy measure", describing the discounted frequency that each state-action tuple is visited by a policy. Formally, $\eta(\policy)(s,a) = \eta^\policy(s,a) = \sum_{t=0}^\infty \gamma^t \cdot P(s_t = s, a_t = a\ |\ \xi \sim \pi)$. 
Note that by writing the reward function $\reward$ as a vector $\vec{\reward} \in \Reals^{|\SxA|}$, we can split $\J$ into a function that is linear in $\reward$: $\J(\policy) = \eta^\policy \cdot \vec{\reward}$. By normalizing a state-action occupancy measure $\eta^\policy$ we obtain a \emph{policy-induced distribution} $\D{\policy} \coloneqq (1- \discount) \cdot \eta^{\policy}$.

\toclesslab\subsection{Problem formalization of RL with reward learning}{sec:problem_formulation}
In RL with reward learning, we assume that we have an MDP $\MDP$ where the reward function $\reward$ is unknown. 
We may also assume that $\TransitionDistribution$ and $\InitStateDistribution$ are unknown, as long as we can sample from them (though $\States$, $\Actions$, and $\discount$ must generally be known, at least implicitly). 
We then first learn a reward model $\hat\reward$ that approximates the true reward $\reward$ and then optimize a policy $\hat\policy$ to maximize $\hat\reward$. The aim of this two-step procedure is for $\hat\policy$ to achieve low regret under the true reward function $\reward$.
We now formalize these aspects in detail for our theoretical analysis, with a visualization provided in Figure~\ref{fig:safe_distributions}:

\paragraph{Reward learning}
We first learn a reward model $\hat{\reward}$ from data.
    There are many possible data sources for reward learning, like demonstrations~\citep{ng2000}, preferences over trajectories~\citep{christiano2017deep}, or even the initial environment state~\citep{2019implicit}; a taxonomy can be found in~\citep{jeon2020reward}.
    Since we are concerned with problems that remain even when the reward model is already \emph{approximately correct}, we abstract away the data sources and training procedures and assume that we learn a reward model 
    $\hat{\reward}$ which satisfies
    \begin{equation}\label{eq:closeness_metric}
        \Expect{(s, a) \sim D}{\frac{|\hat{\reward}(s, a) - \reward(s, a)|}{\range \reward}} \leq\ \epsilon
    \end{equation}
    for some $\epsilon > 0$ and stationary distribution $D$ over transitions $\SxA$. Note that this is the true expectation under $D$, rather than an estimate of this expectation based on some finite sample.
    We divide by $\range \reward$, since the absolute error $\epsilon$ is only meaningful relative to the overall scale of the reward $\reward$.

    To be clear, most reward learning algorithms \emph{cannot guarantee} a bound as in Equation~\eqref{eq:closeness_metric} since most realistic data sources do not determine the true reward function, even for infinite data~\citep{skalse2023invariance}.
  Instead, we choose \Cref{eq:closeness_metric} because it serves as an \emph{upper bound} to many common reward learning training objectives (see \Cref{sec:general_negative_result}).
  Thus, when we show in later sections that high regret is possible even when this inequality holds, then this problem can be expected to generalize to other data sources.
  We make this generalization precise for some data sources in~\Cref{sec:generalization_negative_results_main}.
  In particular, we will show that Equation~\eqref{eq:closeness_metric} implies a low cross-entropy error between the choice distributions of the true reward function and the reward model, as is commonly used for RLHF, e.g. in the context of language models~\citep{ziegler2019fine}.

\paragraph{Policy optimization}
Given $\hat{\reward}$, we then learn a policy $\hat{\pi}$ by solving the MDP $\langle \States,\Actions,\TransitionDistribution,\InitStateDistribution,\hat{\reward},\discount\rangle$. In the most straightforward case, we do this by simply finding a policy that is optimal according to $\hat{R}$. However, 
it is also common to perform \emph{regularized optimization}.
    In that case, we make use of
    an additional regularization function $\reg: \Policies \to \Reals$, with $\reg(\policy) \ge 0$ for all $\policy \in \Policies$. Given $\hat{\reward}$, a regularization function $\reg$, and a regularization weight $\lambda \in [0, \infty)$, we say that $\hat{\policy}$ is $(\lambda,\reg)$-optimal if
    \begin{equation}\label{eq:optimality_policy}
      \hat{\policy} \in \argmax_{\policy} \J_{\hat{\reward}}(\policy) - \lambda \reg(\policy).
    \end{equation}
    Typically, $\lambda$ punishes large deviations from some reference policy $\policy_\mref$, e.g. with the regularization function given by the KL-divergence $\reg(\pi) = \DKL{\pi}{\pi_{\mref}}$.
    $\policy_\mref$ may also be used to collect training data for the reward learning algorithm, in which case we may assume $D = D^{\pi_\mref}$ in Equation~\eqref{eq:closeness_metric}. 
    However, most of our results do not depend on these specific instantiations.
\pgfkeys{/csteps/inner color=white}
\pgfkeys{/csteps/fill color=black}
\begin{figure}
    \centering
    \includegraphics[width=\linewidth]{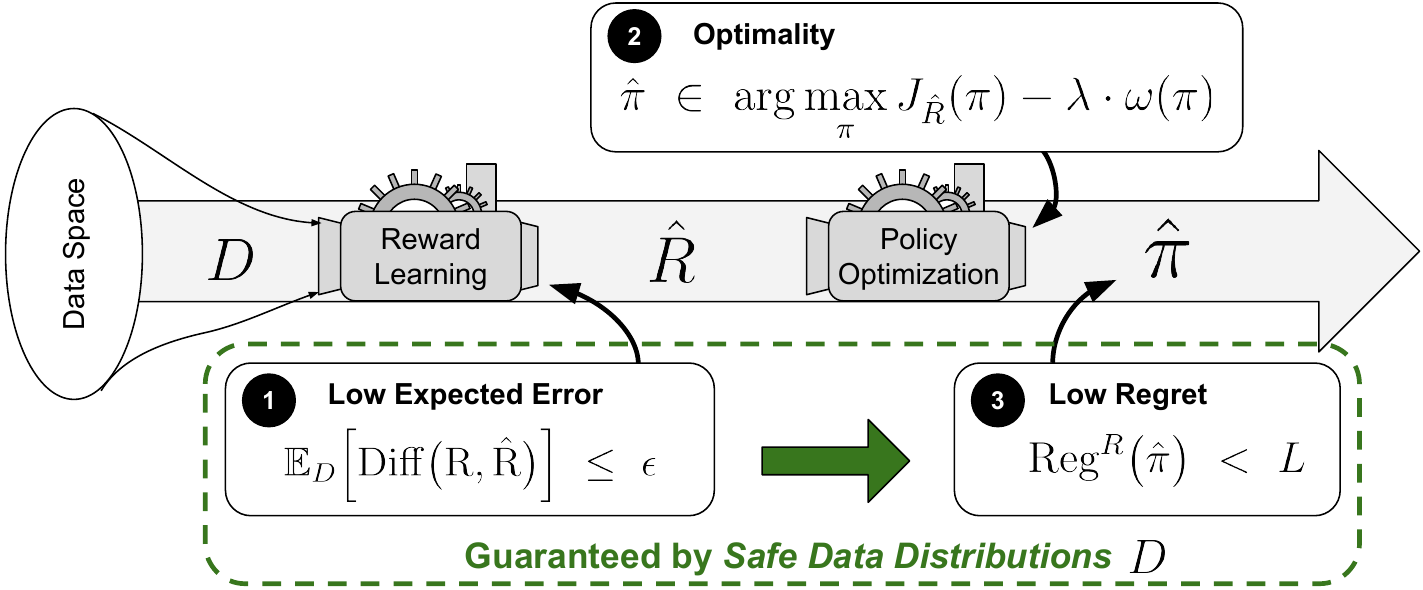}
    \caption{An abstract model of the classical reward learning pipeline. 
      A reward model $\hat\reward$ is trained to approximate the true reward function $\reward$ under some data distribution $D$. 
    The training process converges when $\hat\reward$ is similar to $\reward$ in expectation (see \Circled{1}). In the second step, a policy $\hat\policy$ is trained to achieve high learned reward, possibly involving a regularization (see \Circled{2}). We are interested in the question of when exactly this training process guarantees that $\hat\policy$ has low regret with respect to the true reward function $\reward$ (\Circled{3}). More formally, we call a data distribution $D$ \textit{safe} whenever the implication \Circled{1} $\Longrightarrow$ \Circled{3} holds for all reward models $\hat\reward$.}
    \label{fig:safe_distributions}
\end{figure}

\paragraph{Regret minimization}
    The aim of the previous two steps is for the policy $\hat{\policy}$ to have low regret $\Reg{\reward}{\hat{\policy}}$ under the true reward function $\reward$.
    Our question is thus if and when it is sufficient to ensure that $\hat{\reward}$ satisfies Equation~\eqref{eq:closeness_metric}, in order to guarantee that a policy $\hat{\pi}$ optimal according to Equation~\eqref{eq:optimality_policy} has low regret $\Reg{\reward}{\hat{\policy}}$.

\toclesslab\subsection{Safe data distributions}{sec:unsafe_data_distributions}
We now make the elaborations from the previous subsections more concrete by providing a formal definition of a \emph{safe data distribution}.
In particular, we say that a data distribution $D$ is safe, whenever it holds that for every reward model $\hat\reward$ that satisfies \Cref{eq:closeness_metric} for $D$, all optimal policies of $\hat\reward$ have low regret.
We provide a visualization of this concept in \Cref{fig:safe_distributions} and a formal definition in \Cref{definition:unsafe_data_distribution}.

\begin{definition}[Safe- and unsafe data distributions]\label{definition:unsafe_data_distribution}
    For a given MDP $\MDP$, let $\epsilon > 0$, $L \in [0,1]$, and $\lambda \in [0, \infty)$. Let $\reg$ be a continuous function with $\reg(\policy) \geq 0$ for all $\policy \in \Policies$. Then the set of \textit{safe data distributions} $\safeDR$ is the set of all distributions $D \in \DistributionsOverSet{\SxA}$ such that for all possible reward models $\hat\reward: \SxA \to \Reals$ and policies $\hat\policy:\States \to \DistributionsOverSet{\Actions}$ that satisfy the following two properties:
    \begin{enumerate}
      \item \textbf{Low expected error:} $\hat\reward$ is $\epsilon$-close to $R$ under $D$, i.e., $\Expect{(s, a) \sim D}{\frac{|\hat{\reward}(s, a) - \reward(s, a)|}{\range \reward}} \le \epsilon$.
        \item \textbf{Optimality:} $\hat\policy$ is $(\lambda, \reg)$-optimal with respect to $\hat\reward$, i.e. $\hat{\policy} \in \argmax_{\policy} \J_{\hat{\reward}}(\policy) - \lambda \reg(\policy)$.
    \end{enumerate}
    we can guarantee that $\hat\policy$ has regret smaller than $L$, i.e.:
    \begin{enumerate}
    \setcounter{enumi}{2}
        \item \textbf{Low regret:} $\hat\policy$ has a regret smaller than $L$ with respect to $\reward$, i.e., $\Reg{\reward}{\hat\policy} < L$.
    \end{enumerate}
    Similarly, we define the set of \textit{unsafe data distributions} to be the complement of $\safeDR$:
    \begin{equation*}
        \unsafeDR \coloneqq \DistributionsOverSet{\SxA}\ \setminus\ \safeDR.
    \end{equation*}
\end{definition}

Thus, $\unsafeDR$ consists of the data distributions $D$ for which there \emph{exists} a reward model $\hat{R}$ that is $\epsilon$-close to $R$ and a policy $\hat{\pi}$ that is $(\lambda, \pi)$-optimal with respect to $\hat{R}$, but such that $\hat{\pi}$ has large regret $\Reg{R}{\hat{\pi}} \geq L$.
      In this sense, we are operating under a worst-case framework for the reward model and policy learned by our training algorithms.
Note that $\epsilon$ and $L$ are free parameters in our definition; they are a measure of how well we can approximate the true reward function $R$, and what regret we find acceptable, respectively.

    Whenever we consider the unregularized case ($\lambda = 0$ or $\reg = 0$), we drop the $\lambda$ and $\reg$ to ease the notation and just use $\safeD$ and $\unsafeD$ instead.
    Lastly, we mention that while we use the mean absolute error (MAE) in condition 1, one could in principle also work with the mean-squared error.
    All our results then have analogous versions.
  We explain this in~\Cref{sec:MSE_instead_of_MAE}.

\textit{Note: Throughout this paper, we will use the terminology that a data distribution $D$ ``allows for error-regret mismatch'' as a colloquial term to express that $D \in \unsafeDR$. }

\toclesslab\section{Error-regret mismatch for unregularized policy optimization}{sec:main_positive_results_1}
In this section, we investigate the case where no regularization is used in the policy optimization stage.
We seek to determine if it is sufficient for a reward model to be close to the true reward function on a data distribution in order to ensure low regret for the learned policy.

In our first result, we show that under certain conditions, a low expected error $\epsilon$ does indeed guarantee that policy optimization will yield a policy with low regret.
\begin{proposition}
\label{theorem:small_epsilon_makes_safe}
  Let $\MDP$ be an arbitrary MDP, let $L \in (0, 1]$, and let $D \in \DistributionsOverSet{\SxA}$ be a \emph{positive} data distribution (i.e., a distribution such that $D(s,a) > 0$ for all $(s,a) \in \SxA$). 
  Then there exists an $\epsilon > 0$ such that $D \in \safeD$.
\end{proposition}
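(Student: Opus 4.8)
The plan is to turn the averaged (mean-absolute) error bound into a \emph{uniform} pointwise bound on $|\hat\reward - \reward|$, crucially exploiting that $D$ is positive. Since $\States$ and $\Actions$ are finite, $D_{\min} \coloneqq \min_{(s,a)} D(s,a) > 0$. Every summand in $\Expect{(s,a)\sim D}{|\hat\reward(s,a) - \reward(s,a)|/\range\reward}$ is nonnegative, so discarding all but one term shows that for each fixed $(s,a)$ we have $D_{\min}\cdot |\hat\reward(s,a) - \reward(s,a)|/\range\reward \le \epsilon$, i.e. $|\hat\reward(s,a) - \reward(s,a)| \le \epsilon\,\range\reward / D_{\min} \eqqcolon \delta$ uniformly over $\SxA$. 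Thus any $\hat\reward$ satisfying the $\epsilon$-closeness condition is in fact everywhere close to $\reward$, with the uniform slack $\delta$ shrinking to $0$ as $\epsilon \to 0$.

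Next I would convert this uniform reward bound into a bound on the evaluation functions, using the linearity $\J(\policy) = \eta^\policy\cdot\vec\reward$ together with the fact that the normalized occupancy $\D\policy = (1-\discount)\eta^\policy$ is a probability distribution. Concretely, $|\J_{\hat\reward}(\policy) - \J_\reward(\policy)| = \frac{1}{1-\discount}\,|\D\policy\cdot(\vec{\hat\reward}-\vec\reward)| \le \frac{\delta}{1-\discount}$ for every $\policy$, since $\D\policy$ sums to one and each coordinate error is at most $\delta$.

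Then the standard optimal-policy comparison argument bounds the regret numerator. Let $\OptimalPolicy$ be optimal for $\reward$ and $\hat\policy$ optimal for $\hat\reward$. Writing $\J_\reward(\OptimalPolicy) - \J_\reward(\hat\policy)$ as a telescoping sum of three differences --- swapping $\reward$ for $\hat\reward$ at $\OptimalPolicy$, then invoking $\J_{\hat\reward}(\OptimalPolicy)\le\J_{\hat\reward}(\hat\policy)$ (optimality of $\hat\policy$ makes this middle term $\le 0$), then swapping $\hat\reward$ back to $\reward$ at $\hat\policy$ --- the two reward-swap terms are each at most $\delta/(1-\discount)$, giving $\J_\reward(\OptimalPolicy)-\J_\reward(\hat\policy)\le 2\delta/(1-\discount)$. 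Dividing by the fixed, strictly positive normalizer $\max_{\policy}\J_\reward - \min_{\policy}\J_\reward$ yields $\Reg{\reward}{\hat\policy}\le \frac{2\epsilon\,\range\reward}{D_{\min}(1-\discount)(\max\J_\reward - \min\J_\reward)}$. Choosing any $\epsilon$ strictly below $\frac{L\,D_{\min}(1-\discount)(\max\J_\reward-\min\J_\reward)}{2\,\range\reward}$ forces $\Reg{\reward}{\hat\policy} < L$ for every admissible $\hat\reward$ and every optimal $\hat\policy$, which is exactly $D\in\safeD$.

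The argument is essentially elementary, so there is no single hard obstacle; the one genuinely load-bearing step is the first, where positivity of $D$ is what lets the mean-absolute error control the worst-case pointwise error (and conversely signals why this fails once some $(s,a)$ has vanishing coverage, foreshadowing the negative results). The only things to watch are keeping the inequalities pointed the right way in the telescoping step and treating $\range\reward$ and $\max\J_\reward-\min\J_\reward$ as fixed positive constants of the MDP --- guaranteed by the standing assumptions --- so that the final choice of $\epsilon$ is well-defined.
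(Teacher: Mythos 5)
Your proof is correct, but it takes a genuinely different route from the paper. The paper proves \Cref{theorem:small_epsilon_makes_safe} via Berge's maximum theorem (\Cref{cor:distance_adaptation_cor}): continuity of $\J$ plus compactness of $\Policies$ give upper hemicontinuity of the argmax correspondence $\Pi^*$, so the preimage of the open set of low-regret policies contains a Euclidean ball around $\reward$, which is then converted into a ball in the $D$-weighted $L^1$ metric using exactly your constant $D_{\min} = \min_{(s,a)} D(s,a)$. That argument is topological and non-quantitative, but it is reused almost verbatim for the regularized case (\Cref{theorem:positive_result_KL_regularized_app}) and for the choice-probability metric (\Cref{pro:positive_result_KL_app}), where no clean closed-form bound is available. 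Your argument instead is fully elementary and quantitative: the positivity of $D$ upgrades the mean-absolute error to a sup-norm bound $\delta = \epsilon \cdot \range \reward / D_{\min}$, the normalized occupancy measure turns that into $|\J_{\hat\reward}(\policy) - \J_\reward(\policy)| \le \delta/(1-\gamma)$ uniformly in $\policy$, and the standard three-term optimality telescoping yields $\Reg{\reward}{\hat\policy} \le 2\epsilon \cdot \range\reward / \big(D_{\min}(1-\gamma)\,\range \J_\reward\big)$; every step is sound, including the strictness of the final inequality needed by \Cref{definition:unsafe_data_distribution}. In effect you have reproved the sharper \Cref{pro:tight_regret_bound_main} directly, only with constant $2$ in place of the paper's $\sqrt{2}$: the paper's elementary proof (\Cref{thm:strengthen_bound}) works in the Euclidean norm with a Cauchy--Schwarz angle estimate (\Cref{lem:pure_trigonometry}) rather than the sup norm, which buys the factor $\sqrt{2}$ shown tight in \Cref{ex:non_mab_tightness}, so your admissible threshold $\epsilon < \tfrac{1-\gamma}{2}\cdot\tfrac{\range \J_\reward}{\range \reward}\cdot D_{\min}\cdot L$ is smaller by a factor of $\sqrt{2}$ than the paper's; this costs nothing for the existence statement, and your route has the pedagogical advantage of making transparent exactly where positivity of $D$ enters and why the result collapses when some transition has vanishing coverage.
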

The proof of \Cref{theorem:small_epsilon_makes_safe} can be found in \Cref{sec:Berge's_maximum_theorem} (see \Cref{cor:distance_adaptation_cor}) and is based on an application of Berge's maximum theorem~\citep{Berge1963}, and the fact that the expected distance between the true reward function and the learned reward model under $D$ is induced from a norm.
See \Cref{pro:positive_result_KL_app} for a similar result in which the expected error in rewards is replaced by an expected error in choice probabilities.

One might be inclined to conclude that the guarantee of \Cref{theorem:small_epsilon_makes_safe}  
allows one to practically achieve low regret by ensuring a low error $\epsilon$ (as measured by Equation~\eqref{eq:closeness_metric}).
However, in the following result we provide a more detailed analysis that shows that low regret requires a prohibitively low $\epsilon$:

\begin{proposition}
  \label{pro:tight_regret_bound_main}
  Let the setting be as in~\Cref{theorem:small_epsilon_makes_safe}.
  If $\epsilon > 0$ satisfies
  \begin{equation*}
    \epsilon < \frac{1 - \gamma}{\sqrt{2}} \cdot \frac{\range J^{R}}{\range R} \cdot \min_{(s, a) \in \States \times \Actions} D(s, a) \cdot L
  \end{equation*}
  then $D \in \safeD$.
\end{proposition}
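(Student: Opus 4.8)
The plan is to prove the stronger \emph{quantitative} statement that for every reward model $\hat\reward$ which is $\epsilon$-close to $\reward$ under $D$ and every $\hat\policy$ optimal for $\hat\reward$,
\begin{equation*}
  \Reg{\reward}{\hat\policy}\ \le\ \frac{\epsilon\,\range\reward}{(1-\discount)\,\min_{(s,a)}D(s,a)\,\range\J^{\reward}},
\end{equation*}
and then simply read off the threshold on $\epsilon$. Indeed, inserting the assumed bound $\epsilon<\tfrac{1-\discount}{\sqrt 2}\cdot\tfrac{\range\J^{\reward}}{\range\reward}\cdot\min_{(s,a)}D(s,a)\cdot L$ into this regret bound gives $\Reg{\reward}{\hat\policy}<\tfrac{1}{\sqrt 2}L<L$, which is exactly the statement $D\in\safeD$. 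So the whole content lies in establishing the displayed regret bound, i.e.\ in transferring the $D$-weighted error guarantee \eqref{eq:closeness_metric} onto the quantities that actually control the regret.

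To set this up, I would use that the evaluation is linear in the reward, $\J_\reward(\policy)=\eta^\policy\cdot\vec\reward$, so with the error vector $e\coloneqq\vec\reward-\vec{\hat\reward}$ one has $\J_\reward(\policy)-\J_{\hat\reward}(\policy)=\eta^\policy\cdot e$ for every $\policy$. Let $\OptimalPolicy$ be optimal for $\reward$ and $\hat\policy$ optimal for $\hat\reward$, and write the regret numerator as a telescoping sum
\begin{equation*}
  \J_\reward(\OptimalPolicy)-\J_\reward(\hat\policy)=\big[\J_\reward(\OptimalPolicy)-\J_{\hat\reward}(\OptimalPolicy)\big]+\big[\J_{\hat\reward}(\OptimalPolicy)-\J_{\hat\reward}(\hat\policy)\big]+\big[\J_{\hat\reward}(\hat\policy)-\J_\reward(\hat\policy)\big].
\end{equation*}
The middle bracket is $\le 0$ by optimality of $\hat\policy$ for $\hat\reward$, and the two outer brackets combine into $(\eta^{\OptimalPolicy}-\eta^{\hat\policy})\cdot e$. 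Hence the numerator is at most $(\eta^{\OptimalPolicy}-\eta^{\hat\policy})\cdot e$.

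The key estimate bounds this using only $\sum_{(s,a)}D(s,a)\,|e(s,a)|\le\epsilon\,\range\reward$ (which is \eqref{eq:closeness_metric} cleared of denominators). I would rewrite
\begin{equation*}
  (\eta^{\OptimalPolicy}-\eta^{\hat\policy})\cdot e=\sum_{(s,a)}\frac{(\eta^{\OptimalPolicy}-\eta^{\hat\policy})(s,a)}{D(s,a)}\,D(s,a)\,e(s,a),
\end{equation*}
and apply Hölder's inequality in the $\ell_\infty$--$\ell_1$ pairing to bound it by $\max_{(s,a)}\frac{|(\eta^{\OptimalPolicy}-\eta^{\hat\policy})(s,a)|}{D(s,a)}\cdot\epsilon\,\range\reward$. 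The coverage term $\min_{(s,a)}D(s,a)$ enters in the denominator, and the numerator is controlled because $\eta^\policy=\tfrac{1}{1-\discount}\D\policy$ is a rescaled probability distribution, so each coordinate of $\eta^{\OptimalPolicy}-\eta^{\hat\policy}$ lies in $[-\tfrac{1}{1-\discount},\tfrac{1}{1-\discount}]$. This yields a regret bound of the form $\frac{C\,\epsilon\,\range\reward}{(1-\discount)\,\min_{(s,a)}D(s,a)\,\range\J^{\reward}}$ with a small absolute constant $C$; dividing by $\range\J^{\reward}$ and inserting the assumed bound on $\epsilon$ closes the argument.

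The main obstacle — and the step that fixes the constant — is this last change of measure: it is exactly where an adversarially chosen $\hat\reward$ can concentrate its error mass on the state-action pairs least covered by $D$ while the $\hat\reward$-optimal policy $\hat\policy$ migrates there, forcing the $\tfrac{1}{\min_{(s,a)}D(s,a)}$ dependence. The crucial subtlety is that one must treat $(\eta^{\OptimalPolicy}-\eta^{\hat\policy})\cdot e$ \emph{as a single difference of two (rescaled) probability distributions}, rather than bounding $\eta^{\OptimalPolicy}\cdot e$ and $-\eta^{\hat\policy}\cdot e$ separately: the naive separate bound only gives constant $C=2$ and hence the weaker threshold $\tfrac{1}{2}(\cdots)$, which does \emph{not} reach the stated $\tfrac{1}{\sqrt 2}(\cdots)$. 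Measuring the occupancy-measure difference as a difference of probability distributions (whose Euclidean norm is at most $\sqrt 2$) is what produces the stated $\tfrac{1}{\sqrt 2}$ factor, and this is where I would spend the most care.
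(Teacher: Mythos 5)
Your argument is correct, and it takes a genuinely different route from the paper's. The paper (\Cref{thm:strengthen_bound}) effectively uses the same telescoping step, but then measures the error $\vec{\reward}-\vec{\hat{\reward}}$ in the \emph{Euclidean} norm: it rewrites each gap $\J^{\reward}(\policy)-\J^{\hat{\reward}}(\policy)$ as $\frac{1}{1-\gamma}\|\D{\policy}\|\cdot\|\vec{\reward}-\vec{\hat{\reward}}\|\cdot\cos(\cdot)$ (\Cref{lem:strengthen_lemma}), bounds the difference of the two cosine terms by $\sqrt{2}$ via a Cauchy--Schwarz argument on the normalized occupancy distributions (\Cref{lem:pure_trigonometry}), and only afterwards converts to the $D$-weighted distance using $\min_{(s,a)} D(s,a)\cdot\|v\|_2 \leq \min_{(s,a)} D(s,a)\cdot\|v\|_1 \leq \distD{D}(\reward,\hat{\reward})$. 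You instead stay in the $\ell_\infty$--$\ell_1$ pairing from the start: change measure by $D$ (legitimate, since $D$ is positive in this setting) and use the coordinatewise bound $|\eta^{\OptimalPolicy}(s,a)-\eta^{\hat{\policy}}(s,a)|\leq \frac{1}{1-\gamma}$. This is more elementary --- no angles, no Euclidean detour --- and in fact \emph{sharper}: it yields the regret bound with constant $1$ rather than $\sqrt{2}$, so under the proposition's hypothesis you get $\Reg{\reward}{\hat{\policy}} < L/\sqrt{2} < L$. This is consistent with the paper's own tightness example: the final claim of \Cref{ex:non_mab_tightness} exhibits \emph{equality} in the $\distD{D}$-version of the bound with constant $1$, and the paper accordingly states that the threshold on $\epsilon$ is tight only up to a factor of $\sqrt{2}$ --- slack that your route removes.

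One caveat about your closing paragraph: the worry there is misplaced and inconsistent with your own argument. In the $\ell_\infty$--$\ell_1$ pairing, the fact that $\|\D{\OptimalPolicy}-\D{\hat{\policy}}\|\leq\sqrt{2}$ never enters; the joint coordinatewise bound $\frac{1}{1-\gamma}$ already gives constant $C=1$ (bounding $\eta^{\OptimalPolicy}\cdot e$ and $-\eta^{\hat{\policy}}\cdot e$ separately gives $C=2$, as you note). The Euclidean-$\sqrt{2}$ fact belongs to the paper's route, where the intermediate bound is stated in $\|\vec{\reward}-\vec{\hat{\reward}}\|$; in your route it is neither needed nor does it ``produce'' the $1/\sqrt{2}$ in the stated threshold --- that factor is simply what makes the paper's weaker constant suffice, and your bound meets it with room to spare.
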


The proof can be found in~\Cref{thm:strengthen_bound,cor:obtain_main_paper_prop},~\Cref{sec:tightening_the_result}.
~\Cref{ex:non_mab_tightness} shows that the bound on $\epsilon$ is tight up to a factor of $\sqrt{2}$.
This result is problematic in practice due to the dependence on the minimum of $D$. Realistic MDPs usually contain a massive amount of states and actions, 
which necessarily requires $D$ to give a very small support to at least some transitions. 
The dependence of the upper bound on $D$ also shows that there is no $\epsilon$ for which every distribution $D$ is guaranteed to be safe, as $\min_{(s,a) \in \SxA} D(s,a)$ can be arbitrarily small. We concretize this intuition by showing that in every MDP and for every $\epsilon > 0$, there exist weak assumptions for which a data distribution allows for a large error-regret mismatch.
\begin{proposition}
\label{theorem:interpretable_negative_result}
  Let $M = \MDP$ be an MDP, $D \in \DistributionsOverSet{\SxA}$ a data distribution, $\epsilon > 0$, and $L \in [0, 1]$.
  Assume there exists a policy $\hat{\policy}$ with the property that  $\Reg{\reward}{\hat{\policy}} \geq  L $ and $D(\supp \D{\hat{\policy}}) < \epsilon$, where $\supp \D{\hat{\policy}}$ is defined as the set of state-action pairs $(s,a) \in \SxA$ such that $\D{\hat{\policy}}(s,a) > 0$.
  In other words, there is a ``bad'' policy for $\reward$ that is not very supported by $D$. Then, $D$ allows for error-regret mismatch to occur, i.e., $D \in \unsafeD$.
\end{proposition}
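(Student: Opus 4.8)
The plan is to directly exhibit a single reward model $\hat\reward$ together with the given bad policy $\hat\policy$, showing that $\hat\policy$ is optimal for $\hat\reward$ while $\hat\reward$ stays $\epsilon$-close to $\reward$ under $D$; by Definition~\ref{definition:unsafe_data_distribution} this witnesses $D \in \unsafeD$. Writing $X \coloneqq \supp\D{\hat\policy}$ and $r_{\max} \coloneqq \max_{(s,a)\in\SxA}\reward(s,a)$, I would inflate the reward to its global maximum exactly on the transitions that $\hat\policy$ visits, leaving it untouched everywhere else:
\begin{equation*}
  \hat\reward(s,a) \coloneqq \begin{cases} r_{\max}, & (s,a) \in X,\\ \reward(s,a), & (s,a) \notin X.\end{cases}
\end{equation*}

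First I would verify optimality of $\hat\policy$. Since $\hat\reward(s,a) \le r_{\max}$ for every $(s,a)$, and the total discounted occupancy of any policy sums to $\tfrac{1}{1-\discount}$, we get $\J_{\hat\reward}(\policy) = \eta^{\policy}\cdot\vec{\hat\reward} \le \tfrac{r_{\max}}{1-\discount}$ for all $\policy \in \Policies$. On the other hand, $\hat\policy$ places all of its occupancy on $X$, where $\hat\reward \equiv r_{\max}$, so $\J_{\hat\reward}(\hat\policy) = \tfrac{r_{\max}}{1-\discount}$. Hence $\hat\policy \in \argmax_{\policy}\J_{\hat\reward}(\policy)$, i.e.\ $\hat\policy$ is (unregularized) optimal for $\hat\reward$.

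Next I would bound the expected error. The reward model differs from $\reward$ only on $X$, and there $|\hat\reward(s,a) - \reward(s,a)| = r_{\max} - \reward(s,a) \le r_{\max} - \min_{(s,a)}\reward(s,a) = \range\reward$, so the normalized pointwise error is at most $1$ on $X$ and exactly $0$ off $X$. Therefore
\begin{equation*}
  \Expect{(s,a)\sim D}{\frac{|\hat\reward(s,a)-\reward(s,a)|}{\range\reward}} \le \sum_{(s,a)\in X} D(s,a) = D(X) < \epsilon,
\end{equation*}
using the hypothesis $D(\supp\D{\hat\policy}) < \epsilon$. Thus $\hat\reward$ is $\epsilon$-close to $\reward$ under $D$, $\hat\policy$ is optimal for $\hat\reward$, and $\Reg{\reward}{\hat\policy} \ge L$ by assumption, which is precisely a witness that $D \in \unsafeD$.

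The conceptual crux — and the only delicate point — is choosing the inflated value to be \emph{exactly} $r_{\max}$ rather than an arbitrarily large constant. Raising the reward on $X$ is what makes $\hat\policy$ optimal, but each unit of inflation is charged against the error budget with weight $D(X)/\range\reward$, so an unbounded bonus would break $\epsilon$-closeness. The key observation resolving this tension is that the bonus need not be large at all: capping $\hat\reward$ at the global maximum already forces $\J_{\hat\reward}$ to be maximized by any policy supported on $X$, while simultaneously keeping the normalized error at most $1$ per transition, so the total error is controlled purely by the coverage $D(X) < \epsilon$ with no additional slack required.
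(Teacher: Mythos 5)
Your proof is correct and takes essentially the same route as the paper's: the identical reward model $\hat\reward$ (equal to $\max \reward$ on $\supp \D{\hat\policy}$ and to $\reward$ elsewhere) and the identical error bound via $D(\supp \D{\hat\policy}) < \epsilon$. The only difference is that you explicitly justify the optimality of $\hat\policy$ through the occupancy-measure normalization $\sum_{(s,a)} \eta^{\policy}(s,a) = \tfrac{1}{1-\discount}$, a step the paper's proof dismisses as obvious.
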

The proof of \Cref{theorem:interpretable_negative_result} can be found in \Cref{sec:more_existence_statements} (see \Cref{lemma:interpretable_negative_result_app}). The intuition is straightforward: There exists a reward model $\hat\reward$ that is very similar to the true reward function $\reward$ outside the support of $\D{\hat\policy}$ but has very large rewards for the support of $\D{\hat\policy}$.
Because $D(\supp \D{\hat{\policy}})$ is very small, this still allows $\hat\reward$ to have a very small expected error w.r.t. to $D$, while $\hat\policy$, the optimal policy for $\hat\reward$, will have regret at least $L$. 
To avoid confusions, we show in~\Cref{pro:impossibility} that the assumptions on $\epsilon$ in~\Cref{pro:tight_regret_bound_main} and~\Cref{theorem:interpretable_negative_result} cannot hold simultaneously.
This is as expected since otherwise the \emph{conclusions} of these propositions would imply that a data distribution can be both safe and unsafe.

Note that the conditions for unsafe data distributions in \Cref{theorem:interpretable_negative_result} also cover positive data distributions (that we showed to be eventually safe for small enough $\epsilon$ in \Cref{theorem:small_epsilon_makes_safe}).
Furthermore, especially in very large MDPs, it is very likely that the data distribution will not sufficiently cover large parts of the support of some policies, especially since the number of (deterministic) policies grows exponentially with the number of states. Sometimes, this can lead to \emph{all} data distributions being unsafe, as we show in the following corollary:
\begin{corollary}
  \label{corollary:all_unsafe}
  Let $M = \MDP$ be an MDP, $\epsilon > 0$, and $L \in [0, 1]$.
  Assume there exists a set of policies $\Policies_L$ with:
  \begin{itemize}
    \item $\Reg{R}{\pi} \geq L$ for all $\pi \in \Policies_L$;
    \item $\supp D^{\pi} \cap \supp D^{\pi'} = \emptyset$ for all $\pi, \pi' \in \Policies_{L}$; and
    \item $|\Policies_L| \geq 1/\epsilon$.
  \end{itemize}
  Then $\unsafeD = \DistributionsOverSet{\States \times \Actions}$, i.e.: all distributions are unsafe.
\end{corollary}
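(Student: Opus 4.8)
The plan is to fix an arbitrary data distribution $D \in \DistributionsOverSet{\SxA}$ and show that $D \in \unsafeD$; since $D$ is arbitrary, this yields $\unsafeD = \DistributionsOverSet{\SxA}$. The entire argument reduces to producing a single ``bad'' policy $\hat{\policy} \in \Policies_L$ whose induced support $\supp \D{\hat{\policy}}$ receives little $D$-mass, so that \Cref{theorem:interpretable_negative_result} applies verbatim. The first hypothesis of the corollary already guarantees $\Reg{R}{\pi} \geq L$ for every $\pi \in \Policies_L$, so the only remaining task is to find one such policy satisfying the coverage condition $D(\supp \D{\hat{\policy}}) < \epsilon$.

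This is where the disjointness and cardinality hypotheses enter, via a pigeonhole/averaging argument. Since the sets $\{\supp \D{\pi} : \pi \in \Policies_L\}$ are pairwise disjoint subsets of $\SxA$, additivity of $D$ gives $\sum_{\pi \in \Policies_L} D(\supp \D{\pi}) = D\big(\bigcup_{\pi \in \Policies_L} \supp \D{\pi}\big) \leq D(\SxA) = 1$. Hence the minimum is controlled by the average: choosing $\hat{\policy} \in \argmin_{\pi \in \Policies_L} D(\supp \D{\pi})$ yields $D(\supp \D{\hat{\policy}}) \leq \frac{1}{|\Policies_L|}\sum_{\pi \in \Policies_L} D(\supp \D{\pi}) \leq \frac{1}{|\Policies_L|} \leq \epsilon$, where the last step uses $|\Policies_L| \geq 1/\epsilon$.

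With this $\hat{\policy}$ in hand, both hypotheses of \Cref{theorem:interpretable_negative_result} are met, so $D \in \unsafeD$, and since $D$ was arbitrary we conclude $\unsafeD = \DistributionsOverSet{\SxA}$.

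The step I would check most carefully is the strict-versus-nonstrict boundary: the pigeonhole estimate delivers $D(\supp \D{\hat{\policy}}) \leq \epsilon$, whereas \Cref{theorem:interpretable_negative_result} requires the strict bound $< \epsilon$. The gap can bite only in the knife-edge where $1/\epsilon$ is an integer equal to $|\Policies_L|$, the total mass $\sum_{\pi} D(\supp \D{\pi})$ equals $1$, and every support has measure exactly $\epsilon$; in all other cases the average is strictly below $\epsilon$ and the minimum strictly smaller still. I would close this case either by inspecting the reward-inflation construction underlying \Cref{theorem:interpretable_negative_result} directly at the boundary (the inflated-reward model can still be built so that its expected error stays within $\epsilon$ when the mass is split into exactly-$\epsilon$ pieces), or by a brief limiting argument in $\epsilon$. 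This is the one point I would treat as non-routine rather than assume.
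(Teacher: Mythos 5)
Your proof is essentially the paper's own: it picks $\hat\policy \in \argmin_{\pi \in \Policies_L} D(\supp \D{\pi})$, uses disjointness of the supports to get $|\Policies_L| \cdot D(\supp \D{\hat\policy}) \leq \sum_{\pi} D(\supp \D{\pi}) \leq 1$, and then invokes \Cref{theorem:interpretable_negative_result}. Your boundary worry is well spotted --- the paper in fact elides it, asserting $D(\supp \D{\hat\policy}) \leq 1/|\Policies_L| < \epsilon$, which does not follow from the hypothesis $|\Policies_L| \geq 1/\epsilon$ when equality holds --- and your proposed fix is exactly right: the reward-inflation construction in the proof of \Cref{theorem:interpretable_negative_result} only yields, and only needs, expected error at most $D(\supp \D{\hat\policy}) \leq \epsilon$, since condition 1 of \Cref{definition:unsafe_data_distribution} is the non-strict bound $\leq \epsilon$, so the argument closes even at the knife-edge.
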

The proof of \Cref{corollary:all_unsafe} can be found in \Cref{sec:more_existence_statements} (see \Cref{corollary:all_unsafe_app}).

\Cref{corollary:all_unsafe} outlines sufficient conditions for a scenario where all possible data distributions are unsafe for a given MDP. This happens when there exist \textit{many} different policies with large regret and disjoint support, which requires there to be a large action space. 
  This could for example happen in the case of a language model interacting with a user if there are many mutually distinct \emph{styles} to answer unsafe queries. 
  We illustrated this concern in slightly more detail in the introduction, and in full detail in~\Cref{sec:conceptual_example}.
More generally, we believe this intuition could be
turned into a concrete theoretical result for general MDPs by assuming that for each state, there are many actions that are equally bad under the true reward function but induce the same transition-dynamics. 
This could be studied in the context of MDPs with symmetries~\citep{Pol2021} and might allow to prove the existence of the bad policy $\hat{\policy}$ from~\Cref{theorem:interpretable_negative_result} or the set of bad policies $\Policies_L$ from~\Cref{corollary:all_unsafe}. 
We leave such an investigation to future work.

We conclude by stating the main result of this section, which unifies all previous results and derives the most general conditions, i.e. \emph{necessary and sufficient} conditions, for when exactly a data distribution allows  for error-regret mismatch to occur:
\begin{theorem}\label{theorem:safe_linear_constraints}
    For all MDPs $\MDP$ and $L \in [0,1]$, there exists a matrix $M$ such that for all $\epsilon > 0$ and $D \in \DistributionsOverSet{\SxA}$ we have:
    \begin{equation}\label{eq:safe_linear_constraints}
        D \in \safeD\quad \Longleftrightarrow\quad M \cdot D > \epsilon \cdot \range{\reward} \cdot \mathbf{1},
    \end{equation}
    where we use the vector notation of $D$, and $\mathbf{1}$ is a vector containing all ones.
\end{theorem}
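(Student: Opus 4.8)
The plan is to reduce the definition of safety to a single piecewise-linear function of $D$, and then to show that this function is a minimum of finitely many linear functionals whose coefficient vectors become the rows of $M$.

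First I would pass to occupancy measures. Writing $\J_{\hat\reward}(\policy) = \eta^{\policy}\cdot\vec{\hat\reward}$, the unregularized optimality condition ($\lambda=\reg=0$) that $\hat\policy\in\argmax_{\policy}\J_{\hat\reward}(\policy)$ is equivalent to $\vec{\hat\reward}$ lying in the normal cone $\NormalCone{}{\eta^{\hat\policy}}$ of the occupancy polytope $\Omega = \{\eta^{\policy} : \policy\in\Policies\}$ at $\eta^{\hat\policy}$, which is a polyhedral cone. Since $\J_{\reward}$ is also linear in $\eta$, the ``bad policy'' condition $\Reg{\reward}{\hat\policy}\ge L$ is a single half-space constraint on $\eta^{\hat\policy}$, so the regret-$\ge L$ occupancy measures form a polytope $\Omega_L\subseteq\Omega$. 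Setting $\delta = \hat\reward-\reward$ and using condition~1 of \Cref{definition:unsafe_data_distribution}, a data distribution $D$ is \emph{unsafe} exactly when there is a $\delta$ such that $\reward+\delta$ makes some policy in $\Omega_L$ optimal and $\langle D, \lvert\delta\rvert\rangle \le \epsilon\cdot\range{\reward}$ (componentwise absolute value). Defining
\begin{equation*}
  g(D) := \min\bigl\{\,\langle D, \lvert\delta\rvert\rangle \ :\ \reward+\delta \text{ makes some policy in } \Omega_L \text{ optimal}\,\bigr\},
\end{equation*}
safety is therefore the statement $g(D) > \epsilon\cdot\range{\reward}$.

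Second, I would pin down the structure of the feasible set. Because $\Omega$ has finitely many faces and the normal cone is constant on the relative interior of each face, the set of reward models making \emph{some} bad policy optimal is the finite union, over faces $F$ of $\Omega$ with $F\cap\Omega_L\neq\emptyset$, of the polyhedral cones $\NormalCone{}{F}$. Translating by $-\reward$ and introducing an auxiliary vector $v$ with $v\ge\delta$ and $v\ge-\delta$ (so $v=\lvert\delta\rvert$ at any optimum, since $D\ge 0$), the value $g(D)$ becomes the optimum of a linear program whose feasible region, in $(\delta,v)$-space, is a finite union of polyhedra. The crucial point is that $v\ge\lvert\delta\rvert$ forces the recession cone of each piece to satisfy $d_v\ge 0$ and contain no line, so every piece is pointed, has vertices, and — because $\langle D,v\rangle\ge 0$ does not decrease along any recession direction for $D\ge 0$ — attains its minimum at one of them.

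Third, and this is the crux, I would collect the finitely many vertices over all pieces, read off their $v$-coordinates $v_1,\dots,v_k\in\Reals^{|\SxA|}_{\ge 0}$, and argue that $g(D)=\min_{i}\langle v_i, D\rangle$ for \emph{every} $D$. Two issues need care: the absolute value splits the problem across sign patterns rather than over a single polyhedron, and some normal cones are not pointed (for $\eta^{\hat\policy}$ in the interior of $\Omega$ they are subspaces), which is precisely why the epigraph formulation in $v$ is needed to recover vertices and attainment. I would also record that $v_1,\dots,v_k$ depend on the MDP and on $L$ (through which faces meet $\Omega_L$) but \emph{not} on $\epsilon$, matching the quantifier order ``there exists $M$ \dots\ for all $\epsilon>0$'' in the statement. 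Assembling $M$ with rows $v_i^{\top}$ then gives $g(D)=\min_i (MD)_i$, whence
\begin{equation*}
  D\in\safeD \iff g(D) > \epsilon\cdot\range{\reward} \iff (MD)_i > \epsilon\cdot\range{\reward}\ \forall i \iff M\cdot D > \epsilon\cdot\range{\reward}\cdot\mathbf{1},
\end{equation*}
which is the claim; the degenerate case of no bad policy ($\Omega_L=\emptyset$) I would dispatch by taking $M$ empty so that every $D$ is safe.

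The main obstacle is the third step: proving that the infinite family of admissible perturbations $\delta$ collapses to finitely many vertex cost-vectors \emph{uniformly in $D$}, while correctly handling the non-pointed normal cones and the absolute value so that the minimum is genuinely attained — this is what lets the non-strict ``$\le$'' defining unsafety turn into the strict ``$>$'' of the safe characterization and guarantees that $g$ is exactly a minimum of finitely many linear functionals rather than merely a concave infimum.
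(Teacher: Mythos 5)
Your proposal is correct, and its skeleton matches the paper's proof (\Cref{theorem:safe_linear_constraints_app}): characterize the reward models admitting a high-regret optimal policy as a finite, $D$-independent union of polyhedra via the normal-cone description of the occupancy polytope $\Omega$ (the paper cites \citet{schlaginhaufen2023identifiability} for this), express safety as strict positivity of a weighted-$L^1$ minimum over that union, and show this minimum is always attained at one of finitely many $D$-independent vertices whose coordinate vectors become the rows of $M$. You diverge in two places, both legitimately. First, the paper reduces the union to the normal cones at the \emph{high-regret vertices} $V_\reward^L$ of $\Omega$ (\Cref{lemma:all_r_with_worst_regret}, via an explicit case analysis on whether $\eta$ is interior, a vertex, or on a proper face), whereas you index by faces $F$ with $F \cap \Omega_L \neq \emptyset$; the two unions coincide because $N(F) \subseteq N(G)$ whenever $G$ is a subface of $F$, so your indexing is equivalent and sidesteps the case analysis. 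Second, and more substantively, the paper linearizes $\abs(B)^T D$ by enumerating all $2^{|\SxA|}$ orthants $O_c$ and reparametrizing each restricted problem into a standard-form LP with $x' \geq 0$, which makes pointedness and vertex-attainment immediate and guarantees that at each vertex the row is literally $\abs(x)$; you instead use epigraph variables $v \geq \pm\delta$ and establish pointedness once via the recession-cone argument ($d_v \geq |d_\delta|$ excludes lines, in particular the lineality subspace $\Phi$ inside every normal cone — the same non-pointedness the paper's \Cref{lemma:all_vertices_on_intersection} has to work around). Your route avoids the orthant enumeration in the argument and delivers attainment — which, as you correctly emphasize, is exactly what turns the non-strict unsafety condition into the strict inequality $M \cdot D > \epsilon \cdot \range \reward \cdot \mathbf{1}$ — in one stroke; the paper's orthant formulation buys the explicit closed forms for the rows of $M$ and the construction algorithm in \Cref{sec:conditions_on_safe_data_distribution,sec:algorithm_to_compute_M}, which your vertices yield less directly, since at an epigraph vertex $v$ need not equal $|\delta|$ (your parenthetical ``$v = |\delta|$ at any optimum'' is slightly too strong when $D$ has zero entries, but harmless: vertices are feasible, so $\langle v_i, D\rangle \geq g(D)$ always, and some optimal vertex achieves equality, which is all the identity $g(D) = \min_i \langle v_i, D \rangle$ requires).
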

The proof of \Cref{theorem:safe_linear_constraints} can be found in \Cref{sec:general_unregularized_existence_statements} (see \Cref{theorem:safe_linear_constraints_app}) and largely relies on geometric arguments that arise from comparing the set of unsafe reward models and the set of reward models that are close to the true reward function.
Interestingly, this means that the set of \emph{safe} data distributions resembles a polytope, in the sense that it is a convex set and is defined by the intersection of an open polyhedral set (defined by the system of strict inequalities $M \cdot D > \epsilon \cdot \range{\reward} \cdot \mathbf{1}$), and the closed data distribution simplex.

While \Cref{theorem:safe_linear_constraints} only proves the existence of such a matrix $M$, we provide further results and analyses in the appendix, namely:
\begin{enumerate}
    \item In \Cref{sec:conditions_on_safe_data_distribution} we derive closed-form expressions of the rows of matrix $M$, and show that its entries depend on multiple factors, such as the original reward function $\reward$, the state transition distribution $\TransitionDistribution$, and the set of deterministic policies that achieve regret at least $L$.
    \item In \Cref{sec:algorithm_to_compute_M} we provide an algorithm to compute matrix $M$.
    \item In \Cref{sec:working-example} we provide a worked example of computing and visualizing the set of safe distributions for a toy example.
\end{enumerate}
Lastly, we note that $M$ does not depend on $\epsilon$, and $M$ only contains non-negative entries (see \Cref{sec:conditions_on_safe_data_distribution}). 
This allows us to recover \Cref{theorem:small_epsilon_makes_safe}, since by letting $\epsilon$ approach zero, the set of data distributions that fulfill the conditions in \Cref{eq:safe_linear_constraints} approaches the entire data distribution simplex. On the other hand, the dependence of $M$ on the true reward function and the underlying MDP implies that computing $M$ is infeasible in practice since many of these components are not known, restricting the use of $M$ to theoretical analysis.

\toclesslab\section{Error-regret mismatch for regularized policy optimization}{sec:regularized_optimization}
In this section, we investigate the error-regret mismatch for regularized policy optimization. 

First, we prove that for almost any reference policy $\pi_{\text{ref}}$ that achieves regret $L$ and minimizes the regularization term $\omega$, there exists a sufficiently small $\epsilon$ such that reward learning within $\epsilon$ of the true reward function preserves the regret bound $L$.
\begin{proposition}
\label{theorem:positive_result_KL_regularized}
Let $\lambda \in (0, \infty)$, let $\MDP$ be any MDP, and let $D \in \DistributionsOverSet{\SxA}$ be any data distribution that assigns positive probability to all transitions.
Let $\reg: \Policies \to \Reals$ be a continuous regularization function that has a reference policy $\policy_{\mref}$ as a minimum.\footnote{E.g., if $\policy_{\mref}(a \mid s) > 0$ for all $(s, a) \in \SxA$ and $\reg(\policy) \coloneqq \DKL{\policy}{\policy_{\mref}}$, then the minimum is $\policy_{\mref}$.} 
  Assume that $\policy_{\mref}$ is \emph{not} $(\lambda,\reg)$-optimal for $\reward$ and let $L = \Reg{\reward}{\policy_{\mref}}$. Then there exists $\epsilon > 0$ such that D $\in \safeDR$.
\end{proposition}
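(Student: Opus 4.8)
The plan is to reduce the regret statement to a comparison of true returns, and then exploit two facts: that $\hat{\reward}$ being $\epsilon$-close under a \emph{positive} $D$ forces $\hat{\reward}$ to be uniformly close to $\reward$, and that $\policy_\mref$ is strictly suboptimal for the regularized true objective. First I would observe that $\Reg{\reward}{\policy}$ is a strictly decreasing affine function of $\J_\reward(\policy)$, since the denominator $\max_{\policy'}\J_\reward(\policy') - \min_{\policy'}\J_\reward(\policy')$ is a fixed positive constant (nontriviality of $\reward$). Hence, to prove $\Reg{\reward}{\hat\policy} < L = \Reg{\reward}{\policy_\mref}$ for every $(\lambda,\reg)$-optimal policy $\hat\policy$ of $\hat\reward$, it suffices to show $\J_\reward(\hat\policy) > \J_\reward(\policy_\mref)$.

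Next I would convert the $\epsilon$-closeness in \Cref{eq:closeness_metric} into a uniform bound. Writing $D_{\min} \coloneqq \min_{(s,a)} D(s,a) > 0$, positivity of $D$ gives, for each fixed $(s,a)$, the inequality $D_{\min}\,|\hat\reward(s,a) - \reward(s,a)| \le \sum_{(s',a')} D(s',a')\,|\hat\reward(s',a') - \reward(s',a')| \le \epsilon \cdot \range\reward$, so that $\norm{\vec{\hat\reward} - \vec\reward}_\infty \le \epsilon\,\range\reward / D_{\min}$. Since $\J_\reward(\policy) = \eta^\policy\cdot\vec\reward$ is linear in the reward and $\norm{\eta^\policy}_1 = 1/(1-\discount)$ for every $\policy$, the $\ell_1$--$\ell_\infty$ bound $|\eta^\policy\cdot v| \le \norm{\eta^\policy}_1\norm{v}_\infty$ yields
\[
  |\J_{\hat\reward}(\policy) - \J_\reward(\policy)| \le \frac{\range\reward}{(1-\discount)\,D_{\min}}\cdot\epsilon \eqqcolon C\epsilon \quad\text{for all }\policy\in\Policies .
\]
The identical bound then holds for the regularized objectives obtained by subtracting $\lambda\reg(\policy)$, since that term cancels in the difference.

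I would then produce a strict gap. By continuity of $\reg$ and compactness of $\Policies$ (a finite product of simplices), the map $\policy\mapsto \J_\reward(\policy) - \lambda\reg(\policy)$ attains a maximum at some $(\lambda,\reg)$-optimal policy $\policy^\star$ for $\reward$. Since $\policy_\mref$ is assumed \emph{not} $(\lambda,\reg)$-optimal for $\reward$, the quantity $\delta \coloneqq \big(\J_\reward(\policy^\star) - \lambda\reg(\policy^\star)\big) - \big(\J_\reward(\policy_\mref) - \lambda\reg(\policy_\mref)\big)$ is strictly positive. Fixing any $\epsilon < \delta/(2C)$ and any $(\lambda,\reg)$-optimal $\hat\policy$ of $\hat\reward$, I chain the uniform bound (applied to $\hat\policy$ and to $\policy^\star$) with the optimality of $\hat\policy$ for $\hat\reward$ to obtain
\[
  \J_\reward(\hat\policy) - \lambda\reg(\hat\policy) \ge \big(\J_\reward(\policy^\star) - \lambda\reg(\policy^\star)\big) - 2C\epsilon > \J_\reward(\policy_\mref) - \lambda\reg(\policy_\mref).
\]

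Finally I would strip the regularization using the minimality of $\reg$ at $\policy_\mref$: rearranging the last display gives $\J_\reward(\hat\policy) > \J_\reward(\policy_\mref) + \lambda\big(\reg(\hat\policy) - \reg(\policy_\mref)\big)$, and since $\policy_\mref$ minimizes $\reg$ we have $\reg(\hat\policy) - \reg(\policy_\mref) \ge 0$, whence $\J_\reward(\hat\policy) > \J_\reward(\policy_\mref)$ and therefore $\Reg{\reward}{\hat\policy} < L$. I expect this last step to be the crux, since it is exactly where both hypotheses are indispensable: non-optimality of $\policy_\mref$ supplies the strict slack $\delta$ that survives the $2C\epsilon$ approximation error, while minimality of $\reg$ at $\policy_\mref$ is what lets me discard the (a priori uncontrolled, possibly large) regularization term $\lambda\reg(\hat\policy)$ and pass from a comparison of regularized objectives to a comparison of true returns.
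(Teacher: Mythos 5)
Your proof is correct, but it takes a genuinely different route from the paper's. The paper's proof (\Cref{theorem:positive_result_KL_regularized_app}) is topological and non-constructive: it applies Berge's maximum theorem to $f(\reward,\policy) = \J_\reward(\policy) - \lambda\reg(\policy)$ to obtain upper hemicontinuity of the argmax correspondence $C^*$, shows $C^*(\reward)$ lies inside the open set $\mathcal{V}$ of policies with regret below $\Reg{\reward}{\policy_\mref}$, and then extracts an $\epsilon$-ball from the resulting open neighborhood of $\reward$. You instead run a direct quantitative perturbation argument: positivity of $D$ upgrades the expected error to $\norm{\vec{\hat\reward}-\vec{\reward}}_\infty \le \epsilon\,\range\reward/D_{\min}$, the H\"older bound with $\norm{\eta^\policy}_1 = 1/(1-\gamma)$ makes the regularized objectives uniformly $C\epsilon$-close across all policies, and the strict suboptimality gap $\delta > 0$ of $\policy_\mref$ survives the $2C\epsilon$ loss whenever $\epsilon < \delta/(2C)$. (One cosmetic omission: for $\delta > 0$ you need continuity of $\J_\reward$ in $\policy$ as well as of $\reg$; this is standard and is proved in \Cref{pro:berge_satisfied}.) Notably, your final step --- stripping the regularizer via minimality of $\reg$ at $\policy_\mref$ combined with its non-$(\lambda,\reg)$-optimality --- is exactly the paper's argument that $C^*(\reward)\subseteq\mathcal{V}$, so the two proofs share their crux and differ only in how closeness of reward models is propagated to the set of optimizers. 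What your route buys is an explicit threshold $\epsilon < \delta\,(1-\gamma)\,D_{\min}/(2\,\range\reward)$, exhibiting the dependence on $\min_{(s,a)}D(s,a)$ in the same spirit as the paper's unregularized \Cref{pro:tight_regret_bound_main} (though $\delta$ depends on the unknown optimal regularized value, so the bound is explicit only modulo that gap). What Berge buys is flexibility: essentially the same argument transfers to the choice-probability distance (\Cref{thm:new_positive_result_choice_and_omega}), where the perturbation lives in return space and is controlled only up to an additive constant, a setting in which your uniform $\ell_\infty$ bound on rewards is not directly available.
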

The proof of \Cref{theorem:positive_result_KL_regularized} can be found in \Cref{sec:positive_regularized} (see \Cref{theorem:positive_result_KL_regularized_app}) and is again an application of Berge's theorem ~\citep{Berge1963}. Note that the regret bound $L$ is defined as the regret of the reference policy. This intuitively makes sense, as regularized policy optimization constrains the policy under optimization $\hat\policy$ to not deviate too strongly from the reference policy $\policy_\mref$, which will also constrain the regret of $\hat\policy$ to stay close to the regret of $\policy_\mref$. Under the conditions of \Cref{theorem:positive_result_KL_regularized}, the regret of $\policy_\mref$ serves as an upper regret bound because for small enough $\epsilon$ the learned reward $\hat\reward$ and the true reward $\reward$ are so close that maximizing $\hat\reward$ also improves reward with respect to $\reward$.
Furthermore, we note that it is also possible to derive a version in which the expected error in rewards is replaced by a KL divergence of choice probabilities, similar to Proposition~\ref{pro:positive_result_KL_app}, by combining the arguments in that proposition with the arguments in Berge's theorem ---
see Theorem~\ref{thm:new_positive_result_choice_and_omega}.

Similar to \Cref{theorem:small_epsilon_makes_safe}, \Cref{theorem:positive_result_KL_regularized} does not guarantee the existence of a universal $\epsilon$ such that all data distributions $D$ are in $\safeDR$. In our next result, we show that such an $\epsilon$ does not exist, since for each $\epsilon$, there is a nontrivial set of data distributions that allows for error-regret mismatch to occur:
\begin{theorem}
  \label{thm:putting_negativity_together}
  Let $\mathcal{M} = \MDP$ be an arbitrary MDP, $\lambda \in (0, \infty)$, $L \in (0,1)$, and $\reg: \Policies \to \R$ be a regularization function. Furthermore, let  $\policy_{*}$ be a determinstic worst-case policy for $\reward$, meaning that $\Reg{\reward}{\policy_{*}} = 1$. 
  Let $C \coloneqq C(\mathcal{M}, \policy_{*}, L, \lambda, \reg) < \infty$ be the constant defined in Equation~\eqref{eq:to_link_from_main} in the appendix.
  Let $\epsilon > 0$.
  Then for all data distributions $D \in \DistributionsOverSet{\SxA}$ with
  \begin{equation}\label{eq:most_general_condition_on_D}
  D(\supp \D{\policy_{*}}) \leq \frac{\epsilon}{1 + C},
  \end{equation}
  we have $D \in \unsafeDR$.
\end{theorem}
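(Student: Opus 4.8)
The plan is to exhibit, for any $D$ satisfying \eqref{eq:most_general_condition_on_D}, a single reward model $\hat\reward$ that is $\epsilon$-close to $\reward$ under $D$ and \emph{all} of whose $(\lambda,\reg)$-optimal policies have regret at least $L$; this witnesses $D \in \unsafeDR$. Following the construction behind \Cref{theorem:interpretable_negative_result}, I would let $\hat\reward$ agree with $\reward$ off the support of the worst-case policy and inflate it by a single large constant $c>0$ on that support, i.e.\ $\hat\reward(s,a) = \reward(s,a) + c$ for $(s,a)\in\supp\D{\policy_{*}}$ and $\hat\reward(s,a)=\reward(s,a)$ otherwise. The whole argument then reduces to fitting $c$ into a window: large enough that the regularized optimum is driven toward the bad policy $\policy_{*}$, yet small enough that the inflation stays invisible to the error metric because $D$ barely covers $\supp\D{\policy_{*}}$.

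For the error side, since $\hat\reward$ and $\reward$ differ only on $\supp\D{\policy_{*}}$, and there by exactly $c$, the expected error telescopes to $\Expect{(s,a)\sim D}{\frac{|\hat\reward-\reward|}{\range\reward}} = \frac{c}{\range\reward}\,D(\supp\D{\policy_{*}})$. Invoking \eqref{eq:most_general_condition_on_D}, this is at most $\epsilon$ as soon as $c \le (1+C)\range\reward$, so condition~1 of \Cref{definition:unsafe_data_distribution} holds for any admissible $c$ in that range.

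The optimality side is the crux. The key object is $\Delta(\policy) \coloneqq \frac{1}{1-\discount} - \eta^{\policy}(\supp\D{\policy_{*}}) \ge 0$, the discounted occupancy that $\policy$ places \emph{off} the support of $\policy_{*}$. Because $\policy_{*}$ is deterministic, a short induction from the initial-state distribution shows that $\Delta(\policy)=0$ holds exactly when $\eta^{\policy}=\eta^{\policy_{*}}$: concentrating all occupancy on $\supp\D{\policy_{*}}$ forces the policy to replay $\policy_{*}$'s unique action at every reachable state. Hence $\eta^{\policy_{*}}$ is the \emph{only} achievable occupancy with $\Delta=0$, and it has regret $1$. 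Since the set of policies with $\Reg{\reward}{\policy}\le L$ is compact in occupancy space and excludes $\eta^{\policy_{*}}$ (as $L<1$), the minimum $\Delta_L \coloneqq \min\{\Delta(\policy) : \Reg{\reward}{\policy}\le L\}$ is strictly positive. Establishing $\Delta_L>0$ carefully is the main obstacle, since naively bounding $\J_\reward(\hat\policy)-\J_\reward(\policy_{*})$ by the $L_1$-distance of occupancies fails to control redistribution of mass within the support; the compactness argument above sidesteps this. Everything else is bookkeeping.

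With $\Delta_L$ in hand I would close the argument by comparing any $(\lambda,\reg)$-optimal policy $\hat\policy$ (which exists by continuity of $\J_{\hat\reward}-\lambda\reg$ on the compact policy set) against $\policy_{*}$. Writing $\J_{\hat\reward}(\policy)=\J_\reward(\policy)+c\,\eta^{\policy}(\supp\D{\policy_{*}})$ and using $\eta^{\policy_{*}}(\supp\D{\policy_{*}})=\frac{1}{1-\discount}$, the optimality inequality $\J_{\hat\reward}(\hat\policy)-\lambda\reg(\hat\policy)\ge \J_{\hat\reward}(\policy_{*})-\lambda\reg(\policy_{*})$ rearranges to $c\,\Delta(\hat\policy) \le [\J_\reward(\hat\policy)-\J_\reward(\policy_{*})] + \lambda[\reg(\policy_{*})-\reg(\hat\policy)] \le \range\J_\reward + \lambda(\reg(\policy_{*})-\min_{\policy}\reg(\policy))$, where I used that $\policy_{*}$ attains $\min_{\policy}\J_\reward(\policy)$. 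Thus $\Delta(\hat\policy)<\Delta_L$ whenever $c$ exceeds the threshold $c_0 \coloneqq (\range\J_\reward + \lambda(\reg(\policy_{*})-\min_{\policy}\reg(\policy)))/\Delta_L$, and $\Delta(\hat\policy)<\Delta_L$ forces $\hat\policy$ outside the regret-$\le L$ set, i.e.\ $\Reg{\reward}{\hat\policy}>L$. I would then identify $C$ (the appendix constant of \eqref{eq:to_link_from_main}) with $c_0/\range\reward$, which is finite precisely because $\Delta_L>0$ and $\reg$ is continuous on the compact policy space; the window $c_0 < c \le (1+C)\range\reward$ is therefore nonempty, and choosing any such $c$ (e.g.\ $c=(1+C)\range\reward$) secures both conditions at once, yielding $D\in\unsafeDR$.
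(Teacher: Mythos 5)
Your proposal is correct, but it takes a genuinely different route from the paper's proof (\Cref{thm:putting_negativity_together_app}). The paper inflates $\hat\reward$ on $\supp \D{\policy_{*}}$ by an amount calibrated to a trajectory-probability constant $C(\delta, \policy_{*}, \InitStateDistribution, \TransitionDistribution, \gamma)$ from \Cref{eq:constant_for_negative_regularized_results}, proves by induction over timesteps that any $(\lambda,\reg)$-optimal policy must imitate $\policy_{*}$ with probability at least $1-\delta$ at every on-support state (\Cref{lem:how_set_R_hat}), converts this to an $\ell_2$-bound on $\|\D{\hat\policy} - \D{\policy_{*}}\|$ (\Cref{lem:bound_policy_implies_bound_occupancy}), and concludes with a Cauchy--Schwarz regret bound (\Cref{lem:close_to_worst_case_then_high_regret}). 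You instead argue directly in the occupancy polytope: your observation that $\Delta(\policy)$ vanishes only at $\eta^{\policy_{*}}$ is sound (the induction from $\supp \InitStateDistribution$ works precisely because $\policy_{*}$ is stationary and deterministic, so each on-support state admits a unique on-support action), compactness of $\{\eta \ : \ \Reg{\reward}{\policy^{\eta}} \le L\}$ together with $L < 1$ gives the uniform gap $\Delta_L > 0$, and a one-line comparison of the regularized objectives at $\hat\policy$ and $\policy_{*}$ forces $\Delta(\hat\policy) < \Delta_L$, hence $\Reg{\reward}{\hat\policy} > L$, once $c$ exceeds your threshold $c_0$. Your route is shorter and dispenses with all per-trajectory bookkeeping; what the paper's longer route buys is an \emph{explicit} constant in terms of MDP primitives ($\InitStateDistribution$, $\TransitionDistribution$, $\gamma$, $\delta$), whereas your $\Delta_L$ is defined non-constructively as a minimum over a polytope.

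One caveat to flag: the theorem pins $C$ to the specific constant of \Cref{eq:to_link_from_main}, namely $\lambda \reg(\policy_{*}) / \big(\range \reward \cdot C(\delta, \policy_{*}, \InitStateDistribution, \TransitionDistribution, \gamma)\big)$, while your argument establishes the statement with your own constant $C' = c_0/\range \reward$, where $c_0 = \big(\range \J_{\reward} + \lambda(\reg(\policy_{*}) - \min_{\policy}\reg(\policy))\big)/\Delta_L$. The two constants are not obviously comparable, so strictly you have proved the theorem with $C$ replaced by $C'$; since the theorem's substance is the existence of \emph{some} finite constant with this property, this is the best a blind proof can do and does not undermine the result. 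Your appeals to $\min_{\policy}\reg(\policy)$ and to the existence of a $(\lambda,\reg)$-optimal policy are justified, as continuity and non-negativity of $\reg$ and compactness of $\Policies$ are standing assumptions in \Cref{definition:unsafe_data_distribution} and \Cref{sec:general_negative_result}.
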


The proof of \Cref{thm:putting_negativity_together} can be found in \Cref{sec:general_negative_result} (see \Cref{thm:putting_negativity_together_app}). 
The general idea is as follows:
To prove that $D$ is unsafe, define $\hat{R}$ to be equal to $R$ outside of $\supp D^{\pi_*}$, and very large in $\supp D^{\pi_*}$.
If it is sufficiently large in this region, then regularized optimization leads to a policy $\hat{\policy}$ with $\Reg{R}{\hat{\pi}} \geq L$.
Finally, the condition that $D(\supp D^{\pi_*}) \leq \frac{\epsilon}{1 + C}$ ensures that $\hat{R}$ has a reward error bounded by $\epsilon$.

Note that \Cref{thm:putting_negativity_together} is very general and covers a large class of different regularization methods. In \Cref{cor:proportionality_aware_sufficient_condition_app} we provide a specialized result for the case of $KL$-regularized policy optimization, and in \Cref{sec:unsafe_optimization_rlhf} we investigate error-regret mismatch in the RLHF framework.
At the end of our conceptual example described in the introduction and in detail in~\Cref{sec:conceptual_example}, we also discuss the simple intuition that simply giving a low enough training probability to \emph{some} unsafe actions can be enough to lead to unsafe reward inference and policy optimization even in the regularized case.
This is in accordance with~\Cref{thm:putting_negativity_together}.
\toclesslab\section{Generalization of the error measurement}{sec:generalization_negative_results_main}

Our results have so far expressed the error of the learned reward $\hat{\reward}$ in terms of Equation~\eqref{eq:closeness_metric}, i.e., in terms of the expected error of individual transitions. In this section, we show that many common reward learning training objectives can be upper-bounded in terms of the expected error metric defined in Equation~\eqref{eq:closeness_metric}. This in turn means that our negative results generalize to reward learning algorithms that use these other training objectives. We state all upper bounds for MDPs with finite time horizon $T$ (but note that these results directly generalize to MDPs with infinite time horizon by taking the limit of $T \rightarrow \infty$).

In the finite horizon setting, trajectories are defined as a finite list of states and actions: $\xi = s_0,a_0,s_1,...,a_{T-1}$. We use $\Xi$ for the set of all trajectories of length $T$. As in the previous sections, $\RLReturn: \Xi \to \Reals$ denotes the trajectory return function, defined as $\RLReturn(\xi) = \sum_{t=0}^{T-1}\gamma^t \cdot R(s_t,a_t)$.
We start by showing that low expected error in transitions implies low expected error in trajectory returns:
\begin{proposition}\label{proposition:closeness_l1_trajectory}
    Given an MDP $\MDP$, a data sampling policy $\policy: \States \to \DistributionsOverSet{\Actions}$ and its resulting data distribution $\D{\policy} = \frac{1-\gamma}{1- \gamma^T} \cdot \eta^\policy$ and a second reward function $\hat\reward: \SxA \to \Reals$, we can upper bound the expected difference in trajectory evaluation as follows:
    \begin{equation*}
    \begin{split}
        \mathbb{E}_{\xi \sim \policy}\Bigl[ |\RLReturn_\reward(\xi) & - \RLReturn_{\hat\reward}(\xi)|\Bigr] \ \le\\
        &\frac{1- \gamma^T}{1-\gamma} \cdot \Expect{(s,a) \sim \D{\policy}}{|\reward(s,a) - \hat\reward(s,a)|}.
    \end{split}
    \end{equation*}
\end{proposition}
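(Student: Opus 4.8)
The plan is to bound the expected absolute difference in trajectory returns by expanding the return functions, applying the triangle inequality, and then relating the per-timestep reward differences back to the data distribution $\D{\policy}$. First I would write out the difference of returns explicitly as a discounted sum:
\begin{equation*}
  |\RLReturn_\reward(\xi) - \RLReturn_{\hat\reward}(\xi)| = \left| \sum_{t=0}^{T-1} \gamma^t \bigl( \reward(s_t, a_t) - \hat\reward(s_t, a_t) \bigr) \right| \le \sum_{t=0}^{T-1} \gamma^t \bigl| \reward(s_t, a_t) - \hat\reward(s_t, a_t) \bigr|,
\end{equation*}
where the inequality is the triangle inequality. Taking the expectation over $\xi \sim \policy$ and using linearity, the problem reduces to controlling $\sum_{t=0}^{T-1} \gamma^t \, \Expect{\xi \sim \policy}{|\reward(s_t,a_t) - \hat\reward(s_t,a_t)|}$.

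The key step is to recognize that $\Expect{\xi \sim \policy}{|\reward(s_t,a_t) - \hat\reward(s_t,a_t)|} = \sum_{(s,a)} P(s_t = s, a_t = a \mid \xi \sim \policy) \cdot |\reward(s,a) - \hat\reward(s,a)|$, so that after swapping the order of summation over $t$ and over $(s,a)$, the discounted visitation probabilities assemble into exactly the (finite-horizon) state-action occupancy measure $\eta^\policy(s,a) = \sum_{t=0}^{T-1} \gamma^t P(s_t=s, a_t=a \mid \xi \sim \policy)$. This gives
\begin{equation*}
  \Expect{\xi \sim \policy}{|\RLReturn_\reward(\xi) - \RLReturn_{\hat\reward}(\xi)|} \le \sum_{(s,a) \in \SxA} \eta^\policy(s,a) \cdot |\reward(s,a) - \hat\reward(s,a)|.
\end{equation*}
Then I would substitute the stated relationship $\eta^\policy = \frac{1-\gamma^T}{1-\gamma} \D{\policy}$, which rewrites the right-hand side as $\frac{1-\gamma^T}{1-\gamma} \sum_{(s,a)} \D{\policy}(s,a) \cdot |\reward(s,a) - \hat\reward(s,a)| = \frac{1-\gamma^T}{1-\gamma} \Expect{(s,a) \sim \D{\policy}}{|\reward(s,a) - \hat\reward(s,a)|}$, yielding the claimed bound.

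I expect the main conceptual (though not technically difficult) obstacle to be the bookkeeping around the normalization constant and making the Fubini-type interchange of the finite sums rigorous, including being careful that $\D{\policy}$ is genuinely a probability distribution under the stated normalization $\D{\policy} = \frac{1-\gamma}{1-\gamma^T} \eta^\policy$. I would verify that $\sum_{(s,a)} \eta^\policy(s,a) = \sum_{t=0}^{T-1} \gamma^t = \frac{1-\gamma^T}{1-\gamma}$ (since the per-timestep visitation probabilities sum to one), which confirms that the normalization factor is correct and that $\D{\policy}$ sums to one. Everything else is routine: the triangle inequality and the linearity of expectation do all the analytic work, and no smoothness or optimization machinery is needed here.
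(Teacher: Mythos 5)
Your proposal is correct and follows essentially the same route as the paper's proof: apply the triangle inequality to the discounted sum, interchange the sums over $t$ and $(s,a)$ so the discounted visitation probabilities assemble into $\eta^\policy$, and then rewrite via $\eta^\policy = \frac{1-\gamma^T}{1-\gamma}\D{\policy}$. Your extra check that $\sum_{(s,a)}\eta^\policy(s,a) = \frac{1-\gamma^T}{1-\gamma}$, confirming $\D{\policy}$ is a genuine distribution, is a small sanity verification the paper leaves implicit but introduces nothing new.
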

The proof of \Cref{proposition:closeness_l1_trajectory} can be found in \Cref{sec:generalization_closeness} (see \Cref{lemma:closeness_transitions_trajectories}).
Furthermore, a low expected error of trajectory returns implies a low expected error of choice distributions (a distance metric commonly used as the loss in RLHF ~\citep{christiano2017deep}). Namely, given a reward function $\reward$, define the probability of trajectory $\xi_1$ being preferred over $\xi_2$ to be:
\begin{equation*}
\begin{split}
    p_\reward(\xi_1 \succ \xi_2)\ &=\ \sigma(\RLReturn_\reward(\xi_1) - \RLReturn_{\reward}(\xi_2))\\
    &=\ \frac{\exp(\RLReturn_\reward(\xi_1))}{\exp(\RLReturn_\reward(\xi_1)) + \exp(\RLReturn_\reward(\xi_2))}.
\end{split}
\end{equation*}
We then have:
\begin{proposition}\label{proposition:closeness_trajectory_choice}
    Given an MDP $\MDP$, a data sampling policy $\policy: \States \to \DistributionsOverSet{\Actions}$ and a second reward function $\hat\reward: \SxA \to \Reals$, we can upper bound the expected KL divergence over trajectory preference distributions as follows:
    \begin{equation*}\label{eq:closeness_trajectory_choice}
    \begin{split}
        \mathbb{E}_{\xi_1,\xi_2 \sim \policy \times \policy}\Bigl[\mathbb{D_{\text{KL}}} & \bigl(p_\reward(\cdot | \xi_1, \xi_2)|| p_{\hat\reward}(\cdot | \xi_1, \xi_2)\bigr) \Bigr]\\
        &\le\ 2 \cdot \Expect{\xi \sim \policy}{|\RLReturn_\reward(\xi) - \RLReturn_{\hat\reward}(\xi)|}.
    \end{split}
    \end{equation*}
\end{proposition}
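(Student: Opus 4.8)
The plan is to reduce the statement, for each fixed pair $(\xi_1,\xi_2)$, to a one-dimensional inequality about the KL divergence between two Bernoulli distributions, and then integrate. I would write $p_\reward(\cdot \mid \xi_1,\xi_2)$ and $p_{\hat\reward}(\cdot \mid \xi_1,\xi_2)$ as Bernoulli distributions over the binary outcome $\{\xi_1 \succ \xi_2,\ \xi_2 \succ \xi_1\}$, with success probabilities $\sigma(x)$ and $\sigma(y)$ respectively, where $x \coloneqq \RLReturn_\reward(\xi_1) - \RLReturn_\reward(\xi_2)$ and $y \coloneqq \RLReturn_{\hat\reward}(\xi_1) - \RLReturn_{\hat\reward}(\xi_2)$ are the logit gaps of the true reward and the model. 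The whole proposition then follows once the inner KL divergence is controlled by $|x-y|$ and $|x-y|$ is related back to the per-trajectory return errors.

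The key lemma I would establish is
\[
\DKL{\mathrm{Bern}(\sigma(x))}{\mathrm{Bern}(\sigma(y))} \le |x - y| \qquad \text{for all } x, y \in \Reals.
\]
The cleanest route is to observe that the Bernoulli family is a one-parameter exponential family whose natural parameter is exactly the logit, with log-partition function $\zeta(t) \coloneqq \log(1 + e^t)$ (softplus), so that $\sigma = \zeta'$. Standard exponential-family algebra then expresses the KL divergence as the Bregman divergence of $\zeta$, namely $\DKL{\mathrm{Bern}(\sigma(x))}{\mathrm{Bern}(\sigma(y))} = \zeta(y) - \zeta(x) - \sigma(x)(y - x)$. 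Since $\zeta$ is increasing with $0 \le \zeta' = \sigma \le 1$, the mean value theorem gives $|\zeta(y) - \zeta(x)| \le |y - x|$, and combining this with $0 \le \sigma(x) \le 1$ and a short case split on the sign of $y - x$ yields the bound $|x-y|$. Alternatively, one can avoid exponential-family language entirely: write $\log \sigma(t) = -\zeta(-t)$ and $\log(1 - \sigma(t)) = -\zeta(t)$, expand the KL divergence of the two Bernoullis directly, and bound each term using $\sigma \in [0,1]$.

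With the lemma in hand, the assembly is routine. Writing $\delta(\xi) \coloneqq \RLReturn_\reward(\xi) - \RLReturn_{\hat\reward}(\xi)$, the logit-gap difference telescopes as $x - y = \delta(\xi_1) - \delta(\xi_2)$, so the lemma together with the triangle inequality gives, for every fixed pair,
\[
\DKL{p_\reward(\cdot \mid \xi_1, \xi_2)}{p_{\hat\reward}(\cdot \mid \xi_1, \xi_2)} \le |\delta(\xi_1)| + |\delta(\xi_2)|.
\]
Taking the expectation over $\xi_1, \xi_2 \sim \policy \times \policy$ and using linearity together with the fact that $\xi_1$ and $\xi_2$ are identically distributed, each of the two terms contributes $\Expect{\xi \sim \policy}{|\delta(\xi)|}$, producing exactly the factor of $2$ on the right-hand side.

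I expect the only real obstacle to be proving the inner lemma with the correct \emph{linear} (rather than quadratic) dependence on $|x-y|$. A naive second-order or Pinsker-type estimate yields a quadratic bound of the form $\tfrac18 (x-y)^2$, which is the wrong shape here; the linear bound is precisely what makes the clean factor-$2$ statement possible, and it relies specifically on $\sigma$ being bounded in $[0,1]$ (equivalently, on the softplus being $1$-Lipschitz). Everything after the lemma --- the telescoping identity for $x-y$, the triangle inequality, and the i.i.d.\ expectation --- is bookkeeping.
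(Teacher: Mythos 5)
Your proof is correct, and it shares the paper's outer skeleton: both arguments reduce the proposition to the pointwise claim that $\DKL{\mathrm{Bern}(\sigma(x))}{\mathrm{Bern}(\sigma(y))} \le |x-y|$ for the logit gaps $x = \RLReturn_\reward(\xi_1) - \RLReturn_\reward(\xi_2)$ and $y = \RLReturn_{\hat\reward}(\xi_1) - \RLReturn_{\hat\reward}(\xi_2)$, and then conclude via the telescoping identity $x - y = \delta(\xi_1) - \delta(\xi_2)$, the triangle inequality, and the exchangeability of $\xi_1, \xi_2$ to obtain the factor $2$ (the paper runs this last part in the opposite direction, lower-bounding the right-hand side, but it is the same bookkeeping). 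Where you genuinely diverge is in the proof of the key lemma, and your route is in fact sounder than the paper's. The paper first discards the sigmoid weights, claiming $\sigma(x)\,u + \sigma(-x)\,v \le u + v$ with $u = \log\big(\sigma(x)/\sigma(y)\big)$ and $v = \log\big(\sigma(-x)/\sigma(-y)\big)$, and then shows by cases that $u + v = x - y + 2\log\frac{1+e^y}{1+e^x} \le |x-y|$. But the weight-dropping step is not a valid inequality when one log-ratio is negative with large magnitude: for $x = 10$, $y = 0$ the KL divergence is about $0.69$ while $u + v \approx -8.6$, so the first link of the paper's chain fails as written (the final pointwise bound is nevertheless true, as your argument establishes). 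Your Bregman identity $\DKL{\mathrm{Bern}(\sigma(x))}{\mathrm{Bern}(\sigma(y))} = \zeta(y) - \zeta(x) - \sigma(x)(y-x)$ with $\zeta$ the softplus avoids this entirely: by the mean value theorem the divergence equals $(\sigma(\theta) - \sigma(x))(y-x)$ for some $\theta$ between $x$ and $y$, and monotonicity of $\sigma$ together with $0 \le \sigma \le 1$ gives simultaneously nonnegativity and the linear bound $|x-y|$. Your self-diagnosis is also accurate on both counts: a naive term-by-term triangle bound on the Bregman form yields only $2|x-y|$, and Pinsker-type estimates give the wrong (quadratic) shape, so the mean-value pairing is exactly what delivers the constant-$1$ linear bound. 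In short, your proposal not only proves the proposition by a different key lemma argument, it repairs a flawed intermediate step in the paper's own write-up.
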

The proof of \Cref{proposition:closeness_trajectory_choice} can be found in \Cref{sec:generalization_closeness} (see \Cref{lemma:closeness_trajectory_choice_app}).

Finally, in some RLHF scenarios, for example in RLHF with prompt-response pairs, one prefers to only compare trajectories with a common starting state. In the following proposition, we upper bound the expected error of choice distributions with trajectories that share a common starting state by the expected error of choice distributions with arbitrary trajectories:
\begin{proposition}\label{proposition:closeness_common_prefix}
    Given an MDP $\MDP$, a data sampling policy $\policy: \States \to \DistributionsOverSet{\Actions}$ and a second reward function $\hat\reward: \SxA \to \Reals$, we can upper bound the expected KL divergence of preference distributions over trajectories with a common starting state as follows:
    \begin{equation*}\label{eq:closeness_common_prefix}
    \begin{split}
        &\mathbb{E}_{\substack{s_0 \sim \InitStateDistribution,\\ \xi_1,\xi_2 \sim \policy(s_0)}} \Bigl[\mathbb{D_{\text{KL}}} \bigl(p_\reward(\cdot | \xi_1, \xi_2)||p_{\hat\reward}(\cdot | \xi_1, \xi_2)\bigr)\Bigr]\\
        \le\ &\frac{\Expect{\xi_1,\xi_2 \sim \policy \times \policy}{\DKL{p_\reward(\cdot | \xi_1, \xi_2)}{p_{\hat\reward}(\cdot | \xi_1, \xi_2)} }}{
        \min_{s' \in \States,
        \InitStateDistribution(s') > 0} \InitStateDistribution(s')}.
    \end{split}
    \end{equation*}
\end{proposition}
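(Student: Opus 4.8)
The plan is to compare the two sampling schemes directly by conditioning on the initial state and exploiting the non-negativity of the KL divergence. Write $f(\xi_1,\xi_2) \coloneqq \DKL{p_\reward(\cdot \mid \xi_1,\xi_2)}{p_{\hat\reward}(\cdot \mid \xi_1,\xi_2)} \ge 0$, and for a trajectory $\xi$ let $s_0(\xi)$ denote its initial state. Let $P(\xi \mid s_0, \policy)$ be the probability that $\policy$ produces $\xi$ given that it starts in $s_0$, so that $P(\xi \mid s_0,\policy) = 0$ unless $s_0(\xi) = s_0$. The left-hand side draws a single $s_0 \sim \InitStateDistribution$ and then both trajectories independently from $\policy$ started at $s_0$, so it equals $\sum_{s_0} \InitStateDistribution(s_0)\, g(s_0)$, where $g(s_0) \coloneqq \sum_{\xi_1,\xi_2} P(\xi_1 \mid s_0,\policy)\, P(\xi_2 \mid s_0,\policy)\, f(\xi_1,\xi_2) \ge 0$ collects the two inner trajectory sums. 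The right-hand numerator instead draws the trajectories fully independently, each with its own initial state, hence equals $\sum_{\xi_1,\xi_2} \InitStateDistribution(s_0(\xi_1))\,\InitStateDistribution(s_0(\xi_2))\, P(\xi_1 \mid s_0(\xi_1),\policy)\, P(\xi_2 \mid s_0(\xi_2),\policy)\, f(\xi_1,\xi_2)$.

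The crucial step is to lower-bound this right-hand numerator by discarding every pair whose two trajectories start in different states. Because $f \ge 0$, deleting these non-negative ``cross'' terms can only decrease the sum, and what remains are exactly the matched pairs with $s_0(\xi_1) = s_0(\xi_2) = s_0$. Factoring out the initial-state probabilities, the surviving sum is $\sum_{s_0} \InitStateDistribution(s_0)^2\, g(s_0)$, with the \emph{same} inner quantity $g(s_0)$ that appears in the left-hand side.

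It then remains to relate $\sum_{s_0}\InitStateDistribution(s_0)\, g(s_0)$ to $\sum_{s_0}\InitStateDistribution(s_0)^2\, g(s_0)$. Writing $m \coloneqq \min_{s' \colon \InitStateDistribution(s') > 0}\InitStateDistribution(s')$, every state with $\InitStateDistribution(s_0) > 0$ satisfies $\InitStateDistribution(s_0) \ge m$, hence $\InitStateDistribution(s_0)^2 \ge m\,\InitStateDistribution(s_0)$; states with $\InitStateDistribution(s_0) = 0$ contribute zero to both sums. Since $g \ge 0$, summing this pointwise inequality yields $\sum_{s_0}\InitStateDistribution(s_0)^2\, g(s_0) \ge m \sum_{s_0}\InitStateDistribution(s_0)\, g(s_0) = m \cdot (\text{LHS})$. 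Chaining with the previous lower bound on the right-hand numerator gives $(\text{right-hand numerator}) \ge m \cdot (\text{LHS})$, which is precisely the claimed inequality after dividing through by $m$.

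The only genuine obstacle is bookkeeping the two sampling distributions correctly and justifying the cross-term deletion rigorously; once that is done, everything reduces to the elementary fact $\InitStateDistribution(s_0)^2 \ge m\,\InitStateDistribution(s_0)$ on the support of $\InitStateDistribution$. Non-negativity of the KL divergence is used essentially here, as it is what makes the one-directional deletion of cross terms valid and thereby forces a bound rather than an equality.
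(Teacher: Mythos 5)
Your proof is correct and follows essentially the same route as the paper's: decompose the fully-independent expectation by the pair of starting states, discard the off-diagonal (mismatched-start) terms using $\KL \ge 0$, and then bound the surviving diagonal weight via $\InitStateDistribution(s)^2 \ge \InitStateDistribution(s) \cdot \min_{s' : \InitStateDistribution(s') > 0} \InitStateDistribution(s')$. Your notation ($g(s_0)$ for the matched inner sum) makes the bookkeeping slightly more explicit than the paper's chained display, but the argument is identical.
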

The proof of \Cref{proposition:closeness_common_prefix} can be found in \Cref{sec:generalization_closeness} (see \Cref{lemma:closeness_common_prefix}).

\toclesslab\section{Error-regret mismatch in RLHF}{sec:unsafe_optimization_rlhf}
In this section we extend our results to reinforcement learning from human feedback (RLHF). We provide more general results for the class of KL-regularized policy optimization methods in \Cref{sec:another_negative_result}. 

RLHF, especially in the context of large language models, is usually modeled in a \emph{contextual bandit} setting \citep{ziegler2019fine,stiennon2020learning,bai2022training,ouyang2022training,rafailov2023direct}. A \emph{contextual bandit} $\MBandit$ is defined by a set of states $\States$, a set of actions $\Actions$, a data distribution $\InitStateDistribution \in \DistributionsOverSet{\States}$, and a reward function $\reward: \SxA \to \Reals$. The goal is to learn a policy $\policy: \States \to \DistributionsOverSet{\Actions}$ that maximizes the expected return $\J(\policy) = \Expect{s \sim \InitStateDistribution, a \sim \policy(\cdot|s)}{\reward(s, a)}$. In the context of language models, $\States$ is usually called the set of \emph{prompts} or \emph{contexts}, and $\Actions$ the set of \emph{responses}.

We state the following theorem using a version of \Cref{definition:unsafe_data_distribution} tailored to the RLHF setting. In particular, we replace the similarity metric (property 1 of \Cref{definition:unsafe_data_distribution}) with the expected similarity in choice probabilities. A precise mathematical definition can be found in \Cref{sec:safe_unsafe_rlhf}. We denote the resulting sets of safe- and unsafe data distributions by $\safeRLHF{\epsilon}$ and $\unsafeRLHF{\epsilon}$.

By making use of the specifics of this setting, we can derive more interpretable and stronger results. In particular, we specify a set of reference distributions for which performing KL-regularized policy optimization allows for error-regret mismatch to occur.

\begin{theorem}\label{corollary:rlhf_negative_results_simpler_main}
  Let $\MBandit$ be a contextual bandit. Given $L \in [0,1)$, we define for every state $s\in \States$ the reward threshold:$\reward_L(s) \coloneqq (1-L) \cdot \max_{a \in \Actions} \reward(s,a) + L \cdot \min_{a \in \Actions} \reward(s,a)$. Lastly, let $\policy_\mref: \States \to \Actions$ be an arbitrary reference policy for which it holds that for every $(s,a)\in \SxA$, $\policy_\mref(a | s) > 0$, and there exists at least one action $a_s \in \Actions$ such that $\reward(s,a_s) < \reward_L(s)$ and $\policy_\mref(a_s | s)$ satisfies the following inequality:
        \begin{equation*}\label{eq:rlhf_negative_results_simpler_main}
	  \policy_\mref(a_s|s)\ \le\ \frac{(\reward_L(s) - \reward(s,a_s)) \cdot \range{\reward}}{L \cdot \exp\left(\frac{1}{\lambda} \cdot \range{\reward}\right)} \cdot \frac{\epsilon^2}{4 \cdot \lambda^2}.
        \end{equation*}
    Let $D_\mu^\mref(s,a) \coloneqq \mu(s) \cdot \policy_\mref(a|s)$ for some $\mu \in \DistributionsOverSet{S}$. Then\\ $D_\mu^\mref \in \unsafeRLHF{\epsilon}$
\end{theorem}

Intuitively, the theorem shows that even if we learn a reward model $\hat{R}$ that induces $\epsilon$-correct choice probabilities according to the data distribution generated from a reference policy $\policy_{\mref}$, a policy that maximizes $\hat{R}$ with KL-penalty can still have regret $\geq L$ if $\policy_{\mref}$ gives sufficiently low probability to bad actions.
The proof of \Cref{corollary:rlhf_negative_results_simpler_main} can be found in \Cref{sec:kl_negative_results} (see \Cref{corollary:rlhf_negative_results_simpler}). We expect the conditions on the reference policy $\policy_\mref$ to likely hold in real-world cases as the number of potential actions (or responses) is usually very large, and language models typically assign a large portion of their probability mass to only a tiny fraction of all responses. Hence, for every state/prompt $s$, a large majority of actions/responses $a$  have a very small probability $\policy_\mref(a \mid s)$.
See our conceptual example in the introduction and~\Cref{sec:conceptual_example} to make this intuition concrete.

\toclesslab\section{Discussion}{sec:main_discussion}

In this paper, we contributed to the foundations of reward learning theory by studying the relationship between the training error of the learned reward function and the regret of policies that then result from policy optimization. 
We showed that as the expected error of a reward model $\hat{\reward}$ goes to zero, the regret of the resulting policy (with or without regularization) also goes to zero (\Cref{theorem:small_epsilon_makes_safe}) or is bounded by the regret of a reference policy (\Cref{theorem:positive_result_KL_regularized}). 
However, in~\Cref{pro:tight_regret_bound_main} we showed that the training error needed to ensure a certain regret is proportional to the minimum of the data distribution $D$.
Consequently, there exists no training error that can universally ensure low regret. 

More specifically, low expected error  of $\hat{R}$ does \emph{not} ensure low regret for all realistic data distributions (\Cref{theorem:interpretable_negative_result,thm:putting_negativity_together,corollary:rlhf_negative_results_simpler_main}). 
We refer to this phenomenon as \emph{error-regret mismatch}. 
This is due to policy optimization involving a \emph{distributional shift}.
Moreover, for some MDPs with very large state-action spaces there does not exist \emph{any} safe data distribution relative to a reasonable reward model error and desired regret bound (\Cref{corollary:all_unsafe}). 
We also showed that our results generalize to other data sources, such as preferences over trajectories and trajectory scores (\Cref{proposition:closeness_trajectory_choice,proposition:closeness_common_prefix,proposition:closeness_l1_trajectory}), supporting the conclusion that this issue is a fundamental problem of reward learning.

Lastly, for unregularized optimization, we derive \emph{necessary and sufficient} conditions that allow us to determine the set of safe data distributions for arbitrary MDPs, thereby fully answering when a data distribution is safe (\Cref{theorem:safe_linear_constraints}).

\toclesslab\subsection{Limitations and future work}{sec:main_limitation_and_future_work}

Our work focuses on a worst-case setting regarding the learned reward function and optimal policy. Future work could account for the inductive biases of common optimization procedures and consider non-optimal policies.

It is also important to theoretically analyze \emph{improved} reward learning and policy optimization procedures. Empirical work has explored reward model ensembles~\citep{coste2023reward}, weight-averaged reward models~\citep{rame2024warm}, and iterated data-smoothing for multi-armed bandits~\citep{zhu2024iterative}. Recent efforts address learning reward models on online data to mitigate distribution shifts~\citep{lang2024fine} and even provide theoretical insights for linear reward functions~\citep{reb9_song2024importance}. We hope a careful theoretical analysis of these settings, in similar generality to our work, can improve upon the ``theoretical baseline'' we establish.

Another important direction to explore is whether optimizing an implicit reward model (as used by direct preference optimization~\citep{rafailov2023direct} and its many derivatives) in place of an explicitly learned reward model improves the robustness to error-regret mismatch.

Lastly, there are other ways to improve safety.
For example, one could research evaluation methods for learned reward functions that go beyond looking at the training error, e.g. by using interpretability methods \citep{michaud2020understanding, jenner2022preprocessing} or finding better ways to quantify reward function distance~\citep{gleave2020quantifying,skalse2024starc}.

\section*{Impact statement}\label{sec:impact_of_our_work}
Reward learning methods such as RLHF are widely used to steer the behavior of frontier models. Thus, it is important that reward models are robust and reliable. We point out a theoretical challenge to the robustness of reward models to policy optimization. We hope that this stimulates further research in overcoming this challenge. Since our work is purely theoretical, we do not foresee negative societal consequences.

\IfPublic{
\section*{Author contributions}
\textbf{Lukas Fluri} and \textbf{Leon Lang} are the core contributors who developed the technical results and wrote a large part of the main paper and all of the appendix. While many results arose from strong contributions by both together, Lukas had a particular focus and impact on the general unregularized optimization result (\Cref{theorem:safe_linear_constraints}) and the generalization of the error measurement (\Cref{sec:generalization_negative_results_main}), whereas Leon had a particular focus and impact on the results showing that in the limit, a data distribution becomes safe (\Cref{theorem:small_epsilon_makes_safe,theorem:positive_result_KL_regularized}) and the general regularized optimization results (\Cref{sec:regularized_optimization}). 

\textbf{Joar Skalse} developed the project idea and provided close supervision during the project's duration by providing feedback and ideas and editing the paper.

\textbf{Alessandro Abate}, \textbf{Patrick Forré}, and \textbf{David Krueger} advised on the project by providing helpful feedback on the project idea, as well as reviewing and improving drafts of the paper. Furthermore, Patrick had the initial idea of using Berge's theorem to prove our positive results (\Cref{theorem:small_epsilon_makes_safe,theorem:positive_result_KL_regularized}).

\section*{Acknowledgements}
Lukas Fluri and Leon Lang are grateful for financial support provided by the Berkeley Existential Risk Initiative for this project.
Leon Lang furthermore thanks Open Philanthropy for financial support.
}

\bibliography{references}

\begin{thebibliography}{94}
\providecommand{\natexlab}[1]{#1}
\providecommand{\url}[1]{\texttt{#1}}
\expandafter\ifx\csname urlstyle\endcsname\relax
  \providecommand{\doi}[1]{doi: #1}\else
  \providecommand{\doi}{doi: \begingroup \urlstyle{rm}\Url}\fi

\bibitem[Abel et~al.(2021)Abel, Dabney, Harutyunyan, Ho, Littman, Precup, and Singh]{abel2021expressivity}
Abel, D., Dabney, W., Harutyunyan, A., Ho, M.~K., Littman, M., Precup, D., and Singh, S.
\newblock On the expressivity of markov reward.
\newblock \emph{Advances in Neural Information Processing Systems}, 34:\penalty0 7799--7812, 2021.

\bibitem[Agarwal et~al.(2012)Agarwal, Dud{\'\i}k, Kale, Langford, and Schapire]{agarwal2012contextual}
Agarwal, A., Dud{\'\i}k, M., Kale, S., Langford, J., and Schapire, R.
\newblock Contextual bandit learning with predictable rewards.
\newblock In \emph{Artificial Intelligence and Statistics}, pp.\  19--26. PMLR, 2012.

\bibitem[Amortila et~al.(2020)Amortila, Jiang, and Xie]{reb4_amortila2020variant}
Amortila, P., Jiang, N., and Xie, T.
\newblock A variant of the wang-foster-kakade lower bound for the discounted setting.
\newblock \emph{arXiv preprint arXiv:2011.01075}, 2020.

\bibitem[Anthropic(2023)]{Anthropic2023}
Anthropic.
\newblock {Introducing Claude}.
\newblock \url{https://www.anthropic.com/index/introducing-claude}, 2023.
\newblock Accessed: 2023-09-05.

\bibitem[Anthropic(2024)]{Anthropic2024_RSP}
Anthropic.
\newblock {Responsible Scaling Policy}, 2024.
\newblock URL \url{https://assets.anthropic.com/m/24a47b00f10301cd/original/Anthropic-Responsible-Scaling-Policy-2024-10-15.pdf}.
\newblock Accessed: 2024/11/26.

\bibitem[Askell et~al.(2021)Askell, Bai, Chen, Drain, Ganguli, Henighan, Jones, Joseph, Mann, DasSarma, Elhage, Hatfield-Dodds, Hernandez, Kernion, Ndousse, Olsson, Amodei, Brown, Clark, McCandlish, Olah, and Kaplan]{Askell2021}
Askell, A., Bai, Y., Chen, A., Drain, D., Ganguli, D., Henighan, T., Jones, A., Joseph, N., Mann, B., DasSarma, N., Elhage, N., Hatfield-Dodds, Z., Hernandez, D., Kernion, J., Ndousse, K., Olsson, C., Amodei, D., Brown, T., Clark, J., McCandlish, S., Olah, C., and Kaplan, J.
\newblock {A General Language Assistant as a Laboratory for Alignment}, 2021.
\newblock URL \url{https://arxiv.org/abs/2112.00861}.

\bibitem[Bai et~al.(2022)Bai, Jones, Ndousse, Askell, Chen, DasSarma, Drain, Fort, Ganguli, Henighan, et~al.]{bai2022training}
Bai, Y., Jones, A., Ndousse, K., Askell, A., Chen, A., DasSarma, N., Drain, D., Fort, S., Ganguli, D., Henighan, T., et~al.
\newblock {Training a helpful and harmless assistant with reinforcement learning from human feedback}.
\newblock \emph{arXiv preprint arXiv:2204.05862}, 2022.

\bibitem[Bajcsy et~al.(2017)Bajcsy, Losey, O’malley, and Dragan]{bajcsy2017learning}
Bajcsy, A., Losey, D.~P., O’malley, M.~K., and Dragan, A.~D.
\newblock {Learning robot objectives from physical human interaction}.
\newblock In \emph{{Conference on robot learning}}, pp.\  217--226. PMLR, 2017.

\bibitem[Berge(1963)]{Berge1963}
Berge, C.
\newblock \emph{{Topological Spaces: Including a Treatment of Multi-valued Functions, Vector Spaces and Convexity}}.
\newblock Macmillan, 1963.
\newblock URL \url{https://books.google.nl/books?id=0QJRAAAAMAAJ}.

\bibitem[Bowling et~al.(2023)Bowling, Martin, Abel, and Dabney]{bowling2023settling}
Bowling, M., Martin, J.~D., Abel, D., and Dabney, W.
\newblock Settling the reward hypothesis.
\newblock In \emph{International Conference on Machine Learning}, pp.\  3003--3020. PMLR, 2023.

\bibitem[Bradley \& Terry(1952)Bradley and Terry]{bradley1952rank}
Bradley, R.~A. and Terry, M.~E.
\newblock {Rank analysis of incomplete block designs: I. The method of paired comparisons}.
\newblock \emph{Biometrika}, 39\penalty0 (3/4):\penalty0 324--345, 1952.

\bibitem[Brown \& Niekum(2019)Brown and Niekum]{brown2019brex}
Brown, D.~S. and Niekum, S.
\newblock Deep {B}ayesian reward learning from preferences.
\newblock \emph{arXiv preprint arXiv:1912.04472}, 2019.

\bibitem[Casper et~al.(2023)Casper, Davies, Shi, Gilbert, Scheurer, Rando, Freedman, Korbak, Lindner, Freire, et~al.]{casper2023open}
Casper, S., Davies, X., Shi, C., Gilbert, T.~K., Scheurer, J., Rando, J., Freedman, R., Korbak, T., Lindner, D., Freire, P., et~al.
\newblock {Open problems and fundamental limitations of reinforcement learning from human feedback}.
\newblock \emph{arXiv preprint arXiv:2307.15217}, 2023.

\bibitem[Cen et~al.(2024)Cen, Mei, Goshvadi, Dai, Yang, Yang, Schuurmans, Chi, and Dai]{cen2024value}
Cen, S., Mei, J., Goshvadi, K., Dai, H., Yang, T., Yang, S., Schuurmans, D., Chi, Y., and Dai, B.
\newblock Value-incentivized preference optimization: A unified approach to online and offline rlhf.
\newblock \emph{arXiv preprint arXiv:2405.19320}, 2024.

\bibitem[Christiano et~al.(2017)Christiano, Leike, Brown, Martic, Legg, and Amodei]{christiano2017deep}
Christiano, P.~F., Leike, J., Brown, T., Martic, M., Legg, S., and Amodei, D.
\newblock Deep reinforcement learning from human preferences.
\newblock \emph{Advances in neural information processing systems}, 30, 2017.

\bibitem[Cobbe et~al.(2019)Cobbe, Klimov, Hesse, Kim, and Schulman]{cobbe2019quantifying}
Cobbe, K., Klimov, O., Hesse, C., Kim, T., and Schulman, J.
\newblock Quantifying generalization in reinforcement learning.
\newblock In \emph{International conference on machine learning}, pp.\  1282--1289. PMLR, 2019.

\bibitem[Coste et~al.(2023)Coste, Anwar, Kirk, and Krueger]{coste2023reward}
Coste, T., Anwar, U., Kirk, R., and Krueger, D.
\newblock {Reward model ensembles help mitigate overoptimization}.
\newblock \emph{arXiv preprint arXiv:2310.02743}, 2023.

\bibitem[Cui et~al.(2024)Cui, Chiang, Stoica, and Hsieh]{Cui2024}
Cui, J., Chiang, W.-L., Stoica, I., and Hsieh, C.-J.
\newblock {OR-Bench: An Over-Refusal Benchmark for Large Language Models}, 2024.
\newblock URL \url{https://arxiv.org/abs/2405.20947}.

\bibitem[Denison et~al.(2024)Denison, MacDiarmid, Barez, Duvenaud, Kravec, Marks, Schiefer, Soklaski, Tamkin, Kaplan, Shlegeris, Bowman, Perez, and Hubinger]{Denisen2024}
Denison, C., MacDiarmid, M., Barez, F., Duvenaud, D., Kravec, S., Marks, S., Schiefer, N., Soklaski, R., Tamkin, A., Kaplan, J., Shlegeris, B., Bowman, S.~R., Perez, E., and Hubinger, E.
\newblock Sycophancy to subterfuge: Investigating reward-tampering in large language models, 2024.
\newblock URL \url{https://arxiv.org/abs/2406.10162}.

\bibitem[Farebrother et~al.(2018)Farebrother, Machado, and Bowling]{farebrother2018generalization}
Farebrother, J., Machado, M.~C., and Bowling, M.
\newblock {Generalization and regularization in dqn}.
\newblock \emph{arXiv preprint arXiv:1810.00123}, 2018.

\bibitem[Foster et~al.(2020)Foster, Rakhlin, Simchi-Levi, and Xu]{foster2020instance}
Foster, D.~J., Rakhlin, A., Simchi-Levi, D., and Xu, Y.
\newblock Instance-dependent complexity of contextual bandits and reinforcement learning: A disagreement-based perspective.
\newblock \emph{arXiv preprint arXiv:2010.03104}, 2020.

\bibitem[Foster et~al.(2021)Foster, Krishnamurthy, Simchi-Levi, and Xu]{reb1_foster2021offline}
Foster, D.~J., Krishnamurthy, A., Simchi-Levi, D., and Xu, Y.
\newblock Offline reinforcement learning: Fundamental barriers for value function approximation.
\newblock \emph{arXiv preprint arXiv:2111.10919}, 2021.

\bibitem[Gao et~al.(2023)Gao, Schulman, and Hilton]{gao2023scaling}
Gao, L., Schulman, J., and Hilton, J.
\newblock {Scaling laws for reward model overoptimization}.
\newblock In \emph{{International Conference on Machine Learning}}, pp.\  10835--10866. PMLR, 2023.

\bibitem[Garg et~al.(2025)Garg, Singh, Singh, and Chopra]{ipo2025}
Garg, S., Singh, A., Singh, S., and Chopra, P.
\newblock Ipo: Your language model is secretly a preference classifier.
\newblock \emph{arXiv preprint arXiv:2502.16182}, 2025.

\bibitem[Gemini~Team(2023)]{Google2023b}
Gemini~Team, G.
\newblock {Gemini: A Family of Highly Capable Multimodal Models}.
\newblock \url{https://storage.googleapis.com/deepmind-media/gemini/gemini_1_report.pdf}, 2023.
\newblock Accessed: 2023-12-11.

\bibitem[Gleave et~al.(2020)Gleave, Dennis, Legg, Russell, and Leike]{gleave2020quantifying}
Gleave, A., Dennis, M., Legg, S., Russell, S., and Leike, J.
\newblock Quantifying differences in reward functions.
\newblock \emph{arXiv preprint arXiv:2006.13900}, 2020.

\bibitem[Goodhart(1984)]{goodhart1984problems}
Goodhart, C.~A.
\newblock \emph{{Problems of monetary management: the UK experience}}.
\newblock Springer, 1984.

\bibitem[Hennessy \& Goodhart(2023)Hennessy and Goodhart]{hennessy2023goodhart}
Hennessy, C.~A. and Goodhart, C.~A.
\newblock {Goodhart's law and machine learning: a structural perspective}.
\newblock \emph{International Economic Review}, 64\penalty0 (3):\penalty0 1075--1086, 2023.

\bibitem[Ibarz et~al.(2018)Ibarz, Leike, Pohlen, Irving, Legg, and Amodei]{ibarz2018}
Ibarz, B., Leike, J., Pohlen, T., Irving, G., Legg, S., and Amodei, D.
\newblock Reward learning from human preferences and demonstrations in {A}tari.
\newblock In \emph{Proceedings of the 32nd International Conference on Neural Information Processing Systems}, volume~31, pp.\  8022–8034, Montr\'{e}al, Canada, 2018. Curran Associates, Inc., Red Hook, NY, USA.

\bibitem[Jenner \& Gleave(2022)Jenner and Gleave]{jenner2022preprocessing}
Jenner, E. and Gleave, A.
\newblock Preprocessing reward functions for interpretability, 2022.

\bibitem[Jeon et~al.(2020)Jeon, Milli, and Dragan]{jeon2020reward}
Jeon, H.~J., Milli, S., and Dragan, A.
\newblock {Reward-rational (implicit) choice: A unifying formalism for reward learning}.
\newblock \emph{Advances in Neural Information Processing Systems}, 33:\penalty0 4415--4426, 2020.

\bibitem[Ji et~al.(2023)Ji, Wang, Chen, Zhao, and Wang]{ji2023provable}
Ji, X., Wang, H., Chen, M., Zhao, T., and Wang, M.
\newblock Provable benefits of policy learning from human preferences in contextual bandit problems.
\newblock \emph{arXiv preprint arXiv:2307.12975}, 2023.

\bibitem[Karwowski et~al.(2023)Karwowski, Hayman, Bai, Kiendlhofer, Griffin, and Skalse]{karwowski2023goodhart}
Karwowski, J., Hayman, O., Bai, X., Kiendlhofer, K., Griffin, C., and Skalse, J.
\newblock {Goodhart's Law in Reinforcement Learning}.
\newblock \emph{arXiv preprint arXiv:2310.09144}, 2023.

\bibitem[Kearns \& Vazirani(1994)Kearns and Vazirani]{intro_to_CLT}
Kearns, M.~J. and Vazirani, U.
\newblock \emph{{An Introduction to Computational Learning Theory}}.
\newblock The MIT Press, 08 1994.
\newblock ISBN 9780262276863.
\newblock \doi{10.7551/mitpress/3897.001.0001}.
\newblock URL \url{https://doi.org/10.7551/mitpress/3897.001.0001}.

\bibitem[Krakovna(2020)]{krakovna2020}
Krakovna, V.
\newblock {Specification gaming: The flip side of Ai Ingenuity}, Apr 2020.
\newblock URL \url{https://deepmind.google/discover/blog/specification-gaming-the-flip-side-of-ai-ingenuity/}.

\bibitem[Kwa et~al.(2024)Kwa, Thomas, and Garriga-Alonso]{kwa2024catastrophic}
Kwa, T., Thomas, D., and Garriga-Alonso, A.
\newblock Catastrophic goodhart: regularizing rlhf with kl divergence does not mitigate heavy-tailed reward misspecification.
\newblock \emph{arXiv preprint arXiv:2407.14503}, 2024.

\bibitem[Laidlaw et~al.(2024)Laidlaw, Singhal, and Dragan]{laidlaw2024correlated}
Laidlaw, C., Singhal, S., and Dragan, A.
\newblock Correlated proxies: A new definition and improved mitigation for reward hacking.
\newblock \emph{arXiv preprint arXiv:2403.03185}, 2024.

\bibitem[Lang et~al.(2024{\natexlab{a}})Lang, Huang, and Li]{lang2024fine}
Lang, H., Huang, F., and Li, Y.
\newblock {Fine-Tuning Language Models with Reward Learning on Policy}.
\newblock \emph{arXiv preprint arXiv:2403.19279}, 2024{\natexlab{a}}.

\bibitem[Lang et~al.(2024{\natexlab{b}})Lang, Foote, Russell, Dragan, Jenner, and Emmons]{lang2024your}
Lang, L., Foote, D., Russell, S., Dragan, A., Jenner, E., and Emmons, S.
\newblock {When Your AIs Deceive You: Challenges with Partial Observability of Human Evaluators in Reward Learning}.
\newblock \emph{arXiv preprint arXiv:2402.17747}, 2024{\natexlab{b}}.

\bibitem[Li et~al.(2022)Li, Wang, Zhang, and Zhu]{li2022out}
Li, H., Wang, X., Zhang, Z., and Zhu, W.
\newblock {Out-of-distribution generalization on graphs: A survey}.
\newblock \emph{arXiv preprint arXiv:2202.07987}, 2022.

\bibitem[Li et~al.(2023)Li, Wang, Zeng, Donta, Murturi, Huang, and Dustdar]{li2023federated}
Li, Y., Wang, X., Zeng, R., Donta, P.~K., Murturi, I., Huang, M., and Dustdar, S.
\newblock {Federated domain generalization: A survey}.
\newblock \emph{arXiv preprint arXiv:2306.01334}, 2023.

\bibitem[Littman(2019)]{rewardhypothesis2}
Littman, M.
\newblock {M}ichael {L}ittman: {T}he {R}eward {H}ypothesis - {M}arkov {D}ecision {P}rocesses | {C}oursera --- coursera.org.
\newblock \url{https://www.coursera.org/lecture/fundamentals-of-reinforcement-learning/michael-littman-the-reward-hypothesis-q6x0e}, 2019.
\newblock [Accessed 29-05-2025].

\bibitem[Liu et~al.(2021)Liu, Shen, He, Zhang, Xu, Yu, and Cui]{liu2021towards}
Liu, J., Shen, Z., He, Y., Zhang, X., Xu, R., Yu, H., and Cui, P.
\newblock {Towards out-of-distribution generalization: A survey}.
\newblock \emph{arXiv preprint arXiv:2108.13624}, 2021.

\bibitem[Liu et~al.(2024)Liu, Lu, Zhang, Liu, Guo, Yang, Blanchet, and Wang]{rpo2024}
Liu, Z., Lu, M., Zhang, S., Liu, B., Guo, H., Yang, Y., Blanchet, J., and Wang, Z.
\newblock Provably mitigating overoptimization in rlhf: Your sft loss is implicitly an adversarial regularizer.
\newblock \emph{arXiv preprint arXiv:2405.16436}, 2024.

\bibitem[Mehta et~al.(2023)Mehta, Das, Neopane, Dai, Bogunovic, Schneider, and Neiswanger]{mehta2023sample}
Mehta, V., Das, V., Neopane, O., Dai, Y., Bogunovic, I., Schneider, J., and Neiswanger, W.
\newblock Sample efficient reinforcement learning from human feedback via active exploration.
\newblock \emph{OpenReview}, 2023.

\bibitem[Meng et~al.(2024)Meng, Xia, and Chen]{simpo2024}
Meng, Y., Xia, M., and Chen, D.
\newblock Simpo: Simple preference optimization with a reference-free reward.
\newblock \emph{Advances in Neural Information Processing Systems}, 37:\penalty0 124198--124235, 2024.

\bibitem[Michaud et~al.(2020)Michaud, Gleave, and Russell]{michaud2020understanding}
Michaud, E.~J., Gleave, A., and Russell, S.
\newblock Understanding learned reward functions, 2020.

\bibitem[Ng \& Russell(2000)Ng and Russell]{ng2000}
Ng, A.~Y. and Russell, S.
\newblock Algorithms for inverse reinforcement learning.
\newblock In \emph{Proceedings of the Seventeenth International Conference on Machine Learning}, volume~1, pp.\  663--670, Stanford, California, USA, 2000. Morgan Kaufmann Publishers Inc.

\bibitem[Ng et~al.(2000)Ng, Russell, et~al.]{ng2000algorithms}
Ng, A.~Y., Russell, S., et~al.
\newblock {Algorithms for inverse reinforcement learning.}
\newblock In \emph{{Icml}}, volume~1, pp.\ ~2, 2000.

\bibitem[Nika et~al.(2024)Nika, Mandal, Kamalaruban, Tzannetos, Radanovi{\'c}, and Singla]{nika2024reward}
Nika, A., Mandal, D., Kamalaruban, P., Tzannetos, G., Radanovi{\'c}, G., and Singla, A.
\newblock {Reward Model Learning vs. Direct Policy Optimization: A Comparative Analysis of Learning from Human Preferences}.
\newblock \emph{arXiv preprint arXiv:2403.01857}, 2024.

\bibitem[OpenAI(2022)]{chatgpt}
OpenAI.
\newblock Introducing {ChatGPT}.
\newblock \url{https://openai.com/blog/chatgpt}, 2022.
\newblock Accessed: 2024-02-06.

\bibitem[Ouyang et~al.(2022)Ouyang, Wu, Jiang, Almeida, Wainwright, Mishkin, Zhang, Agarwal, Slama, Ray, et~al.]{ouyang2022training}
Ouyang, L., Wu, J., Jiang, X., Almeida, D., Wainwright, C., Mishkin, P., Zhang, C., Agarwal, S., Slama, K., Ray, A., et~al.
\newblock {Training language models to follow instructions with human feedback}.
\newblock \emph{Advances in neural information processing systems}, 35:\penalty0 27730--27744, 2022.

\bibitem[Palan et~al.(2019)Palan, Landolfi, Shevchuk, and Sadigh]{palan2019}
Palan, M., Landolfi, N.~C., Shevchuk, G., and Sadigh, D.
\newblock Learning reward functions by integrating human demonstrations and preferences.
\newblock In \emph{Proceedings of Robotics: Science and Systems}, Freiburg im Breisgau, Germany, June 2019.
\newblock \doi{10.15607/RSS.2019.XV.023}.

\bibitem[Phuong et~al.(2024)Phuong, Aitchison, Catt, Cogan, Kaskasoli, Krakovna, Lindner, Rahtz, Assael, Hodkinson, Howard, Lieberum, Kumar, Raad, Webson, Ho, Lin, Farquhar, Hutter, Deletang, Ruoss, El-Sayed, Brown, Dragan, Shah, Dafoe, and Shevlane]{Phuong2024}
Phuong, M., Aitchison, M., Catt, E., Cogan, S., Kaskasoli, A., Krakovna, V., Lindner, D., Rahtz, M., Assael, Y., Hodkinson, S., Howard, H., Lieberum, T., Kumar, R., Raad, M.~A., Webson, A., Ho, L., Lin, S., Farquhar, S., Hutter, M., Deletang, G., Ruoss, A., El-Sayed, S., Brown, S., Dragan, A., Shah, R., Dafoe, A., and Shevlane, T.
\newblock {Evaluating Frontier Models for Dangerous Capabilities}, 2024.
\newblock URL \url{https://arxiv.org/abs/2403.13793}.

\bibitem[Pitis(2019)]{pitis2019rethinking}
Pitis, S.
\newblock Rethinking the discount factor in reinforcement learning: A decision theoretic approach.
\newblock In \emph{Proceedings of the AAAI conference on artificial intelligence}, volume~33, pp.\  7949--7956, 2019.

\bibitem[Puterman(1994)]{puterman1994markov}
Puterman, M.~L.
\newblock {Markov Decision Processes: Discrete Stochastic Dynamic Programming}, 1994.

\bibitem[Rafailov et~al.(2023)Rafailov, Sharma, Mitchell, Ermon, Manning, and Finn]{rafailov2023direct}
Rafailov, R., Sharma, A., Mitchell, E., Ermon, S., Manning, C.~D., and Finn, C.
\newblock {Direct preference optimization: Your language model is secretly a reward model}.
\newblock \emph{arXiv preprint arXiv:2305.18290}, 2023.

\bibitem[Ram{\'e} et~al.(2024)Ram{\'e}, Vieillard, Hussenot, Dadashi, Cideron, Bachem, and Ferret]{rame2024warm}
Ram{\'e}, A., Vieillard, N., Hussenot, L., Dadashi, R., Cideron, G., Bachem, O., and Ferret, J.
\newblock {Warm: On the benefits of weight averaged reward models}.
\newblock \emph{arXiv preprint arXiv:2401.12187}, 2024.

\bibitem[Rockafellar \& Wets(2009)Rockafellar and Wets]{rockafellar2009variational}
Rockafellar, R.~T. and Wets, R. J.-B.
\newblock \emph{{Variational analysis}}, volume 317.
\newblock Springer Science \& Business Media, 2009.

\bibitem[Schlaginhaufen \& Kamgarpour(2023)Schlaginhaufen and Kamgarpour]{schlaginhaufen2023identifiability}
Schlaginhaufen, A. and Kamgarpour, M.
\newblock {Identifiability and generalizability in constrained inverse reinforcement learning}.
\newblock In \emph{{International Conference on Machine Learning}, pages={30224--30251}}. PMLR, 2023.

\bibitem[{Shah} et~al.(2019){Shah}, {Krasheninnikov}, {Alexander}, {Abbeel}, and {Dragan}]{2019implicit}
{Shah}, R., {Krasheninnikov}, D., {Alexander}, J., {Abbeel}, P., and {Dragan}, A.
\newblock {Preferences Implicit in the State of the World}.
\newblock \emph{arXiv e-prints}, art. arXiv:1902.04198, February 2019.
\newblock \doi{10.48550/arXiv.1902.04198}.

\bibitem[Shakerinava \& Ravanbakhsh(2022)Shakerinava and Ravanbakhsh]{shakerinava2022utility}
Shakerinava, M. and Ravanbakhsh, S.
\newblock Utility theory for sequential decision making.
\newblock In \emph{International Conference on Machine Learning}, pp.\  19616--19625. PMLR, 2022.

\bibitem[Skalse \& Abate(2023)Skalse and Abate]{skalse2023misspecification}
Skalse, J. and Abate, A.
\newblock Misspecification in inverse reinforcement learning, 2023.

\bibitem[Skalse \& Abate(2024)Skalse and Abate]{skalse2024quantifying}
Skalse, J. and Abate, A.
\newblock Quantifying the sensitivity of inverse reinforcement learning to misspecification, 2024.

\bibitem[Skalse et~al.(2022)Skalse, Howe, Krasheninnikov, and Krueger]{skalse2022defining}
Skalse, J., Howe, N., Krasheninnikov, D., and Krueger, D.
\newblock {Defining and characterizing reward gaming}.
\newblock \emph{Advances in Neural Information Processing Systems}, 35:\penalty0 9460--9471, 2022.

\bibitem[Skalse et~al.(2024)Skalse, Farnik, Motwani, Jenner, Gleave, and Abate]{skalse2024starc}
Skalse, J., Farnik, L., Motwani, S.~R., Jenner, E., Gleave, A., and Abate, A.
\newblock Starc: A general framework for quantifying differences between reward functions, 2024.

\bibitem[Skalse et~al.(2023)Skalse, Farrugia-Roberts, Russell, Abate, and Gleave]{skalse2023invariance}
Skalse, J. M.~V., Farrugia-Roberts, M., Russell, S., Abate, A., and Gleave, A.
\newblock {Invariance in policy optimisation and partial identifiability in reward learning}.
\newblock In \emph{{International Conference on Machine Learning}}, pp.\  32033--32058. PMLR, 2023.

\bibitem[Song et~al.(2024)Song, Swamy, Singh, Bagnell, and Sun]{reb9_song2024importance}
Song, Y., Swamy, G., Singh, A., Bagnell, D., and Sun, W.
\newblock The importance of online data: Understanding preference fine-tuning via coverage.
\newblock In \emph{The Thirty-eighth Annual Conference on Neural Information Processing Systems}, 2024.

\bibitem[Stanley(2024)]{stanley2024_combinatorial}
Stanley, R.
\newblock {{Chapter 1: Basic Definitions, the Intersection Poset and the Characteristic Polynomial}}.
\newblock In \emph{Combinatorial Theory: Hyperplane Arrangements---MIT Course No.~18.315}. MIT OpenCourseWare, Cambridge~MA, 2024.
\newblock URL \url{https://ocw.mit.edu/courses/18-315-combinatorial-theory-hyperplane-arrangements-fall-2004/pages/lecture-notes/}.
\newblock {MIT OpenCourseWare}.

\bibitem[Stiennon et~al.(2020)Stiennon, Ouyang, Wu, Ziegler, Lowe, Voss, Radford, Amodei, and Christiano]{stiennon2020learning}
Stiennon, N., Ouyang, L., Wu, J., Ziegler, D., Lowe, R., Voss, C., Radford, A., Amodei, D., and Christiano, P.~F.
\newblock {Learning to summarize with human feedback}.
\newblock \emph{Advances in Neural Information Processing Systems}, 33:\penalty0 3008--3021, 2020.

\bibitem[Strathern(1997)]{strathern1997improving}
Strathern, M.
\newblock {‘Improving ratings’: audit in the British University system}.
\newblock \emph{European review}, 5\penalty0 (3):\penalty0 305--321, 1997.

\bibitem[Sutton(2004)]{rewardhypothesis1}
Sutton, R.~a.
\newblock {T}he reward hypothesis --- incompleteideas.net.
\newblock \url{http://incompleteideas.net/rlai.cs.ualberta.ca/RLAI/rewardhypothesis.html}, 2004.
\newblock [Accessed 29-05-2025].

\bibitem[Sutton \& Barto(2018)Sutton and Barto]{sutton2018}
Sutton, R.~S. and Barto, A.~G.
\newblock \emph{Reinforcement Learning: An Introduction}.
\newblock MIT Press, second edition, 2018.
\newblock ISBN 9780262352703.

\bibitem[Tien et~al.(2022)Tien, He, Erickson, Dragan, and Brown]{tien2022causal}
Tien, J., He, J. Z.-Y., Erickson, Z., Dragan, A.~D., and Brown, D.~S.
\newblock {Causal confusion and reward misidentification in preference-based reward learning}.
\newblock \emph{arXiv preprint arXiv:2204.06601}, 2022.

\bibitem[Tung et~al.(2018)Tung, Harley, Huang, and Fragkiadaki]{tung2018}
Tung, H.-Y., Harley, A.~W., Huang, L.-K., and Fragkiadaki, K.
\newblock Reward learning from narrated demonstrations.
\newblock In \emph{Proceedings: 2018 IEEE/CVF Conference on Computer Vision and Pattern Recognition (CVPR)}, pp.\  7004--7013, Salt Lake City, Utah, USA, June 2018. IEEE Computer Society, Los Alamitos, CA, USA.
\newblock \doi{10.1109/CVPR.2018.00732}.

\bibitem[Uehara \& Sun(2021)Uehara and Sun]{reb5_uehara2021pessimistic}
Uehara, M. and Sun, W.
\newblock Pessimistic model-based offline reinforcement learning under partial coverage.
\newblock \emph{arXiv preprint arXiv:2107.06226}, 2021.

\bibitem[Uehara et~al.(2021)Uehara, Imaizumi, Jiang, Kallus, Sun, and Xie]{reb6_uehara2021finite}
Uehara, M., Imaizumi, M., Jiang, N., Kallus, N., Sun, W., and Xie, T.
\newblock Finite sample analysis of minimax offline reinforcement learning: Completeness, fast rates and first-order efficiency.
\newblock \emph{arXiv preprint arXiv:2102.02981}, 2021.

\bibitem[van~der Pol et~al.(2021)van~der Pol, Worrall, van Hoof, Oliehoek, and Welling]{Pol2021}
van~der Pol, E., Worrall, D.~E., van Hoof, H., Oliehoek, F.~A., and Welling, M.
\newblock Mdp homomorphic networks: Group symmetries in reinforcement learning, 2021.
\newblock URL \url{https://arxiv.org/abs/2006.16908}.

\bibitem[Vanderbei(1998)]{vanderbei1998linear}
Vanderbei, R.~J.
\newblock {Linear programming: foundations and extensions}.
\newblock \emph{Journal of the Operational Research Society}, 49\penalty0 (1):\penalty0 94--94, 1998.

\bibitem[Verma \& Bharadwaj(2025)Verma and Bharadwaj]{verma2025hidden}
Verma, N. and Bharadwaj, M.
\newblock The hidden space of safety: Understanding preference-tuned llms in multilingual context.
\newblock \emph{arXiv preprint arXiv:2504.02708}, 2025.

\bibitem[Wang et~al.(2022{\natexlab{a}})Wang, Lan, Liu, Ouyang, Qin, Lu, Chen, Zeng, and Philip]{wang2022generalizing}
Wang, J., Lan, C., Liu, C., Ouyang, Y., Qin, T., Lu, W., Chen, Y., Zeng, W., and Philip, S.~Y.
\newblock {Generalizing to unseen domains: A survey on domain generalization}.
\newblock \emph{IEEE transactions on knowledge and data engineering}, 35\penalty0 (8):\penalty0 8052--8072, 2022{\natexlab{a}}.

\bibitem[Wang et~al.(2020)Wang, Foster, and Kakade]{reb3_wang2020statistical}
Wang, R., Foster, D.~P., and Kakade, S.~M.
\newblock What are the statistical limits of offline rl with linear function approximation?
\newblock \emph{arXiv preprint arXiv:2010.11895}, 2020.

\bibitem[Wang et~al.(2022{\natexlab{b}})Wang, Cui, and Du]{reb2_wang2022gap}
Wang, X., Cui, Q., and Du, S.~S.
\newblock On gap-dependent bounds for offline reinforcement learning.
\newblock \emph{Advances in Neural Information Processing Systems}, 35:\penalty0 14865--14877, 2022{\natexlab{b}}.

\bibitem[Wirth et~al.(2017)Wirth, Akrour, Neumann, and F{\"u}rnkranz]{wirth2017survey}
Wirth, C., Akrour, R., Neumann, G., and F{\"u}rnkranz, J.
\newblock {A survey of preference-based reinforcement learning methods}.
\newblock \emph{Journal of Machine Learning Research}, 18\penalty0 (136):\penalty0 1--46, 2017.

\bibitem[Xiong et~al.(2024)Xiong, Dong, Ye, Wang, Zhong, Ji, Jiang, and Zhang]{xiong2024iterative}
Xiong, W., Dong, H., Ye, C., Wang, Z., Zhong, H., Ji, H., Jiang, N., and Zhang, T.
\newblock Iterative preference learning from human feedback: Bridging theory and practice for rlhf under kl-constraint.
\newblock In \emph{Forty-first International Conference on Machine Learning}, 2024.

\bibitem[Yoon et~al.(2023)Yoon, Oh, Shin, Mazurowski, and Suk]{yoon2023domain}
Yoon, J.~S., Oh, K., Shin, Y., Mazurowski, M.~A., and Suk, H.-I.
\newblock {Domain Generalization for Medical Image Analysis: A Survey}.
\newblock \emph{arXiv preprint arXiv:2310.08598}, 2023.

\bibitem[Zhan et~al.(2023)Zhan, Uehara, Kallus, Lee, and Sun]{zhan2023provable}
Zhan, W., Uehara, M., Kallus, N., Lee, J.~D., and Sun, W.
\newblock {Provable Offline Preference-Based Reinforcement Learning}.
\newblock In \emph{{The Twelfth International Conference on Learning Representations}}, 2023.

\bibitem[Zhang et~al.(2018)Zhang, Ballas, and Pineau]{zhang2018dissection}
Zhang, A., Ballas, N., and Pineau, J.
\newblock {A dissection of overfitting and generalization in continuous reinforcement learning}.
\newblock \emph{arXiv preprint arXiv:1806.07937}, 2018.

\bibitem[Zhou et~al.(2022)Zhou, Liu, Qiao, Xiang, and Loy]{zhou2022domain}
Zhou, K., Liu, Z., Qiao, Y., Xiang, T., and Loy, C.~C.
\newblock {Domain generalization: A survey}.
\newblock \emph{IEEE Transactions on Pattern Analysis and Machine Intelligence}, 45\penalty0 (4):\penalty0 4396--4415, 2022.

\bibitem[Zhu et~al.(2023)Zhu, Jordan, and Jiao]{zhu2023principled}
Zhu, B., Jordan, M., and Jiao, J.
\newblock Principled reinforcement learning with human feedback from pairwise or k-wise comparisons.
\newblock In \emph{International Conference on Machine Learning}, pp.\  43037--43067. PMLR, 2023.

\bibitem[Zhu et~al.(2024)Zhu, Jordan, and Jiao]{zhu2024iterative}
Zhu, B., Jordan, M.~I., and Jiao, J.
\newblock {Iterative data smoothing: Mitigating reward overfitting and overoptimization in rlhf}.
\newblock \emph{arXiv preprint arXiv:2401.16335}, 2024.

\bibitem[Zhuang \& {Hadfield-Menell}(2020)Zhuang and {Hadfield-Menell}]{subset_features}
Zhuang, S. and {Hadfield-Menell}, D.
\newblock Consequences of misaligned {{AI}}.
\newblock In \emph{Proceedings of the 34th {{International Conference}} on {{Neural Information Processing Systems}}}, {{NIPS}}'20, pp.\  15763--15773, {Red Hook, NY, USA}, December 2020. {Curran Associates Inc.}
\newblock ISBN 978-1-71382-954-6.

\bibitem[Zhuang \& Hadfield-Menell(2020)Zhuang and Hadfield-Menell]{zhuang2020consequences}
Zhuang, S. and Hadfield-Menell, D.
\newblock {Consequences of misaligned AI}.
\newblock \emph{Advances in Neural Information Processing Systems}, 33:\penalty0 15763--15773, 2020.

\bibitem[Ziegler et~al.(2019)Ziegler, Stiennon, Wu, Brown, Radford, Amodei, Christiano, and Irving]{ziegler2019fine}
Ziegler, D.~M., Stiennon, N., Wu, J., Brown, T.~B., Radford, A., Amodei, D., Christiano, P., and Irving, G.
\newblock {Fine-tuning language models from human preferences}.
\newblock \emph{arXiv preprint arXiv:1909.08593}, 2019.

\end{thebibliography}
\bibliographystyle{icml2025}

\newpage
\appendix
\onecolumn

\newpage
\appendix

\let\oldaddcontentsline\addcontentsline
\makeatletter
\def\addcontentsline#1#2#3{%
  \begingroup
    \let\protect\noexpand
    \immediate\write\@auxout{\string\@writefile{#1}{\string\contentsline{#2}{#3}{\thepage}{}}}%
  \endgroup
}
\makeatother

{\LARGE\sc Appendix \par}

\setcounter{theorem}{0}

This appendix develops the theory outlined in the main paper in a self-contained and complete way, including all proofs. 
In~\Cref{sec:app_related_work} we include an extended related work section.
In~\Cref{sec:app_introduction}, we present the setup of all concepts and the problem formulation, as was already contained in the main paper. 
In~\Cref{sec:existence_unsafe}, we present all ``negative results''.
Conditional on an error threshold in the reward model, these results present conditions for the data distribution that allow reward models to be learned that allow for error-regret mismatch.
That section also contains Theorem~\ref{theorem:safe_linear_constraints_app} which is an equivalent condition for the absence of error-regret mismatch but could be considered a statement about error-regret mismatch by negation.
In~\Cref{sec:requirements_safe}, we present sufficient conditions for \emph{safe optimization} in several settings.
Typically, this boils down to showing that given a data distribution, a \emph{sufficiently small} error in the reward model guarantees that its optimal policies have low regret.

\renewcommand*\contentsname{Contents of the Appendix}
\setcounter{tocdepth}{3}
\tableofcontents

\section{Extended related work}\label{sec:app_related_work}

\paragraph{Reward Learning}

Reward learning is a key concept in reinforcement learning that involves learning the reward function for complex tasks with latent and difficult-to-specify reward functions. Many methods have been developed to incorporate various types of human feedback into the reward learning process~\citep{wirth2017survey,ng2000algorithms,bajcsy2017learning,jeon2020reward}.

\paragraph{Challenges in Reward Learning}
Reward learning presents several challenges~\citep{casper2023open,lang2024your,skalse2023misspecification,skalse2024quantifying}, such as \emph{reward misgeneralization}, where the reward model learns a different reward function that performs well on in-distribution data but differs strongly on out-of-distribution data~\citep{skalse2023invariance}. This can lead to unintended consequences in real-world applications.

Reward misgeneralization can also result in \emph{reward hacking}~\citep{krakovna2020}, a consequence of Goodhart's law~\citep{goodhart1984problems,zhuang2020consequences,hennessy2023goodhart,strathern1997improving,karwowski2023goodhart}. Reward hacking has been extensively studied both theoretically~\citep{skalse2022defining, skalse2024starc, subset_features} and empirically~\citep{zhang2018dissection,farebrother2018generalization,cobbe2019quantifying,krakovna2020,gao2023scaling,tien2022causal}.

\paragraph{Offline RL}
In offline reinforcement learning, we aim to learn low-regret policies for an MDP $\MDP$ where the reward function (and sometimes transition distribution ~\citep{reb2_wang2022gap,reb5_uehara2021pessimistic}) is unknown and must be learned from an offline dataset $\{(s,a,r)_i\}_{i=1}^n$ sampled from a data distribution $D \in \DistributionsOverSet{\SxA}$. A key research question is understanding what data coverage conditions ensure learning a near-optimal policy with an \emph{efficient} sample complexity. Existing theoretical work primarily falls into two categories, covering both \emph{MDPs}~\citep{reb1_foster2021offline,reb2_wang2022gap,reb3_wang2020statistical,reb4_amortila2020variant,reb5_uehara2021pessimistic,reb6_uehara2021finite} and \emph{contextual bandits}~\citep{nika2024reward,cen2024value}:

\emph{Lower bound results} prove that various data-coverage conditions are insufficient for sample-efficient offline RL by establishing worst-case sample complexity bounds. Research in this area~\citep{reb1_foster2021offline,reb2_wang2022gap,reb3_wang2020statistical,reb4_amortila2020variant,nika2024reward} identifies adversarial MDPs that satisfy specific data-coverage conditions where achieving low regret is either computationally intractable due to excessive sample requirements~\citep{reb1_foster2021offline,reb2_wang2022gap,reb3_wang2020statistical,nika2024reward} or fundamentally impossible regardless of sample size~\citep{reb4_amortila2020variant}.

\emph{Upper bound results}, on the other hand, establish positive guarantees under specific structural assumptions. Works in this category~\citep{reb2_wang2022gap,reb3_wang2020statistical,reb5_uehara2021pessimistic,nika2024reward,cen2024value,reb9_song2024importance} develop algorithms with provable sample-efficiency bounds by making structural assumptions about the MDP structure, reward learning process, or policy optimization approach.

Intuitively, the quality of a reward model that is being approximated from a finite dataset is influenced by two key factors: the dataset size $n$ and the dataset quality, specifically how well the data distribution $D$ \emph{covers} the data space $\SxA$. Prior work confirms this intuition, with most works deriving variants of the following template (see for example recent work ~\cite{nika2024reward}): $\mathrm{Regret} \in\mathcal{O}\left(\mathrm{poly}\left(\frac{\mathrm{Cov} \cdot \mathrm{Struct}}{n}\right)\right)$. Here, \emph{Cov} represents some measure of the coverage of $D$, while \emph{Struct} captures the structural assumptions of the specific approach. Such structural assumptions may include: realizability of function classes~\citep{reb2_wang2022gap,reb5_uehara2021pessimistic,reb1_foster2021offline,nika2024reward}, linear function approximation~\citep{nika2024reward,cen2024value,reb2_wang2022gap}, and various constraints on reward- or policy functions~\citep{reb3_wang2020statistical,reb5_uehara2021pessimistic,nika2024reward}.

Our paper differs from these works in two key aspects: a) we explicitly analyze how the reward modeling error $\epsilon$ affects the final policy regret, rather than focusing on the number of samples (prior works only implicitly consider $\epsilon$), and b) we examine worst-case scenarios instead of probabilistic guarantees. The most relevant work in this area is~\cite{reb9_song2024importance}, which analyzes RLHF specifically. Their setup in section 3, combined with their Assumption 3, perfectly recovers our safe distribution definition (see \Cref{definition:unsafe_data_distribution}) when applied to the special case of RLHF and when using the mean squared error metric. Their Theorem 4.2 demonstrates that $\mathrm{Regret} \in \mathcal{O}\big(\mathrm{Cov} \cdot \sqrt{\epsilon}\big)$, where the square root emerges from using the mean squared error during the reward learning step.

While~\citet{reb9_song2024importance} focus on RLHF with mean-squared error metric, we provide similar results for general classes of regularized and unregularized policy optimization (for both MDPs and contextual bandits), as well as a wide range of different error metrics. Similar to prior sample-complexity results, we investigate the influence of different coverage constraints on regret guarantees. For our initial results (\Cref{theorem:small_epsilon_makes_safe,pro:tight_regret_bound_main,theorem:positive_result_KL_regularized}) we use the condition $\min_{(s,a)}D(s,a) > 0$. Since we assume that all states of our MDPs are reachable, this is equivalent to a full coverage condition (see Table 1 of ~\cite{reb5_uehara2021pessimistic} for an overview of different coverage conditions). We then relax the constraints to partial coverage constraints and prove several negative results (\Cref{theorem:interpretable_negative_result,corollary:rlhf_negative_results_simpler_main,thm:putting_negativity_together}). Finally, we fully generalize our results from \Cref{theorem:small_epsilon_makes_safe,pro:tight_regret_bound_main,theorem:interpretable_negative_result,corollary:all_unsafe} into a single theorem (\Cref{theorem:safe_linear_constraints}) which allows us to determine the worst-case safety of \emph{arbitrary} data distributions. To the best of our knowledge, we are the first work to achieve such a level of generality. 

\paragraph{Advancements in Addressing Distribution Shifts}
Several approaches have been proposed to address the issue of out-of-distribution robustness in reward learning, such as ensembles of conservative reward models~\citep{coste2023reward}, averaging weights of multiple reward models~\citep{rame2024warm}, iteratively updating training labels~\citep{zhu2024iterative}, on-policy reward learning~\citep{lang2024fine}, and distributionally robust planning~\citep{zhan2023provable}. Recently,~\citet{kwa2024catastrophic} show that RLHF and Conditioning can be provably safe under fairly strong structural assumptions—such as deterministic transitions, light-tailed reward errors, and independence between true and proxy rewards. Furthermore,~\citet{laidlaw2024correlated} consider a setting where the learned and true reward functions are positively correlated under a reference policy. They prove that maximizing the proxy reward with a chi-squared divergence penalty yields regret no worse than that of the reference policy. In experiments, they approximate this regularized objective and report favorable results.

Our work further emphasizes the usefulness of exploring additional assumptions or methods to mitigate the perils of distribution shift, as we show that without any additional assumptions, there are next to no guarantees. We therefore hope that our work can serve as a theoretical baseline, that people can use to express and analyze their new assumptions or methods.

In classical machine learning, research in out-of-distribution generalization has a long history, and a rich literature of methods exists ~\citep{li2022out,zhou2022domain,wang2022generalizing,liu2021towards,li2023federated,yoon2023domain}. These methods could potentially be adapted to address distribution shift challenges in reinforcement learning.

\paragraph{Contextual Bandits}
In \Cref{sec:unsafe_optimization_rlhf} we work in the contextual bandit setting and derive variants of our results for RLHF. Several theoretical results have been developed that investigate the challenge of RLHF \citep{xiong2024iterative,zhu2023principled,ji2023provable,mehta2023sample} and reward learning in general, \citep{agarwal2012contextual,foster2020instance} in the contextual bandit setting. In our \Cref{corollary:rlhf_negative_results_simpler_main} we show that in the worst-case setting, and without any additional assumptions, many common data distributions are unsafe. On the other hand, these works develop safety guarantees in settings with more structural assumptions such as various restrictions on the reward functions~\citep{xiong2024iterative,zhu2023principled}, focusing on the probability and efficiency of safety guarantees~\citep{zhu2023principled, ji2023provable,agarwal2012contextual}, and developing active learning algorithms~\citep{mehta2023sample}.

\paragraph{Direct Preference Optimization}
Direct preference optimization (DPO)~\citep{rafailov2023direct} is a recent technique that allows to directly optimize a policy via an implicitly defined reward model which promises to mitigate some of the common issues with classic reward model training like such as training stability. DPO's empirical performance is promising and many recent works are trying to further improve and extend upon the base idea. \emph{RPO}~\citep{rpo2024} adds an imitation loss from a baseline policy to regularize DPO and provably mitigate overoptimization. \emph{SimPO}~\citep{simpo2024} simplifies DPO by removing the need for a reference model and using the average log-likelihood of tokens as the reward. \emph{IPO}~\citep{ipo2025} avoids explicit external reward models by letting LLMs select samples for DPO themselves. Lastly, $\chi^{PO}$ replaces the KL-regularization of DPO with $\chi^2$ regularization, thereby improving robustness against overoptimization and achieving sample efficiency guarantees.

Since our work explicitly analyzes the conditions under which a reward model might misgeneralize during training, our work does not directly translate to this family of DPO algorithms. However, we do consider it important future work to explore if and how DPO might be more robust to the issue of error-regret mismatch.

\paragraph{Alternative Formalizations of Reward and Utility}
We would like to note that throughout this work, we study the classical MDP setting, where utility of a policy is measured by an expected value of a (potentially discounted) cumulative sum of a scalar reward signal (see the definition of $\J$ in \Cref{sec:main_preliminaries}). This definition is based on the \emph{reward hypothesis} ~\citep{rewardhypothesis1,rewardhypothesis2,sutton2018} which states that:
\begin{quote}
    \textit{That all of what we mean by goals and purposes can be well thought of as maximization of the expected value of the cumulative sum of a received scalar signal (reward).}
\end{quote}
While this is the default setting, several works investigate if, or under which conditions this formalization is sufficient to express the wide variety of goals and purposes that one might be interested in. Toward that end, ~\citet{shakerinava2022utility} propose axioms that are necessary and sufficient for Markovian rewards to model preference relations. These axioms are later generalized by~\citet{bowling2023settling} to accommodate more general settings such as discounted reward and episodic settings.
\citet{pitis2019rethinking} show how utility functions with fixed discount factors fail to model some types of preferences and then argue for a state-action-dependent discount factor. Similarly,~\citet{abel2021expressivity} investigate the expressivity of Markovian rewards and identify tasks that cannot be modeled with Markovian rewards.

\section{Introduction}\label{sec:app_introduction}
\subsection{Preliminaries}
A \emph{Markov Decision Process} (MDP) is a tuple $\MDP$ where $\States$ is a set of \emph{states}, $\Actions$ is a
set of \emph{actions}, $\TransitionDistribution: \SxA \to \DistributionsOverSet{\Actions}$ is a \emph{transition function}, $\InitStateDistribution \in \DistributionsOverSet{S}$ is an \emph{initial state distribution},
$\reward: \SxA \to \Reals$ is a \emph{reward function}, and $\discount \in (0,1)$ is a \emph{discount rate}. A \emph{policy} is a function
$\policy: \States \to \DistributionsOverSet{\Actions}$. A \emph{trajectory} $\Trajectory = \langle s_0, a_0, s_1, a_1, ...\rangle$ is a possible path in
an MDP. The \emph{return function} $\RLReturn$ gives the cumulative discounted reward of a trajectory, $\RLReturn(\Trajectory) = \sum_{t=0}^\infty \gamma^t \reward(s_t, a_t, s_{t+1})$,
and the \emph{evaluation function} $\J$ gives the expected trajectory return given a policy, $\J(\policy) = \Expect{\Trajectory \sim \policy}{\RLReturn(\Trajectory)}$. 
A policy maximizing $\J$ is an \emph{optimal policy}. The \emph{state-action occupancy measure} is a function $\eta: \Policies \to \Reals^{|\SxA|}$ which assigns each policy $\policy \in \Pi$ a vector of occupancy measure describing the discounted frequency that a policy takes each action in each state. Formally, $\eta(\policy)(s,a) = \eta^\policy(s,a) = \sum_{t=0}^\infty \gamma^t \cdot P(s_t = s, a_t = a \ |\ \xi \sim \pi)$. 
Note that by writing the reward function $\reward$ as a vector $\vec{\reward} \in \Reals^{|\SxA|}$, we can split $\J$ into a linear function of $\policy$: $\J(\policy) = \eta^\policy \cdot \vec{\reward}$. The \emph{value function} $\V$ of a policy encodes the
expected future discounted reward from each state when following that policy. We use $\R$ to refer to
the set of all reward functions. When talking about multiple rewards, we give each reward a subscript
$\reward_i$, and use $\J_i$, $\RLReturn_i$, and $\Vfor{\pi}_i$, to denote $\reward_i$’s evaluation function, return function, and $\policy$-value function.

\subsection{Problem formalization}
The standard RL process using reward learning works roughly like this:
\begin{enumerate}
    \item You are given a dataset of transition-reward tuples $\{(s_i, a_i, r_i)\}_{i=0}^n$. Here, each $(s_i, a_i) \in \SxA$ is a transition from some (not necessarily known) MDP $\MDP$ that has been sampled using some distribution $D \in \DistributionsOverSet{\SxA}$, and $r_i = \reward(s_i, a_i)$. The goal of the process is to find a policy $\hat{\policy}$ which performs roughly optimally for the unknown true reward function $\reward$. More formally:
    $\J_\reward(\hat{\policy}) \approx \max_{\policy \in \Pi} \J_\reward(\policy)$.
    
    \item Given some error tolerance $\epsilon \in \Reals$, a reward model $\hat{R}: \SxA\to\Reals$ is learned using the provided dataset. At the end of the learning process $\hat{R}$ satisfies some optimality criterion such as: $\Expect{(s,a) \sim D}{|\hat{R}(s,a) - \reward(s,a)|} < \epsilon$

    \item The learned reward model $\hat{R}$ is used to train a policy $\hat{\policy}$ that fulfills the following optimality criterion:
    $\hat{\policy} = \arg\max_{\policy \in \Pi} \J_{\hat{R}}(\pi)$.
\end{enumerate}

The problem is that training $\hat{\policy}$ to optimize $\hat{R}$ effectively leads to a distribution shift, as the transitions are no longer sampled from the original data distribution $D$ but some other distribution $\hat{D}$ (induced by the policy $\hat{\policy}$). Depending on the definition of $D$, this could mean that there are no guarantees about how close the expected error of $\hat{R}$ to the true reward function $\reward$ is (i.e., $\Expect{(s,a) \sim \hat{D}}{|\hat{R}(s,a) - \reward(s,a)|}$ could not be upper-bounded).

This means that we have no guarantee about the performance of $\hat{\policy}$ with respect to the original reward function $\reward$, so it might happen that $\hat{\policy}$ performs arbitrarily bad under the true reward $R$: $\J_\reward(\hat{\policy}) \ll \max_{\policy} \J_\reward(\policy)$.

If for a given data distribution $D$ there exists a reward model $\hat\reward$ such that $\hat\reward$ is close in expectation to the true reward function $\reward$ but it is possible to learn a policy that performs badly under $\J_\reward$ despite being optimal for $\hat{R}$, we say that $D$ \emph{allows for error-regret mismatch} and that $\hat\reward$ \emph{has an error-regret mismatch}.

  \subsection{The mean-squared error as an alternative distance measure}\label{sec:MSE_instead_of_MAE}

In the main paper, particular in~\Cref{definition:unsafe_data_distribution}, we use the mean absolute error (MAE) as our error measure in the reward function.
In this appendix section, we explain what changes in the results if one were to use the mean-squared error (MSE) instead.

We define the mean-squared error by
\begin{equation*}
  d^{\MSE}_{D}(R, \hat{R}) \coloneqq \Expect{(s, a) \sim D}{\left(\frac{\hat{R}(s, a) - R(s, a)}{\range R} \right)^2}.
\end{equation*}
This is like the usual MSE, with the difference that we divide by $\range R$ since the distance is only meaningful relative to the range of the true reward function $R$.
In the main paper, we work with the following mean absolute error instead:
\begin{equation*}
  d^{\MAE}_{D}(R, \hat{R}) = \Expect{(s, a)}{\frac{|\hat{R}(s, a) - R(s, a)|}{\range R}}.
\end{equation*}
Then for any distance measure $d^{X}$ (with $X = \MSE$ or $X = \MAE$) involving a data distribution D, we can define the set of safe data distributions $\text{safe}^{X}(R, \epsilon, L, \lambda, \omega)$, slightly generalizing~\Cref{definition:unsafe_data_distribution}: $\text{safe}(R, \epsilon, L, \lambda, \omega)$ is the set of all distributions $D$ such that for all $\hat{R}$ that are $\epsilon$-close to $R$ according to $d^{X}_D$ and all $\hat{\pi}$ that are $(\lambda, \omega)$-optimal with respect to $\hat{R}$, we have $\Reg{R}{\hat{\pi}} < L$.
The complement of this set is $\text{unsafe}^{X}(R, \epsilon, L, \lambda, \omega)$.

We now explain that for all of our results where in the main paper we talk about $\text{safe}^{\MAE}$, there is a corresponding result for $\text{safe}^{\MSE}$, and the same for $\text{unsafe}^{\MAE}$ and $\text{unsafe}^{\MSE}$.

\subsubsection{Transfer of positive results}

\begin{proposition}\label{prop:safe_mae_is_safe_mse}
  If $D \in \mathrm{safe}^{\MAE}(R, \epsilon, L, \lambda, \omega)$, then $D \in \mathrm{safe}^{\MSE}(R, \epsilon^2, L, \lambda, \omega)$.
\end{proposition}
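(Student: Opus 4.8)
The plan is to exploit the fact that membership in $\mathrm{safe}^{X}(R, \epsilon, L, \lambda, \omega)$ is a \emph{universal} statement quantified over all reward models $\hat{R}$ that are $\epsilon$-close to $R$ in the metric $d^{X}_D$. Thus to establish $D \in \mathrm{safe}^{\MSE}(R, \epsilon^2, L, \lambda, \omega)$ it suffices to take an arbitrary $\hat{R}$ with $d^{\MSE}_D(R, \hat{R}) \le \epsilon^2$ and an arbitrary $(\lambda, \omega)$-optimal $\hat{\pi}$ for $\hat{R}$, and then show $\Reg{R}{\hat{\pi}} < L$. The idea is to reduce this to the MAE-safety hypothesis by proving that $\epsilon$-closeness in MSE implies $\epsilon$-closeness in MAE, after which the assumption $D \in \mathrm{safe}^{\MAE}(R, \epsilon, L, \lambda, \omega)$ delivers the regret bound directly.

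The key inequality is a single application of Jensen's inequality (equivalently, Cauchy--Schwarz) to the nonnegative random variable $Z \coloneqq |\hat{R}(s,a) - R(s,a)| / \range R$ under the distribution $(s,a) \sim D$. Since $t \mapsto t^2$ is convex, $\left(\Expect{(s,a) \sim D}{Z}\right)^2 \le \Expect{(s,a) \sim D}{Z^2}$, which is precisely
\begin{equation*}
  \left(d^{\MAE}_D(R, \hat{R})\right)^2\ \le\ d^{\MSE}_D(R, \hat{R}).
\end{equation*}
Taking square roots (both sides nonnegative) gives $d^{\MAE}_D(R, \hat{R}) \le \sqrt{d^{\MSE}_D(R, \hat{R})}$.

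Now the chain closes immediately: if $d^{\MSE}_D(R, \hat{R}) \le \epsilon^2$, then $d^{\MAE}_D(R, \hat{R}) \le \sqrt{\epsilon^2} = \epsilon$, so $\hat{R}$ is $\epsilon$-close to $R$ in the MAE sense. Because $D \in \mathrm{safe}^{\MAE}(R, \epsilon, L, \lambda, \omega)$ by hypothesis, every $(\lambda, \omega)$-optimal policy $\hat{\pi}$ for this $\hat{R}$ satisfies $\Reg{R}{\hat{\pi}} < L$. Since $\hat{R}$ and $\hat{\pi}$ were arbitrary subject to the MSE-closeness and optimality conditions, this is exactly the defining property of $D \in \mathrm{safe}^{\MSE}(R, \epsilon^2, L, \lambda, \omega)$, completing the argument.

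There is no substantive obstacle here; the proof is a short reduction. The only points requiring minor care are (i) ensuring the quantifier structure is respected so that the \emph{same} $\hat{R}$ is fed into both the MSE hypothesis and the MAE-safety guarantee, and (ii) noting that the square-root step is valid because $\epsilon \ge 0$ and both distance measures are nonnegative. It is worth remarking that the exponent relationship $\epsilon \mapsto \epsilon^2$ is exactly what one expects from the quadratic nature of the MSE, matching the $\sqrt{\epsilon}$ coverage-to-regret scaling mentioned for the mean-squared-error setting in the related-work discussion.
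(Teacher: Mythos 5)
Your proof is correct and matches the paper's argument essentially verbatim: both apply Jensen's inequality to obtain $\left(d^{\MAE}_D(R,\hat{R})\right)^2 \le d^{\MSE}_D(R,\hat{R}) \le \epsilon^2$, conclude $d^{\MAE}_D(R,\hat{R}) \le \epsilon$, and then invoke the MAE-safety hypothesis for the same arbitrary pair $(\hat{R}, \hat{\pi})$. No differences worth noting.
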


\begin{proof}
  Assume the condition.
Let $\hat{R}, \hat{\pi}$ be such that $d^{\MSE}_D(R, \hat{R}) \leq \epsilon^2$ and $\hat{\pi}$ is $(\lambda, \omega)$-optimal with respect to $\hat{R}$.
Due to Jensen's inequality, we have
\begin{align*}
  d^{\MAE}_D(R, \hat{R})^2 &= \Expect{(s, a) \sim D}{\frac{|\hat{R}(s, a) - R(s, a)|}{\range R}}^2 \\
  & \leq \Expect{(s, a) \sim D}{ \left( \frac{\hat{R}(s, a) - R(s, a)}{\range R} \right)^2} \\
  & = d^{\MSE}_D(R, \hat{R}) \\
  & \leq \epsilon^2.
\end{align*}
It follows $d^{\MAE}_D(R, \hat{R}) < \epsilon$.
By the definition of $\text{safe}^{\MAE}(R, \epsilon, L, \lambda, \omega)$ and the assumption, this results in $\Reg{R}{\hat{\pi}} < L$.
Since $\hat{R}, \hat{\pi}$ were arbitrary, this shows $D \in \text{safe}^{\MSE}(R, \epsilon^2, L, \lambda, \omega)$.
\end{proof}

This proposition implies that our positive results (\Cref{theorem:small_epsilon_makes_safe} and~\Cref{theorem:positive_result_KL_regularized}) transfer over from $\text{safe}^{\MAE}$ to $\text{safe}^{\MSE}$.
\Cref{pro:tight_regret_bound_main} transfers as well, with the condition on $\epsilon$ replaced by a square of the old condition:
\begin{equation*}
\epsilon < \left( \frac{1 - \gamma}{\sqrt{2}} \cdot \frac{\range J^R}{\range R} \cdot \min_{(s, a)} D(s, a) \cdot L\right)^2.
\end{equation*}

\subsubsection{Transfer of the remaining results}

The negative results do not transfer \emph{automatically} since we would need an inequality between $d^{\MAE}$ and $d^{\MSE}$ in the other direction, which does not exist without further assumptions.
Nevertheless, it is easily possible to modify most the proofs, where appropriate, to obtain corresponding results.
In particular:

\begin{itemize}
  \item \Cref{theorem:interpretable_negative_result} and~\Cref{corollary:all_unsafe} hold verbatim with $\text{unsafe}^{\MSE}$ instead of $\text{unsafe}^{\MAE}$.
    In the proof of~\Cref{theorem:interpretable_negative_result}, we can use the same construction of $\hat{R}$, and an almost identical derivation shows the bound in $d^{\MSE}$.
  \item On~\Cref{theorem:safe_linear_constraints}:
    Due to \Cref{prop:safe_mae_is_safe_mse} in this rebuttal the ``if''-direction of the theorem automatically holds when replacing $d^{\MAE}_D(R, \hat{R})$ with $d^{\MSE}_D(R, \hat{R})$, i.e., there exists a set of linear inequalities such that a given data distribution $D$ is safe, i.e., $D \in \text{safe}^{\MSE}(R, \epsilon^2, L)$, whenever this set of linear inequalities is satisfied. \\
    However, the ``only-if'' direction does not hold since $\text{safe}^{\MSE}(R, \epsilon^2, L)$ is not a polytope (whereas $\text{safe}^{\MAE}(R, \epsilon, L)$ is) and can thus not be expressed by a finite set of linear constraints. 
    The reason is that by replacing $d^{\MAE}_D(R, \hat{R})$ with $d^{\MSE}_D(R, \hat{R})$, the set $\{\hat R:\ d^{\MSE}_D(R, \hat{R}) \le \epsilon \}$ becomes an ellipsoid, whereas it was a polytope in the original formulation.
Future work could look into a precise characterization in more detail.
  \item For~\Cref{thm:putting_negativity_together}, there is a corresponding version that is almost identical but replaces the condition on $D(\supp D^{\hat{\pi}})$ by the following version including a square:
\begin{equation*}
  D(\supp D^{\hat{\pi}}) \leq \frac{\epsilon}{(1 + C)^2}.
\end{equation*}
This condition can then be used at the very end of the proof of~\Cref{thm:putting_negativity_together_app} to finish the proof of an adapted Theorem~\Cref{thm:putting_negativity_together}.
\item For the final negative result,~\Cref{corollary:rlhf_negative_results_simpler_main}, we already use a different distance measure motivated by the practice of RLHF.
Thus, we are not interested in an adaptation for the MSE.
\end{itemize}

\subsection{A conceptual example of overoptimization concerns}\label{sec:conceptual_example}

In this section, we present a conceptual example that illustrates overoptimization concerns.
This is meant to serve as an intuition for many of our ``negative'' theoretical results~\Cref{theorem:interpretable_negative_result,corollary:all_unsafe,thm:putting_negativity_together,corollary:rlhf_negative_results_simpler_main}, with the aim to make them more grounded in realistic concerns.

In summary, imagine a scenario of a chatbot:
It can either obtain ``safe'' or ``dangerous'' queries; safe queries (e.g. ``Please help me create a high-protein diet'') should be answered, dangerous queries (e.g. ``Please tell me how to build a nuclear weapon'') should be refused. 
We call answering a query ``helping'', irrespective of whether this is desired or not.
We will specifically analyze an always-helping policy, its regret, and its plausibility to occur from reward learning.
Helpful-only policies have been analyzed in past safety research~\citep{Denisen2024} and are often a starting point for policies meant to become ``helpful, honest, and harmless''~\citep{Askell2021}.

First, we look at conditions for when helpful-only policies are unsafe relative to a regret bound $L$.
It turns out that they are less safe if there appear more unsafe queries in the deployment environment, and if the damage caused by answering them is larger --- see~\Cref{sec:regret_analysis}.
Then we look into the conditions for when this policy can be learned by reward learning --- see~\Cref{sec:reward_learning_analysis}.
It turns out that if there are ``many styles'' with which the chatbot can answer an unsafe query, then some of those answers must have a low probability on the training distribution, and thus a learned reward model can inflate its reward while achieving a low training error.
The always-helping policy can then result from policy optimization, leading to a large regret.
This illustrates an error-regret mismatch.

\subsubsection{Specifying the contextual bandit}\label{sec:specifying_contextual_bandit}

We model the situation as follows:
Assume a contextual bandit with states and actions given by
\begin{equation*}
  \States = \{q_{\safe}, q_{\unsa}\}, \quad \Actions = \{a_{\help}^{i}, a_{\refu}^{i}\}_{i = 1}^{N}.
\end{equation*}

In other words, there is one safe and one unsafe query,\footnote{Having a larger number of safe and unsafe queries does not change the mathematical picture much, but for illustration purposes we chose this simplified setting.} and actions that either help with or refuse to answer the query in $N$ different styles.
One should imagine $N$ to be fairly large since there are lots of ways to vary the style of an answer without changing the content, given that the amount of possible answers scales exponentially with length.

We assume the following simplified true reward function, where $C > 0$ is some (potentially large) constant:
\begin{align}\label{eq:true_reward_table}
  \begin{split}
  & R(q_{\safe}, a_{\help}^{i}) = 1 \\
  & R(q_{\safe}, a_{\refu}^{i}) = 0 \\
  & R(q_{\unsa}, a_{\help}^{i}) = -C \\
  & R(q_{\unsa}, a_{\refu}^{i}) = 0.
  \end{split}
\end{align}
The idea is that answering a safe query should lead to some positive reward, whereas refusing it doesn't create value or damage --- the reward is zero.
Answering/helping with an \emph{unsafe} queries, however, incurs a large negative reward $-C$ since it can lead to substantial damage, whereas, once again, refusing to answer does neither create value nor damage. 

Finally, we assume some ``true'' distribution of queries, given by $\mu_{\unsa} \in [0, 1]$ and $\mu_{\safe} = 1 - \mu_{\unsa}$.
These can be imagined to be the frequencies with which actual users in the deployment environment ask safe vs. unsafe queries. 
In total, we have thus specified a contextual bandit $(\States, \Actions, R, \mu)$.

We now make a regret-analysis --- analyzing when an always-helping policy is safe --- followed by a reward learning analysis --- under which conditions can an always-helping policy result from reward learning?

\subsubsection{Regret analysis for always-helping policy}\label{sec:regret_analysis}

For a policy $\hat{\pi}$ with answer probabilities $\hat{\pi}(a \mid q)$, the policy evaluation (i.e., expected reward) is given by
\begin{equation}\label{eq:regret_in_example}
  \J_{R}(\hat{\pi}) = \mu_{\safe} \cdot \sum_{i = 1}^{N} \hat{\pi}(a_{\help}^{i} \mid q_{\safe}) - (1 - \mu_{\safe}) \cdot C \cdot \sum_{i = 1}^{N} \hat{\pi}(a_{\help}^{i} \mid q_{\unsa}).
\end{equation}
This follows directly from~\eqref{eq:true_reward_table}.
The idea is that under a safe query, which happens with probability $\mu_{\safe}$, the reward is the probability to help with the query.
For an unsafe query, which happens with probability $1 - \mu_{\safe}$, the reward is $-C$ times the probability that the model helps with that query.

Now, the highest expected reward $\J_R$ can be achieved if $\hat{\pi}$ always helps with a safe query and never helps with an unsafe query. 
This is hard to achieve in practice since training the model to refuse unsafe queries often leads to ``over-refusal'' on safe queries~\citep{Cui2024}.
In contrast, the lowest expected reward $\J_R$ is achieved is $\hat{\pi}$ never helps with a safe query and always helps with an unsafe query.
Thus, the maximum and minimum expected values are given by:
\begin{align}\label{eq:eval_of_pi_hat}
  \begin{split}
  & \max_{\hat{\pi}} \J_R(\hat{\pi}) = \mu_{\safe}, \\
  & \min_{\hat{\pi}} \J_R(\hat{\pi}) = - (1 - \mu_{\safe}) \cdot C.
  \end{split}
\end{align}

Now, for purposes of illustration we look at one specific type of policy $\hat{\pi}$:
one that \emph{always} helps.
Let $\hat{\pi}$ be such a policy.
There are several such policies since they can differ in their allocation of probabilities to answers of different \emph{styles}, but the defining property is that their action probabilities for helpful answers sum to $1$:
\begin{equation*}
  \sum_{i = 1}^{N} \hat{\pi}(a_{\help}^i \mid q_{\safe}) = 1, \quad  \sum_{i = 1}^{N} \hat{\pi}(a_{\help}^i \mid q_{\unsa}) = 1.
\end{equation*}
Using~\eqref{eq:regret_in_example}, its expected value is given by:
\begin{equation}\label{eq:expected_value_smaller_with_C}
  \J_{R}(\hat{\pi}) = \mu_{\safe} - (1 - \mu_{\safe}) \cdot C. 
\end{equation}
Additionally using~\eqref{eq:eval_of_pi_hat}, the \emph{regret} of this policy is:
\begin{align}\label{eq:regret_computation_example}
  \begin{split}
  \Reg{R}{\hat{\pi}} & = \frac{\max_{\pi} J_R(\pi) - \J_R(\hat{\pi})}{\max_{\pi} \J_{R}(\pi) - \min_{\pi} \J_{R}(\pi)} \\
  &= \frac{\mu_{\safe} - \mu_{\safe} + (1 - \mu_{\safe}) \cdot C}{\mu_{\safe} + (1 - \mu_{\safe}) \cdot C} \\
  &= \frac{(1 - \mu_{\safe}) \cdot C}{\mu_{\safe} + (1 - \mu_{\safe}) \cdot C} \\
  &= \frac{\mu_{\unsa} \cdot C}{1 - \mu_{\unsa} + \mu_{\unsa} \cdot C}.
  \end{split}
\end{align}
Now, imagine our goal is to have a regret lower than the bound $L \in [0, 1]$ --- a threshold that we find ``safe enough'' for deployment.
Is $\hat{\pi}$ unsafe?
It depends on the value of $\mu_{\unsa}$, i.e., the frequency of unsafe queries.
Indeed, using~\eqref{eq:regret_computation_example}, the inequality $\Reg{R}{\hat{\pi}} \geq L$ is equivalent to:
\begin{equation}\label{eq:relationship_probs_C_L}
  \mu_{\unsa} \geq \frac{L}{(1 - L) \cdot C + L}.
\end{equation}
In~\Cref{fig:plot_mu_yeah} we analyze for several different values of the damage $C$ the relationship between the regret bound $L$ and the smallest probability $\mu_{\unsa}^C(L) \coloneqq L/{[(1 - L) \cdot C + L]}$ of the unsafe query for which the policy $\hat{\policy}$ would have a regret of at least $L$.
We observe the following:

\begin{figure}[t]
  \centering
  \includegraphics[width=0.5\textwidth]{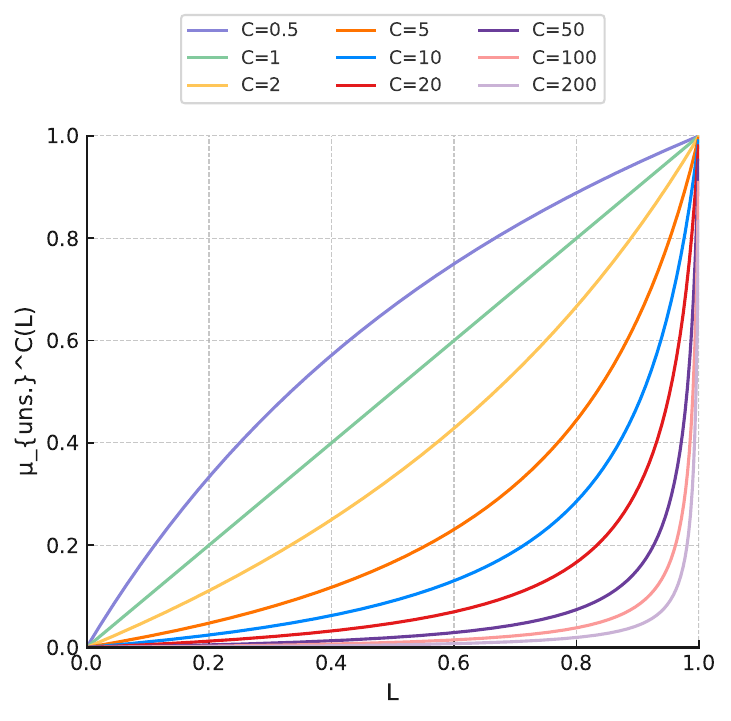}
  \caption{In our conceptual example, we analyze when an always-helping policy $\hat{\pi}$ is unsafe. This depends on the probability of an unsafe query $\mu_{\unsa}$. For a given damage $C$ of answering such a query and a given regret bound $L$, $\hat{\pi}$ has a regret of at least $L$ if $\mu_{\unsa}$ is larger than the plotted $\mu_{\unsa}^C(L) = L/[(1 - L)\cdot C + L]$. $\mu_{\unsa}^C(L)$ grows with growing $L$ and shrinks with growing $C$.}
  \label{fig:plot_mu_yeah}
\end{figure}

\begin{itemize}
  \item For each $C$, as the regret bound $L$ gets larger, one needs a larger probability $\mu_{\unsa}$ for $\hat{\pi}$ to have regret at least $L$. 
    This makes sense: $\hat{\pi}$ acts correctly on safe queries, and so only unsafe queries can contribute to the regret.
    Thus, the more unsafe queries the policy encounters, the larger its regret becomes.
  \item For each regret bound $L$, as the damage of helping with an unsafe query, $C$, gets larger, a smaller probability $\mu_{\unsa}$ is sufficient for $\hat{\pi}$ to reach regret at least $L$.
    This makes sense since the policy's overall performance is then more and more dominated by its performance on unsafe queries. 
\end{itemize}

Note that over time, language models are approaching more concerning ``dangerous capabilities''~\citep{Phuong2024,Anthropic2024_RSP}, which means that the caused damage $C$ for following through with unsafe requests can be imagined to go up over time with increased capabilities.
Positive value goes up, too, but plausibly in the near-term not as fast as the tailrisks.
Thus, we can reasonably think that even for large values of the regret bound $L$, a small probability $\mu_{\unsa}$ of an unsafe query would already cause the always-helping policy $\hat{\pi}$ to have a regret of at least $L$, and thus to be unsafe.

Alternatively, instead of looking at regret, we could also think directly about the expected value $\J_{R}(\hat{\pi})$ computed in~\eqref{eq:expected_value_smaller_with_C}.
Then we might say: the policy is unsafe if its expected value is negative, i.e., it causes more damage than it provides value.
With growing damage $C$ for more capable models, the expected value eventually becomes negative, and so also this viewpoint suggests that $\hat{\pi}$ is not a safe policy.

\subsubsection{Reward learning analysis}\label{sec:reward_learning_analysis}

Now, lets assume that the relationship between $L$, $C$, and $\mu_{\unsa}$ as per~\cref{eq:relationship_probs_C_L} is such that an always-helping policy $\hat{\pi}$ is \emph{unsafe}, i.e., has regret at least $L$.
Now the question becomes: Under what conditions could such a policy be learned by reward learning followed by policy optimization?
To be clear, there are also other policies that have regret at least $L$ (e.g., a policy that doesn't help for safe queries and always helps for unsafe queries is even worse), but since we are operating under a worst-case framework under the policy optimization, it is already bad if any always-helping policy $\hat{\pi}$ can be learned. 
Thus, we are searching for sufficient conditions for this to happen.

Thus, let $\hat{R}$ be the learned reward function.
For this to give rise to the policy $\hat{\pi}$ under unregularized policy optimization, $\hat{R}$ needs to favor at least one helpful answers over every refusing answer for both queries:
\begin{align}\label{eq:R-hat-conditions}
  \begin{split}
  & \exists i \forall j \colon \hat{R}(q_{\safe}, a_{\help}^i) > \hat{R}(q_{\safe}, a_{\refu}^j), \\
  & \exists i \forall j \colon \hat{R}(q_{\unsa}, a_{\help}^i) > \hat{R}(q_{\unsa}, a_{\refu}^j).
  \end{split}
\end{align}
Again, since we are operating under a worst-case framework, it is enough if we find one \emph{specific} learned reward function with these conditions that can be learned in practice.
Thus, for simplicity, we assume $\hat{R}(q_{\safe}, a_{\help}^i) = 1$, $\hat{R}(q_{\safe}, a_{\refu}^i) = 0$ for all $i$.
Also assume $\hat{R}(q_{\unsa}, a_{\refu}^i) = 0$ for all $i$.
Assume there exists a single $i_0$ with $B \coloneqq \hat{R}(q_{\unsa}, a_{\help}^{i_0}) > 0$, and that $\hat{R}(q_{\unsa}, a_{\help}^i) = -C$ for all $i \neq i_0$.
Then the conditions from~\eqref{eq:R-hat-conditions} are met, and the learned reward function almost everywhere agrees with the true reward function $R$ from~\eqref{eq:true_reward_table}.

Now we want to determine the (mean absolute) training error of this reward model.
For this, assume we train on some data distribution $D \in \DistributionsOverSet{\States \times \Actions}$, given by $D(q, a) = D(q) \cdot D(a \mid q)$.\footnote{$D(q_{\safe})$ is not necessarily equal to $\mu_{\safe}$, the likelihood of safe queries in the deployment environment. 
This is intuitive: Before deploying a chatbot in the real world, it may be hard to know what proportion of requests will be safe, and the proportion during training may be different.}
Since our reward model equals the true reward function in every query-answer pair except $(q_{\unsa}, a_{\help}^{i_0})$, the training error becomes:
\begin{equation*}
  \Expect{(q, a) \sim D}{\frac{|\hat{R}(q, a) - R(q, a)|}{\range R}} =
  D(q_{\unsa}, a_{\help}^{i_0}) \cdot \frac{B + C}{1 + C}.
\end{equation*}
Assume we train until we have achieved a small but realistic training error $\epsilon$.
Then the question is under what conditions $\hat{R}$ can ``slip through'' the training by leading to an error bounded above by $\epsilon$.
This is the case if:
\begin{equation}\label{eq:equation_leading_to_unsafety}
  D(q_{\unsa}, a_{\help}^{i_0}) < \frac{(1 + C) \cdot \epsilon}{B + C}.
\end{equation}
Thus, if there is \emph{some} $i_0$ for which this inequality holds, then $\hat{R}$ can be learned, and the always-helping policy $\hat{\pi}$ results.
Now, note that if the number of ``styles'' $i = 1, \dots, N$ is very large relative to the inverse of $\epsilon$, this is automatic.
Namely, if 
\begin{equation}\label{eq:condition_on_N}
  N > \frac{D(q_{\unsa}) \cdot (B + C)}{\epsilon \cdot (1 + C)},
\end{equation}
then since the probabilities sum to $1$ there is an $i_0 \in \{1, \dots, N\}$ with $D(a_{\help}^{i_0} \mid q_{\unsa}) \leq 1/N$, and we automatically obtain the result,~\eqref{eq:equation_leading_to_unsafety}.

A note on regularized policy optimization:
Regularization can prevent $\hat{\pi}$ from being learned even if $\hat{R}$ favors this policy.
However, if $B = \hat{R}(q_{\unsa}, a_{\help}^{i_0}) > 0$ is \emph{very} large, then this creates so much reward that the regularization effect with constant regularization strength can be counteracted.
Growing $B$ just leads to the need for larger $N$ in~\eqref{eq:condition_on_N}, and so we can say:
If the number of styles $N$ is large enough (leading to a small training-probability of some bad action) and the always-helping policy $\hat{\pi}$ has regret larger then $L$, then supervised reward learning up to reasonable errors $\epsilon$ followed by (un)regularized policy optimization can result in a policy with regret $\geq L$.
Thus, there is then an \emph{error-regret mismatch}, and the distribution $D$ is unsafe, as per~\Cref{definition:unsafe_data_distribution}.
That a large number of ``bad options'' or a small probability of \emph{some} bad option can lead to an error-regret mismatch is the core intuition behind our negative results~\Cref{theorem:interpretable_negative_result,corollary:all_unsafe,thm:putting_negativity_together,corollary:rlhf_negative_results_simpler_main}.

\section{Existence of error-regret mismatch}\label{sec:existence_unsafe}
In this section, we answer the question under which circumstances error-regret mismatch could occur. We consider multiple different settings, starting from very weak statements, and then steadily increasing the strength and generality.

\subsection{Assumptions}\label{sec:unsafe_existance_assumptions}
For every MDP $\MDP$ that we will define in the following statements, we assume the following properties:
\begin{itemize}
    \item \textbf{Finiteness:} Both the set of states $\States$ and the set of actions $\Actions$ are finite
    \item \textbf{Reachability:} Every state in the given MDP's is reachable, i.e., for every state $s \in S$, there exists a path of transitions from some initial state $s_0$ (s.t. $\InitStateDistribution(s_0) > 0$) to $s$, such that every transition $(s,a, s)$ in this path has a non-zero probability, i.e., $\TransitionDistribution{(s' |s,a)} > 0$. Note that this doesn't exclude the possibility of some transitions having zero probability in general.
\end{itemize}

\subsection{Intuitive unregularized existence statement}\label{sec:more_existence_statements}

\begin{definition}[Regret]
    We define the \emph{regret} of a policy $\policy$ with respect to reward function $\reward$ as
    \begin{equation*}
        \Reg{\reward}{\policy} \coloneqq \frac{\max \J_\reward - \J_\reward(\policy)}{\max \J_\reward - \min \J_\reward } \in [0, 1].
    \end{equation*}
    Here, $\J$ is the policy evaluation function corresponding to $\reward$.
\end{definition}

\begin{definition}[Policy-Induced Distribution]
    Let $\policy$ be a policy. 
    Then we define the \emph{policy-induced distribution} $\D{\policy}$ by 
    \begin{equation*}
        \D{\policy} \coloneqq (1 - \gamma) \cdot \eta^{\policy}.
    \end{equation*}
\end{definition}

\begin{definition}[Range of Reward Function]
    Let $\reward$ be a reward function.
    Its \emph{range} is defined as 
    \begin{equation*}
        \range \reward \coloneqq \max \reward - \min \reward.
    \end{equation*}
\end{definition}

\begin{lemma}
    for any policy $\policy$, $\D{\policy}$ is a distribution.
\end{lemma}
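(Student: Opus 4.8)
The plan is to show that $\D{\policy} = (1-\gamma)\cdot\eta^{\policy}$ is a probability distribution over $\SxA$, which amounts to verifying two things: that all its entries are non-negative, and that they sum to $1$. Non-negativity is immediate, since $\gamma \in (0,1)$ implies $1 - \gamma > 0$, and each occupancy measure entry $\eta^{\policy}(s,a) = \sum_{t=0}^{\infty} \gamma^t \cdot P(s_t = s, a_t = a \mid \xi \sim \policy)$ is a non-negative sum of (discounted) probabilities. So the only real content is the normalization.

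**First I would** recall the definition and compute the total mass of $\eta^{\policy}$. Summing over all state-action pairs and exchanging the (absolutely convergent, non-negative) sums gives
\begin{equation*}
  \sum_{(s,a) \in \SxA} \eta^{\policy}(s,a)
  = \sum_{(s,a)} \sum_{t=0}^{\infty} \gamma^t \cdot P(s_t = s, a_t = a \mid \xi \sim \policy)
  = \sum_{t=0}^{\infty} \gamma^t \sum_{(s,a)} P(s_t = s, a_t = a \mid \xi \sim \policy).
\end{equation*}
For each fixed $t$, the inner sum is the total probability over all possible state-action pairs at time $t$ under trajectories sampled from $\policy$, which equals $1$ (the process is always in \emph{some} state taking \emph{some} action). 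Hence the total mass reduces to the geometric series $\sum_{t=0}^{\infty}\gamma^t = \frac{1}{1-\gamma}$.

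**Then I would** multiply by the normalizing factor: $\sum_{(s,a)} \D{\policy}(s,a) = (1-\gamma)\cdot \frac{1}{1-\gamma} = 1$. Combined with non-negativity of every entry, this establishes that $\D{\policy}$ is a valid probability distribution over $\SxA$.

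**The only subtlety** worth flagging is the interchange of summation order, but since all terms are non-negative this is justified by Tonelli's theorem (or simply the standard fact that non-negative series may be reordered freely), so no convergence issues arise; the finiteness of $\States$ and $\Actions$ assumed throughout also makes the inner sum over $(s,a)$ a genuine finite sum, removing any delicacy there. I do not anticipate a genuine obstacle here — the result is essentially the standard normalization identity for discounted occupancy measures, and the factor $(1-\gamma)$ in the definition of $\D{\policy}$ is precisely engineered to cancel the geometric-series mass of $\eta^{\policy}$.
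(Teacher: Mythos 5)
Your proof is correct: the paper itself offers no argument (its proof reads simply ``This is clear''), and your computation --- non-negativity from $\gamma \in (0,1)$, interchange of the non-negative sums, total probability $1$ at each timestep, and the geometric series $\sum_{t \geq 0} \gamma^t = \frac{1}{1-\gamma}$ cancelling the factor $(1-\gamma)$ --- is exactly the standard normalization argument the authors implicitly rely on. Nothing is missing, and flagging Tonelli for the interchange is appropriately careful even though finiteness of $\SxA$ makes it routine.
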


\begin{proof}
    This is clear. 
\end{proof}

\begin{proposition}
  \label{lemma:interpretable_negative_result_app}
  Let $M = \MDP$ be an MDP, $D \in \DistributionsOverSet{\SxA}$ a data distribution, and $\epsilon > 0$, $L \in [0, 1]$.
  Assume there exists a policy $\hat{\policy}$ with the property that  $\Reg{\reward}{\hat{\policy}} \geq  L $ and $D(\supp \D{\hat{\policy}}) < \epsilon$, where $\supp \D{\hat{\policy}}$ is defined as the set of state-action pairs $(s,a) \in \SxA$ such that $\D{\hat{\policy}}(s,a) > 0$.
  In other words, there is a ``bad'' policy for $\reward$ that is not very supported by $D$. Then, $D$ allows for error-regret mismatch to occur, i.e., $D \in \unsafeD$.
\end{proposition}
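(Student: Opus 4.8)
The plan is to exhibit a single reward model $\hat\reward$ that is $\epsilon$-close to $\reward$ under $D$ and for which the given policy $\hat\policy$ is optimal; since $\hat\policy$ already satisfies $\Reg{\reward}{\hat\policy} \geq L$ by assumption, this witnesses $D \in \unsafeD$. Recall that unsafety only requires the \emph{existence} of one reward model (property~1 of \Cref{definition:unsafe_data_distribution}) together with one of its optimal policies (property~2) whose true regret is at least $L$, and that in the unregularized case $(\lambda,\reg)$-optimality reduces to ordinary maximization of $\J_{\hat\reward}$. The construction I would use leaves the reward untouched off the support of $\hat\policy$ and raises it to the global maximum on the support:
\begin{equation*}
\hat\reward(s,a) \coloneqq \begin{cases} \max \reward & (s,a) \in \supp \D{\hat\policy}, \\ \reward(s,a) & \text{otherwise.}\end{cases}
\end{equation*}

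First I would verify optimality of $\hat\policy$ for $\hat\reward$ using the occupancy-measure form $\J_{\hat\reward}(\policy) = \eta^\policy \cdot \vec{\hat\reward}$. Since $\hat\reward(s,a) \leq \max\reward$ everywhere and $\sum_{(s,a)} \eta^\policy(s,a) = \tfrac{1}{1-\gamma}$ for every policy, one obtains the uniform bound $\J_{\hat\reward}(\policy) \leq \tfrac{\max\reward}{1-\gamma}$. On the other hand, $\supp \D{\hat\policy} = \supp \eta^{\hat\policy}$ (the two differ only by the positive factor $1-\gamma$), so $\eta^{\hat\policy}$ is concentrated entirely on pairs where $\hat\reward = \max\reward$, giving $\J_{\hat\reward}(\hat\policy) = \tfrac{\max\reward}{1-\gamma}$. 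Hence $\hat\policy$ attains the uniform upper bound and lies in $\argmax_\policy \J_{\hat\reward}(\policy)$.

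Next I would bound the expected error. Because $\hat\reward$ and $\reward$ agree off $\supp \D{\hat\policy}$, only the support contributes, and there $|\hat\reward(s,a) - \reward(s,a)| = \max\reward - \reward(s,a) \leq \range\reward$, so each normalized term is at most $1$. Therefore
\begin{equation*}
\Expect{(s,a)\sim D}{\frac{|\hat\reward(s,a) - \reward(s,a)|}{\range\reward}} \;=\; \sum_{(s,a)\in\supp\D{\hat\policy}} D(s,a)\,\frac{\max\reward - \reward(s,a)}{\range\reward} \;\leq\; D(\supp\D{\hat\policy}) \;<\; \epsilon,
\end{equation*}
using the hypothesis $D(\supp\D{\hat\policy}) < \epsilon$ in the last step. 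This establishes property~1, while optimality gives property~2, and the assumed bound $\Reg{\reward}{\hat\policy} \geq L$ violates property~3; hence $D \notin \safeD$, i.e.\ $D \in \unsafeD$.

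The only place demanding care is the tension between making $\hat\policy$ optimal and keeping the error small: inflating $\hat\reward$ to an arbitrarily large value on the support would certainly force optimality of $\hat\policy$, but could push the per-point deviation above $\range\reward$ and spoil the bound. The key observation that resolves this is that raising $\hat\reward$ merely to the \emph{true global maximum} $\max\reward$ already suffices for optimality --- no policy can exceed $\tfrac{\max\reward}{1-\gamma}$, and $\hat\policy$ meets it --- while simultaneously keeping every normalized deviation at most $1$, so that the total error is controlled by $D(\supp\D{\hat\policy})$ alone.
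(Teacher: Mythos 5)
Your proposal is correct and matches the paper's own proof essentially verbatim: the same reward model $\hat\reward$ (equal to $\reward$ off $\supp \D{\hat\policy}$ and equal to $\max \reward$ on it) and the same expected-error chain bounding the normalized deviation by $D(\supp \D{\hat\policy}) < \epsilon$. The only difference is that you spell out the optimality of $\hat\policy$ via the occupancy-measure bound $\J_{\hat\reward}(\policy) \leq \max\reward/(1-\gamma)$, which the paper dismisses as obvious --- a welcome addition, not a deviation.
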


\begin{proof}
We claim that whenever there exists a policy $\hat{\policy}$ with the following two properties:
  \begin{itemize}
    \item $\Reg{\reward}{\hat{\policy}} \geq  L $,
    \item $D(\supp \D{\hat{\policy}}) < \epsilon$,
  \end{itemize}
  then there exists a reward function $\hat{\reward}$ for which $\hat{\policy}$ is optimal, and such that
  \begin{equation*}  
    \Expect{(s, a) \sim D}{\frac{|\reward(s, a) - \hat{\reward}(s, a)|}{\range \reward}} \leq \epsilon.
  \end{equation*}
  Since we assumed that $\Reg{R}{\hat{\pi}} \geq L$, this gives the result $D \in \unsafeD$.
  
  To prove the claim, define
  \begin{equation*}
    \hat{\reward}(s, a) \coloneqq 
    \begin{cases}
      \reward(s, a), \ (s, a) \notin \supp \D{\hat{\policy}}; \\
      \max R, \ \text{else.}
    \end{cases}
  \end{equation*}
  The state-action pairs that $\hat{\pi}$ visits all lie in $\supp D^{\hat{\pi}}$, where $\hat{R}$ takes on its maximal reward $\max R$. 
  This implies that $\hat{\policy}$ is optimal for $\hat{\reward}$.
  Furthermore, we obtain
  \begin{align*}
    \Expect{(s, a) \sim D}{\frac{|\reward(s, a) - \hat{\reward}(s, a)|}{\range \reward}} 
    &= \sum_{(s, a)} D(s, a) \frac{|\reward(s, a) - \hat{\reward}(s, a)|}{\range \reward} \\
    &= \sum_{(s, a) \in \supp \D{\hat{\policy}}} D(s, a) \frac{\max \reward - \reward(s, a)}{\range \reward} \\
    &\leq \sum_{(s, a) \in \supp \D{\hat{\policy}}} D(s, a) \\
    &= D(\supp \D{\hat{\policy}}) \\
    &\leq \epsilon.
  \end{align*}
  That was to show.
\end{proof}

\begin{corollary}
  \label{corollary:all_unsafe_app}
  Let $M = \MDP$ be an MDP, $\epsilon > 0$, and $L \in [0, 1]$.
  Assume there exists a set of policies $\Policies_L$ with:
  \begin{itemize}
    \item $\Reg{R}{\pi} \geq L$ for all $\pi \in \Policies_L$;
    \item $\supp D^{\pi} \cap \supp D^{\pi'} = \emptyset$ for all $\pi, \pi' \in \Policies_{L}$; and
    \item $|\Policies_L| \geq 1/\epsilon$.
  \end{itemize}
  Then $\unsafeD = \DistributionsOverSet{\States \times \Actions}$, i.e.: all distributions are unsafe.
\end{corollary}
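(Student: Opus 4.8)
The plan is to derive this corollary directly from \Cref{lemma:interpretable_negative_result_app} via a counting argument. That proposition states that a fixed distribution $D$ is unsafe as soon as there is \emph{some} policy that simultaneously has regret at least $L$ and whose induced-distribution support carries little $D$-mass. Since every $\policy \in \Policies_L$ already satisfies $\Reg{\reward}{\policy} \geq L$ by assumption, the task reduces to the following: for an \emph{arbitrary} data distribution $D \in \DistributionsOverSet{\SxA}$, exhibit at least one $\policy \in \Policies_L$ with $D(\supp \D{\policy})$ small. If this can be done for every $D$, then every $D$ is unsafe, which is exactly the claim $\unsafeD = \DistributionsOverSet{\SxA}$.

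First I would fix an arbitrary $D$ and use the disjointness hypothesis. Because the supports $\supp \D{\policy}$ for $\policy \in \Policies_L$ are pairwise disjoint, they form disjoint events, so
\begin{equation*}
  \sum_{\policy \in \Policies_L} D(\supp \D{\policy}) = D\Big( \bigcup_{\policy \in \Policies_L} \supp \D{\policy} \Big) \leq 1.
\end{equation*}
Averaging over the $|\Policies_L| \geq 1/\epsilon$ summands, the smallest term is bounded by the average, so there is some $\policy^\star \in \Policies_L$ with
\begin{equation*}
  D(\supp \D{\policy^\star}) \leq \frac{1}{|\Policies_L|} \leq \epsilon.
\end{equation*}
Thus for every $D$ there is a high-regret policy whose support has $D$-mass at most $\epsilon$.

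The one place that needs care is that pigeonhole yields the non-strict bound $D(\supp \D{\policy^\star}) \leq \epsilon$, whereas \Cref{lemma:interpretable_negative_result_app} is phrased with the strict inequality $D(\supp \D{\policy^\star}) < \epsilon$; this gap is genuine only in the boundary case where $1/\epsilon$ is an integer equal to $|\Policies_L|$ and all supports have mass exactly $\epsilon$. I would close it by invoking not the statement but the construction inside the proof of \Cref{lemma:interpretable_negative_result_app}: setting $\hat\reward = \reward$ off $\supp \D{\policy^\star}$ and $\hat\reward = \max \reward$ on it makes $\policy^\star$ optimal for $\hat\reward$ and bounds the expected error by $D(\supp \D{\policy^\star}) \leq \epsilon$, which already meets condition~1 of \Cref{definition:unsafe_data_distribution} (an error of at most $\epsilon$). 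Since $\policy^\star$ has regret at least $L$, this witnesses $D \in \unsafeD$. As $D$ was arbitrary, $\unsafeD = \DistributionsOverSet{\SxA}$. The main (and essentially only) obstacle is this strict-versus-non-strict boundary case; everything else follows immediately from disjointness and counting.
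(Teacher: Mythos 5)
Your proof is correct and takes essentially the same route as the paper: the paper's proof likewise picks $\pi \in \argmin_{\pi' \in \Policies_L} D(\supp \D{\pi'})$, sums the pairwise-disjoint support masses to bound the minimum by $1/|\Policies_L|$, and concludes via \Cref{theorem:interpretable_negative_result}. In fact, your treatment of the boundary case is more careful than the paper's: the paper asserts $D(\supp \D{\pi}) \leq 1/|\Policies_L| < \epsilon$, a strict inequality that does not follow from the hypothesis $|\Policies_L| \geq 1/\epsilon$ when equality holds, whereas you correctly observe that pigeonhole only yields $\leq \epsilon$ and repair this by invoking the construction inside the proof of \Cref{lemma:interpretable_negative_result_app}, whose expected error is bounded by $D(\supp \D{\pi^\star}) \leq \epsilon$ and hence already satisfies the non-strict condition~1 of \Cref{definition:unsafe_data_distribution}.
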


\begin{proof}
  Let $D \in \DistributionsOverSet{\States \times \Actions}$.
  Let $\pi \in \argmin_{\pi' \in \Pi_L} D(\supp D^{\pi'})$.
  We obtain
  \begin{equation*}
    |\Policies_L| \cdot D(\supp D^{\pi}) \leq \sum_{\pi' \in \Policies_L}D(\supp D^{\pi'}) = D\left( \bigcup_{\pi' \in \Policies_L} \supp D^{\pi'} \right) \leq 1,
  \end{equation*}
  and therefore $D(\supp D^{\pi}) \leq 1 / |\Policies_L| < \epsilon$.
  Since $\pi \in \Pi_L$, we also have $\Reg{R}{\pi} \geq L$.
  Together, $\pi$ and $D$ thus satisfy the assumptions from~\Cref{theorem:interpretable_negative_result}, whose conclusion implies $D \in \unsafeD$.
  This shows the inclusion $\DistributionsOverSet{\States \times \Actions} \subseteq \unsafeD$.
  The other inclusion is clear, and so we have equality.
\end{proof}

\begin{proposition}
  \label{pro:impossibility}
  The assumptions on $\epsilon$ in~\Cref{pro:tight_regret_bound_main} and~\Cref{theorem:interpretable_negative_result} cannot hold simultaneously.
\end{proposition}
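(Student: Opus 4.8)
The plan is to argue by contradiction: I assume that, for a fixed MDP, data distribution $D$, error threshold $\epsilon > 0$, and regret bound $L$, \emph{both} sets of hypotheses hold at once, and I derive $\epsilon < \epsilon$. Note first that the bound on $\epsilon$ from \Cref{pro:tight_regret_bound_main}, namely $\epsilon < \frac{1-\gamma}{\sqrt{2}}\cdot\frac{\range J^{R}}{\range R}\cdot\min_{(s,a)}D(s,a)\cdot L$, can be satisfied by a positive $\epsilon$ only when $\min_{(s,a)}D(s,a) > 0$. Hence this hypothesis already forces $D$ to be a positive data distribution, consistent with the shared setting of \Cref{theorem:small_epsilon_makes_safe}, and I will use this positivity throughout.

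The first key step is a lower bound on the coverage term in \Cref{theorem:interpretable_negative_result}. Since $\D{\hat\policy}$ is a probability distribution on $\SxA$, its support $\supp \D{\hat\policy}$ is nonempty, so it contains at least one pair $(s,a)$. Because $D$ is positive, every such pair satisfies $D(s,a) \geq \min_{(s',a')} D(s',a')$, and therefore
\[
  D(\supp \D{\hat\policy}) = \sum_{(s,a)\in \supp \D{\hat\policy}} D(s,a) \ \geq\ \min_{(s,a)} D(s,a).
\]
Combined with the hypothesis $D(\supp \D{\hat\policy}) < \epsilon$ of \Cref{theorem:interpretable_negative_result}, this yields $\epsilon > \min_{(s,a)} D(s,a)$.

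The second key step is to show that the prefactor multiplying $\min_{(s,a)} D(s,a)$ in the bound of \Cref{pro:tight_regret_bound_main} is strictly below one. For this I would establish the range inequality $\range J^{R} \leq \frac{1}{1-\gamma}\range R$: every evaluation $J^{R}(\policy) = \eta^\policy\cdot\vec{R}$ lies between $\frac{1}{1-\gamma}\min R$ and $\frac{1}{1-\gamma}\max R$, because the unnormalized occupancy measure $\eta^\policy$ has total mass $\sum_{t\geq 0}\gamma^t = \frac{1}{1-\gamma}$; subtracting the two extremes gives the bound, hence $\frac{\range J^{R}}{\range R} \leq \frac{1}{1-\gamma}$. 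Plugging this in, together with $L \leq 1$ and $\gamma \in (0,1)$, gives
\[
  \frac{1-\gamma}{\sqrt{2}}\cdot\frac{\range J^{R}}{\range R}\cdot L \ \leq\ \frac{1-\gamma}{\sqrt{2}}\cdot\frac{1}{1-\gamma}\cdot 1 \ =\ \frac{1}{\sqrt{2}} \ <\ 1.
\]
Since $\min_{(s,a)} D(s,a) > 0$, the hypothesis of \Cref{pro:tight_regret_bound_main} then forces $\epsilon < \min_{(s,a)} D(s,a)$, directly contradicting $\epsilon > \min_{(s,a)} D(s,a)$ from the first step, which closes the argument.

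The only nonroutine ingredient is the range inequality $\range J^{R} \leq \range R/(1-\gamma)$, so I expect deriving it cleanly from the total mass of the occupancy measure to be the crux; everything else is bookkeeping. I also note that the argument never invokes the regret condition $\Reg{\reward}{\hat\policy} \geq L$ of \Cref{theorem:interpretable_negative_result} — only its coverage condition — which matches the intuition that it is the \emph{coverage} assumptions on $\epsilon$, rather than the regret, that stand in mutual tension.
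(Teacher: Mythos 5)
Your proof is correct and takes essentially the same route as the paper's: both arguments combine the chain $\min_{(s,a)} D(s,a) \leq D(\supp \D{\hat\policy}) < \epsilon$ with the key range inequality $(1-\gamma)\,\range \J_\reward \leq \range \reward$, which the paper likewise derives from the fact that the normalized occupancy measure is a probability distribution, and neither argument uses the regret condition of \Cref{theorem:interpretable_negative_result}. The only cosmetic difference is that the paper divides the single chained inequality by $\min_{(s,a)} D(s,a)$ to reach the contradiction $1 < \frac{L}{\sqrt{2}}\cdot\frac{(1-\gamma)\,\range \J_\reward}{\range \reward}$, whereas you split it into the two opposing bounds $\epsilon > \min_{(s,a)} D(s,a)$ and $\epsilon < \min_{(s,a)} D(s,a)$.
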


\begin{proof}
If they \emph{would} hold simultaneously, we would get:
\begin{equation*}
\min_{(s, a) \in \mathcal{S} \times \mathcal{A}} D(s, a) \leq D\big( \text{supp} D^{\hat{\pi}} \big) < \epsilon < \frac{1 - \gamma}{\sqrt{2}} \cdot \frac{\text{range} J_R}{\text{range} R} \cdot \min_{(s, a) \in \mathcal{S} \times \mathcal{A}} D(s, a) \cdot L.
\end{equation*}
Here, the first step is clear, the second step is the assumption from~\Cref{theorem:interpretable_negative_result}, and the third step is the assumption from~\Cref{pro:tight_regret_bound_main}. 
We now show that this leads to a contradiction.

Dividing by the minimum on both sides, we obtain
\begin{equation}\label{eq:to_be_contradicted}
1 < \frac{L}{\sqrt{2}} \cdot \frac{(1 - \gamma) \text{range} J_R}{\text{range} R}.
\end{equation}
Clearly, we have $L/\sqrt{2} < 1$.
We also claim that the second fraction is smaller or equal to $1$, which then leads to the desired contradiction.
Indeed, let $\pi^*$ and $\pi_*$ be an optimal and a worst-case policy, respectively.
Then we have
\begin{align*}
(1 - \gamma) \text{range} J_R &= (1 - \gamma) (J_R(\pi^*) - J_R(\pi_*)) \\
&= (1 - \gamma) \eta^{\pi^*} \cdot \vec{R} - (1 - \gamma) \eta^{\pi_*} \cdot \vec{R} \\
&= D^{\pi^*} \cdot \vec{R} - D^{\pi_*} \cdot \vec{R} \\
&= \sum_{(s, a) \in \mathcal{S} \times \mathcal{A}} D^{\pi^*}(s, a) R(s, a) - \sum_{(s, a) \in \mathcal{S} \times \mathcal{A}} D^{\pi_*}(s, a) R(s, a) \\
& \leq \max_{(s,a) \in \mathcal{S} \times \mathcal{A}} R(s, a) - \min_{(s,a) \in \mathcal{S} \times \mathcal{A}} R(s, a) \\
&= \text{range} R.
\end{align*}
Here, we used the formulation of the policy evaluation function in terms of the occupancy measure $\eta$, and then that $1-\gamma$ is a normalizing factor that transforms the occupancy measure into a distribution. 
Overall, this means that $(1 - \gamma) \text{range} J_R / \text{range} R \leq 1$, contradicting~\eqref{eq:to_be_contradicted}.
Consequently, the assumptions of~\Cref{pro:tight_regret_bound_main} and~\Cref{theorem:interpretable_negative_result} cannot hold simultaneously.
\end{proof}

\subsection{General existence statements}\label{sec:general_unregularized_existence_statements}
We start by giving some definitions:
\begin{definition}[Minkowski addition]
    Let $A,B$ be sets of vectors, then the Minkowski addition of $A,B$ is defined as:
    \begin{equation*}
        A + B \coloneqq \{a+b\ |\ a \in A,\ b\in B\}.
    \end{equation*}
\end{definition}
\citep{karwowski2023goodhart} showed in their proposition 1, that for every MDP, the corresponding occupancy measure space $\Omega$ forms a convex polytope. Furthermore, for each occupancy measure $\eta \in \Omega$ there exists at least one policy $\policy^\eta$ such that $\forall (s,a) \in \SxA,\ \eta^\policy(s,a) = \eta(s,a)$ (see Theorem 6.9.1, Corollary 6.9.2, and Proposition 6.9.3 of \citep{puterman1994markov}). In the following proofs, we will refer multiple times to vertices of the occupancy measure space $\Omega$ whose corresponding policies have high regret. We formalize this in the following definition:
\begin{definition}[High regret vertices]\label{definition:high_regret_vertices}
    Given a lower regret bound $L \in [0,1]$, an MDP $\MDP$ and a corresponding occupancy measure $\Omega$, we define the set of high-regret vertices of $\Omega$, denoted by $V_\reward^L$, to be the set of vertices $v$ of $\Omega$ for which $\Reg{\reward}{\policy^v} \ge L$
\end{definition}
\begin{definition}[Active inequalities]\label{definition:zeros}
Let $\MDP$ be an MDP with corresponding occupancy measure space $\Omega$. For every $\eta \in \Omega$, we define the set of transitions $(s,a)$ for which $\eta(s,a) = 0$ by $zeros(\eta)$.
\end{definition}

\begin{definition}[Normal cone]\label{definition:normal_cone}
    The normal cone of a convex set $C \subset \Reals^n$ at point $x \in C$ is defined as:
    \begin{equation}
        \NormalCone{C}{x}\ \coloneqq\ \{n \in \Reals^n\ |\ n^T \cdot (x' - x) \le 0\ \text{ for all } x' \in C\}
    \end{equation}
\end{definition}

We first state a theorem from prior work that we will use to prove some lemmas in this section:
\begin{theorem}[~\citep{schlaginhaufen2023identifiability}]\label{theorem:rl_optimality}
    Let $\langle \States, \Actions, \TransitionDistribution, \InitStateDistribution, \gamma\rangle$ be an MDP without reward function and denote with $\Omega$ its corresponding occupancy measure space. Then, for every reward function $\reward$ and occupancy measure $\eta \in \Omega$, it holds that:
    \begin{equation}
        \eta \text{ is optimal for } \reward\quad \Longleftrightarrow\quad \reward\ \in\ \NormalCone{\Omega}{\eta},
    \end{equation}
    where the normal cone is equal to:
    \begin{equation}
        \NormalCone{\Omega}{\eta}\ =\ \PotentialShaping + \cone{\{-e_{s,a}\}_{(s,a) \in zeros(\eta)}}
    \end{equation}
    where $\PotentialShaping$ is the linear subspace of potential functions used for reward-shaping, and the addition is defined as the Minkowski addition.
\end{theorem}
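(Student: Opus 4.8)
The plan is to prove the two halves of the statement in turn: the optimality--normal-cone equivalence, which is just linear-programming optimality rephrased, and then the explicit description of $\NormalCone{\Omega}{\eta}$, which I would obtain from the Bellman-flow (standard-form) representation of $\Omega$ together with the KKT conditions.

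For the equivalence, I would start from the observation, already recorded in the preliminaries, that $\J_\reward(\policy) = \eta^\policy \cdot \vec{\reward}$ is \emph{linear} in the occupancy measure. Combined with the facts that $\Omega$ is a convex polytope \citep{karwowski2023goodhart} and that every $\eta \in \Omega$ is realized by some policy \citep{puterman1994markov}, maximizing $\J_\reward$ over $\Policies$ is equivalent to maximizing the linear functional $\eta' \mapsto \langle \vec{\reward}, \eta' \rangle$ over $\Omega$. Thus $\eta$ is optimal for $\reward$ iff $\vec{\reward}^{\top}(\eta' - \eta) \le 0$ for all $\eta' \in \Omega$, which is exactly the defining condition $\reward \in \NormalCone{\Omega}{\eta}$ from \Cref{definition:normal_cone}. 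This direction needs no further computation.

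For the closed form I would write $\Omega$ in standard form via the Bellman-flow constraints, $\Omega = \{\eta \ge 0 : E\eta = \InitStateDistribution\}$, where the state-indexed matrix $E$ has entries $E_{s,(s',a')} = \delta_{s s'} - \gamma \cdot \TransitionDistribution(s \mid s', a')$. For a polyhedron in this form, LP duality (equivalently the KKT stationarity and complementary-slackness conditions) gives that $c \in \NormalCone{\Omega}{\eta}$ iff there is a free multiplier $y$ for the equalities and a nonnegative multiplier $\mu$, supported on the active bounds $\{(s,a) : \eta(s,a) = 0\} = zeros(\eta)$, with $c = E^{\top} y - \mu$. Expanding $\mu$ as a conic combination of the corresponding basis vectors yields
\begin{equation*}
  \NormalCone{\Omega}{\eta} = \mathrm{range}(E^{\top}) + \cone{\{-e_{s,a}\}_{(s,a) \in zeros(\eta)}}.
\end{equation*}

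It then remains to identify $\mathrm{range}(E^{\top})$ with the potential-shaping subspace $\PotentialShaping$, which I expect to be the one place demanding care. Computing the transpose action gives $(E^{\top} y)(s,a) = y_s - \gamma \sum_{s'} \TransitionDistribution(s' \mid s, a)\, y_{s'}$; the substitution $\Phi \coloneqq -y$ turns this into $\gamma \sum_{s'} \TransitionDistribution(s' \mid s, a)\, \Phi(s') - \Phi(s)$, exactly the shaping reward induced by potential $\Phi$. As $y$ ranges over $\Reals^{|\States|}$ so does $\Phi$, so $\mathrm{range}(E^{\top}) = \PotentialShaping$, and substituting into the displayed identity finishes the proof. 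The main obstacles are purely sign-bookkeeping: fixing the KKT conventions so the active-bound directions enter as $-e_{s,a}$ rather than $+e_{s,a}$, and matching the transpose of the flow matrix to the standard discounted shaping transform through the substitution $\Phi = -y$.
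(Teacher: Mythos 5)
Your proof is correct, but it takes a genuinely different route from the paper: the paper does not prove this statement at all---its ``proof'' consists of observing that the theorem is the special case of Theorem 4.5 of \citet{schlaginhaufen2023identifiability} obtained by dropping the constraints and the regularizer, whereas you reconstruct the result from scratch. Your three steps are all sound. The equivalence ($\eta$ optimal for $\reward$ iff $\reward \in \NormalCone{\Omega}{\eta}$) is immediate from linearity of $\J_\reward$ in the occupancy measure together with the surjectivity of $\policy \mapsto \eta^\policy$ onto $\Omega$; the standard-form description $\Omega = \{\eta \ge 0 : E\eta = \InitStateDistribution\}$ with $E = (A - \gamma P)^T$ is exactly the dual-LP characterization from \citep{puterman1994markov} that the paper itself invokes later in \Cref{dim_unsafe}; and the LP-duality/KKT computation of the normal cone of $\{x \ge 0 : Ex = b\}$---free multipliers on the equalities, nonnegative multipliers supported on the active bounds $zeros(\eta)$---correctly yields $\mathrm{range}(E^\top) + \cone{\{-e_{s,a}\}_{(s,a)\in zeros(\eta)}}$. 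Your identification $\mathrm{range}(E^\top) = \mathrm{range}(A - \gamma P) = \PotentialShaping$ matches \Cref{lemma:basis_for_phi}, which states $\PotentialShaping = \Span(A - \gamma P)$; note that the sign bookkeeping you flag as the delicate point is in fact vacuous at the level of the theorem, since $\PotentialShaping$ is a \emph{linear} subspace and hence closed under negation, so taking the potential to be $y$ or $-y$ changes nothing (only the active-bound directions genuinely need the sign care, to land on $-e_{s,a}$ rather than $+e_{s,a}$, which your complementary-slackness convention handles). As for what each approach buys: the paper's citation is shorter and inherits a statement proven in far greater generality (constrained and regularized convex MDPs), while your argument is self-contained and elementary, and as a side benefit re-derives two ingredients the paper otherwise imports separately---the active-inequality description of normal cones of polyhedra (which the paper takes from Theorem 6.46 of \citep{rockafellar2009variational} inside the proof of \Cref{lemma:all_r_with_worst_regret}) and the column-span characterization of $\PotentialShaping$.
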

\begin{proof}
    This is a special case of theorem 4.5 of \citet{schlaginhaufen2023identifiability}, where we consider the unconstrained- and unregularized RL problem.
\end{proof}

From the previous lemma, we can derive the following corollary which uses the fact that $\Omega$ is a closed, and bounded convex polytope (see Proposition 1 of ~\citet{karwowski2023goodhart}).

\begin{corollary}\label{lemma:all_r_with_worst_regret}
    Given an MDP $\MDP$ and a corresponding occupancy measure space $\Omega$,
    then for every reward function $\hat\reward: \SxA \to \Reals$, and lower regret bound $L \in [0,1]$, the following two statements are equivalent:
    \begin{itemize}
        \item[a)] There exists an optimal policy $\hat\policy$ for $\hat\reward$ such that $\hat\policy$ has regret at least $L$ w.r.t. the original reward function, i.e., $\Reg{\reward}{\hat{\policy}} \ge L$.
        \item[b)]$\hat{\reward} \in \Phi + \bigcup\limits_{v \in V_\reward^L} \cone{\{-e_{s,a}\}_{(s,a) \in zeros(v)}}$, where $\Phi$ is the linear subspace of potential functions used for reward-shaping, the addition is defined as the Minkowski addition.
    \end{itemize}
\end{corollary}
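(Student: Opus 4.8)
The plan is to read everything through Theorem~\ref{theorem:rl_optimality}, which identifies the set of reward functions rendering a given occupancy measure $\eta$ optimal with the normal cone $\NormalCone{\Omega}{\eta} = \PotentialShaping + \cone{\{-e_{s,a}\}_{(s,a)\in zeros(\eta)}}$. With this in hand the direction (b)~$\Rightarrow$~(a) is almost immediate: if $\hat\reward \in \PotentialShaping + \cone{\{-e_{s,a}\}_{(s,a)\in zeros(v)}}$ for some $v \in V_\reward^L$, then $\hat\reward \in \NormalCone{\Omega}{v}$, so Theorem~\ref{theorem:rl_optimality} makes the associated policy $\policy^v$ optimal for $\hat\reward$; and since $v \in V_\reward^L$, \Cref{definition:high_regret_vertices} gives $\Reg{\reward}{\policy^v} \ge L$. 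This is exactly the high-regret optimal policy demanded by (a).

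The substantive direction is (a)~$\Rightarrow$~(b), and the core idea is a polytope-geometry reduction from \emph{some} optimal policy having high regret to \emph{some} optimal vertex having high regret. First I would observe that the set of occupancy measures optimal for $\hat\reward$ — i.e.\ the maximizers of the linear functional $\eta \mapsto \eta \cdot \vec{\hat\reward}$ over the convex polytope $\Omega$ — is a nonempty, bounded face $F \subseteq \Omega$. Next I would use that the regret of the policy $\policy^\eta$ attached to $\eta$, which by $\J_\reward(\policy^\eta) = \eta \cdot \vec{\reward}$ equals
\[
  \frac{\max \J_\reward - \eta \cdot \vec{\reward}}{\max \J_\reward - \min \J_\reward},
\]
is an \emph{affine} function of $\eta$. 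Consequently its maximum over the compact polytope $F$ is attained at a vertex $v^\star$ of $F$.

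Given the hypothesis that some optimal $\eta \in F$ has $\Reg{\reward}{\policy^\eta} \ge L$, the maximizing vertex must satisfy $\Reg{\reward}{\policy^{v^\star}} \ge L$ as well. The structural fact I would then invoke is that a vertex of a face $F$ of $\Omega$ is itself a vertex of $\Omega$ (a face of a face is a face). Hence $v^\star$ is a vertex of $\Omega$ that is optimal for $\hat\reward$ and has regret at least $L$, so $v^\star \in V_\reward^L$ by \Cref{definition:high_regret_vertices}. Applying Theorem~\ref{theorem:rl_optimality} once more yields $\hat\reward \in \NormalCone{\Omega}{v^\star} = \PotentialShaping + \cone{\{-e_{s,a}\}_{(s,a)\in zeros(v^\star)}} \subseteq \PotentialShaping + \bigcup_{v \in V_\reward^L}\cone{\{-e_{s,a}\}_{(s,a)\in zeros(v)}}$, which is precisely statement (b).

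The hard part will be making the face/vertex reduction fully rigorous rather than merely plausible: one must confirm that the optimal set $F$ genuinely is a face (the maximizers of a linear functional over a polytope), that $F$ is compact so the affine regret attains its maximum at an extreme point, and that extreme points of $F$ are extreme points of $\Omega$. All three are standard consequences of the compactness and polyhedrality of $\Omega$ (Proposition 1 of~\citet{karwowski2023goodhart}) and require no new machinery. The one conceptual point to get right is that the evaluation under $\hat\reward$ is \emph{constant} on $F$ — that is exactly what makes $F$ the optimal face — whereas the evaluation under the true reward $\reward$, and hence the regret, varies affinely across $F$; this is why passing to a regret-maximizing vertex of $F$ preserves the high-regret property without leaving the optimal set.
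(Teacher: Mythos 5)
Your proof is correct, and it takes a genuinely more direct route than the paper's. Both proofs use Theorem~\ref{theorem:rl_optimality} identically for the easy direction (b)~$\Rightarrow$~(a), but for (a)~$\Rightarrow$~(b) the paper first rewrites (a) as $\hat\reward \in \bigcup_{\eta:\,\Reg{\reward}{\policy^\eta}\ge L}\NormalCone{\Omega}{\eta}$ and then collapses this union onto the high-regret vertices via the active-index-set description of polyhedral normal cones (Theorem 6.46 of \citep{rockafellar2009variational}), together with a three-way case distinction (interior point, vertex, boundary point) showing that every high-regret $\eta$ shares a face with a high-regret vertex $v$ satisfying $I_\Omega(\eta)\subseteq I_\Omega(v)$ and hence $\NormalCone{\Omega}{\eta}\subseteq\NormalCone{\Omega}{v}$. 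You instead fix the argmax face $F$ of the linear functional $\eta\mapsto\eta\cdot\vec{\hat\reward}$ over the compact polytope $\Omega$ (nonemptiness and polyhedrality come from Proposition 1 of \citep{karwowski2023goodhart}), observe that regret with respect to $\reward$ is affine in $\eta$, pick a regret-maximizing vertex $v^\star$ of $F$ --- which is a vertex of $\Omega$ since extreme points of a face are extreme points of the polytope --- and apply Theorem~\ref{theorem:rl_optimality} only once, at $v^\star$. The decisive structural insight is the one you flag yourself: because $v^\star$ lies in the optimal face, it is \emph{itself} optimal for $\hat\reward$, so you never need the normal-cone monotonicity $\NormalCone{\Omega}{\eta}\subseteq\NormalCone{\Omega}{v}$ that the paper works to establish; your argument also handles the paper's interior case uniformly (if $\eta$ is optimal and interior then $F=\Omega$ and the regret-maximizing vertex is a worst-case vertex with regret $1\ge L$). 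What the paper's heavier machinery buys is the explicit half-space and active-set bookkeeping that its later constructions (e.g., the faces of $\unsafe{v}$ in \Cref{sec:conditions_on_safe_data_distribution}) reuse; what your argument buys is a shorter, self-contained proof of the corollary relying only on standard facts about linear optimization over polytopes. Your (b)~$\Rightarrow$~(a) direction implicitly uses that Minkowski addition distributes over unions, so that $\Phi+\bigcup_v\cone{\{-e_{s,a}\}_{(s,a)\in zeros(v)}}=\bigcup_v\NormalCone{\Omega}{v}$; this is immediate but worth one line in a final write-up.
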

\begin{proof}
Let $\hat\reward$ be chosen arbitrarily. Statement $a)$ can be formally expressed as:
\begin{equation*}
    \exists \hat\policy \in \Policies,\ \Reg{\hat\reward}{\hat\policy} = 0\ \wedge\ \Reg{\reward}{\hat\policy} \ge L.
\end{equation*}
Using \Cref{theorem:rl_optimality}, it follows that:
\begin{align*}
    \exists \hat\policy \in \Policies,\ \Reg{\hat\reward}{\hat\policy} = 0\ &\wedge\ \Reg{\reward}{\hat\policy} \ge L\\
    &\Longleftrightarrow\quad \exists \hat\policy \in \Policies,\quad \hat\reward \in  \NormalCone{\Omega}{\eta^{\hat\policy}}\ \wedge\ \Reg{\reward}{\hat\policy} \ge L\\
    &\Longleftrightarrow\quad \hat\reward \in \bigcup_{\eta:\ \Reg{\reward}{\policy^\eta} \ge L} \NormalCone{\Omega}{\eta}.
\end{align*}
It remains to be shown that the union in the previous derivation is equivalent to a union over just all $V_\reward^L$. First, note that by definition of the set of high-regret vertices $V_\reward^L$ (see \Cref{definition:high_regret_vertices}), it trivially holds that:
\begin{equation}\label{eq:normal_cone_subsets}
    \bigcup_{v \in V_\reward^L} \NormalCone{\Omega}{v}\ \subseteq\ \bigcup_{\eta:\ \Reg{\reward}{\policy^\eta} \ge L} \NormalCone{\Omega}{\eta},
\end{equation}
Next, because $\Omega$ is a convex polytope, it can be defined as the intersection of a set of defining half-spaces which are defined by linear inequalities:
\begin{equation*}
    \Omega\ =\ \{\eta\ |\ a_i^T \cdot \eta \le b_i, \text{ for } i=1,...,m \}.
\end{equation*}
By defining the active index set of a point $\eta \in \Omega$ as $I_\Omega(\eta) = \{a_i\ |\ a_i^T \cdot \eta = b_i\}$, \citet{rockafellar2009variational} then show that:
\begin{equation}\label{eq:normal_cone_polyhedral_set}
    \NormalCone{\Omega}{\eta}\ =\ \Bigl\{ y_1 \cdot a_1 + ... + y_m \cdot a_m\ |\ y_i \ge 0 \text{ for } i \in I_\Omega(\eta),\ y_i = 0 \text{ for } i \notin I_\Omega(\eta)\Bigr\},
\end{equation}
(see their theorem 6.46). Note that, because $\Omega$ lies in an $|\States| \cdot (|\Actions| - 1)$ dimensional affine subspace (see Proposition 1 of~\citep{karwowski2023goodhart}), a subset of the linear inequalities which define $\Omega$ must always hold with equality, namely, the inequalities that correspond to half-spaces which define the affine subspace in which $\Omega$ resides. Therefore, the corresponding active index set, let's denote it by $I_{\Omega,\Phi}(\eta)$ because the subspace orthogonal to the affine subspace in which $\Omega$ lies corresponds exactly to $\Phi$, is always non-empty and the same for every $\eta \in \Omega$.

Now, from \Cref{eq:normal_cone_polyhedral_set}, it follows that for every $\eta \in \Omega$, there exists a vertex $v$ of $\Omega$, such that $\NormalCone{\Omega}{\eta} \subseteq \NormalCone{\Omega}{v}$. We take this one step further and show that for every $\eta$ with $\Reg{\reward}{\policy^{\eta}} \ge L$, there must exist a vertex $v$ with $\Reg{\reward}{\policy^{v}} \ge L$ such that $\NormalCone{\Omega}{\eta} \subseteq \NormalCone{\Omega}{v}$. We prove this via case distinction on $\eta$.
\begin{itemize}
    \item $\eta$ is in the interior of $\Omega$. In this case, the index set $I_\Omega(\eta)$ reduces to $I_{\Omega,\Phi}(\eta)$ and because we have $I_{\Omega,\Phi}(\eta) \subseteq I_{\Omega}(\eta)$ for every $\eta \in \Omega$, the claim is trivially true.
    \item $\eta$ itself is already a vertex in which case the claim is trivially true.
    \item $\eta$ is on the boundary of $\Omega$. In this case $\eta$ can be expressed as the convex combination of some vertices $V_\eta$ which lie on the same face of $\Omega$ as $\eta$. Note that all occupancy measures with regret $\ge L$ must lie on one side of the half-space defined by the equality $\reward^T \cdot \eta = L \cdot \eta^{\min} + (1-L) \cdot \eta^{\max}$, where $\eta^{\min}$ and $\eta^{\max}$ are worst-case and best-case occupancy measures. By our assumption, $\eta$ also belongs to this side of the half-space. Because $\eta$ lies in the interior of the convex hull of the vertices $V_\eta$, at least one $v \in V_\eta$ must therefore also lie on this side of the hyperplane and have regret $\ge L$. Because $v$ and $\eta$ both lie on the same face of $\Omega$, we have $I_{\Omega}(\eta) \subset I_{\Omega}(v)$ and therefore also $\NormalCone{\Omega}{\eta} \subseteq \NormalCone{\Omega}{v}$.
\end{itemize}
Hence, it must also hold that:
\begin{equation*}
     \bigcup_{\eta:\ \Reg{\reward}{\policy^\eta} \ge L} \NormalCone{\Omega}{\eta}\ \subseteq\ \bigcup_{v \in V_\reward^L} \NormalCone{\Omega}{v},
\end{equation*}
which, together with \Cref{eq:normal_cone_subsets} proves the claim.
\end{proof}

The following lemma relates the set of reward functions to the set of probability distributions $D$
\begin{lemma}\label{lemma:all_r_close}
    Given an MDP $\MDP$ and a second reduced reward function $\hat\reward: \SxA \to \Reals$, then the following two statements are equivalent:
    \begin{itemize}
        \item[a)] There exists a data distribution $D \in \DistributionsOverSet{\SxA}$ such that $\Expect{(s,a) \sim D}{|\reward(s,a) - \hat\reward(s,a)|}\ <\ \epsilon \cdot \range{R}$
        \item[b)] At least one component $\hat\reward_i$ of $\hat\reward$ is "close enough" to $\reward$, i.e., it holds that for some transition $(s,a)$: $|\reward(s,a) - \hat\reward(s,a)| < \epsilon \cdot \range{R}$.
    \end{itemize}
\end{lemma}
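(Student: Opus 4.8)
The plan is to exploit that, because $\SxA$ is finite, the expectation appearing in statement (a) is nothing but a convex combination of the finitely many pointwise absolute errors. Writing $\Expect{(s,a) \sim D}{|\reward(s,a) - \hat\reward(s,a)|} = \sum_{(s,a) \in \SxA} D(s,a)\,|\reward(s,a) - \hat\reward(s,a)|$, the quantity on the right ranges, as $D$ varies over $\DistributionsOverSet{\SxA}$, exactly over the convex hull of the values $|\reward(s,a) - \hat\reward(s,a)|$; in particular its infimum over all $D$ equals $\min_{(s,a) \in \SxA}|\reward(s,a) - \hat\reward(s,a)|$ and is attained by a point mass. Both directions of the claimed equivalence are immediate consequences of this single observation, so I would organize the proof as two short implications.

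For the direction (b) $\Rightarrow$ (a), I would take the transition $(s_0,a_0)$ guaranteed by (b) and let $D$ be the Dirac distribution placing all probability mass on $(s_0,a_0)$. This $D$ is a legitimate element of $\DistributionsOverSet{\SxA}$, and under it the expectation collapses to the single term $|\reward(s_0,a_0) - \hat\reward(s_0,a_0)|$, which by (b) is strictly below $\epsilon \cdot \range{R}$; hence (a) holds.

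For the direction (a) $\Rightarrow$ (b), I would argue by contraposition. If (b) fails, then $|\reward(s,a) - \hat\reward(s,a)| \geq \epsilon \cdot \range{R}$ for every $(s,a) \in \SxA$. Lower-bounding the convex combination term by term for an arbitrary $D \in \DistributionsOverSet{\SxA}$ then yields $\Expect{(s,a) \sim D}{|\reward(s,a) - \hat\reward(s,a)|} \geq \epsilon \cdot \range{R} \cdot \sum_{(s,a)} D(s,a) = \epsilon \cdot \range{R}$, contradicting (a). Equivalently, since any expectation is at least the minimum of the sampled values, (a) forces $\min_{(s,a)}|\reward(s,a) - \hat\reward(s,a)| < \epsilon \cdot \range{R}$, and the minimizing transition witnesses (b).

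There is no genuinely hard step here: the entire content is that a convex combination of real numbers lies between their minimum and maximum, together with the finiteness of $\SxA$ that guarantees the minimum is attained. The only point that needs a moment's care is verifying that the point masses used in the (b) $\Rightarrow$ (a) direction really do belong to $\DistributionsOverSet{\SxA}$, which is clear since a Dirac distribution on a finite set is a valid probability distribution.
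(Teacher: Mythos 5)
Your proof is correct and takes essentially the same approach as the paper's: the (a) $\Rightarrow$ (b) direction is the identical term-by-term contrapositive bound, and the (b) $\Rightarrow$ (a) direction rests on the same idea of concentrating probability mass on the witnessing transition. The only cosmetic difference is that you use an exact Dirac mass whereas the paper spreads a small residual probability $p > 0$ over the remaining transitions; since the lemma imposes no positivity requirement on $D \in \DistributionsOverSet{\SxA}$, your simpler choice is perfectly legitimate and even avoids the paper's (harmless but slightly delicate) choice of $p$, whose defining formula divides by $\sum_{(s,a)\ne(s^*,a^*)}|\reward(s,a) - \hat\reward(s,a)|$.
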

\begin{proof}
    We first show the direction $b) \Rightarrow a)$. Assume that $|\reward(s^*,a^*) - \hat\reward(s^*,a^*)| < \epsilon \cdot \range{R}$ for a given $\hat\reward$ and transition $(s^*,a^*)$. In that case, we can construct the data distribution $D$ which we define as follows:
    \begin{align*}
        D(s,a) =
        \begin{cases}
        p       & \quad \text{if } (s,a) \ne (s^*,a^*)\\
        1-(|\SxA|-1) \cdot p  & \quad \text{if } (s,a) = (s^*,a^*)
        \end{cases}
    \end{align*}
    where we choose $p < \min\left(\frac{\epsilon \cdot \range{R} - |\reward(s^*,a^*) - \hat\reward(s^*,a^*)|}{\sum_{(s,a)\ne(s^*,a^*)}|\reward(s,a) - \hat\reward(s,a)|}, \frac{1}{|\SxA|}\right)$.
    From this it can be easily seen that:
    \begin{align*}
        &\Expect{(s,a) \sim D}{|\reward(s,a) - \hat\reward(s,a)|}\\
        &= (1-(|\SxA|-1) \cdot p) \cdot |\reward(s^*,a^*) - \hat\reward(s^*,a^*)|\\& + p \cdot \sum_{(s,a)\ne(s^*,a^*)}|\reward(s,a) - \hat\reward(s,a)|\\
        &< \epsilon \cdot \range{R}
    \end{align*}
We now show the direction $a) \Rightarrow b)$ via contrapositive. Whenever it holds that $|\reward(s,a) - \hat\reward(s,a)| \ge \epsilon \cdot \range{\reward}$ for all transitions $(s,a) \in \SxA$, then the expected difference under an arbitrary data distribution $D \in \DistributionsOverSet{\SxA}$ can be lower bounded as follows:
\begin{align*}
    &\Expect{(s,a) \sim D}{|\reward(s,a) - \hat\reward(s,a)|}\\
    &= \sum_{(s,a) \in \SxA} D(s,a) \cdot |\reward(s,a) - \hat\reward(s,a)|\\
    &\ge \epsilon \cdot \range{\reward} \cdot \sum_{(s,a) \in \SxA} D(s,a)\\
    &= \epsilon \cdot \range{\reward}
\end{align*}
Because this holds for all possible data distributions $D$ we have $\neg b) \Rightarrow \neg a)$ which proves the result.
\end{proof}

\Cref{lemma:all_r_with_worst_regret} describes the set of reward functions $\hat\reward$ for which there exists an optimal policy $\hat\policy$ that achieves worst-case regret under the true reward function $\reward$. \Cref{lemma:all_r_close} on the other hand, describes the set of reward functions $\hat\reward$, for which there exists a data distribution $D$ such that $\hat\reward$ is close to the true reward function $\reward$ under $D$. We would like to take the intersection of those two sets of reward functions, and then derive the set of data distributions $D$ corresponding to this intersection. Toward this goal we first present the following lemma:

\begin{lemma}\label{lemma:equivalence_negative_result}
For all $\epsilon > 0$, $L \in [0,1]$, MDP $M = \MDP$ and all data distributions $D \in \DistributionsOverSet{\SxA}$, there exists a system of linear inequalities, such that $D \in \unsafeD$ if and only if the system of linear inequalities is solvable.

More precisely, let $V_\reward^L$ be the set of high-regret vertices defined as in \Cref{definition:high_regret_vertices}. Then, there exists a matrix $C$, as well as a matrix $U(v)$ and a vector $b(v)$ for every $v \in V_\reward^L$ such that the following two statements are equivalent:

\begin{enumerate}
    \item $D \in \unsafeD$, i.e., there exists a reward function $\hat\reward$ and a policy $\hat\policy$ such that:
\begin{enumerate}
    \item $\Expect{(s, a) \sim D}{\frac{|\hat{\reward}(s, a) - \reward(s, a)|}{\range \reward}} \le \epsilon$;
    \item $\Reg{\reward}{\hat\policy} \ge L $
    \item $\Reg{\hat\reward}{\hat\policy} = 0$
\end{enumerate}
    \item There exists a vertex $v \in V_\reward^L$ such that the linear system \begin{equation}\label{eq:all_r_with_worst_regret4_app}
    \begin{bmatrix}
    U(v)\\
    C \cdot \diag{D}
    \end{bmatrix} \cdot B\ \le\ 
    \begin{bmatrix}
        b(v)\\
        \epsilon \cdot \range \reward \cdot \mathbf{1}
    \end{bmatrix}
\end{equation}has a solution $B$. Here, we use the vector notation of the data distribution $D$.
\end{enumerate}
\end{lemma}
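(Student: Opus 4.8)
The plan is to prove the equivalence by separately linearizing the two ingredients of the condition $D \in \unsafeD$: the closeness condition $(a)$ and the joint optimality/high-regret conditions $(b)$ and $(c)$. The crucial observation is that, for a \emph{fixed} $\hat\reward$, conditions $(b)$ and $(c)$ hold for \emph{some} policy $\hat\policy$ precisely when there is an optimal policy of $\hat\reward$ whose regret under $\reward$ is at least $L$ --- which is exactly statement $a)$ of \Cref{lemma:all_r_with_worst_regret}. Hence that lemma lets me replace the existential over $\hat\policy$ by the cone membership
\[
  \hat\reward \in \Phi + \bigcup_{v \in V_\reward^L} \cone{\{-e_{s,a}\}_{(s,a) \in zeros(v)}}.
\]
Since Minkowski addition distributes over unions, this is equivalent to: there exists $v \in V_\reward^L$ with $\hat\reward \in \Phi + \cone{\{-e_{s,a}\}_{(s,a)\in zeros(v)}}$. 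This already produces the disjunction over high-regret vertices that appears in statement 2.

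Next I would introduce $B \coloneqq \hat\reward - \reward$ as the free vector variable, so that both remaining constraints become linear in $B$. For the closeness block, multiplying $(a)$ by $\range \reward$ turns it into $\sum_{(s,a)} D(s,a)\,|B_{s,a}| \le \epsilon\cdot\range \reward$. I linearize the weighted $\ell^1$ term with the standard sign-vector trick: let $C$ be the fixed matrix whose rows are all sign vectors $\sigma \in \{-1,+1\}^{|\SxA|}$. Because every weight $D(s,a)$ is nonnegative, $\sum_{(s,a)} D(s,a)\,|B_{s,a}| = \max_{\sigma}\ \sigma^\top \diag{D}\, B$, so the closeness condition holds iff every row satisfies $\sigma^\top \diag{D}\, B \le \epsilon\cdot\range \reward$, i.e. $C \cdot \diag{D} \cdot B \le \epsilon\cdot\range \reward\cdot\mathbf{1}$. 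Note that $C$ depends on neither $v$, $D$, nor $\epsilon$, as the statement requires.

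For the optimality block I would fix $v \in V_\reward^L$ and rewrite $\hat\reward \in \Phi + \cone{\{-e_{s,a}\}_{(s,a)\in zeros(v)}}$ as a linear system in $B$. The set on the right is a finitely generated cone --- its generators are a basis of the subspace $\Phi$ together with their negatives, plus the vectors $-e_{s,a}$ for $(s,a)\in zeros(v)$ --- so by the Minkowski--Weyl theorem it admits a half-space representation $\{x : U(v)\,x \le 0\}$ for some matrix $U(v)$. Substituting $\hat\reward = \reward + B$ gives $U(v)(\reward + B) \le 0$, i.e.\ $U(v)\,B \le b(v)$ with $b(v) \coloneqq -U(v)\,\reward$. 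Stacking this with the closeness block reproduces exactly the system \eqref{eq:all_r_with_worst_regret4_app}. Assembling everything: $D \in \unsafeD$ iff there exist $\hat\reward,\hat\policy$ satisfying $(a)$--$(c)$, iff (by \Cref{lemma:all_r_with_worst_regret}) there is some $v \in V_\reward^L$ and some $\epsilon$-close $\hat\reward$ lying in the $v$-cone, iff (by the two linearizations, with $B = \hat\reward - \reward$) there is some $v$ for which the stacked system is solvable.

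I expect the main obstacle to be the passage from the generator description of the cone $\Phi + \cone{\{-e_{s,a}\}_{(s,a)\in zeros(v)}}$ to the half-space description $\{x : U(v)\,x \le 0\}$; here I only need the \emph{existence} of $U(v)$, which the Minkowski--Weyl theorem supplies, deferring any explicit formula for its rows to the later analysis of the matrix $M$. A secondary point to check carefully is that the sign-vector reformulation is exact: this uses nonnegativity of the weights $D(s,a)$, which guarantees that the maximum over sign patterns recovers the weighted absolute-value sum rather than merely bounding it.
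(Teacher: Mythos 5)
Your proposal is correct and follows essentially the same route as the paper's own proof: both substitute $B = \hat\reward - \reward$, invoke \Cref{lemma:all_r_with_worst_regret} to convert the existential over policies into membership in $-\reward + \Phi + \bigcup_{v \in V_\reward^L}\cone{\{-e_{s,a}\}_{(s,a)\in zeros(v)}}$, linearize the weighted $\ell^1$ closeness constraint exactly via the full sign matrix $C \cdot \diag{D}$ (using nonnegativity of $D$), and obtain $U(v), b(v)$ from a half-space representation of the translated polyhedral cone, then stack the two blocks. The only cosmetic difference is that you cite the Minkowski--Weyl theorem for the existence of $U(v)$, whereas the paper appeals to polyhedrality of Minkowski sums (Corollary 3.53 of \citep{rockafellar2009variational}); the two justifications are interchangeable here.
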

\begin{proof}
We can express any reward function $\hat\reward$ as $\hat{\reward} = \reward + B$, i.e. describing $\hat{\reward}$ as a deviation $B: \SxA \to \Reals$ from the true reward function. Note that in this case, we get $\hat{\reward} - \reward = B$. Next, note that the expression:
\begin{equation}
    \Expect{(s,a) \sim D}{|B(s,a)|}\ \le\ \epsilon \cdot \range{R}
\end{equation}
describes a ``weighted $L^1$ ball'' around the origin in which $B$ must lie:
\begin{align}
    &\Expect{(s,a) \sim D}{|B(s,a)|}\ \le\ \epsilon \cdot \range{R}\\
    \Longleftrightarrow\quad &\sum_{(s,a) \in \SxA} D(s,a) \cdot |B(s,a)|\ \le\ \epsilon \cdot \range{R}\\
    \Longleftrightarrow\quad &B \in \closeness{D}\ :=\ \biggl\{x \in \Reals^{|\SxA|}\quad \Big|\ \sum_{(s,a) \in \SxA} D(s,a) \cdot |x_{s,a}|\ \le\ \epsilon \cdot \range{R} \biggr\}.
\end{align}
This ``weighted $L^1$ ball'' is a polyhedral set, which can be described by the following set of inequalities:
\begin{align*}
    D(s_1, a_1) \cdot B(s_1, a_1) + D(s_1, a_2) \cdot B(s_1, a_2) +\ ...\ &\le\ \epsilon \cdot \range\reward\\
    - D(s_1, a_1) \cdot B(s_1, a_1) + D(s_1, a_2) \cdot B(s_1, a_2) +\ ...\ &\le\ \epsilon \cdot \range\reward\\
    D(s_1, a_1) \cdot B(s_1, a_1) - D(s_1, a_2) \cdot B(s_1, a_2) +\ ...\ &\le\ \epsilon \cdot \range\reward\\
    - D(s_1, a_1) \cdot B(s_1, a_1) - D(s_1, a_2) \cdot B(s_1, a_2) +\ ...\ &\le\ \epsilon \cdot \range\reward\\
    \cdots.
\end{align*}
This can be expressed more compactly in matrix form, as:
\begin{equation}\label{eq:all_r_close2}
    C \cdot \diag{D} \cdot B \le \epsilon \cdot \range \reward \cdot \mathbf{1},
\end{equation}

 where $C \in \Reals^{2^{|\SxA|}\times |\SxA|}$, $\diag{D} \in \Reals^{|\SxA| \times |\SxA|}$, $B \in \Reals^{|\SxA|}$, $\mathbf{1} \in \{1\}^{|\SxA|}$ and the individual matrices are defined as follows:
\begin{align}\label{definition:matrices_CD}
    C =
  \begin{bmatrix}
    1 & 1& \cdots & 1 \\
    -1 & 1& \cdots & 1\\
    1 & -1& \cdots & 1\\
    \cdots & \cdots & \cdots & \cdots\\
    -1 & -1 & \cdots & -1
  \end{bmatrix},\quad\quad
  \diag{D} = \begin{bmatrix}
    D(s_1, a_1) & &0 \\
    & \ddots & \\
    0& & D(s_n, a_m)
  \end{bmatrix}.
\end{align}
Next, from \Cref{lemma:all_r_with_worst_regret} we know that a reward function $\hat{\reward} = \reward + B$ has an optimal policy with regret larger or equal to $L$ if and only if:
\begin{align}
    \nonumber\reward + B\ &\in\ \Phi + \bigcup\limits_{v \in V_\reward^L} \cone{\{-e_{s,a}\}_{(s,a) \in zeros(v)}}\\
    \label{eq:all_r_with_worst_regret2_app}\Longleftrightarrow\quad B\ &\in\ -\reward + \Phi + \bigcup\limits_{v \in V_\reward^L} \cone{\{-e_{s,a}\}_{(s,a) \in zeros(v)}}
\end{align}
We can rephrase the above statement a bit. Let's focus for a moment on just a single vertex $v \in V_\reward^L$. First, note that because $\Phi$ and $\cone{\{-e_{s,a}\}_{(s,a) \in zeros(v)}}$, are polyhedral, $\Phi + \cone{\{-e_{s,a}\}_{(s,a) \in zeros(v)}}$ must be polyhedral as well (this follows directly from Corollary 3.53 of \citep{rockafellar2009variational}). Therefore, the sum on the right-hand side can be expressed by a set of linear constraints $U(v) \cdot B \le b(v)$.

Hence, a reward function, $\hat\reward = \reward + B$ is close in expected L1 distance to the true reward function $R$, and has an optimal policy that has large regret with respect to $\reward$, if and only if there exists at least one vertex $v \in V_\reward^L$, such that:\begin{equation}
    \begin{bmatrix}
    U(v)\\
    C \cdot \diag{D}
    \end{bmatrix} \cdot B\ \le\ 
    \begin{bmatrix}
        b(v)\\
        \epsilon \cdot \range \reward \cdot \mathbf{1}
    \end{bmatrix}
\end{equation} holds.
\end{proof}

In the next few subsections, we provide a more interpretable version of the linear system of inequalities in \Cref{eq:all_r_with_worst_regret4_app}, and the conditions for when it is solvable and when not.

\subsubsection{More interpretable statement}\label{subsection:more_interpretable_statement}
Ideally, we would like to have a more interpretable statement about which classes of data distributions $D$ fulfill the condition of \Cref{eq:all_r_with_worst_regret4_app}. We now show that for an arbitrary MDP and data distribution $D$, $D$ is a safe distribution, i.e., error-regret mismatch is not possible, if and only if $D$ fulfills a fixed set of linear constraints (independent of $D$).

\begin{theorem}\label{theorem:safe_linear_constraints_app}
    For all MDPs $\MDP$ and $L \in [0,1]$, there exists a matrix $M$ such that for all $\epsilon > 0$ and $D \in \DistributionsOverSet{\SxA}$ we have:
    \begin{equation}\label{eq:safe_linear_constraints_app}
        D \in \safeD\quad \Longleftrightarrow\quad M \cdot D > \epsilon \cdot \range{\reward} \cdot \mathbf{1},
    \end{equation}
    where we use the vector notation of $D$, and $\mathbf{1}$ is a vector containing all ones.
\end{theorem}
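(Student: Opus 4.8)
The plan is to convert the per-vertex feasibility criterion of \Cref{lemma:equivalence_negative_result} into a single bank of linear inequalities in $D$. By that lemma, $D \in \unsafeD$ precisely when, for \emph{some} high-regret vertex $v \in V_\reward^L$ (\Cref{definition:high_regret_vertices}), the system~\eqref{eq:all_r_with_worst_regret4_app} has a solution $B$. Writing $\hat\reward = \reward + B$ and invoking \Cref{theorem:rl_optimality}, the block $U(v)\,B \le b(v)$ says exactly that $B$ lies in the \emph{fixed} polyhedron $P_v \coloneqq -\reward + \NormalCone{\Omega}{v} = -\reward + \PotentialShaping + \cone{\{-e_{s,a}\}_{(s,a)\in zeros(v)}}$, while the block $C\cdot\diag{D}\cdot B \le \epsilon\cdot\range\reward\cdot\mathbf 1$ says that $\norm{B}_{1,D} \le \epsilon\cdot\range\reward$, where $\norm{B}_{1,D} \coloneqq \sum_{(s,a)\in\SxA} D(s,a)\,\lvert B(s,a)\rvert = \langle D, \lvert B\rvert\rangle$ is the $D$-weighted $\ell^1$ seminorm. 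Negating, I would record that
\begin{equation*}
  D \in \safeD \quad\Longleftrightarrow\quad g_v(D) \coloneqq \min_{B \in P_v} \norm{B}_{1,D} > \epsilon\cdot\range\reward \quad\text{for all } v \in V_\reward^L,
\end{equation*}
the minimum being attained since $\norm{\cdot}_{1,D}$ is convex, piecewise linear, and nonnegative on the nonempty polyhedron $P_v \ni -\reward$. Note that the ``$\le\epsilon$'' of \Cref{lemma:equivalence_negative_result} becomes the strict ``$>$'' here, matching the strict inequality in~\eqref{eq:safe_linear_constraints_app}.

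The heart of the proof is then to show that each superlevel condition $g_v(D) > \epsilon\cdot\range\reward$ is a \emph{finite} conjunction of linear inequalities in $D$. The key observation is that $g_v$ is concave and piecewise linear in $D$ with a $D$-independent set of linear pieces. Concretely, I would refine $P_v$ by the coordinate hyperplanes $\{B(s,a) = 0\}$: on each resulting cell (an intersection of $P_v$ with an orthant) the map $B \mapsto \norm{B}_{1,D}$ is affine in $B$, so its minimum over the cell is attained at a vertex of this arrangement; moreover, since the objective is nonnegative it cannot decrease without bound along any recession ray of $P_v$ (the unboundedness coming from the cone directions $-e_{s,a}$ and the subspace $\PotentialShaping$), so some vertex attains the global minimum. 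The arrangement has finitely many vertices $B^v_1,\dots,B^v_{N_v}$, all independent of $D$, whence
\begin{equation*}
  g_v(D) = \min_{1 \le j \le N_v} \norm{B^v_j}_{1,D} = \min_{1 \le j \le N_v} \langle \lvert B^v_j\rvert,\, D\rangle .
\end{equation*}
Because a minimum of finitely many linear functionals exceeds a threshold iff \emph{each} functional does, $g_v(D) > \epsilon\cdot\range\reward$ is equivalent to $\langle \lvert B^v_j\rvert, D\rangle > \epsilon\cdot\range\reward$ for every $j$.

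To finish, I would assemble $M$ by stacking the row vectors $\lvert B^v_j\rvert$ over all $v \in V_\reward^L$ and all $j \le N_v$; then $D \in \safeD$ iff $M\cdot D > \epsilon\cdot\range\reward\cdot\mathbf 1$, as claimed, and the same $M$ works for every $\epsilon$ because all $\epsilon$-dependence sits in the right-hand side. As byproducts consistent with the remarks following the theorem, the rows $\lvert B^v_j\rvert$ are nonnegative (so $M \ge 0$) and $M$ is $\epsilon$-free, which lets $\epsilon \to 0$ recover \Cref{theorem:small_epsilon_makes_safe}; the degenerate case $V_\reward^L = \emptyset$ gives an empty system, making every $D$ safe. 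The main obstacle I anticipate is the second paragraph: rigorously handling the \emph{unboundedness} of $P_v$ — in particular ruling out that the weighted-$\ell^1$ minimum escapes to infinity along a recession direction mixing the potential-shaping subspace $\PotentialShaping$ with the axis-aligned cone rays — and thereby pinning the minimizer to the finite, $D$-independent vertex set that furnishes the rows of $M$.
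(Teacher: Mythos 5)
Your proposal is correct and follows essentially the same route as the paper's proof: negate \Cref{lemma:equivalence_negative_result}, split the weighted-$\ell^1$ minimization over $\unsafe{v}$ by orthants, and use LP theory to pin the minimum to a finite, $D$-independent vertex set whose element-wise absolute values become the rows of $M$. The unboundedness obstacle you flag is resolved exactly as in the paper: intersecting with a (closed) orthant makes each cell a pointed polyhedron (equivalently, the standard-form reparametrization $x' = \diag{c}\cdot x$, $x' \ge 0$), so the nonnegative linear objective, being bounded below by zero, attains its minimum at a basic feasible solution.
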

\begin{proof}
Remember from \Cref{lemma:equivalence_negative_result}, that a data distribution $D$ is safe, i.e., $D \in \safeD$, if and only if for all unsafe vertices $v \in V_{\reward}^L$ the following system of linear inequalities:
\begin{equation}\label{eq:safe_data_distribution_condition}
    \begin{bmatrix}
    U(v)\\
    C \cdot \diag{D}
    \end{bmatrix} \cdot B\ \le\ 
    \begin{bmatrix}
        b(v)\\
        \epsilon \cdot \range \reward \cdot \mathbf{1}
    \end{bmatrix}
\end{equation}
has no solution. Let $v \in V_{\reward}^L$ be chosen arbitrarily and define $\unsafe{v}\coloneqq \{B \in \Reals^{|\SxA|}\ |\ U(v) \cdot B \le b(v)\}$, i.e., $\unsafe{v}$ is the set of all $B \in \Reals^{|\SxA|}$, such that $\hat\reward \coloneqq R + B$ has an optimal policy with regret at least $L$. Then, \Cref{eq:safe_data_distribution_condition} has no solution if and only if:
\begin{align}
    & \forall B \in \unsafe{v},\quad C \cdot \diag{D} \cdot B \nleq \epsilon \cdot \range{\reward} \cdot \mathbf{1}\\
    \label{line:more_interpretable1}\Longleftrightarrow\quad & \forall B \in \unsafe{v},\quad \abs(B)^T \cdot D > \epsilon \cdot \range{\reward},
\end{align}
where we used the definition of the matrices $C$, and $\diag{D}$ (see \Cref{eq:all_r_close2}) and $\abs(\cdot)$ denotes the element-wise absolute value function. Now, we will finish the proof by showing that there exists a \emph{finite} set of vectors $X \subset \unsafe{v}$ (which is independent of the choice of $D$), such that for every $x \in X$, \Cref{line:more_interpretable1} holds if and only if it is true for all $B$, i.e., more formally:
\begin{align*}
    & \forall B \in X,\quad  \abs(B)^T \cdot D > \epsilon \cdot \range{\reward}\\
    \label{line:more_interpretable1}\Longleftrightarrow\quad & \forall B \in \unsafe{v},\quad \abs(B)^T \cdot D > \epsilon \cdot \range{\reward}.
\end{align*}
And since $X$ is finite, we can then summarize the individual elements of $X$ as rows of a matrix $M$ and get the desired statement by combining the previous few statements, namely:
\begin{equation}
    \label{eq:general_result_proof_overview}D \in \safeD\quad \Longleftrightarrow\quad M \cdot D > \epsilon \cdot \range{\reward} \cdot \mathbf{1}
\end{equation}

Towards this goal, we start by reformulating \Cref{line:more_interpretable1} as a condition on the optimal value of a convex optimization problem:
\begin{align}
    \nonumber&\forall x \in \unsafe{v},\quad \abs(x)^T \cdot D > \epsilon \cdot \range{\reward} &&\\
    \nonumber\Longleftrightarrow\quad &\Bigl(\min_{x \in \unsafe{v}}\ \abs(x)^T \cdot D\Bigr) > \epsilon \cdot \range{\reward} &&\\
    \nonumber\Longleftrightarrow\quad &\abs(x^*)^T \cdot D > \epsilon \cdot \range{\reward},\quad \text{where    } x^*\ \coloneqq\ &&\arg\min_{x \in \unsafe{v}}\quad \abs(x)^T \cdot D\\
    \label{line:convex_optimization_problem}\Longleftrightarrow\quad &\abs(x^*)^T \cdot D > \epsilon \cdot \range{\reward},\quad \text{ where   } x^*\ \coloneqq\ &&\arg\min_{x}\quad \abs(x)^T \cdot D,\\
    \nonumber& &&\text{ subject to }\quad U(v) \cdot x \le b(v)
\end{align}
Note that the optimal value $x^*$ of this convex optimization problem depends on the precise definition of the data distribution $D$. But importantly, the set over which we optimize (i.e., $\unsafe{v}$ defined as the set of all $x$, such that $U(v) \cdot x \le b$) does \emph{not} depend on $D$! The goal of this part of the proof is to show that for all possible $D$ the optimal value of the optimization problem in \Cref{line:convex_optimization_problem} is \emph{always} going to be one of the vertices of $\unsafe{v}$. Therefore, we can transform the optimization problem in \Cref{line:convex_optimization_problem} into a new optimization problem that does not depend on $D$ anymore. It will then be possible to transform this new optimization problem into a simple set of linear inequalities which will form the matrix $M$ in \Cref{eq:general_result_proof_overview}.

Towards that goal, we continue by splitting up the convex optimization problem into a set of linear programming problems. For this, we partition $\Reals^{|\SxA|}$ into its different orthants $O_c$ for $c \in \{-1,1\}^{|\SxA|}$ (a high-dimensional generalization of the quadrants). More precisely, for every $x \in O_c$, we have $\diag{c} \cdot x = \abs(x)$. Using this definition, we can reformulate the constraint on the convex optimization problem as follows:
\begin{equation}\label{eq:more_interpretable2}
    \min_{\substack{c \in \{-1,1\}^{|\SxA|}\\ x_c \ne \emptyset}}\ (\diag{c} \cdot x_c)^T\ \cdot D\ >\ \epsilon \cdot \range{\reward},
\end{equation}
where the individual $x_c$ are defined as the solution of linear programming problems:
\begin{align}\label{lin_prog1}
    x_c\ \coloneqq\ \arg\min_{x}\quad (\diag{c} \cdot x&)^T \cdot D\\
    \nonumber\text{ subject to }\quad\ \ U(v) \cdot x &\le b(v)\\
    \nonumber\diag{c} \cdot x &\ge 0,
\end{align}
or $x_c \coloneqq \emptyset$ in case the linear program is infeasible.
Finally, by re-parametrizing each linear program using the variable transform $x' = \diag{c} \cdot x$ we can convert these linear programs into standard form:
\begin{align}\label{lin_prog2}
    \phantom{\text{This is a hack}}x_c\ \coloneqq\ \diag{c} \cdot\quad &\arg\min_{x'}\quad &  x'^T \cdot D &\\
    \nonumber&\text{ subject to }\quad\ & \ U(v) \cdot \diag{c} \cdot x' &\le b(v)\\
    \nonumber& & x'&\ge 0,\phantom{\text{This is unfortunately necessary to avoid ugly spacing.}}
\end{align}
where we used twice the fact that $\diag{c}^{-1} = \diag{c}$, and hence, $x = \diag{c} \cdot x'$. Because it was possible to transform these linear programming problems described in \Cref{lin_prog1} into standard form using a simple variable transform, we can apply standard linear programming theory to draw the following conclusions (see Theorem 3.4 and Section 6 of Chapter 2   of~\citep{vanderbei1998linear} for reference):
\begin{enumerate}
    \item The set of constraints in \Cref{lin_prog1,lin_prog2} are either infeasible or they form a polyhedral set of feasible solutions.
    \item If the set of constraints in \Cref{lin_prog1,lin_prog2} are feasible, then there exists an optimal feasible solution that corresponds to one of the vertices (also called basic feasible solutions) of the polyhedral constraint sets. This follows from the fact that the objective function is bounded from below by zero.
\end{enumerate}
Let's denote the polyhedral set of feasible solutions defined by the constraints in \Cref{lin_prog1} by $\mathcal{F}_c(v)$. Because $\mathcal{F}_c(v)$ does not depend on the specific choice of the data distribution, this must mean that for every possible data distribution $D$, we have either $x_c = \emptyset$ or $x_c$ is one of the vertices of $\mathcal{F}_c(v)$, denoted by $\vertices(\mathcal{F}_c(v))$! Note that, by definition of $x_c$, it holds that:
\begin{equation}\label{eq:more_interpretable3}
    \forall x \in \vertices(\mathcal{F}_c(v)),\quad (\diag{c} \cdot x_c)^T \cdot D\ \le\ (\diag{c} \cdot x)^T \cdot D.
\end{equation}
Therefore, we can define:
\begin{equation}\label{definition:XV}
    X(v)\ \coloneqq\ \bigcup_{c \in \{-1,1\}^{|\SxA|}} \vertices(\mathcal{F}_c(v))\ =\ \{x_1, ..., x_k\},\quad \quad \text{and}\quad\quad M_{X(v)}\ \coloneqq\ \begin{bmatrix}
\abs(x_1)^T\\
\cdots\\
\abs(x_k)^T
\end{bmatrix},
\end{equation}
where $M_X(v)$ contains the element-wise absolute value of all vectors of $X(v)$ as row vectors. Let $D$ be an arbitrary data distribution. Then, we've shown the following equivalences:
\begin{align*}
    && \forall B \in \unsafe{v},\quad \abs(B)^T \cdot D\ &>\ \epsilon \cdot \range{\reward} &\text{(see \Cref{line:more_interpretable1})}\\
    \Longleftrightarrow\quad && \min_{\substack{c \in \{-1,1\}^{|\SxA|}\\ x_c \ne \emptyset}}\ (\diag{c} \cdot x_c)^T\ \cdot D\ &>\ \epsilon \cdot \range{\reward} &\text{(see \Cref{eq:more_interpretable2})}\\
    \Longleftrightarrow\quad && \min_{x \in X(v)}\ \abs(x)^T\ \cdot D\ &>\ \epsilon \cdot \range{\reward} &\text{(due to \Cref{eq:more_interpretable3})}\\
    \Longleftrightarrow\quad && M_X(v) \cdot D\ &>\ \epsilon \cdot \range{\reward} \cdot \mathbf{1}
\end{align*}
Now, by combining the individual sets of vertices $X(v)$, as follows:
\begin{equation}\label{definition:X}
    X\ \coloneqq\ \bigcup_{v \in V_{\reward}^L}X(v)\ =\ \{x_1, ..., x_l\},\quad\quad \text{and}\quad\quad M = \begin{bmatrix}
\abs(x_1)^T\\
\cdots\\
\abs(x_l)^T
\end{bmatrix},
\end{equation}
we are now ready to finish the proof by combining all previous steps:
\begin{align*}
    &D \in \safeD&\\
    \Longleftrightarrow\quad & \forall v \in V_{\reward}^L,\ \forall B \in \unsafe{v},\ &\abs(B)^T \cdot D &> \epsilon \cdot \range{\reward}\\
    \Longleftrightarrow\quad & \forall v \in V_{\reward}^L,\ &M_X(v) \cdot D &> \epsilon \cdot \range{\reward} \cdot \mathbf{1}\\
    \Longleftrightarrow\quad & &M \cdot D &> \epsilon \cdot \range{\reward} \cdot \mathbf{1}.
\end{align*}
That was to show.
\end{proof}

\subsubsection{Deriving the conditions on \texorpdfstring{$D$}{D}}\label{sec:conditions_on_safe_data_distribution}
In \Cref{theorem:safe_linear_constraints_app} we've shown that there exists a set of linear constraints $M \cdot D > \epsilon \cdot \range{\reward} \cdot \mathbf{1}$, such that whenever a data distribution $D$ satisfies these constraints, it is safe. In this subsection, we derive closed-form expressions for the individual rows of $M$ to get a general idea about the different factors determining whether an individual data distribution is safe.

In the proof of \Cref{theorem:safe_linear_constraints_app}, we showed that $M$ has the form:
\begin{equation*}
M = \begin{bmatrix}
\abs(x_1)^T\\
\vdots\\
\abs(x_l)^T
\end{bmatrix},
\end{equation*}
for some set $X\ =\ \{x_1, ..., x_l\}$, where each $x \in X$ belongs to a vertex of the set of linear constraints defined by the following class of system of linear inequalities:
\begin{equation}\label{eq:linear_inequalities_orthant}
\begin{bmatrix}
U(v)\\
-\diag{c}
\end{bmatrix} \cdot x \le \begin{bmatrix}
    b(v)\\
    0
\end{bmatrix}\quad\quad
\begin{matrix}
    (\text{Corresponds to the set of unsafe reward functions})\\
    (\text{Corresponds to the orthant } O_c)\phantom{--------}
\end{matrix}
\end{equation}
for some $v\in V_{\reward}^L$ (the set of unsafe vertices of $\Omega$), and some $c \in \{-1,1\}^{|\SxA|}$ (defining the orthant $O_c$).

To ease the notation in the following paragraphs, we will use the notation $\unsafe{v}$ for the polyhedral set of $x$ such that $U(v) \cdot x \le b(v)$, and $\mathcal{F}_c(v)$ for the set of solutions to the full set of linear inequalities in \Cref{eq:linear_inequalities_orthant}. Furthermore, we will use $n \coloneqq |\States|$ and $m \coloneqq |\Actions|$.

We start by giving a small helper definition.
\begin{definition}[General position, \citep{stanley2024_combinatorial}]
    Let $\mathcal{H}$ be a set of hyperplanes in $\Reals^n$. Then $\mathcal{H}$ is in general position if:
    \begin{align*}
        \{H_1,...,H_p\} \subseteq \mathcal{H},\ p \le n\quad &\Longrightarrow\quad \dim(H_1 \cap ... \cap H_p) = n - p\\
        \{H_1,...,H_p\} \subseteq \mathcal{H},\ p > n\quad &\Longrightarrow\quad H_1 \cap ... \cap H_p = \emptyset
    \end{align*}
\end{definition}

We will use this definition in the next few technical lemmas. First, we claim that each of the vertices of $\mathcal{F}_c(v)$ must lie on the border of the orthant $O_c$.
\begin{lemma}[Vertices lie on the intersection of the two constraint sets.]\label{lemma:all_vertices_on_intersection}
    All vertices of the polyhedral set, defined by the system of linear inequalities:
    \begin{equation}\label{eq:system_of_inequalities_orthant}
        \begin{bmatrix}
        U(v)\\
        -\diag{c}
        \end{bmatrix} \cdot x \le \begin{bmatrix}
            b(v)\\
            0
        \end{bmatrix}
    \end{equation}
    must satisfy some of the inequalities of $-\diag{c} \cdot x\ \le\ 0$ with equality.
\end{lemma}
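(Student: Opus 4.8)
The plan is to reduce the claim to a rank statement about the constraint matrix and then exploit the presence of the potential-shaping subspace $\Phi$. I would use the standard algebraic characterization of vertices: a feasible point $x^{*}$ of a polyhedron $\{x : Ax \le b\}$ in $\Reals^{|\SxA|}$ is a vertex if and only if the rows of $A$ associated with the constraints active at $x^{*}$ span all of $\Reals^{|\SxA|}$. For $\mathcal{F}_c(v)$ the constraint matrix stacks $U(v)$ on top of $-\diag{c}$, so it suffices to prove that the rows of $U(v)$ alone can \emph{never} span $\Reals^{|\SxA|}$. Once this is shown, achieving the required rank $|\SxA|$ forces at least one row of $-\diag{c}$ to be active at every vertex, which is exactly the assertion that one of the inequalities $-\diag{c} \cdot x \le 0$ holds with equality.

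The key step is therefore to bound $\mathrm{rank}\, U(v)$ away from $|\SxA|$. Recall from the proof of \Cref{lemma:equivalence_negative_result} that $\unsafe{v} = \{x : U(v) \cdot x \le b(v)\}$ coincides with the Minkowski sum $-\reward + \Phi + \cone{\{-e_{s,a}\}_{(s,a) \in zeros(v)}}$ obtained from the normal-cone description in \Cref{theorem:rl_optimality}. Since $\Phi$ is a linear subspace, for any $d \in \Phi$ and any $x = -\reward + \phi + k \in \unsafe{v}$ we have $x + t d = -\reward + (\phi + t d) + k \in \unsafe{v}$ for all $t \in \Reals$, so $\Phi$ is contained in the lineality space of $\unsafe{v}$. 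The lineality space of $\{x : U(v) \cdot x \le b(v)\}$ is exactly $\ker U(v)$, whence $\Phi \subseteq \ker U(v)$; equivalently every row of $U(v)$ is orthogonal to $\Phi$. Thus the row space of $U(v)$ lies in $\Phi^{\perp}$ and $\mathrm{rank}\, U(v) \le |\SxA| - \dim \Phi$. It then remains only to note that $\dim \Phi \ge 1$, which holds because $\Phi$ is the orthogonal complement of the affine hull of the occupancy polytope $\Omega$, giving $\dim \Phi = |\States| \ge 1$ (see the discussion following \Cref{eq:normal_cone_polyhedral_set} and \citep{karwowski2023goodhart}). Hence $\mathrm{rank}\, U(v) < |\SxA|$ and the rows of $U(v)$ span a proper subspace.

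Combining these, at any vertex $x^{*}$ of $\mathcal{F}_c(v)$ the active rows of $U(v)$ span at most $\Phi^{\perp} \subsetneq \Reals^{|\SxA|}$, so they cannot by themselves provide the full rank $|\SxA|$ demanded by the vertex characterization; at least one orthant constraint must be active, proving the lemma. The main obstacle is the middle paragraph: one must recognize that the potential-shaping subspace $\Phi$ sits inside the lineality space of $\unsafe{v}$ — equivalently, that $\unsafe{v}$ is a nontrivially unbounded polyhedron possessing \emph{no} vertices of its own — and convert this geometric fact into the rank bound $\mathrm{rank}\, U(v) < |\SxA|$. The surrounding reduction and the final rank-counting are routine applications of linear-programming vertex theory.
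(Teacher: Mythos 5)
Your proof is correct and rests on exactly the same key fact as the paper's own argument: the decomposition $\unsafe{v} = -\reward + \Phi + \cone{\{-e_{s,a}\}_{(s,a) \in zeros(v)}}$ places the nontrivial linear subspace $\Phi$ (of dimension $|\States| \ge 1$) inside the lineality space of $\unsafe{v}$, so the $U(v)$-constraints alone can never determine a vertex. The only difference is the formalization: you invoke the rank characterization of basic feasible solutions (active rows must span $\Reals^{|\SxA|}$, while $\mathrm{rank}\, U(v) \le |\SxA| - \dim \Phi < |\SxA|$), whereas the paper argues directly that any feasible $x$ with no active orthant constraint lies in the relative interior of a segment $[x - \phi,\, x + \phi] \subseteq \unsafe{v}$ and hence cannot be extreme --- two standard, equivalent routes from the same observation.
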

\begin{proof}
    Let $\unsafe{v}$ be the set of solutions of the upper part of the system of linear equations in \Cref{eq:system_of_inequalities_orthant} and $O_c$ be the set of solutions of the lower part of the system of linear equations in \Cref{eq:system_of_inequalities_orthant}.
    The lemma follows from the fact that $\unsafe{v}$ can be expressed as follows (see \Cref{eq:all_r_with_worst_regret2_app} and the subsequent paragraph):
\begin{equation}\label{eq:unsafe_rewards_definition}
    \unsafe{v}\ =\ -\reward + \Phi + \cone{\{-e_{s,a}\}_{(s,a) \in zeros(v)}},
\end{equation}
where $\Phi$ is a linear subspace. Hence, for every $x$ that satisfies the constraints $U(v) \cdot x \le b(v)$, $x$ lies on the interior of the line segment spanned between $x' = x + \phi$, and $x'' = x - \phi$ for some $\phi \in \Phi$, $\phi \ne \mathbf{0}$. Note that every point on this line segment also satisfies the constraints $U(v) \cdot x \le b(v)$. Therefore, $x$ can only be a vertex if it satisfies some of the additional constraints, provided by the inequalities $-\diag{c} \cdot x \le 0$, with equality.
\end{proof}

Consequently, every vertex of $\mathcal{F}_c(v)$ is the intersection of some $k$-dimensional surface of $\unsafe{v}$ and $k>0$ standard hyperplanes (hyperplanes whose normal vector belongs to the standard basis).

\begin{lemma}[Basis for $\Phi$. \citep{schlaginhaufen2023identifiability}]\label{lemma:basis_for_phi}
    The linear subspace $\Phi$ of potential shaping transformations can be defined as:
\begin{equation*}
    \Phi\ =\ \Span(A - \gamma \cdot P),
\end{equation*}
where $A, P \in \Reals^{(n\cdot m) \times n}$ for $n = |\States|, m=|\Actions|$ are matrices defined as:
\begin{equation*}
    A \coloneqq \begin{bmatrix}
    \mathbf{1}^m & \mathbf{0}^m & \cdots & \mathbf{0}^m\\
    \mathbf{0}^m & \mathbf{1}^m & \cdots & \mathbf{0}^m\\
    \cdots & \cdots & \ddots & \cdots\\
    \mathbf{0}^m & \mathbf{0}^m & \cdots & \mathbf{1}^m 
    \end{bmatrix},\quad\quad
    P \coloneqq \begin{bmatrix}
        \rule[.5ex]{2em}{0.4pt} & \TransitionDistribution(\ \cdot\ |\ s_1, a_1) & \rule[.5ex]{2em}{0.4pt}\\
        \rule[.5ex]{2em}{0.4pt} & \TransitionDistribution(\ \cdot\ |\ s_1, a_2) & \rule[.5ex]{2em}{0.4pt}\\
        \cdots & \cdots & \cdots \\
        \rule[.5ex]{2em}{0.4pt} & \TransitionDistribution(\ \cdot\ |\ s_n, a_m) & \rule[.5ex]{2em}{0.4pt}\\
    \end{bmatrix},
\end{equation*}
where $\mathbf{0}^m,\mathbf{1}^m$ are column vectors and $\TransitionDistribution(\cdot| s_i, a_j)$ is a row vector of the form $[\TransitionDistribution(s_1\ |\ s_i, a_j), \cdots, \TransitionDistribution(s_n\ |\ s_i, a_j)]$.\\

Furthermore, we have $\dim{\Phi} = n$.
\end{lemma}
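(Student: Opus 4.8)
The plan is to recognize $\Phi$ as the image of the single matrix $A - \gamma P$ and then to compute its rank via a maximum-principle argument. First I would recall the definition of a potential-shaping transformation: given a potential $\phi \in \Reals^{n}$ (one value per state), the induced change to the reward at a transition $(s,a)$ is
\[
\Psi_\phi(s,a) \;=\; \gamma \cdot \Expect{s' \sim \TransitionDistribution(\cdot \mid s,a)}{\phi(s')} \;-\; \phi(s),
\]
and $\Phi$ is by definition the set $\{\Psi_\phi : \phi \in \Reals^{n}\}$, which is a linear subspace of $\Reals^{n \cdot m}$ because $\phi \mapsto \Psi_\phi$ is linear.

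Second, I would read off from the stated forms of $A$ and $P$ that $(A\phi)(s_i,a_j) = \phi(s_i)$ and $(P\phi)(s_i,a_j) = \sum_{s'} \TransitionDistribution(s' \mid s_i, a_j)\,\phi(s')$, so that $\Psi_\phi = \gamma P \phi - A\phi = -(A - \gamma P)\phi$. Since $\phi$ ranges over all of $\Reals^{n}$ and the column span of a matrix is unchanged under an overall sign flip, this immediately yields $\Phi = \Span(A - \gamma P)$, proving the first claim.

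Third, for the dimension claim I would show that the linear map $\phi \mapsto (A - \gamma P)\phi$ is injective, i.e.\ that its kernel is trivial; then $\dim \Phi = \mathrm{rank}(A - \gamma P) = n$. Suppose $(A - \gamma P)\phi = 0$, i.e.\ for every $(s,a)$ we have $\phi(s) = \gamma \sum_{s'} \TransitionDistribution(s' \mid s,a)\,\phi(s')$. Letting $s^\star$ attain $\max_s \phi(s)$ and fixing any action, the equation together with the fact that $\TransitionDistribution(\cdot \mid s^\star, a)$ is a probability distribution gives $\phi(s^\star) \le \gamma\,\phi(s^\star)$, hence $(1-\gamma)\max_s \phi(s) \le 0$. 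The symmetric argument at a minimizing state gives $(1-\gamma)\min_s \phi(s) \ge 0$. Since $\gamma \in (0,1)$, both inequalities force $\max_s \phi(s) \le 0 \le \min_s \phi(s)$, so $\phi \equiv 0$.

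The only nontrivial step is this kernel computation; everything else is bookkeeping about how $A$ and $P$ encode the shaping map. The maximum-principle argument is the crux and relies essentially on $\gamma < 1$ together with the row-stochasticity of $P$: without the discount, constant potentials (the all-ones vector) would lie in the kernel, and the dimension would instead drop to $n-1$.
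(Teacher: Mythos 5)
Your proof is correct, but it takes a genuinely different route from the paper. The paper does not prove this lemma from scratch: it cites \citet{schlaginhaufen2023identifiability} for the identity $\Phi = \Span(A - \gamma P)$, and obtains $\dim \Phi = n$ by a duality argument --- $\Phi$ is the orthogonal complement of the linear subspace parallel to the affine hull of the occupancy measure polytope $\Omega$, which has dimension $n(m-1)$ by Proposition 1 of \citet{karwowski2023goodhart}, so $\dim \Phi = nm - n(m-1) = n$. You instead give a self-contained elementary argument: unwinding the definition of potential shaping to see $\Psi_\phi = -(A - \gamma P)\phi$ (with the sign flip harmless for the span), and then proving $\dim \Phi = \mathrm{rank}(A-\gamma P) = n$ by showing $\ker(A - \gamma P) = \{0\}$ via a discounted maximum principle: evaluating $\phi(s) = \gamma \sum_{s'} \tau(s' \mid s, a)\,\phi(s')$ at a maximizer and a minimizer of $\phi$ yields $(1-\gamma)\max_s \phi(s) \le 0 \le (1-\gamma)\min_s \phi(s)$, forcing $\phi \equiv 0$. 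Both steps are sound, and your version has the virtue of making explicit exactly where $\gamma < 1$ and the row-stochasticity of $P$ enter, whereas the paper's version buys brevity and, more importantly, records the geometric fact $\Phi^\perp = \lin \Span(\Omega)$ that is reused elsewhere in its polytope arguments (e.g., in the proof of \Cref{lemma:all_r_with_worst_regret}). One small caveat on your closing aside: at $\gamma = 1$ the constant vector indeed lies in the kernel, but the conclusion that the dimension drops \emph{exactly} to $n-1$ requires additional assumptions (e.g., a single recurrent communicating structure); for a reducible transition structure the kernel of $A - P$ can be strictly larger. Since the paper fixes $\gamma \in (0,1)$, this does not affect the validity of your proof of the lemma itself.
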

\begin{proof}
    This has been proven by \citep{schlaginhaufen2023identifiability} (see their paragraph "Identifiability" of Section 4). The fact that $\dim{\Phi} = n$ follows from the fact that $\Phi$ is the linear space orthogonal to the affine space containing the occupancy measure space $\Omega$, i.e. $\Phi^\bot = L$ where $L$ is the linear subspace parallel to $\Span(\Omega)$ (see the paragraph \emph{Convex Reformulation} of Section 3 of~\citep{schlaginhaufen2023identifiability}) and the fact that $\dim{\Span(\Omega)} = n \cdot (m -1)$ (see Proposition 1 of ~\citep{karwowski2023goodhart}).
\end{proof}

\begin{lemma}[Dimension of $\unsafe{v}$]\label{dim_unsafe}
    $\dim{\unsafe{v}} = n \cdot m$.
\end{lemma}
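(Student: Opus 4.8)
The plan is to read off the dimension directly from the explicit description of $\unsafe{v}$ recorded in~\eqref{eq:unsafe_rewards_definition}, namely $\unsafe{v} = -\reward + \Phi + \cone{\{-e_{s,a}\}_{(s,a) \in zeros(v)}}$ with $\Phi = \Span(A - \gamma P)$. Dimension is invariant under the translation by $-\reward$, and since both $\Phi$ and the cone contain the origin, the affine hull of $\Phi + \cone{\{-e_{s,a}\}_{(s,a) \in zeros(v)}}$ is the linear span $\Phi + Z$, where I write $Z \coloneqq \Span\{e_{s,a} : (s,a) \in zeros(v)\}$ (the cone's generators $-e_{s,a}$ span $Z$). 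Hence $\dim \unsafe{v} = \dim(\Phi + Z)$, and it suffices to show this equals $n\cdot m$. I would apply the Grassmann formula $\dim(\Phi + Z) = \dim \Phi + \dim Z - \dim(\Phi \cap Z)$ with the two readily available inputs $\dim \Phi = n$ (\Cref{lemma:basis_for_phi}) and $\dim Z = |zeros(v)|$. Because $v$ is a vertex of $\Omega$, it is the occupancy measure of a deterministic policy $\policy_v$, so its nonzero coordinates are exactly $\{(s, \policy_v(s)) : s \text{ reachable under } \policy_v\}$; writing $r$ for the number of states reachable under $\policy_v$ gives $\dim Z = n\cdot m - r$.

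The crux, which I expect to be the main obstacle, is evaluating $\dim(\Phi \cap Z) = n - r$. A vector of $\Phi$ has the form $\phi_y = (A - \gamma P)y$ for a potential $y \in \Reals^n$, with coordinates $\phi_y(s,a) = y(s) - \gamma \sum_{s'} \TransitionDistribution(s'\mid s,a)\, y(s')$, and $y \mapsto \phi_y$ is injective since $\dim \Phi = n$. Now $\phi_y \in Z$ exactly when $\phi_y$ vanishes on every nonzero coordinate of $v$, i.e. when $y(s) = \gamma \sum_{s'} \TransitionDistribution(s'\mid s, \policy_v(s))\, y(s')$ for every reachable $s$. These are the zero-reward Bellman equations for $\policy_v$ restricted to the reachable states; since transitions out of a reachable state land only in reachable states, the restriction reads $(I - \gamma P^{\policy_v}_{\mathrm{rr}})\, y_{\mathrm{r}} = 0$ on the reachable sub-block, where $P^{\policy_v}_{\mathrm{rr}}$ is row-stochastic. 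As $\gamma < 1$, the matrix $I - \gamma P^{\policy_v}_{\mathrm{rr}}$ is invertible, forcing $y$ to vanish on the $r$ reachable states while leaving it free on the remaining $n - r$ unreachable states. By injectivity of $y \mapsto \phi_y$ this yields $\dim(\Phi \cap Z) = n - r$.

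Combining the pieces gives $\dim \unsafe{v} = \dim(\Phi + Z) = n + (n\cdot m - r) - (n - r) = n\cdot m$, as claimed. As a sanity check / alternative viewpoint, one can note that $\unsafe{v}$ is a translate of the normal cone $\NormalCone{\Omega}{v}$ at a \emph{vertex} of the polytope $\Omega \subseteq \Reals^{n\cdot m}$, and normal cones at vertices are always full-dimensional regardless of whether the polytope is full-dimensional; the explicit computation above is simply the self-contained form of this fact tailored to the potential-shaping subspace $\Phi$ and the reachability structure of $\policy_v$.
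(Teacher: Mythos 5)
Your proof is correct, but it takes a genuinely different route from the paper's. The paper also starts from $\unsafe{v} = -\reward + \Phi + \cone{\{-e_{s,a}\}_{(s,a)\in zeros(v)}}$, but then works on the dual side: it invokes Puterman's LP characterization of $\Omega$ (namely $x \in \Omega \iff (A-\gamma P)^T x = \InitStateDistribution$ and $x \ge 0$) together with the fact that the vertex $v$ is cut out by $n\cdot m$ supporting hyperplanes in general position; since $(A-\gamma P)$ has rank $n$, at least $n\cdot(m-1)$ of the equalities $v(s,a)=0$ must be active with normals linearly independent of the rows of $(A-\gamma P)^T$, which directly supplies $n\cdot(m-1)$ cone generators completing a basis of $\Phi$ to all of $\Reals^{n\cdot m}$. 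You instead stay entirely primal: after the correct reduction $\dim\unsafe{v} = \dim(\Phi+Z)$ with $Z = \Span\{e_{s,a} : (s,a)\in zeros(v)\}$ (valid because both summands contain the origin, so the affine hull of the Minkowski sum is the linear span), you apply the Grassmann formula and compute $\dim(\Phi\cap Z) = n-r$ by hand. That computation is the substantive part, and it is sound: the membership condition $\phi_y \in Z$ is exactly the zero-reward Bellman system for $\policy_v$ on the reachable states, the reachable block of the transition matrix is row-stochastic because $\policy_v$-transitions from reachable states stay reachable, and $I - \gamma P_{rr}$ is invertible for $\gamma < 1$. (One cosmetic point: you should state the converse inclusion explicitly — every $y$ vanishing on the reachable states does satisfy $\phi_y \in Z$ — though it follows immediately from the same closure property you already invoke.) As for what each approach buys: the paper's argument is shorter given the LP machinery but asserts rather than proves the general-position property of vertices; yours is self-contained, tracks the reachable-state count $r$ explicitly, and reconciles with the paper's tally since $\dim Z - \dim(\Phi\cap Z) = (n\cdot m - r) - (n-r) = n\cdot(m-1)$ regardless of $r$. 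Your closing observation is also the cleanest conceptual proof available: by \Cref{theorem:rl_optimality}, $\unsafe{v}$ is a translate of $\NormalCone{\Omega}{v}$, and the normal cone of a polytope at a vertex is always full-dimensional.
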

\begin{proof}
    Remember that $\unsafe{v}$ can be expressed as follows (see \Cref{eq:all_r_with_worst_regret2_app} and the subsequent paragraph):\begin{equation}\label{eq:unsafe_rewards_definition2}
        \unsafe{v}\ =\ -\reward + \Phi + \cone{\{-e_{s,a}\}_{(s,a) \in zeros(v)}},
    \end{equation}
    From \Cref{lemma:basis_for_phi} we know that $\dim{\Phi} = n$. We will make the argument that:
    \begin{itemize}
        \item[a)]$\dim{\left[\cone{\{-e_{s,a}\}_{(s,a) \in zeros(v)}}\right]} \ge n \cdot (m-1) $
        \item[b)] There exist exactly $n \cdot (m-1)$ basis vectors of $\cone{\{-e_{s,a}\}_{(s,a) \in zeros(v)}}$ such that the combined set of these vectors and the basis vectors of $\Phi$ is linearly independent.
    \end{itemize}
    From this, it must follow that:
    \begin{equation*}
        \dim{\Big[\Phi + \cone{\{-e_{s,a}\}_{(s,a) \in zeros(v)}}\Big]}\ =\ \dim{\Big[\Phi\Big]} +\ n \cdot (m - 1)\ =\ n \cdot m
    \end{equation*}

    For $a)$, remember that $v$ is a vertex of the occupancy measure space $\Omega$ and that each vertex $v$ of $\Omega$ corresponds to at least one deterministic policy $\policy^v$ (see Proposition 1 of~\citep{karwowski2023goodhart}). And since every deterministic policy is zero for exactly $n \cdot (m-1)$ transitions, it must follow that $v$ is also zero in \emph{at least} $n \cdot (m-1)$ transitions, since whenever $\policy^v(a | s) = 0$ for some $(s,a) \in \SxA$, we have:
    \begin{equation*}
        v(s,a)\quad =\quad \sum_{t= 0}^\infty \gamma^t \cdot P(s_t = s, a_t = a\ |\ \policy^v, \TransitionDistribution)\quad =\quad \policy^v(a | s) \cdot \sum_{t= 0}^\infty \gamma^t \cdot P(s_t = s\ |\ \policy^v, \TransitionDistribution)\quad =\quad 0.
    \end{equation*}
    Therefore, it follows that $\dim{\left[\cone{\{-e_{s,a}\}_{(s,a) \in zeros(v)}}\right]} \ge n \cdot (m-1)$.
    
    For $b)$, \citep{puterman1994markov} give necessary and sufficient conditions for a point $x \in \Reals^{n \cdot m}$ to be part of $\Omega$ (see the dual linear program in section 6.9.1 and the accompanying explanation), namely:
\begin{equation*}
    x \in \Omega\quad \Longleftrightarrow\quad \Big[(A - \gamma \cdot P)^T \cdot x = \InitStateDistribution\quad \text{and}\quad I \cdot x \ge 0 \Big],
\end{equation*}
where $I$ is the identity matrix and we use the vector notation of the initial state distribution $\InitStateDistribution$. Because $v$ is a vertex of $\Omega$, it can be described as the intersection of $n \cdot m$ supporting hyperplanes of $\Omega$ that are in general position. Because $(A - \gamma \cdot P)$ has rank $n$ (see \Cref{lemma:basis_for_phi}), this must mean that for $v$ at least $n \cdot (m-1)$ inequalities of the system $I \cdot v \ge 0$ hold with equality and the combined set of the corresponding row vectors and the row vectors of $(A - \gamma \cdot P)^T$ is linearly independent (as the vectors correspond to the normal vectors of the set of $n \cdot m$ hyperplanes in general position).

Note that the set of unit vectors that are orthogonal to $v$ is precisely defined by $\{-e_{s,a}\}_{(s,a) \in zeros(v)}$, since, by definition of $zeros(v)$ (see \Cref{definition:zeros}), we have
\begin{equation*}
    \forall x \in \{-e_{s,a}\}_{(s,a) \in zeros(v)},\quad x^T \cdot v = 0.
\end{equation*}

From this, it must follow that the polyhedral set $\unsafe{v}$, has dimension $n \cdot m$.
\end{proof}

\begin{lemma}[Defining the faces of $\unsafe{v}$]\label{lemma:faces_of_unsafe}
    Each k-dimensional face $F$ of $\unsafe{v}$ (with $k \ge n$) can be expressed as:
\begin{equation}\label{eq:unsafe_rewards_surfaces}
    -\reward + \Phi + \cone{E_F}, \quad\quad \text{ where }\ E_F \subset \{-e_{s,a}\}_{(s,a) \in zeros(v)},
\end{equation}
such that $|E_F| = k - n$ and the combined set of vectors of $E_F$ and the columns of $A - \gamma \cdot P$ is linearly independent.
\end{lemma}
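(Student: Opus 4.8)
The plan is to characterize the faces of $\unsafe{v}$ by reducing to the facial structure of the polyhedral cone that underlies it. By \eqref{eq:unsafe_rewards_definition2} we have $\unsafe{v} = -\reward + N$, where $N \coloneqq \Phi + \cone{G}$ and $G \coloneqq \{-e_{s,a}\}_{(s,a) \in zeros(v)}$. Since translation by $-\reward$ is a bijection on the face lattice, it suffices to describe the faces of the cone $N$ and then shift by $-\reward$. Recall that $\Phi$ is a linear subspace contained in $N$ — in fact its lineality space — of dimension $n$ by \Cref{lemma:basis_for_phi}, while $\dim N = nm$ by \Cref{dim_unsafe}, so $N$ is full-dimensional in $\Reals^{nm}$ and pointed modulo $\Phi$.

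First I would characterize faces via supporting functionals. A nonempty $F \subseteq N$ is a face iff $F = N \cap \{x : \langle w, x\rangle = 0\}$ for some $w$ with $\langle w, \cdot\rangle \le 0$ on all of $N$. Because $\Phi \subseteq N$ is a linear subspace, the condition $\langle w, \cdot\rangle \le 0$ on $\Phi$ forces $w \in \Phi^\perp$, and $\langle w, -e_{s,a}\rangle \le 0$ forces $w_{s,a} \ge 0$ for $(s,a) \in zeros(v)$. Writing a generic element of $N$ as $\phi + \sum_{(s,a)\in zeros(v)} \lambda_{s,a}(-e_{s,a})$ with $\phi \in \Phi$ and $\lambda \ge 0$, the equality $\langle w, \cdot\rangle = 0$ annihilates exactly those generators with $w_{s,a} > 0$. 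Hence every face has the form $\Phi + \cone{E_S}$ with $E_S \coloneqq \{-e_{s,a} : (s,a)\in S\}$ for the index set $S = \{(s,a) \in zeros(v) : w_{s,a} = 0\}$. This already shows faces are generated by $\Phi$ together with a subset of the $-e_{s,a}$.

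Next comes the dimension bookkeeping. Let $F = \Phi + \cone{E_S}$ be a $k$-dimensional face; its linear hull is $\Phi + \Span(E_S)$, so $\dim(\Phi + \Span(E_S)) = k$, and necessarily $k \ge n$ since $\Phi \subseteq F$. Because the columns of $A - \gamma P$ form a basis of $\Phi$ (\Cref{lemma:basis_for_phi}), I would extend these $n$ columns to a basis of $\Phi + \Span(E_S)$ by adjoining $k - n$ vectors selected from $E_S$; call this chosen subset $E_F$. By construction $|E_F| = k - n$ and $E_F$ together with the columns of $A - \gamma P$ is linearly independent, which matches the two numerical claims of the lemma.

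The remaining and hardest step is to upgrade the inclusion $\Phi + \cone{E_F} \subseteq F$ to an equality, i.e.\ to show that $\cone{E_F}$ recovers the same face as $\cone{E_S}$ modulo $\Phi$. Equivalently, I must show that every redundant generator $-e_{s,a} \in E_S \setminus E_F$ is a \emph{nonnegative} combination of $E_F$ up to an element of $\Phi$. This is automatic at non-degenerate vertices, where $|zeros(v)| = n(m-1)$ and $E_S$ is already independent modulo $\Phi$; the difficulty is concentrated at degenerate vertices, where $\policy^v$ fails to visit some state $s_0$, so that every pair $(s_0,a)$ contributes an additional generator beyond the minimal $n(m-1)$. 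The point is that these extra generators are not arbitrary: passing to the quotient $\Reals^{nm}/\Phi \cong \Reals^{n(m-1)}$ turns $N$ into a pointed, full-dimensional cone whose generators are the images of the $-e_{s,a}$, and the Bellman-flow structure of $\Omega = \{x : (A - \gamma P)^T x = \InitStateDistribution,\ x \ge 0\}$ forces this quotient cone, and hence each of its faces, to be simplicial — the image of a generator at an unvisited state's action is ``surrounded'' by, and thus a nonnegative combination of, the generators corresponding to transitions into that state. I expect establishing this simpliciality — arguing that redundant generators are genuinely positive, not merely linear, combinations modulo $\Phi$ — to be the main obstacle, and I would prove it by analyzing the sign pattern that $(A - \gamma P)^T$ imposes on $\Phi^\perp$ (a computation that can be checked directly on small degenerate instances). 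Once simpliciality is secured, $\Phi + \cone{E_F} = F$, and shifting back by $-\reward$ yields the claimed representation $F = -\reward + \Phi + \cone{E_F}$.
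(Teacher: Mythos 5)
Your supporting-functional characterization of the faces of $N = \Phi + \cone{\{-e_{s,a}\}_{(s,a)\in zeros(v)}}$ (every face is $\Phi + \cone{E_S}$, where $E_S$ is the set of generators annihilated by the functional) and the subsequent dimension bookkeeping are correct, and in fact more careful than the paper's own one-paragraph proof, which passes from ``each face is spanned by a subset of the generators'' to ``it is the cone over exactly $k$ linearly independent such vectors'' without comment --- i.e.\ it silently conflates linear span with conical hull at precisely the point you isolate. But your proposal does not close that point: the whole burden of the lemma, upgrading the trivial inclusion $\Phi + \cone{E_F} \subseteq F$ to equality when the face contains more than $k-n$ generators (the degenerate-vertex case), is deferred to a simpliciality claim that you only say you ``expect'' to establish. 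As a proof, it is therefore incomplete exactly where the content lies.

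Worse, the mechanism you propose for that step is false. Take $\States = \{s_1, s_2\}$, $\Actions = \{a_1, a_2, a_3\}$, $\mu_0 = \delta_{s_1}$, $\TransitionDistribution(\cdot \mid s_1, a_1) = \delta_{s_1}$, $\TransitionDistribution(\cdot \mid s_1, a_2) = \TransitionDistribution(\cdot \mid s_1, a_3) = \delta_{s_2}$, $\TransitionDistribution(\cdot \mid s_2, a) = \delta_{s_1}$ for all $a$, and let $v$ be the (degenerate) vertex induced by the policy taking $a_1$ at $s_1$: then $s_2$ is unvisited, $zeros(v)$ has five elements, and $\Reals^{6}/\Phi$ has dimension four. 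Writing $\bar{x}$ for the image of $x$ in $\Reals^{6}/\Phi$, a direct computation gives
\begin{equation*}
    \overline{-e_{s_2,a_3}} \;=\; \gamma\,\overline{-e_{s_1,a_2}} \;+\; \gamma\,\overline{-e_{s_1,a_3}} \;-\; \overline{-e_{s_2,a_1}} \;-\; \overline{-e_{s_2,a_2}},
\end{equation*}
since the corresponding lift differs from $-e_{s_2,a_3}$ by exactly (minus) the $s_2$-column of $A - \gamma P$. The coefficients have mixed signs, and checking supporting functionals shows that all five images are extreme rays of the quotient cone, which is thus a pointed four-dimensional cone with five extreme rays --- not simplicial. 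So your claim that the Bellman-flow structure ``forces this quotient cone, and hence each of its faces, to be simplicial'' fails already for the top face (under the equality reading of the lemma this top face would need $|E_F| = nm - n = 4$ generators, yet five are required), and your heuristic that a generator at an unvisited state is a \emph{nonnegative} combination of the generators pointing into that state is contradicted by the negative signs above. In this particular instance the proper faces do happen to be simplicial, so the lemma as it is used downstream (for faces through vertices of $\mathcal{F}_c(v)$) is not thereby refuted; but your argument cannot deliver it, and a correct proof must handle degenerate vertices directly rather than through global simpliciality.
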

\begin{proof}
    Remember that $\unsafe{v}$ can be expressed as follows (see \Cref{eq:all_r_with_worst_regret2_app} and the subsequent paragraph):\begin{equation}\label{eq:unsafe_rewards_definition3}
        \unsafe{v}\ =\ -\reward + \Phi + \cone{\{-e_{s,a}\}_{(s,a) \in zeros(v)}},
    \end{equation}
    This means that we can express $\unsafe{v}$ as a polyhedral cone, spanned by non-negative combinations of:
    \begin{itemize}
        \item The column vectors of the matrix $A - \gamma \cdot P$.
        \item The column vectors of the matrix $-(A - \gamma \cdot P)$. Since $\Phi$ is a linear subspace and a cone is spanned by only the positive combinations of its set of defining vectors we also have to include the negative of this matrix to allow arbitrary linear combinations.
        \item The set of vectors $\{-e_{s,a}\}_{(s,a) \in zeros(v)}$.
    \end{itemize}
      Consequently, each face of $\unsafe{v}$ of dimension $k$ is spanned by a subset of the vectors that span $\unsafe{v}$ and is therefore also a cone of these vectors. Because the face has dimension $k$, we require exactly $k$ linearly independent vectors, as it's not possible to span a face of dimension $k$ with less than $k$ linearly independent vectors, and every additional linearly independent vector would increase the dimension of the face. Furthermore, since $\Phi$ is a linear subspace that is unbounded by definition, it must be part of every face. Therefore, every face of $\unsafe{v}$ has a dimension of at least $n$ (the dimension of $\Phi$). 
\end{proof}

Note that the converse of \Cref{lemma:faces_of_unsafe} doesn't necessarily hold, i.e., not all sets of the form described in \Cref{eq:unsafe_rewards_surfaces} are necessarily surfaces of the polyhedral set $U(v) \cdot x \le b(v)$.

We are now ready to develop closed-form expressions for the vertices of $\mathcal{F}_c(v)$. Note that it is possible for $\mathbf{0} \in \Reals^{n \cdot m}$ to be a vertex of $\mathcal{F}_c(v)$. But in this case, according to \Cref{theorem:safe_linear_constraints_app}, this must mean that the linear system of inequalities $M \cdot D > \epsilon \cdot \range \reward \cdot \mathbf{1}$ is infeasible (since $M$ would contain a zero row and all elements on the right-hand side are non-negative), which means that in this case $\safeD = \emptyset$. We will therefore restrict our analysis to all non-zero vertices of $\mathcal{F}_c(v)$.
\begin{proposition}[Vertices of $\mathcal{F}_c(v)$.]\label{proposition:vertices_of_Fc}
    Every vertex $v_{FG}$ of $\mathcal{F}_c(v)$, with $v_{FG} \ne \mathbf{0}$, lies on the intersection of some face $F$ of the polyhedral set $\unsafe{v}$ and some face $G$ of the orthant $O_c$ and is defined as follows:
    \begin{equation*}
        v_{FG}\quad =\quad -\reward + [A - \gamma \cdot P, E_F] \cdot \Big(E_G \cdot [A - \gamma \cdot P, E_F]\Big)^{-1} \cdot E_G \cdot R,
    \end{equation*}
    where $E_F$, $E_G$ are matrices whose columns contain standard unit vectors, such that:
    \begin{align*}
        F\quad &=\quad -\reward + \Phi + \cone{E_F}, \quad\quad \text{for}\ E_F \subset \{-e_{s,a}\}_{(s,a) \in zeros(v)}\\
        G\quad &=\quad \{x \in \Reals^{n \cdot m}\ |\ E_G \cdot x = \mathbf{0}\}.
    \end{align*}
\end{proposition}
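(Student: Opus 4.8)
The plan is to identify a nonzero vertex $v_{FG}$ of $\mathcal{F}_c(v) = \unsafe{v} \cap O_c$ as the \emph{unique} intersection point of the affine hull of a face $F$ of $\unsafe{v}$ with a face $G$ of the orthant $O_c$, and then to solve the resulting linear system in closed form. First I would recall the two structural inputs. By \Cref{lemma:all_vertices_on_intersection}, every vertex of $\mathcal{F}_c(v)$ activates at least one coordinate constraint $-\diag{c}\cdot x \le 0$, so the minimal face $G'$ of $O_c$ containing $v_{FG}$ is proper. Moreover, since $\Phi = \Span(A - \gamma \cdot P)$ lies in the lineality space of $\unsafe{v}$, every nonempty face of $\unsafe{v}$ has dimension at least $n = \dim \Phi$ (by \Cref{lemma:basis_for_phi}), so \Cref{lemma:faces_of_unsafe} applies to the minimal face $F$ of $\unsafe{v}$ containing $v_{FG}$. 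This yields $\mathrm{aff}(F) = -\reward + \Span(W)$ with $W \coloneqq [A - \gamma \cdot P,\, E_F]$ of full column rank $n + |E_F|$, so that I may parametrize $v_{FG} = -\reward + W t$ for a unique coefficient vector $t$.

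Next I would exploit the vertex property to pin down which coordinate constraints matter. Encode the active coordinate constraints of $G'$ by a selection matrix picking out the vanishing coordinates. Because $v_{FG}$ is a $0$-dimensional face of the intersection, standard polyhedral theory says the constraints active at $v_{FG}$ must have rank $n \cdot m$. The active constraints coming from $\unsafe{v}$ leave exactly the tangent directions $\Span(W)$ free (they define $\mathrm{aff}(F)$, using \Cref{lemma:faces_of_unsafe} and $\dim \unsafe{v} = n\cdot m$ from \Cref{dim_unsafe}); hence the orthant constraints restricted to $\Span(W)$ must have full rank $n + |E_F|$. I would then select a subset of $n + |E_F|$ of these coordinate constraints into a matrix $E_G$ (so that $G = \{x \in \Reals^{n\cdot m} : E_G\, x = \mathbf{0}\}$) for which the square matrix $E_G\, W$ is invertible. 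Since $v_{FG}$ satisfies all active coordinate constraints, in particular $E_G\, v_{FG} = \mathbf{0}$.

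The explicit formula then drops out. Substituting the parametrization into $E_G\, v_{FG} = \mathbf{0}$ gives $E_G\, W\, t = E_G\, \reward$, and invertibility of $E_G\, W$ yields $t = (E_G\, W)^{-1} E_G\, \reward$; plugging this back in produces
\[
  v_{FG} = -\reward + [A - \gamma \cdot P,\, E_F]\,\bigl(E_G\,[A - \gamma \cdot P,\, E_F]\bigr)^{-1} E_G\, R,
\]
exactly as claimed, with $F = -\reward + \Phi + \cone{E_F}$ for $E_F \subset \{-e_{s,a}\}_{(s,a) \in zeros(v)}$ and $G = \{x : E_G\, x = \mathbf{0}\}$.

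The step I expect to be the main obstacle is the rank argument of the second paragraph: rigorously justifying that the active coordinate constraints, when restricted to the tangent space $\Span(W)$ of $F$, have full rank $n + |E_F|$, so that $E_G\, W$ can be chosen square and invertible. This is precisely where the definition of a vertex must be invoked carefully, combining the codimension of the active $\unsafe{v}$-constraints (whose free directions are $\Span(W)$ by \Cref{lemma:faces_of_unsafe}) with the global dimension count $\dim \unsafe{v} = n\cdot m$. A minor additional point is to keep track of the degenerate case $v_{FG} = \mathbf{0}$, which the statement explicitly excludes and which the text already handles separately (as it would force a zero row in $M$ and hence $\safeD = \emptyset$).
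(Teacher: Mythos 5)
Your proposal is correct and follows essentially the same route as the paper's proof: both start from \Cref{lemma:all_vertices_on_intersection} and \Cref{lemma:faces_of_unsafe}, parametrize the vertex as $v_{FG} = -\reward + W t$ with $W = [A - \gamma \cdot P, E_F]$ of full column rank, impose $E_G \cdot v_{FG} = \mathbf{0}$, and solve the resulting linear system to obtain the closed form. The one step where you genuinely diverge is the one you flagged yourself: invertibility of $E_G \cdot W$. The paper establishes it by contradiction, choosing $F$ and $G$ so that their intersection is the single point $v_{FG}$ and arguing that a nontrivial kernel of $E_G \cdot W$ would, via injectivity of $W$ (linear independence of the columns of $A - \gamma \cdot P$ together with $E_F$), produce two distinct points satisfying both membership conditions. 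Your rank-counting argument --- the constraints active at a vertex have rank $n \cdot m$, the active $\unsafe{v}$-constraints leave exactly $\Span(W)$ free, hence the active coordinate constraints restrict to full rank $n + |E_F|$ on $\Span(W)$, from which a square invertible $E_G \cdot W$ can be selected --- is the standard polyhedral characterization of a $0$-dimensional face and is equally valid; it even buys slightly more, since selecting a suitable subset of active coordinate constraints handles degenerate vertices at which more than $n \cdot m$ constraints are active, a case the paper sidesteps by assuming the combined set of defining hyperplanes is in general position. Your handling of the excluded case $v_{FG} = \mathbf{0}$ also matches the paper's treatment, which deals with it separately before the proposition.
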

\begin{proof}
We start by defining the faces of the orthant $O_c$. Remember that $O_c$ is the solution set to the system of inequalities $\diag{c} \cdot x \ge 0$. Therefore, each defining hyperplane of $O_c$ is defined by one row $i$ of $\diag{c}$, i.e. $\diag{c}_i \cdot x =0$. Note that since $c \in \{-1,1\}^{n \cdot m}$, this is equivalent to the equation $e_i^T \cdot x = 0$ where $e_i$ is either the $i$'th standard unit vector or its negative. And because every l-dimensional face $G$ of $O_c$ is the intersection of $l$ standard hyperplanes $\{e_{i_1}, ..., e_{i_{l}}\}$, this must mean that $G$ is defined as the set of solutions to the system of equations $E_G \cdot x = 0$ where $E_G$ is the matrix whose row vectors are the vectors $\{e_{i_1}, ..., e_{i_{l}}\}$.

Next, let $v_{FG}$ be an arbitrary non-zero vertex of $\mathcal{F}_c(v)$. As proven in \Cref{lemma:all_vertices_on_intersection}, every vertex of $\mathcal{F}_c(v)$ must satisfy some of the inequalities $\diag{c} \cdot x \ge 0$ for $c \in \{-1,1\}^{n \cdot m}$ with equality. This means that $v_{FG}$ must lie on some face $G$ of the orthant $O_c$. The non-zero property guarantees that not all inequalities of the system of inequalities $\diag{c} \cdot x \ge 0$ are satisfied with equality, i.e. that $G$ is not a vertex. Assume that $k>0$ inequalities are \emph{not} satisfied with equality. Therefore, $G$ must have dimension $k$, and $E_G \in \Reals^{n \cdot m \times k}$.

Since $v_{FG}$ is a vertex of the intersection of the orthant $O_c$ and the polyhedral set $\unsafe{v}$, and it only lies on a $k$-dimensional face of $O_c$, it must also lie on a $n\cdot m - k$ dimensional face $F$ of $\unsafe{v}$ such that the combined set of hyperplanes defining $F$ and $G$ is in general position. The condition that the combined set of hyperplanes is in general position is necessary, to guarantee that $v_{FG}$ has dimension $0$ and is therefore a proper vertex.

From \Cref{lemma:faces_of_unsafe} we know that $F$ can be expressed as: 
\begin{equation}\label{eq:face_F_definition}
    -\reward + \Phi + \cone{E_F}, \quad\quad \text{ where }\ E_F \subset \{-e_{s,a}\}_{(s,a) \in zeros(v)},
\end{equation}
such that $|E_F| = n \cdot (m-1) - k$ and the combined set of vectors of $E_F$ and the columns of $A - \gamma \cdot P$ are linearly independent.

Because $v_{FG}$ is part of both, $F$ and $G$, we can combine all information that we gathered about $F$ and $G$ and deduce that it must hold that:
\begin{equation}\label{eq:condition_on_v}
    \underbrace{E_G \cdot v_{FG} = 0}_{\text{equivalent to }v_{FG} \in G}\quad\quad \text{, and }\quad\quad \underbrace{\exists x \in \Reals^{n\cdot m - k},\quad v_{FG} = -\reward + [A - \gamma \cdot P, E_F] \cdot x}_{\text{equivalent to }v_{FG} \in F},
\end{equation}
where for $x$ in \Cref{eq:condition_on_v} it additionally must hold that $\forall i \in \{n + 1, ..., n \cdot m - k\}$, $x_i \ge 0$. This must hold because these last entries of $x$ should form a convex combination of the vectors in $E_F$ (as $F$ is defined to lie in the cone of $E_F$, see \Cref{eq:face_F_definition}). We briefly state the following two facts that will be used later in the proof:
\begin{itemize}
    \item[a)] $v_{FG}$ is the only vector in $\Reals^{n \cdot m}$ that fulfills both conditions in \Cref{eq:condition_on_v}. This is because we defined $F$ in such a way that the intersection of $F$ and $G$ is a single point. And only points in this intersection fulfill both conditions in \Cref{eq:condition_on_v}.
    \item[b)] For every non-zero vertex $v_{FG}$, there can only exist a single $x$ that satisfies the two conditions in \Cref{eq:condition_on_v}. This follows directly from the assumption that the combined set of vectors of $E_F$ and the columns of $A - \gamma \cdot P$ are linearly independent (see \Cref{eq:face_F_definition} and the paragraph below). 
\end{itemize}

We can combine the two conditions in \Cref{eq:condition_on_v} to get the following, unified condition that is satisfied for every non-zero vertex $v_{FG}$:
\begin{equation}\label{eq:unified_condition_on_v}
    \exists x \in \Reals^{n\cdot m -k},\quad E_G \cdot \Big( -\reward + [A - \gamma \cdot P, E_F] \cdot x\Big)\ = \mathbf{0}^{n\cdot m -k},
\end{equation}
From this, it is easy to compute the precise coordinates of $v_{FG}$:
\begin{align}
    \label{line:compute_x1}x\ &=\ \Big(E_G \cdot [A - \gamma \cdot P, E_F]\Big)^{-1} \cdot E_G \cdot R\\
    \Longrightarrow\quad v_{FG}  &= -\reward + [A - \gamma \cdot P, E_F] \cdot \Big(E_G \cdot [A - \gamma \cdot P, E_F]\Big)^{-1} \cdot E_G \cdot R.
\end{align}
We finish the proof by showing that the matrix inverse in \Cref{line:compute_x1} always exists for every non-zero vertex $v_{FG}$. Assume, for the sake of contradiction, that the matrix $E_G \cdot [A - \gamma \cdot P, E_F]$ is not invertible. We will show that in this case, there exists a $z \in \Reals^{n \cdot m}$ with $z \ne v_{FG}$ such that $z$ fulfills both conditions in \Cref{eq:condition_on_v}. As we've shown above in fact $a)$ this is not possible, hence this is a contradiction.

Assuming that $E_G \cdot [A - \gamma \cdot P, E_F]$ is not invertible, we know from standard linear algebra that in that case the kernel of this matrix has a dimension larger than zero. Let $y_1, y_2$, be two elements of this kernel with $y_1 \ne y_2$.

Earlier in this proof, we showed that for every non-zero vertex $v_{FG}$, \Cref{eq:unified_condition_on_v} is satisfiable. Let $x$ be a solution to \Cref{eq:unified_condition_on_v}. From our assumptions, it follows that both $x + y_1$ and $x + y_2$ must also be solutions to \Cref{eq:unified_condition_on_v} as:
\begin{align*}
    \forall y \in \{y_1, y_2\},\quad E_G \cdot \Big( -&\reward + [A - \gamma \cdot P, E_F] \cdot (x + y)\Big)\\
    &=\quad - E_G \cdot \reward\ +\ E_G \cdot [A - \gamma \cdot P, E_F] \cdot (x+y)\\
    &=\quad - E_G \cdot \reward\ +\ E_G \cdot [A - \gamma \cdot P, E_F] \cdot x\\
    &=\quad E_G \cdot \Big( -\reward + [A - \gamma \cdot P, E_F] \cdot x\Big)\\
    &=\quad \mathbf{0}^{n\cdot m - k}.
\end{align*}
And from this, it will follow that both, $x + y_1$ and $x + y_2$ must satisfy both conditions in \Cref{eq:condition_on_v}. Because $x + y_1 \ne x + y_2$, it must also hold that:
\begin{equation*}
    -\reward + [A - \gamma \cdot P, E_F] \cdot (x + y_1)\quad \ne\quad -\reward + [A - \gamma \cdot P, E_F] \cdot (x + y_2),
\end{equation*}
see fact $b)$ above for a proof of this. And this would mean that there exists at least one $z \in \Reals^{n \cdot m}$ with $z \ne v_{FG}$ such that $z$ fulfills both conditions in \Cref{eq:condition_on_v}. But as we have shown in fact $a)$, this is not possible. Therefore, the matrix $E_G \cdot [A - \gamma \cdot P, E_F]$ \emph{must} be invertible for every non-zero vertex $v_{FG}$.
\end{proof}

We are now ready to provide more specific information about the exact conditions necessary for a data distribution $D$ to be safe.
\begin{corollary}[Vertices of $\mathcal{F}_c(v)$.]\label{corollary:linear_constraints_with_explanation}
    For all $\epsilon > 0$, $L \in [0,1]$ and MDPs $\MDP$, there exists a matrix $M$ such that:
    \begin{equation}\label{eq:safe_linear_constraints2_app}
        D \in \safeD\quad \Longleftrightarrow\quad M \cdot D > \epsilon \cdot \range{\reward} \cdot \mathbf{1},
    \end{equation}
    for all $D \in \DistributionsOverSet{\SxA}$, where we use the vector notation of $D$, and $\mathbf{1}$ is a vector containing all ones.
    
    The matrix $M$ is defined as:
    \begin{equation*}
        M = \begin{bmatrix}
        \abs(x_1)^T\\
        \cdots\\
        \abs(x_l)^T
        \end{bmatrix},
    \end{equation*}
    where an individual row $x_i$ of $M$ can either be all zeros, or
    \begin{equation}\label{eq:precise_definition_of_M}
        x_i\ =\ -\reward + [A - \gamma \cdot P, E_{i1}] \cdot \Big(E_{i2} \cdot [A - \gamma \cdot P, E_{i1}]\Big)^{-1} \cdot E_{i2} \cdot R,
    \end{equation}
    where $E_{i1}$, $E_{i2}$ are special matrices whose columns contain standard unit vectors.
\end{corollary}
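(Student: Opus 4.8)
The plan is to obtain this corollary by assembling the two immediately preceding results, since all of the substantive work has already been carried out. \Cref{theorem:safe_linear_constraints_app} already produces a matrix $M$ realizing the equivalence~\eqref{eq:safe_linear_constraints2_app}, so the equivalence itself requires no new argument. What remains is purely to unpack \emph{which} vectors form the rows of that $M$ and to substitute their closed form. Reading off the construction in the proof of \Cref{theorem:safe_linear_constraints_app}, the rows of $M$ are exactly the vectors $\abs(x)^T$ as $x$ ranges over the finite set $X = \bigcup_{v \in V_{\reward}^L}\bigcup_{c \in \{-1,1\}^{|\SxA|}} \vertices(\mathcal{F}_c(v))$. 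Hence every row of $M$ is the element-wise absolute value of some vertex of one of the polyhedral sets $\mathcal{F}_c(v)$, and the entire task reduces to describing those vertices.

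First I would dispose of the degenerate case: as noted at the start of \Cref{sec:conditions_on_safe_data_distribution}, $\mathbf{0} \in \Reals^{n\cdot m}$ can itself be a vertex of some $\mathcal{F}_c(v)$, in which case the corresponding row of $M$ is identically zero. This accounts for the ``all zeros'' alternative in the statement. For every \emph{non-zero} vertex $x_i = v_{FG}$, I would invoke \Cref{proposition:vertices_of_Fc}, which expresses it as
\[
  v_{FG} = -\reward + [A - \gamma \cdot P, E_F]\cdot\bigl(E_G \cdot [A - \gamma \cdot P, E_F]\bigr)^{-1}\cdot E_G \cdot R,
\]
where $E_F \subset \{-e_{s,a}\}_{(s,a)\in zeros(v)}$ encodes the relevant face of $\unsafe{v}$ and $E_G$ encodes the face of the orthant $O_c$ on which $v_{FG}$ lies. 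Renaming $E_{i1} \coloneqq E_F$ and $E_{i2} \coloneqq E_G$ reproduces~\eqref{eq:precise_definition_of_M} verbatim. Collecting the rows $\abs(x_i)^T$ over all $v \in V_{\reward}^L$ and all orthants $c$ then yields precisely the $M$ described in the corollary.

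The main point to be careful about — and it is the only genuinely non-trivial ingredient — is that the matrix inverse $\bigl(E_{i2}\cdot[A-\gamma\cdot P, E_{i1}]\bigr)^{-1}$ appearing in~\eqref{eq:precise_definition_of_M} is well defined for every non-zero vertex. I would not re-derive this: \Cref{proposition:vertices_of_Fc} already establishes invertibility through the general-position / linear-independence argument on the combined hyperplane system defining $F$ and $G$, so here it suffices to cite that proposition. Beyond this, the proof is a matter of aligning the notation of \Cref{theorem:safe_linear_constraints_app} (the rows of $M$) with that of \Cref{proposition:vertices_of_Fc} (the closed form of the vertices), so I would keep it short and explicitly flag that no further computation is needed.
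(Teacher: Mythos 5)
Your proposal is correct and matches the paper's own proof, which likewise dispatches the corollary as a direct combination of \Cref{theorem:safe_linear_constraints_app} (supplying $M$ and the equivalence) with \Cref{proposition:vertices_of_Fc} (supplying the closed form of the non-zero vertices). Your additional care in handling the zero-vertex row and in citing rather than re-deriving the invertibility of $E_{i2}\cdot[A-\gamma\cdot P, E_{i1}]$ is exactly the right level of detail.
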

\begin{proof}
    This is a simple combination of \Cref{theorem:safe_linear_constraints_app} and \Cref{proposition:vertices_of_Fc}.
\end{proof}

In particular, \Cref{eq:precise_definition_of_M} shows that whether a particular data distribution $D$ is safe or not depends on the true reward function $R$, as well as the transition distribution $\TransitionDistribution$ (encoded by the matrix $P$).

\subsubsection{Algorithm to compute the conditions on \texorpdfstring{$D$}{D}}\label{sec:algorithm_to_compute_M}

The derivations of \Cref{sec:conditions_on_safe_data_distribution} can be used to define a simple algorithm that constructs matrix $M$. An outline of such an algorithm is presented in \Cref{alg:construct_matrix_M}. We use the terms $\mathrm{RowMatrix}$ and $\mathrm{ColumnMatrix}$ to denote functions that take a set of vectors and arrange them as rows/columns of a matrix.

\begin{algorithm}[t]
\caption{Computes the set of conditions used to determine the safety of a data distribution.}\label{alg:construct_matrix_M}
\begin{algorithmic}[1]
\STATE \textbf{Input:} $MDP = \MDP$, $L \in [0,1]$
\STATE $I \gets$ the \textit{set} of all unit vectors of dimension $|\SxA|$. Create a fixed ordering of $\States$ and $\Actions$ and denote each vector of $I$ by $e_{(s,a)}$ for a unique tuple $(s,a) \in \SxA$.
\STATE $\mathrm{candidates} \gets [\ ]$ 
\STATE $\Policies_d \gets $ Set of deterministic policies of $MDP$ \vspace{1em}
\STATE \textcolor{gray}{\% Create a set of potential row candidates.}
\FOR{$\policy \in \{\policy' \in \Policies_d:\ \Reg{\reward}{\policy'} \ge L\}$} \alglinelabel{alg:line5}
\STATE  $E \gets \{e_{(s,a)} \in I:\ \policy(a|s) = 0 \}$ \alglinelabel{alg:line6}
  \FOR{$E_F \subset E$} \alglinelabel{alg:line7}
    \FOR{$subset \subseteq I \setminus E_F$, $|subset| = |\States|$} \alglinelabel{alg:line8}
      \STATE $E_G \gets E_F \cup subset$ \alglinelabel{alg:line9}
      \STATE $E_F, E_G \gets \mathrm{ColumnMatrix}(E_F),\ \mathrm{RowMatrix}(E_G)$ 
      \STATE $\mathrm{candidates}$.$\mathrm{append}((E_F, E_G))$ 
    \ENDFOR
  \ENDFOR
\ENDFOR \vspace{1em}
\STATE \textcolor{gray}{\% Find the valid rows amongst the candidates.}
\STATE $\mathrm{rows} \gets [\ ]$ 
\FOR{$(E_F, E_G) \in \mathrm{candidates}$} 
  \STATE $k \gets \mathrm{num\_columns}(E_F)$ 
  \IF{ $\mathrm{rank}\Big(E_G \cdot [A - \gamma \cdot P, -E_F]\Big) = n + k$} \alglinelabel{alg:line15}
    \STATE $x \gets \Big(E_G \cdot [A - \gamma \cdot P, -E_F]\Big)^{-1} \cdot E_G \cdot R$ \alglinelabel{alg:line16}
    \IF{ $\forall i \in \{n, n+1, ..., n + k\}\ x_i \ge 0$ } \alglinelabel{alg:line17}
      \STATE $\mathrm{row} \gets \mathrm{abs}\Big(-\reward + [A - \gamma \cdot P, -E_F] \cdot x\Big)^T$ \alglinelabel{alg:line18}
      \STATE $\mathrm{rows}.\mathrm{append}\big(\mathrm{row}\big)$
    \ENDIF
  \ENDIF
\ENDFOR \vspace{1em}
\STATE $M \gets \mathrm{RowMatrix}(\mathrm{rows})$
\STATE \textbf{return} $M$
\end{algorithmic}
\end{algorithm}

To give a brief explanation of the algorithm:
\begin{itemize}
    \item \Cref{alg:line5} follows from the definitions of $V_\reward^L$, $X(v)$ and $X$ (see \Cref{definition:high_regret_vertices,definition:XV,definition:X}).
    \item \Cref{alg:line6,alg:line7} are taken from the definition of $E_F$ in \Cref{eq:face_F_definition} (except that we don’t take the negative of the vectors and instead negate $E_F$ in the final formula).
    \item \Cref{alg:line8,alg:line9}  are taken from the definition of $E_G$ (see the first two paragraphs of \Cref{proposition:vertices_of_Fc}). We additionally ensure that $E_F$ is a subset of $E_G$ as otherwise, the matrix $E_G \cdot [A -\gamma \cdot P, -E_F ]$ is not invertible (due to the multiplication of $E_G \cdot E_F$) and we know that the matrix must be invertible for every vertex.
    \item \Cref{alg:line16,alg:line18} compute the row of the matrix $M$. The formulas are a combination of the definition of the sets $X(v), X$ (see \Cref{definition:XV,definition:X}), the matrix $M_X$ (\Cref{definition:X}) and \Cref{proposition:vertices_of_Fc}.
    \item \Cref{alg:line15} checks whether the matrix $E_G \cdot [A - \gamma \cdot P, -E_F ]$ is invertible. This is always the case for the rows of $M$ (see the last few paragraphs of the proof of \Cref{proposition:vertices_of_Fc}) but might not be true for other candidates.
    \item To explain \Cref{alg:line17}, remember that every row of the matrix $M$ corresponds to the element-wise absolute value of a vector that lies on the intersection of two polyhedral sets $F$, and $G$ (see \Cref{proposition:vertices_of_Fc}). The polyhedral set $F$ is defined via a convex cone. To check that our solution candidate lies in this convex cone, we have to check whether the last entries of $x = (E_G \cdot [A - \gamma \cdot P, -E_F ])^{-1} \cdot E_G \cdot R$, the entries belonging to the vectors in $E_F$, are non-negative.
\end{itemize}

The asymptotic runtime of this naive algorithm is exponential in $|\mathcal{S} \times \mathcal{A}|$ due to the iterations over all subsets in \Cref{alg:line7,alg:line8}. However, better algorithms might exist and we consider this an interesting direction for future work.

\subsubsection{Working example of computing matrix \texorpdfstring{$M$}{M}}\label{sec:working-example}
\begin{figure}[t]
    \centering
    \includegraphics[width=1.0\textwidth]{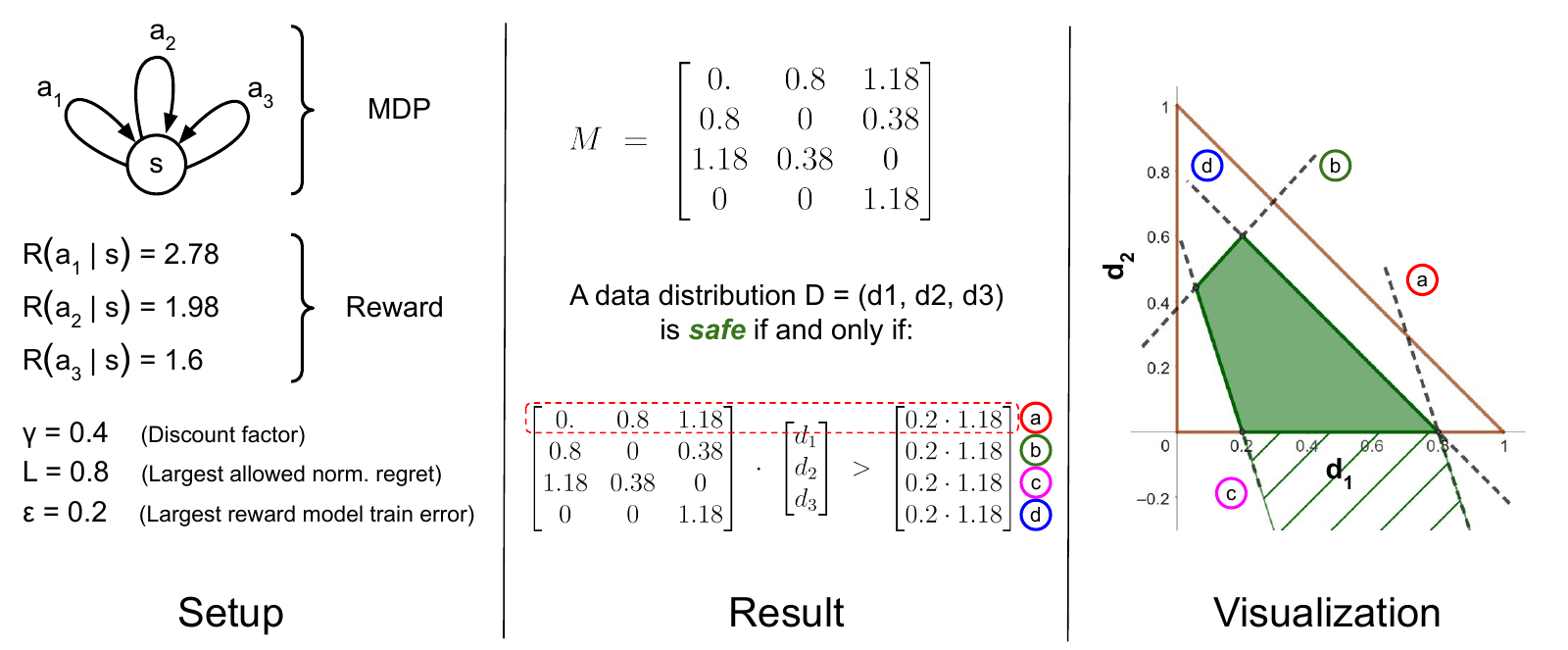}
    \caption{A working example of how to compute the matrix $M$ on a very simple MDP with a single state and three actions. Given the information in the \textit{Setup} column, matrix $M$ can be computed using \Cref{alg:construct_matrix_M}. The constructed matrix $M$ contains four linear constraints that a data distribution $D$ has to fulfill in order to be in $\safeD$. The four constraints are plotted in the right-most column.}
    \label{fig:working-example}
\end{figure}
\Cref{fig:working-example} shows a simple toy-MDP with a single state and three actions, for which we then compute matrix $M$ using \Cref{alg:construct_matrix_M}. Due to the simple structure of the MDP, the auxiliary matrix $A$ and the state-transition matrix $P$ (both used in \Cref{alg:construct_matrix_M}) become trivial:
\begin{equation*}
    A = \begin{bmatrix}
        1 \\ 1 \\ 1
    \end{bmatrix},\text{ and }\quad P = \begin{bmatrix}
        1 \\ 1 \\ 1
    \end{bmatrix}
\end{equation*}

The resulting four constraints that a given data distribution over the state-action space of this MDP has to fulfill to be in $\safeD$ are then visualized in the right-most column of \Cref{fig:working-example}. Note that the constraints are over three-dimensional vectors. However, because $D$ is a probability distribution, it must live in a two-dimensional subspace of this three-dimensional space, and using the identity $d_3 = 1 - d_1 - d_2$ we can transform the constraints as follows:
\newcommand*{\vertbar}{\rule[-1ex]{0.5pt}{3.5ex}}
\begin{equation*}
    \left[
      \begin{array}{ccc}
        \vspace{0.2em} \vertbar & \vertbar & \vertbar \\
        \vspace{0.2em}
        m_{1}    & m_{2}    & m_{3}    \\
        \vertbar & \vertbar & \vertbar 
      \end{array}
    \right]\ \cdot\ \begin{bmatrix} d_1\\
    d_2\\
    d_3\end{bmatrix}\ >\ \left[
      \begin{array}{c}
        \vspace{0.2em}\vertbar\\
        \vspace{0.2em}b\\
        \vertbar 
      \end{array}
    \right]\quad \Longleftrightarrow\quad \left[
      \begin{array}{cc}
        \vspace{0.2em}\vertbar & \vertbar \\
        \vspace{0.2em}m_{1} - m_3    & m_{2} - m_3 \\
        \vertbar & \vertbar 
      \end{array}
    \right]\ \cdot\ \begin{bmatrix} d_1\\
    d_2\end{bmatrix}\ >\ \left[
      \begin{array}{c}
        \vspace{0.2em}\vertbar\\
        \vspace{0.2em}b - m_3\\
        \vertbar 
      \end{array}
    \right]
\end{equation*}

The brown triangle in \Cref{fig:working-example} depicts the 2d-probability simplex of all distributions over the three actions of the MDP. 

Note that constraint \textcircled{a} is a redundant constraint that is already covered by the constraint \textcircled{d} and the border of the simplex. It would therefore be possible to disregard the computation of such constraints entirely, which could speed up the execution of \Cref{alg:construct_matrix_M}. In the next section, we discuss this possibility, as well as other potential directions in which we can extend \Cref{theorem:safe_linear_constraints}.

\subsubsection{Building up on~\texorpdfstring{\Cref{theorem:safe_linear_constraints}}{TheoremRef}}\label{sec:building_up_on_M}

There are multiple ways how future work can build up on the results of \Cref{theorem:safe_linear_constraints}:

\textbf{Finding sufficient conditions for safety that require less information about the true reward function:} It would be very interesting to investigate whether there exists some subset of the set of safe data distributions for which it is possible to more easily determine membership. This could be helpful in practice, as knowing that a provided data distribution is safe directly yields safety guarantees for the resulting optimal policy.

\textbf{Developing faster methods to construct M:} While the algorithm we provide above runs in exponential time it is unclear whether this has to be the case. The set of vectors that are computed by our algorithm is redundant in the sense that some elements can be dropped as the conditions they encode are already covered by other rows of M. Depending on what fraction of computed elements are redundant it might be possible to develop an algorithm that prevents the computation of redundant rows and can therefore drastically reduce computation time.
Alternatively, it would be interesting to develop fast algorithms to compute only parts of M. This could be especially interesting to quickly prove the unsafety of a data distribution, which only requires that a single constraint is violated.

\textbf{Extending \Cref{theorem:safe_linear_constraints} to the regularized policy optimization case:} This would allow one to extend the use case we described above to an even wider variety of reward learning algorithms, such as RLHF.

\textbf{A theoretical baseline (a broader view on the previous point):} Most of the options above reveal the properties of the “baseline algorithm” of reinforcement learning under unknown rewards: First, a reward model is trained, and second, a policy is optimized against the trained reward model. The matrix M is valid for the simplest such baseline algorithms without any regularization in either the reward model or the policy. As we mentioned in comments to other reviewers, it would be valuable to study other training schemes (e.g., regularized reward modeling, or switching back and forth between policy optimization and reward modeling on an updated data distribution), for which the set of safe data distributions (or “safe starting conditions”) is likely more favorable than for the baseline case. Then, similar to how empirical work compares new algorithms empirically against baseline algorithms, we hope our work can be a basis to theoretically study improved RL algorithms under unknown rewards, e.g. by deriving a more favorable analog of the matrix M and comparing it with our work.

\subsection{Existence of negative results in the RLHF setting}

\subsubsection{Generalization of the error measurement: Proofs}\label{sec:generalization_closeness}

In this subsection we test the extent to which the results of the previous section generalize to different distance definitions. To ensure compatibility with the positive results of \Cref{sec:Safe_Optimization_Choice_Probabilities}, we consider MDPs with finite time horizon $T$. 
 In this setting, trajectories are defined as a finite list of states and actions: $\xi = s_0,a_0,s_1,...,a_{T-1}$.
Let $\Xi$ bet the set of all trajectories of length $T$. As in the previous sections, $\RLReturn: \Xi \to \Reals$ denotes the trajectory return function, defined as:
\begin{equation*}
    \RLReturn(\xi) = \sum_{t=0}^{T-1}\gamma^t \cdot R(s_t,a_t)
\end{equation*}

\begin{proposition}\label{lemma:closeness_transitions_trajectories}
    Given an MDP $\MDP$, a data sampling policy $\policy: \States \to \DistributionsOverSet{\Actions}$ and a second reward function $\hat\reward: \SxA \to \Reals$, we can upper bound the expected difference in trajectory evaluation as follows:
    \begin{equation}
        \Expect{\xi \sim \policy}{|\RLReturn_\reward(\xi) - \RLReturn_{\hat\reward}(\xi)|}\ \le\ \frac{1- \gamma^T}{1-\gamma} \cdot \Expect{(s,a) \sim \D{\policy}}{|\reward(s,a) - \hat\reward(s,a)|}
    \end{equation}
    where $\D{\policy} = \frac{1-\gamma}{1- \gamma^T} \cdot \eta^\policy$.
\end{proposition}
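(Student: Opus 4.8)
The plan is to reduce everything to the pointwise definition of the return function and then exploit the linearity of expectation together with the triangle inequality. First I would write out the difference in returns along a fixed trajectory $\Trajectory = s_0, a_0, \dots, a_{T-1}$ directly from the definition $\RLReturn_\reward(\Trajectory) = \sum_{t=0}^{T-1} \discount^t \reward(s_t, a_t)$, obtaining
\begin{equation*}
  \RLReturn_\reward(\Trajectory) - \RLReturn_{\hat\reward}(\Trajectory) = \sum_{t=0}^{T-1} \discount^t \bigl(\reward(s_t, a_t) - \hat\reward(s_t, a_t)\bigr).
\end{equation*}
Applying the triangle inequality (and using $\discount^t \geq 0$) gives the pointwise bound $|\RLReturn_\reward(\Trajectory) - \RLReturn_{\hat\reward}(\Trajectory)| \leq \sum_{t=0}^{T-1} \discount^t |\reward(s_t, a_t) - \hat\reward(s_t, a_t)|$, which holds for every individual trajectory.

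Next I would take the expectation over $\Trajectory \sim \policy$ on both sides and move the expectation inside the finite sum by linearity, so that
\begin{equation*}
  \Expect{\Trajectory \sim \policy}{|\RLReturn_\reward(\Trajectory) - \RLReturn_{\hat\reward}(\Trajectory)|} \leq \sum_{t=0}^{T-1} \discount^t \cdot \Expect{\Trajectory \sim \policy}{|\reward(s_t, a_t) - \hat\reward(s_t, a_t)|}.
\end{equation*}
The key bookkeeping step is then to rewrite each term-$t$ expectation as a sum over state-action pairs weighted by the time-$t$ visitation probability, $\Expect{\Trajectory \sim \policy}{|\reward(s_t, a_t) - \hat\reward(s_t, a_t)|} = \sum_{(s,a)} P(s_t = s, a_t = a \mid \Trajectory \sim \policy) \cdot |\reward(s,a) - \hat\reward(s,a)|$, and to swap the order of the (finite) sums over $t$ and over $(s,a)$.

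After swapping, the inner sum over $t$ collapses into the finite-horizon occupancy measure $\eta^\policy(s,a) = \sum_{t=0}^{T-1} \discount^t P(s_t = s, a_t = a \mid \policy)$, yielding $\sum_{(s,a)} \eta^\policy(s,a) \cdot |\reward(s,a) - \hat\reward(s,a)|$. Finally I would substitute the defining relation $\eta^\policy = \frac{1 - \discount^T}{1 - \discount} \cdot \D{\policy}$ to factor out the constant $\frac{1 - \discount^T}{1 - \discount}$ and recognize the remaining sum as $\Expect{(s,a) \sim \D{\policy}}{|\reward(s,a) - \hat\reward(s,a)|}$, which is exactly the claimed bound. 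I expect no genuine obstacle here: the argument is a triangle inequality followed by a Fubini-style interchange of finite sums, and the only point requiring mild care is confirming that $\D{\policy}$ is a valid distribution (i.e.\ that $\sum_{(s,a)} \eta^\policy(s,a) = \sum_{t=0}^{T-1} \discount^t = \frac{1 - \discount^T}{1 - \discount}$), which justifies treating the final sum as a genuine expectation under $\D{\policy}$.
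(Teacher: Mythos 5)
Your proposal is correct and follows essentially the same route as the paper's proof: a pointwise triangle inequality on the return difference, an interchange of the finite sums over $t$ and $(s,a)$ to recognize the occupancy measure $\eta^\policy(s,a) = \sum_{t=0}^{T-1}\gamma^t P(s_t=s, a_t=a \mid \policy)$, and the normalization $\eta^\policy = \frac{1-\gamma^T}{1-\gamma}\cdot \D{\policy}$. Your closing check that $\D{\policy}$ sums to one is a sensible addition that the paper leaves implicit, but otherwise the two arguments coincide.
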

\begin{proof}
    This follows from the subsequent derivation:
    \begin{align*}
        \Expect{\xi \sim \policy}{|\RLReturn_\reward(\xi) - \RLReturn_{\hat\reward}(\xi)|}\ &=\ \sum_{\xi \in \Xi}P(\xi \mid \policy) \cdot \left|\sum_{t=0}^{T-1}\gamma^t \cdot (\reward(s_t,a_t) - \hat\reward(s_t, a_t))\right|\\
        &\le\ \sum_{\xi \in \Xi}P(\xi\ |\ \policy) \cdot \sum_{t=0}^{T-1}\gamma^t \cdot \left|\reward(s_t,a_t) - \hat\reward(s_t, a_t)\right|\\
        &=\sum_{(s,a) \in \SxA}\left(\sum_{t=0}^{T-1}\gamma^t \cdot P(s_t = s, a_t=a\ |\ \policy)\right) \cdot \left|\reward(s,a) - \hat\reward(s, a)\right|\\
        &=\sum_{(s,a) \in \SxA} \eta^\policy(s,a) \cdot \left|\reward(s,a) - \hat\reward(s, a)\right|\\
        &= \frac{1-\gamma^T}{1-\gamma} \cdot \Expect{(s,a) \sim \D{\policy}}{\left|\reward(s,a) - \hat\reward(s, a)\right|}
    \end{align*}
\end{proof}

Given some reward function $\reward$, define the probability of trajectory $\xi_1$ being preferred over trajectory $\xi_2$ to be:
\begin{equation*}
    p_\reward(\xi_1 \succ \xi_2)\ =\ \sigma(\RLReturn_\reward(\xi_1) - \RLReturn_{\reward}(\xi_2))\ =\ \frac{\exp(\RLReturn_\reward(\xi_1))}{\exp(\RLReturn_\reward(\xi_1)) + \exp(\RLReturn_\reward(\xi_2))}.
\end{equation*}
Then, the following statement holds:
\begin{proposition}\label{lemma:closeness_trajectory_choice_app}
    Given an MDP $\MDP$, a data sampling policy $\policy: \States \to \DistributionsOverSet{\Actions}$ and a second reward function $\hat\reward: \SxA \to \Reals$, we can upper bound the expected KL divergence over trajectory preference distributions as follows:
    \begin{equation}\label{eq:closeness_trajectory_choice_app}
        \Expect{\xi_1,\xi_2 \sim \policy \times \policy}{\DKL{p_\reward(\cdot | \xi_1, \xi_2)}{p_{\hat\reward}(\cdot | \xi_1, \xi_2)} }\ \le\ 2 \cdot \Expect{\xi \sim \policy}{|\RLReturn_\reward(\xi) - \RLReturn_{\hat\reward}(\xi)|},
    \end{equation}
\end{proposition}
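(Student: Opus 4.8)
The plan is to reduce the claim to a one-dimensional inequality comparing two Bernoulli distributions, and then to finish with the triangle inequality and linearity of expectation.

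First I would fix the pair of trajectories $\xi_1, \xi_2$ and note that $p_\reward(\cdot \mid \xi_1, \xi_2)$ is a Bernoulli distribution over the two outcomes $\{\xi_1 \succ \xi_2,\ \xi_2 \succ \xi_1\}$ whose success probability is $\sigma(a)$ for $a \coloneqq \RLReturn_\reward(\xi_1) - \RLReturn_\reward(\xi_2)$, and likewise $p_{\hat\reward}(\cdot \mid \xi_1, \xi_2)$ is Bernoulli with parameter $\sigma(b)$ for $b \coloneqq \RLReturn_{\hat\reward}(\xi_1) - \RLReturn_{\hat\reward}(\xi_2)$. Hence the integrand depends only on the two scalars $a$ and $b$.

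The key step is the scalar bound
\[
  \DKL{\mathrm{Bern}(\sigma(a))}{\mathrm{Bern}(\sigma(b))}\ \le\ |a - b| \qquad \text{for all } a, b \in \Reals.
\]
To establish it I would fix $a$, set $f(b) \coloneqq \DKL{\mathrm{Bern}(\sigma(a))}{\mathrm{Bern}(\sigma(b))}$, and differentiate in $b$. Using $\sigma'(b) = \sigma(b)\bigl(1 - \sigma(b)\bigr)$, the derivatives of $\log \sigma(b)$ and $\log\bigl(1 - \sigma(b)\bigr)$ equal $1 - \sigma(b)$ and $-\sigma(b)$ respectively; the $b$-independent entropy terms drop out, and the two surviving terms collapse to the clean expression $f'(b) = \sigma(b) - \sigma(a)$. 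Since $f(a) = 0$, integrating gives $f(b) = \int_a^b \bigl(\sigma(t) - \sigma(a)\bigr)\,dt$, and the elementary bound $|\sigma(t) - \sigma(a)| \le 1$ yields $f(b) \le |a - b|$, as desired.

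It then remains to assemble the pieces. Writing $\delta_i \coloneqq \RLReturn_\reward(\xi_i) - \RLReturn_{\hat\reward}(\xi_i)$, we have $a - b = \delta_1 - \delta_2$, so the scalar bound combined with the triangle inequality gives $\DKL{p_\reward(\cdot \mid \xi_1,\xi_2)}{p_{\hat\reward}(\cdot \mid \xi_1,\xi_2)} \le |\delta_1| + |\delta_2|$. Taking the expectation over $\xi_1, \xi_2 \sim \policy \times \policy$ and using that the two trajectories are i.i.d.\ under $\policy$, each term contributes $\Expect{\xi \sim \policy}{|\RLReturn_\reward(\xi) - \RLReturn_{\hat\reward}(\xi)|}$, which sums to the factor of $2$ in the claimed bound. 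The main obstacle is the scalar Bernoulli-KL estimate, and specifically checking that the derivative telescopes to $\sigma(b) - \sigma(a)$; the remaining manipulations are routine.
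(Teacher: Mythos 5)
Your proof is correct, and it shares the paper's outer skeleton: both arguments reduce the claim to the pointwise scalar bound $\DKL{\mathrm{Bern}(\sigma(a))}{\mathrm{Bern}(\sigma(b))} \le |a-b|$ and then conclude via the triangle inequality $|a-b| = |\delta_1 - \delta_2| \le |\delta_1| + |\delta_2|$ together with the i.i.d.\ sampling of $\xi_1, \xi_2$, which produces the factor of $2$. Where you genuinely diverge is in the proof of the scalar bound, and your route is in fact \emph{sounder} than the paper's. The paper's proof (Proposition~\ref{lemma:closeness_trajectory_choice_app}) begins by dropping the coefficients $\sigma(\cdot) \le 1$ in front of \emph{both} logarithms, i.e.\ it asserts $\sigma(a)\log\frac{\sigma(a)}{\sigma(b)} + \sigma(-a)\log\frac{\sigma(-a)}{\sigma(-b)} \le \log\frac{\sigma(a)}{\sigma(b)} + \log\frac{\sigma(-a)}{\sigma(-b)}$, and then simplifies the right-hand side to $a - b + 2\log\frac{1+e^{b}}{1+e^{a}}$ and closes with a case analysis. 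That first step is invalid whenever one of the two logarithms is negative, since shrinking the coefficient of a negative term \emph{increases} the sum: for $a = \log 9$, $b = 0$, the KL divergence is $0.9\log 1.8 + 0.1\log 0.2 \approx 0.37$, while the claimed majorant equals $\log 0.36 \approx -1.02$, so the paper's intermediate inequality fails even though the final bound $0.37 \le |a - b| \approx 2.20$ holds. Your calculus argument avoids this entirely: the derivative computation $f'(b) = -\sigma(a)\bigl(1-\sigma(b)\bigr) + \bigl(1-\sigma(a)\bigr)\sigma(b) = \sigma(b) - \sigma(a)$ is correct, $f(a) = 0$ gives $f(b) = \int_a^b \bigl(\sigma(t) - \sigma(a)\bigr)\,dt$, and $|\sigma(t) - \sigma(a)| \le 1$ yields the bound, with no case analysis needed. (For reference, the paper's chain can be repaired within its own algebraic style by bounding the convex combination by the \emph{maximum} of the two logarithms and using that $\log \sigma$ is $1$-Lipschitz; but as written it has a gap that your argument closes.) A further small benefit of your approach: since $\sigma$ is $\tfrac14$-Lipschitz, the same integral representation gives $f(b) \le \tfrac{1}{8}(a-b)^2$ for free, a locally quadratic refinement that the paper's method does not expose.
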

\begin{proof}
    The right-hand-side of \Cref{eq:closeness_trajectory_choice_app} can be lower bounded as follows:
    \begin{align}
        2 \cdot\ &\ \Expect{\xi \sim \policy}{|\RLReturn_\reward(\xi) - \RLReturn_{\hat\reward}(\xi)|}\\
        &\label{line:closeness_trajectory_choice1}=\ \Expect{\xi_1,\xi_2 \sim \policy \times \policy}{|\RLReturn_\reward(\xi_1) - \RLReturn_{\hat\reward}(\xi_1)| + |\RLReturn_\reward(\xi_2) - \RLReturn_{\hat\reward}(\xi_2)|}\\
        &\label{line:closeness_trajectory_choice2}\ge\ \Expect{\xi_1,\xi_2 \sim \policy \times \policy}{\left|(\RLReturn_\reward(\xi_1) - \RLReturn_\reward(\xi_2)) - (\RLReturn_{\hat\reward}(\xi_1) - \RLReturn_{\hat\reward}(\xi_2))\right|}\\
        &\label{line:closeness_trajectory_choice3}=\ \Expect{\xi_1, \xi_2 \sim \policy \times \policy}{|x_{\xi_1, \xi_2} - y_{\xi_1, \xi_2}|},
    \end{align}
    where from \Cref{line:closeness_trajectory_choice1} to \Cref{line:closeness_trajectory_choice2} we used the triangle inequality and did some rearranging of the terms, and from \Cref{line:closeness_trajectory_choice2} to \Cref{line:closeness_trajectory_choice3} we simplified the notation a bit by defining $x_{\xi_1,\xi_2} := \RLReturn_\reward(\xi_1) - \RLReturn_\reward(\xi_2)$ and $y_{\xi_1,\xi_2} := \RLReturn_{\hat\reward}(\xi_1) - \RLReturn_{\hat\reward}(\xi_2)$.
    
    Similarly, we can reformulate the left-hand-side of \Cref{eq:closeness_trajectory_choice_app} as follows:
    \begin{align}
        &\Expect{\xi_1,\xi_2 \sim \policy \times \policy}{\DKL{p_\reward(\cdot | \xi_1, \xi_2)}{p_{\hat\reward}(\cdot | \xi_1, \xi_2)}}\\
        &= \Expect{\xi_1,\xi_2 \sim \policy \times \policy}{ \sum_{\substack{i,j \in \{1,2\}\\ i \ne j}}p_R(\xi_i \succ \xi_j|\xi_1,\xi_2) \cdot \log\left(\frac{p_R(\xi_i \succ \xi_j|\xi_1,\xi_2)}{p_{\hat R}(\xi_i \succ \xi_j | \xi_1,\xi_2)}\right)}\\
        &= \Expect{\xi_1,\xi_2 \sim \policy \times \policy}{ \sum_{\substack{i,j \in \{1,2\}\\ i \ne j}}\sigma(\RLReturn_{\reward}(\xi_i) - \RLReturn_{\reward}(\xi_j)) \cdot \log\left(\frac{\sigma(\RLReturn_{\reward}(\xi_i) - \RLReturn_{\reward}(\xi_j))}{\sigma(\RLReturn_{\hat\reward}(\xi_i) - \RLReturn_{\hat\reward}(\xi_j))}\right)}\\
        &\label{line:closeness_trajectory_choice_lhs}= \Expect{\xi_1,\xi_2 \sim \policy \times \policy}{\sum_{\substack{i,j \in \{1,2\}\\ i \ne j}}\sigma(x_{\xi_i, \xi_j}) \cdot \log\left(\frac{\sigma(x_{\xi_i, \xi_j})}{\sigma(y_{\xi_i, \xi_j})}\right)}.
    \end{align}
    We will now prove the lemma by showing that for all $(\xi_1, \xi_2) \in \Xi\times\Xi$ we have:
    \begin{equation}\label{eq:closeness_trajectory_choice2_app}
        \sum_{\substack{i,j \in \{1,2\}\\ i \ne j}}\sigma(x_{\xi_i, \xi_j}) \cdot \log\left(\frac{\sigma(x_{\xi_i, \xi_j})}{\sigma(y_{\xi_i, \xi_j})}\right)\ \le\ |x_{\xi_1, \xi_2} - y_{\xi_1, \xi_2}|,
    \end{equation}
    from which it directly follows that \Cref{line:closeness_trajectory_choice_lhs} is smaller than \Cref{line:closeness_trajectory_choice3}.

    Let $(\xi_1, \xi_2) \in \Xi \times \Xi$ be chosen arbitrarily. We can then upper bound the left-hand side of \Cref{eq:closeness_trajectory_choice2_app} as follows:

    \begin{align}
        \sigma(x_{\xi_1, \xi_2}) \cdot \log\left(\frac{\sigma(x_{\xi_1, \xi_2})}{\sigma(y_{\xi_1, \xi_2})}\right)\ &+\ \sigma(x_{\xi_2, \xi_1}) \cdot \log\left(\frac{\sigma(x_{\xi_2, \xi_1})}{\sigma(y_{\xi_2, \xi_1})}\right)\\
        \le\ & \log\left(\frac{\sigma(x_{\xi_1, \xi_2})}{\sigma(y_{\xi_1, \xi_2})}\right)\ +\ \log\left(\frac{\sigma(x_{\xi_2, \xi_1})}{\sigma(y_{\xi_2, \xi_1})}\right)\\
        =\ & \log\left(\frac{\sigma\bigl(x_{\xi_1, \xi_2}\bigr) \cdot \sigma\bigl(-x_{\xi_1, \xi_2}\bigr)}{\sigma\bigl(y_{\xi_1, \xi_2}\bigr) \cdot \sigma\bigl(-y_{\xi_1, \xi_2}\bigr)}\right)\\
        =\ & \log\left(\frac{\exp(x_{\xi_1, \xi_2}) \cdot (1 + \exp(y_{\xi_1, \xi_2}))^2}{\exp(y_{\xi_1, \xi_2}) \cdot (1 + \exp(x_{\xi_1, \xi_2}))^2}\right)\\
        =\ & x_{\xi_1, \xi_2} - y_{\xi_1, \xi_2} + 2 \cdot \log\left(\frac{1+\exp(y_{\xi_1, \xi_2})}{1 + \exp(x_{\xi_1, \xi_2})}\right),
    \end{align}
    where we used the fact that $x_{\xi_1, \xi_2} = \RLReturn_\reward(\xi_1) - \RLReturn_\reward(\xi_2)$ and therefore, $-x_{\xi_1, \xi_2} = x_{\xi_2, \xi_1}$ (similar for $y_{\xi_1, \xi_2}$). We now claim that for all $(\xi_1, \xi_2) \in \Xi \times \Xi$ it holds that:
    \begin{equation}\label{eq:closeness_trajectory_choice3_app}
        x_{\xi_1, \xi_2} - y_{\xi_1, \xi_2} + 2 \cdot \log\left(\frac{1+\exp(y_{\xi_1, \xi_2})}{1 + \exp(x_{\xi_1, \xi_2})}\right)\ \le\ |x_{\xi_1, \xi_2} - y_{\xi_1, \xi_2}|
    \end{equation}
    We prove this claim via proof by cases:

    \underline{$x_{\xi_1, \xi_2} > y_{\xi_1, \xi_2}$:} In this case we have $|x_{\xi_1, \xi_2} - y_{\xi_1, \xi_2}| = x_{\xi_1, \xi_2} - y_{\xi_1, \xi_2}$ and \Cref{eq:closeness_trajectory_choice3_app} becomes:
    \begin{equation*}
        2 \cdot \log\left(\frac{1+\exp(y_{\xi_1, \xi_2})}{1 + \exp(x_{\xi_1, \xi_2})}\right)\ \le\ 0.
    \end{equation*}
    And since $x_{\xi_1, \xi_2} > y_{\xi_1, \xi_2}$ the fraction inside the logarithm is smaller than 1, this equation must hold.

    \underline{$x_{\xi_1, \xi_2} = y_{\xi_1, \xi_2}$:} In this case, \Cref{eq:closeness_trajectory_choice3_app} reduces to $0 \ge 0$ which is trivially true.

    \underline{$x_{\xi_1, \xi_2} < y_{\xi_1, \xi_2}$:} In this case, we have $|x_{\xi_1, \xi_2} - y_{\xi_1, \xi_2}| = y_{\xi_1, \xi_2} - x_{\xi_1, \xi_2}$ and we can reformulate \Cref{eq:closeness_trajectory_choice3_app} as follows:
    \begin{align*}
        x_{\xi_1, \xi_2} - y_{\xi_1, \xi_2} + 2 \cdot \log\left(\frac{1+\exp(y_{\xi_1, \xi_2})}{1 + \exp(x_{\xi_1, \xi_2})}\right)\ &\le\ y_{\xi_1, \xi_2} - x_{\xi_1, \xi_2}\\
        \Longleftrightarrow\quad \frac{1 + \exp(y_{\xi_1, \xi_2})}{1 + \exp(x_{\xi_1, \xi_2})}\ &\le\ \frac{\exp(y_{\xi_1, \xi_2})}{\exp(x_{\xi_1, \xi_2})}\\
        \Longleftrightarrow\quad \exp(x_{\xi_1, \xi_2}) \ &\le\  \exp(y_{\xi_1, \xi_2}).
    \end{align*}
    Because we assume that $x_{\xi_1, \xi_2} < y_{\xi_1, \xi_2}$, the last equation, and therefore also the first, must be true.

    Combining all the previous statements concludes the proof.
\end{proof}

Finally, in some RLHF scenarios, one prefers to only compare trajectories with a common starting state. In the last lemma, we upper-bound the expected error in choice distributions with trajectories that share a common starting state by the expected error in choice distributions with arbitrary trajectories:
\begin{proposition}\label{lemma:closeness_common_prefix}
    Given an MDP $\MDP$, a data sampling policy $\policy: \States \to \DistributionsOverSet{\Actions}$ and a second reward function $\hat\reward: \SxA \to \Reals$, we can upper bound the expected KL divergence of preference distributions over trajectories with a common starting state as follows:
    \begin{equation}\label{eq:closeness_common_prefix_app}
        \Expect{\substack{s_0 \sim \InitStateDistribution,\\ \xi_1,\xi_2 \sim \policy(s_0)}}{\DKL{p_\reward(\cdot | \xi_1, \xi_2)}{p_{\hat\reward}(\cdot | \xi_1, \xi_2)} }\ \le\ \frac{1}{\underset{\substack{s' \in \States\\
        \InitStateDistribution(s') > 0}}{\min} \InitStateDistribution(s')}\Expect{\xi_1,\xi_2 \sim \policy \times \policy}{\DKL{p_\reward(\cdot | \xi_1, \xi_2)}{p_{\hat\reward}(\cdot | \xi_1, \xi_2)} }.
    \end{equation}
\end{proposition}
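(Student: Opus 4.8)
The plan is to exploit the nonnegativity of the KL divergence together with the fact that, in this finite-horizon setting, every trajectory $\xi \in \Xi$ has a deterministic first state. First I would abbreviate $f(\xi_1, \xi_2) \coloneqq \DKL{p_\reward(\cdot \mid \xi_1, \xi_2)}{p_{\hat\reward}(\cdot \mid \xi_1, \xi_2)}$ and record that $f \geq 0$ since it is a KL divergence. Writing $s(\xi)$ for the starting state of $\xi$, let $P(\xi \mid s_0)$ denote the probability that $\policy$ generates $\xi$ conditioned on starting in $s_0$ (this is zero unless $s(\xi) = s_0$), and let $P(\xi) = \InitStateDistribution(s(\xi)) \cdot P(\xi \mid s(\xi))$ be the marginal trajectory probability under $\InitStateDistribution$ and $\policy$. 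The left-hand side of~\eqref{eq:closeness_common_prefix_app} then expands as $\sum_{s_0} \InitStateDistribution(s_0) \sum_{\xi_1, \xi_2} P(\xi_1 \mid s_0) P(\xi_2 \mid s_0)\, f(\xi_1, \xi_2)$, where only pairs with $s(\xi_1) = s(\xi_2) = s_0$ contribute, while the expectation appearing on the right-hand side (before the prefactor) is $\sum_{\xi_1, \xi_2} P(\xi_1) P(\xi_2)\, f(\xi_1, \xi_2)$.

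The crucial step is to lower-bound this right-hand expectation by discarding every trajectory pair whose two starting states differ; this is legitimate precisely because $f \geq 0$. Concretely, I would write $\sum_{\xi_1, \xi_2} P(\xi_1) P(\xi_2)\, f(\xi_1, \xi_2) \geq \sum_{s_0} \sum_{s(\xi_1) = s(\xi_2) = s_0} P(\xi_1) P(\xi_2)\, f(\xi_1, \xi_2)$. On each retained pair the factorization $P(\xi_i) = \InitStateDistribution(s_0) \cdot P(\xi_i \mid s_0)$ produces a factor $\InitStateDistribution(s_0)^2$, so the bound becomes $\sum_{s_0} \InitStateDistribution(s_0)^2\, g(s_0)$, where $g(s_0) \coloneqq \sum_{s(\xi_1) = s(\xi_2) = s_0} P(\xi_1 \mid s_0) P(\xi_2 \mid s_0)\, f(\xi_1, \xi_2) \geq 0$ is exactly the inner double-sum that appears in the expansion of the left-hand side.

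Finally I would compare the two expressions term by term. Since the left-hand side equals $\sum_{s_0} \InitStateDistribution(s_0)\, g(s_0)$, it suffices to observe that any $s_0$ with $\InitStateDistribution(s_0) = 0$ contributes nothing, while for every $s_0$ in the support we have $\InitStateDistribution(s_0) \geq \min_{s' : \InitStateDistribution(s') > 0} \InitStateDistribution(s')$, hence $\InitStateDistribution(s_0)^2 \geq \big(\min_{s' : \InitStateDistribution(s') > 0} \InitStateDistribution(s')\big) \cdot \InitStateDistribution(s_0)$. Using $g(s_0) \geq 0$, this gives the right-hand expectation $\geq \big(\min_{s'} \InitStateDistribution(s')\big) \cdot (\text{left-hand side})$, which is the claim after dividing through by the minimum.

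I do not expect a genuine analytic obstacle here; the work is entirely bookkeeping. The one point that needs care is the starting-state factorization of $P(\xi)$ and the justification that restricting to equal starting states only decreases the right-hand side (which is where nonnegativity of $f$ is essential), together with the remark that states outside the support of $\InitStateDistribution$ never enter the sums, so that the minimum is correctly taken over the support only.
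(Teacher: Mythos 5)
Your proof is correct and takes essentially the same route as the paper's: both expand the unconditional expectation over pairs of starting states, discard the terms with $s(\xi_1) \neq s(\xi_2)$ using nonnegativity of the KL divergence, and then bound $\InitStateDistribution(s)^2 \geq \bigl(\min_{s' : \InitStateDistribution(s') > 0} \InitStateDistribution(s')\bigr) \cdot \InitStateDistribution(s)$ termwise on the support. Your explicit remark that states outside the support of $\InitStateDistribution$ contribute nothing, so the minimum is correctly restricted to the support, is precisely the point the paper's final inequality implicitly relies on.
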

\begin{proof}
    Let $s_0: \Xi \to \States$ define the function which outputs the starting state $s\in \States$ of a trajectory $\xi \in \Xi$. We can then prove the lemma by directly lower-bounding the right-hand side of \Cref{eq:closeness_common_prefix_app}:
    \begin{align*}
        &\Expect{\xi_1,\xi_2 \sim \policy \times \policy}{\DKL{p_\reward(\cdot | \xi_1, \xi_2)}{p_{\hat\reward}(\cdot | \xi_1, \xi_2)} }\\
        =&\ \sum_{s_1, s_2 \in \States \times \States} \InitStateDistribution(s_1) \cdot \InitStateDistribution(s_2) \cdot \sum_{\substack{\xi_1, \xi_2 \in \Xi \times \Xi\\ s_0(\xi_1) = s_1\\ s_0(\xi_2) = s_2}} p_{\policy, \TransitionDistribution}(\xi_1 | s_1) \cdot p_{\policy, \TransitionDistribution}(\xi_2 | s_2) \cdot \DKL{p_\reward(\cdot | \xi_1, \xi_2)}{p_{\hat\reward}(\cdot | \xi_1, \xi_2)}\\
        \begin{split}
            =&\ \sum_{s_1 = s_2} \InitStateDistribution(s_1) \cdot \InitStateDistribution(s_2) \cdot \sum_{\substack{\xi_1, \xi_2 \in \Xi \times \Xi\\ s_0(\xi_1) = s_1\\ s_0(\xi_2) = s_2}} p_{\policy, \TransitionDistribution}(\xi_1 | s_1) \cdot p_{\policy, \TransitionDistribution}(\xi_2 | s_2) \cdot \DKL{p_\reward(\cdot | \xi_1, \xi_2)}{p_{\hat\reward}(\cdot | \xi_1, \xi_2)}\\
            &\ + \sum_{s_1 \ne s_2} \InitStateDistribution(s_1) \cdot \InitStateDistribution(s_2) \cdot \sum_{\substack{\xi_1, \xi_2 \in \Xi \times \Xi\\ s_0(\xi_1) = s_1\\ s_0(\xi_2) = s_2}} p_{\policy, \TransitionDistribution}(\xi_1 | s_1) \cdot p_{\policy, \TransitionDistribution}(\xi_2 | s_2)\cdot \DKL{p_\reward(\cdot | \xi_1, \xi_2)}{p_{\hat\reward}(\cdot | \xi_1, \xi_2)}
        \end{split}\\
        \ge&\ \sum_{s_1 = s_2} \InitStateDistribution(s_1) \cdot \InitStateDistribution(s_2) \cdot \sum_{\substack{\xi_1, \xi_2 \in \Xi \times \Xi\\ s_0(\xi_1) = s_1\\ s_0(\xi_2) = s_2}} p_{\policy, \TransitionDistribution}(\xi_1 | s_1) \cdot p_{\policy, \TransitionDistribution}(\xi_2 | s_2) \cdot \DKL{p_\reward(\cdot | \xi_1, \xi_2)}{p_{\hat\reward}(\cdot | \xi_1, \xi_2)}\\
        \ge&\ \min_{\substack{s' \in \States\\
        \InitStateDistribution(s') > 0}} \InitStateDistribution(s') \cdot \sum_{s \in \States} \InitStateDistribution(s) \cdot \sum_{\substack{\xi_1, \xi_2 \in \Xi \times \Xi\\ s_0(\xi_1) = s\\ s_0(\xi_2) = s}} p_{\policy, \TransitionDistribution}(\xi_1 | s) \cdot p_{\policy, \TransitionDistribution}(\xi_2 | s) \cdot \DKL{p_\reward(\cdot | \xi_1, \xi_2)}{p_{\hat\reward}(\cdot | \xi_1, \xi_2)}
        \\
        =&\ \min_{\substack{s' \in \States\\
        \InitStateDistribution(s') > 0}} \InitStateDistribution(s') \cdot \Expect{\substack{s_0 \sim \InitStateDistribution,\\ \xi_1,\xi_2 \sim \policy(s_0)}}{\DKL{p_\reward(\cdot | \xi_1, \xi_2)}{p_{\hat\reward}(\cdot | \xi_1, \xi_2)} },
    \end{align*}
    where we used the fact that the KL divergence is always positive.
\end{proof}

\subsubsection{RLHF bandit formulation}
RLHF, especially in the context of large language models, is usually modeled in a \emph{contextual bandit} setting (~\citep{ziegler2019fine,stiennon2020learning,bai2022training,ouyang2022training,rafailov2023direct}). A \emph{contextual bandit} $\MBandit$ is defined by a set of states $\States$, a set of actions $\Actions$, a data distribution $\InitStateDistribution \in \DistributionsOverSet{\States}$, and a reward function $\reward: \SxA \to \Reals$. The goal is to learn a policy $\policy: \States \to \DistributionsOverSet{\Actions}$ which maximizes the expected return $\J(\policy) = \Expect{s \sim \InitStateDistribution, a \sim \policy(\cdot|s)}{\reward(s, a)}$. In the context of language models, $\States$ is usually called the set of prompts/contexts, and $\Actions$ the set of responses. Just as for the MDP case, we will assume for all our contextual bandits that $\max\J - \min \J > 0$ since the reward function would otherwise be trivial.
 We model the human preference distribution over the set of answers $A$ using the Bradley-Terry model ~\citep{bradley1952rank}. Given a prompt $s \in S$ and two answers $a_1, a_2 \in A$, then the probability that a human supervisor prefers answer $a_1$ to answer $a_2$ is modelled as:
\begin{equation}
    p_\reward(a_1 \succ a_2|\ s)\ =\ \frac{\exp(\reward(s,a_1))}{\exp(\reward(s,a_1))\ +\ \exp(\reward(s,a_2))},
\end{equation}
where $\reward: \SxA \to \Reals$ is assumed to be the true, underlying reward function of the human.

RLHF is usually done with the following steps:
\begin{enumerate}
    \item \textbf{Supervised finetuning:} Train/Fine-tune a language model $\policy_\mref$ using supervised training.
    \item \textbf{Reward learning:} Given a data distribution over prompts $\mu \in \Delta(S)$, use $\mu$ and $\policy_\mref$ to sample a set of transitions $(s,a_0,a_1) \in \SxA\times\Actions$ where $s \sim \mu$ and $a_0,a_1 \sim \policy_\mref(\cdot|s)$. Use this set of transitions to train a reward model $\hat\reward$ which minimizes the following loss:
    \begin{equation}
        \mathcal{L}_\reward(\hat\reward)\ =\ -\Expect{(s,a_0,a_1,c) \sim \mu,\policy_\mref,p_\reward}{\log\bigl(\sigma(\hat\reward(s,a_c) - \hat\reward(s,a_{1-c}))\bigr)},
    \end{equation}
    where $c \in \{0,1\}$ and $p(c = 0| s, a_0, a_1) = p_\reward(a_0 \succ a_1| s)$.
    \item \textbf{RL finetuning:} Use the trained reward model $\hat\reward$ to further finetune the language model $\policy_\mref$ using reinforcement learning. Make sure that the new model does not deviate too much from the original model by penalizing the KL divergence between the two models. This can be done by solving the following optimization problem for some $\lambda > 0$:
    \begin{equation}
        \policy = \arg\max_\policy \Expect{s \sim \mu, a \sim \policy(\cdot|s)}{\hat\reward(s,a)} - \lambda \cdot \DKL{\policy(a|s)}{\policy_\mref(a|s)} 
    \end{equation}
\end{enumerate}

\subsubsection{Safe and unsafe data distributions for RLHF}\label{sec:safe_unsafe_rlhf}

\begin{definition}[Safe- and unsafe data distributions for RLHF]\label{definition:unsafe_data_distribution_RLHF}
    For a given contextual bandit $\MBandit$, let $\epsilon > 0$, $L \in [0,1]$, $\lambda \in [0, \infty)$, and $\policy_\mref: \States \to \DistributionsOverSet{\Actions}$ an arbitrary reference policy. Similarly to \Cref{definition:unsafe_data_distribution}, we define the set of \textit{safe data distributions} $\safeRLHF{\epsilon}$ for RLHF as all $D \in \DistributionsOverSet{\SxA}$ such that for all reward functions $\hat\reward: \SxA \to \Reals$ and policies $\hat\policy:\States \to \DistributionsOverSet{\Actions}$ that satisfy the following two properties:
    \begin{enumerate}
        \item \textbf{Low expected error:} $\hat\reward$ is similar to $\reward$ in expected choice probabilities under $D$, i.e.: \begin{equation*}
            \Expect{(s,a_1,a_2) \sim D}{\DKL{p_\reward(\cdot | s, a_1, a_2)}{p_{\hat\reward}(\cdot | s, a_2, a_2)}} \le \epsilon \cdot \range{\reward}.
        \end{equation*}\vspace{-2em}
        \item \textbf{Optimality:} $\hat\policy$ is optimal with respect to $\hat\reward$, i.e.: \begin{equation*}\hat{\policy}\ \in\ \argmax_{\policy} \J_{\hat{\reward}}(\policy) - \lambda \cdot \DKL{\policy(a|s)}{\policy_\mref(a|s)}.
        \end{equation*}\vspace{-2em}
    \end{enumerate}
    we can guarantee that $\hat\policy$ has regret smaller than $L$, i.e.:
    \begin{enumerate}
    \setcounter{enumi}{2}
        \item \textbf{Low regret:} $\hat\policy$ has a regret smaller than $L$ with respect to $\reward$, i.e., $\Reg{\reward}{\hat\policy} < L$.
    \end{enumerate}
    Similarly, we define the set of \textit{unsafe data distributions} to be the complement of $\safeRLHF{\epsilon}$:
    \begin{equation*}
        \unsafeRLHF{\epsilon}\ \coloneqq\ \Big\{\ D \in \DistributionsOverSet{\SxA}\ |\ D \notin \safeRLHF{\epsilon}\Big\}.
    \end{equation*}
\end{definition}
\textit{Note: Property 1 of \Cref{definition:unsafe_data_distribution_RLHF} is commonly phrased as minimizing (with respect to $\hat\reward$) the loss $-\Expect{(s,a_1,a_2) \sim D, p_\reward}{\log(\sigma(\hat\reward(s,a_1) - \hat\reward(s,a_2)))}$ (which includes $p_\reward$, the probability that $a_1$ is the preferred action over $a_2$, in the expectation). Our version of Property 1 is equivalent to this and can be derived from the former by adding the constant (w.r.t. $\hat \reward$) term $\Expect{(s,a_1,a_2) \sim D, p_\reward}{\log(\sigma(\reward(s,a_1) -\reward(s,a_2)))}$.}

\subsubsection{Negative results}\label{sec:kl_negative_results}

In the following proofs, we will define $\policy_{\reward, \lambda}^{\rlhf}$ to be the optimal policy after doing RLHF on $\policy_\mref$ with some reward function $\reward$, i.e.,:
\begin{definition}[RLHF-optimal policy]\label{definition:rlhf_optimal}
    For any $\lambda \in \Reals_+$, reward function $\reward$ and reference policy $\policy_\mref$, we define the policy maximizing the RLHF objective by:
    \begin{equation}\label{eq:rlhf_optimal1}
        \policy_{\reward, \lambda}^{\rlhf} = \arg\max_\policy \Expect{s \sim \mu, a \sim \policy(\cdot|s)}{\reward(s,a)} - \lambda \cdot \DKL{\policy(a|s)}{\policy_\mref(a|s)} 
    \end{equation}
    $\policy_{\reward, \lambda}^{\rlhf}$ does have the following analytical definition (see Appendix A.1 of ~\citep{rafailov2023direct} for a derivation):
    \begin{equation}\label{eq:optimal_rlhf_policy}
        \policy_{\reward, \lambda}^{\rlhf}(a|s)\ \coloneqq\ \frac{\policy_\mref(a|s) \cdot \exp\left(\frac{1}{\lambda} \cdot \reward(s,a)\right)}{\sum_{a' \in \Actions} \policy_\mref(a'|s) \cdot \exp\left(\frac{1}{\lambda} \cdot \reward(s,a')\right)}.
    \end{equation}
\end{definition}
    
Before stating the next negative result, we prove a small helper lemma which states that doing RLHF with some reward function $\reward$ on a policy $\policy_\mref$ is guaranteed to improve the policy return concerning $\reward$:
\begin{lemma}\label{lemma:rlhf_guarantees_improvement}
  For any $\lambda \in \Reals_+$, reward function $\reward$ and reference policy $\policy_\mref$, it holds that:
    \begin{equation}
        \J_\reward\Bigl(\policy_{\reward, \lambda}^{\rlhf}\Bigr)\ \ge\ \J_\reward\Bigl(\policy_\mref\Bigr)
    \end{equation}
\end{lemma}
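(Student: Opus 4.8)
The plan is to exploit the defining property of $\policy_{\reward, \lambda}^{\rlhf}$ as a maximizer of the regularized objective in~\eqref{eq:rlhf_optimal1}, using the reference policy $\policy_\mref$ itself as the comparison point. The objective being maximized is $\J_\reward(\policy) - \lambda \cdot \DKL{\policy}{\policy_\mref}$, and the crucial observation is that $\policy_\mref$ is a feasible candidate in this maximization whose regularization term vanishes. The analytic form~\eqref{eq:optimal_rlhf_policy} is not needed; optimality alone drives the argument.

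Concretely, first I would invoke optimality: since $\policy_{\reward, \lambda}^{\rlhf}$ maximizes the regularized objective over all policies, its objective value is at least that attained at $\policy_\mref$, giving
\begin{equation*}
\J_\reward\bigl(\policy_{\reward, \lambda}^{\rlhf}\bigr) - \lambda \cdot \DKL{\policy_{\reward, \lambda}^{\rlhf}}{\policy_\mref} \ \geq\ \J_\reward(\policy_\mref) - \lambda \cdot \DKL{\policy_\mref}{\policy_\mref}.
\end{equation*}
The second step is to simplify the right-hand side using $\DKL{\policy_\mref}{\policy_\mref} = 0$, so it collapses to $\J_\reward(\policy_\mref)$. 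The third step is to discard the penalty on the left: because $\lambda \geq 0$ and the KL divergence is non-negative, we have $\lambda \cdot \DKL{\policy_{\reward, \lambda}^{\rlhf}}{\policy_\mref} \geq 0$, hence $\J_\reward(\policy_{\reward, \lambda}^{\rlhf}) \geq \J_\reward(\policy_{\reward, \lambda}^{\rlhf}) - \lambda \cdot \DKL{\policy_{\reward, \lambda}^{\rlhf}}{\policy_\mref}$. Chaining these two bounds yields $\J_\reward(\policy_{\reward, \lambda}^{\rlhf}) \geq \J_\reward(\policy_\mref)$, which is exactly the claim.

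There is essentially no hard step here: the result is a one-line consequence of optimality together with two elementary facts, namely that the KL divergence is non-negative and vanishes on the diagonal. The only point deserving brief attention is confirming that $\policy_\mref$ is genuinely admissible as a competitor in the argmax of~\eqref{eq:rlhf_optimal1} — which it is, since the maximization ranges over all policies $\policy$ — so that substituting $\policy = \policy_\mref$ is legitimate and supplies the comparison bound used above.
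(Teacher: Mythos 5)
Your proposal is correct and follows essentially the same argument as the paper: compare the regularized objective at $\policy_{\reward,\lambda}^{\rlhf}$ against $\policy_\mref$, use $\DKL{\policy_\mref}{\policy_\mref} = 0$, and drop the non-negative KL penalty on the left. Nothing is missing.
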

\begin{proof}
  Define:
  \begin{align*}
      \JKL^{\reward}(\policy, \policy_{\mref})\ \coloneqq\ J_{\reward}\big( \policy \big) - \lambda \DKL{\policy}{\policy_{\mref}}
  \end{align*}
  Then we have
  \begin{align*}
      \J_\reward\Bigl(\policy_{\reward, \lambda}^{\rlhf}\Bigr)\ \overset{(1)}{\ge}\ \JKL^{\reward}(\policy_{\reward, \lambda}^{\rlhf}, \policy_{\mref})\ \overset{(2)}{\ge}\ \JKL^{\reward}(\policy_{\mref}, \policy_{\mref})\ =\ \J_{\reward}(\policy_{\mref}) 
  \end{align*}
  where $(1)$ follows from the non-negativity of the KL divergence and $(2)$ follows from the fact that $\policy_{\reward, \lambda}^{\rlhf}$ maximizes $\JKL^{\reward}(\policy, \policy_{\mref})$ (see \Cref{eq:rlhf_optimal1}).
\end{proof}

We begin by proving a helper lemma that we are going to use in subsequent proofs.

\begin{lemma}\label{lemma:rlhf_regret_L}
    Let $\MBandit$ be a contextual bandit.
    
    Given a lower regret bound $L \in [0,1)$, we define for every state $s\in \States$ the reward threshold:
    \begin{equation*}
        \reward_L(s) \coloneqq (1-L) \cdot \max_{a \in \Actions} \reward(s,a) + L \cdot \min_{a \in \Actions} \reward(s,a),
    \end{equation*}
    and define $a_s \in \Actions$ to be an action such that $\reward(s,a_s) < \reward_L(s)$.

    Let $\policy_\mref: \States \to \Actions$ be an arbitrary reference policy for which it holds that for every state $s\in \States$ we have $\policy_\mref(a | s) > 0$.

    Then, performing KL-regularized policy optimization with some regularization constant $\lambda \in [0, \infty)$, starting from $\policy_\mref$ and using the reward function:
    \begin{equation}\label{eq:rlhf_regret_L_reward}
        \hat\reward(s,a)\ \coloneqq\ \begin{cases}
            \reward(s,a) & \text{if } a \ne a_s\\
            c_s \in \Reals_+ & \text{if } a = a_s
        \end{cases},
    \end{equation}
    results in an optimal (w.r.t. the regularized optimization objective) policy $\hat\policy$ such that $\Reg{\reward}{\hat\policy} \ge L$, whenever the constants $c_s$ are larger than the following lower bound:
    \begin{equation*}
        c_s\ \ge\ \lambda \cdot \log\left[\frac{\sum_{a \ne a_s} (\reward(s,a) - \reward_L(s)) \cdot \policy_\mref(a|s) \cdot \exp\left(\frac{1}{\lambda} \cdot \reward(s,a)\right)}{(\reward_L(s) - \reward(s,a_s)) \cdot \policy_\mref(a_s|s)}\right].
    \end{equation*}
\end{lemma}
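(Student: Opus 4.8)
The plan is to first pin down the optimizing policy in closed form and then reduce the regret bound to a family of per-state inequalities. Since $\hat{\policy}$ maximizes the KL-regularized objective for $\hat{\reward}$ starting from $\policy_\mref$, the closed form in~\Cref{eq:optimal_rlhf_policy} gives $\hat{\policy} = \policy_{\hat{\reward}, \lambda}^{\rlhf}$, i.e.
\begin{equation*}
  \hat{\policy}(a \mid s) = \frac{\policy_\mref(a \mid s) \cdot \exp\!\left(\tfrac{1}{\lambda} \hat{\reward}(s,a)\right)}{\sum_{a' \in \Actions} \policy_\mref(a' \mid s) \cdot \exp\!\left(\tfrac{1}{\lambda} \hat{\reward}(s,a')\right)}.
\end{equation*}
I would then prove two things: that the family of per-state guarantees $\sum_{a} \hat{\policy}(a \mid s)\,\reward(s,a) \le \reward_L(s)$ implies $\Reg{\reward}{\hat{\policy}} \ge L$, and that each such guarantee is in fact equivalent to the stated lower bound on the corresponding $c_s$.

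For the aggregation step I would exploit the contextual-bandit structure: because $\J_\reward(\policy) = \sum_s \InitStateDistribution(s) \sum_a \policy(a \mid s) \reward(s,a)$ decomposes additively over states, the extremes of $\J_\reward$ are attained by choosing the best (resp.\ worst) action in each state, so $\max_\policy \J_\reward = \sum_s \InitStateDistribution(s) \max_a \reward(s,a)$ and $\min_\policy \J_\reward = \sum_s \InitStateDistribution(s) \min_a \reward(s,a)$. Writing $\J_\reward(\hat\policy \mid s) \coloneqq \sum_a \hat\policy(a\mid s)\reward(s,a)$ and using the identity $\max_a \reward(s,a) - \reward_L(s) = L \cdot (\max_a \reward(s,a) - \min_a \reward(s,a))$, the per-state bound $\J_\reward(\hat\policy\mid s) \le \reward_L(s)$ yields
\begin{equation*}
  \max_a \reward(s,a) - \J_\reward(\hat\policy \mid s) \ \ge\ L \cdot \big(\max_a \reward(s,a) - \min_a \reward(s,a)\big)
\end{equation*}
for every $s$; summing against $\InitStateDistribution(s)$ shows that the regret numerator is at least $L$ times the regret denominator, hence $\Reg{\reward}{\hat\policy} \ge L$.

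The core computation is the per-state step. Substituting the closed form of $\hat\policy$, clearing the strictly positive partition function, and splitting off the $a = a_s$ term, the inequality $\sum_a \hat\policy(a\mid s)\reward(s,a) \le \reward_L(s)$ becomes
\begin{equation*}
  \sum_{a \ne a_s} \policy_\mref(a\mid s)\exp\!\left(\tfrac{1}{\lambda}\reward(s,a)\right)\big(\reward(s,a) - \reward_L(s)\big) \ \le\ \policy_\mref(a_s\mid s)\exp\!\left(\tfrac{1}{\lambda} c_s\right)\big(\reward_L(s) - \reward(s,a_s)\big),
\end{equation*}
where I crucially weight by the \emph{true} reward $\reward$ (not $\hat\reward$), while $\hat\reward$ enters only through the policy weights, contributing $\exp(\tfrac{1}{\lambda}c_s)$ in the $a_s$ slot. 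Since $\reward(s,a_s) < \reward_L(s)$ and $\policy_\mref(a_s\mid s) > 0$, the factor $\policy_\mref(a_s\mid s)(\reward_L(s) - \reward(s,a_s))$ is strictly positive, so I may divide by it without flipping the inequality, then take logarithms and multiply by $\lambda$, recovering exactly the stated lower bound on $c_s$.

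The thing to be careful about is the sign of the numerator inside the logarithm: if every $a \ne a_s$ satisfies $\reward(s,a) \le \reward_L(s)$, the left-hand side above is non-positive, the inequality holds for \emph{any} $c_s \ge 0$, and the stated bound degenerates (a logarithm of a non-positive quantity, read as $-\infty$); the genuine content lives in states where some action exceeds the threshold. I would also observe that degenerate states with $\max_a \reward(s,a) = \min_a \reward(s,a)$ admit no valid $a_s$ but contribute zero to both the numerator and denominator of the regret, so they can be excluded without affecting the conclusion.
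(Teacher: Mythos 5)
Your proposal is correct and follows essentially the same route as the paper's proof: both substitute the closed-form KL-regularized optimum $\policy_{\hat\reward,\lambda}^{\rlhf}$, reduce the regret claim to the per-state sufficient condition $\sum_a \hat\policy(a \mid s)\,\reward(s,a) \le \reward_L(s)$, and convert that condition into the stated lower bound on $c_s$ by clearing the partition function, isolating the $a_s$ term, and taking logarithms. The only cosmetic difference is that the paper realizes $\reward_L(s)$ as the per-state value of an explicit comparison policy $\policy_L = (1-L)\cdot\policy_\top + L\cdot\policy_\bot$ with regret exactly $L$, whereas you compute the per-state regret decomposition directly from the bandit structure; your added remarks on the degenerate cases (a non-positive argument of the logarithm, and states with $\max_a \reward(s,a) = \min_a \reward(s,a)$) are sound and address edge cases the paper leaves implicit.
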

\begin{proof}
    Denote by $\policy_{\hat\reward, \lambda}^{\rlhf}$ the optimal policy for the following KL-regularized optimization problem:
    \begin{equation*}
        \policy_{\hat\reward, \lambda}^{\rlhf} \in \argmax_{\policy} \J_{\hat{\reward}}(\policy) - \lambda \cdot \DKL{\policy(a|s)}{\policy_\mref(a|s)}.
    \end{equation*}
    The closed-form solution for this optimization problem is known (see \Cref{definition:rlhf_optimal}). We prove the statement by assuming the specific definition of $\hat\reward$ (see \Cref{eq:rlhf_regret_L_reward}), as well as that $\policy_{\hat\reward, \lambda}^{\rlhf}$ has a regret at least $L$, and then working backward to derive a necessary lower bound for the individual constants $c_s$. 

    We start by defining a small helper policy. Let $\policy_\top$ be a deterministic optimal policy for $\reward$ and $\policy_\bot$ be a deterministic worst-case policy for $\reward$. We then define $\policy_L(a|s)$ as a convex combination of $\policy_\top$ and $\policy_\bot$:
    \begin{align}
        \nonumber\policy_L(a | s) &\coloneqq\ (1-L) \cdot \policy_\top(a | s) + L \cdot \policy_\bot(a | s)\\
        \label{eq:policy_regret_l}&= \begin{cases}
            1 \quad & \text{if  } \reward(s,a) = \min_{a' \in \Actions} \reward(s,a') = \max_{a' \in \Actions} \reward(s,a')\\
            1 - L \quad & \text{if  }\reward(s,a) = \max_{a' \in \Actions} \reward(s,a')\\
            L \quad & \text{if  }\reward(s,a) = \min_{a' \in \Actions} \reward(s,a')\\
            0 \quad & \text{Otherwise}
        \end{cases}
    \end{align}
    Next, we show that the regret of $\policy_L$ is $L$. Let $\eta_\top$ and $\eta_\bot$ be the corresponding occupancy measures of $\policy_\top$ and $\policy_\bot$. Then, we have:
    \begin{equation*}
        \J_\reward(\policy_L)\ =\ (1-L) \cdot \reward^T \cdot \eta_\top + L \cdot \reward^T \cdot \eta_\bot,
    \end{equation*}
    from which it directly follows that:
    \begin{align*}
        \Reg{\reward}{\policy_L}\ =\ \frac{\reward^T \cdot \eta_\top - \big[(1-L) \cdot \reward^T \cdot \eta_\top + L \cdot \reward^T \cdot \eta_\bot \big]}{\reward^T \cdot \eta_\top - \reward^T \cdot \eta_\bot}\ =\ L.
    \end{align*}
    
    Now, having defined $\policy_L$, we start the main proof. Assume that $\Reg{\reward}{\policy_{\hat\reward, \lambda}^{\rlhf}} \ge L$, which is equivalent to $\J(\policy_{\hat\reward, \lambda}^{\rlhf}) \le \J(\policy_L)$. By using the definition of the policy evaluation function, we get:
    \begin{align*}
        \J(\policy_{\hat\reward, \lambda}^{\rlhf})\ &\le\ \J(\policy_L)\\
        \Longleftrightarrow\quad \reward^T \cdot (\eta^{\policy_{\hat\reward, \lambda}^{\rlhf}} - \eta^{\policy_L})\ &\le\ 0\\
        \Longleftrightarrow\quad \sum_{(s,a) \in \SxA} \reward(s,a) \cdot \InitStateDistribution(s) \cdot (\policy_{\hat\reward, \lambda}^{\rlhf}(a| s) - \policy_L(a | s))\ &\le\ 0
    \end{align*}
    We will prove the sufficient condition, that for every $s \in \States$, we have:
    \begin{equation}\label{eq:rlhf_neg_sufficient1}
        \sum_{a \in \Actions} \reward(s,a) \cdot \left(\policy_{\hat\reward, \lambda}^{\rlhf}(a| s) - \policy_L(a | s)\right)\ \le\ 0
    \end{equation}
    Before continuing, note that with our definition of $\policy_L$ (see \Cref{eq:policy_regret_l}) we have:
    \begin{align*}
        \sum_{a \in \Actions}\reward(s,a) \cdot \policy_L(a|s)\ =\ (1-L) \cdot \max_{a \in \Actions} \reward(s,a) + L \cdot \min_{a \in \Actions} \reward(s,a)\ \eqqcolon \reward_L(s).
    \end{align*}

    Now, using this fact as well as the definitions of $\policy_L$ and $\policy_{\hat\reward, \lambda}^{\rlhf}$ (see \Cref{definition:rlhf_optimal}) we prove under which conditions \Cref{eq:rlhf_neg_sufficient1} holds:
    \begin{align*}
        &\quad\quad\quad \sum_{a \in \Actions} \reward(s,a) \cdot \left(\policy_{\hat\reward, \lambda}^{\rlhf}(a| s) - \policy_L(a | s)\right)\ \le\ 0\\
        &\Longleftrightarrow\quad \sum_{a \in \Actions} \reward(s,a) \cdot \left[\frac{\policy_\mref(a|s) \cdot \exp\left(\frac{1}{\lambda} \cdot \hat\reward(s,a)\right)}{\sum_{a' \in \Actions} \policy_\mref(a'|s) \cdot \exp\left(\frac{1}{\lambda} \cdot \hat\reward(s,a')\right)} - \policy_L(a|s)\right]\ \le\ 0\\
        &\begin{aligned}
            \ \Longleftrightarrow\quad \sum_{a \in \Actions} \reward(s,a) \cdot &\policy_\mref(a|s) \cdot \exp\left(\frac{1}{\lambda} \cdot \hat\reward(s,a)\right)\\
            &\le\ \left[\sum_{a \in \Actions}\reward(s,a) \cdot \policy_L(a|s)\right] \cdot \sum_{a' \in \Actions} \policy_\mref(a'|s) \cdot \exp\left(\frac{1}{\lambda} \cdot \hat\reward(s,a')\right)
        \end{aligned}\\
        &\Longleftrightarrow\quad \sum_{a \in \Actions} (\reward(s,a) - \reward_L(s)) \cdot \policy_\mref(a|s) \cdot \exp\left(\frac{1}{\lambda} \cdot \hat\reward(s,a)\right)\ \le\ 0\\
        &\begin{aligned}
            \ \Longleftrightarrow\quad \sum_{\substack{a \in \Actions \\ \reward(s,a) > \reward_L(s)} } (\reward(s,a) &- \reward_L(s)) \cdot \policy_\mref(a|s) \cdot \exp\left(\frac{1}{\lambda} \cdot \hat\reward(s,a)\right)\\
            &\le\ \sum_{\substack{a \in \Actions \\ \reward(s,a) < \reward_L(s)} } (\reward_L(s) - \reward(s,a)) \cdot \policy_\mref(a|s) \cdot \exp\left(\frac{1}{\lambda} \cdot \hat\reward(s,a)\right)
        \end{aligned}
    \end{align*}
    Now, according to the assumptions of the lemma, we know that there exists some action $a_s$ for which $\reward(s,a_s) < \reward_L(s)$ and $\policy_\mref(a_s | s) > 0$. According to our definition of $\hat\reward$ (see \Cref{eq:rlhf_regret_L_reward}), we have $\hat\reward(s, a_s) = c_s$ and $\hat\reward(s,a) = \reward(s,a)$ for all other actions. We can use this definition to get a lower bound for $c_s$:
    \begin{align}
        &\begin{aligned}
            \quad\quad\ \sum_{\substack{a \in \Actions \\ \reward(s,a) > \reward_L(s)} } (\reward(s,a) &- \reward_L(s)) \cdot \policy_\mref(a|s) \cdot \exp\left(\frac{1}{\lambda} \cdot \hat\reward(s,a)\right)\\
            &\le\ \sum_{\substack{a \in \Actions \\ \reward(s,a) < \reward_L(s)} } (\reward_L(s) - \reward(s,a)) \cdot \policy_\mref(a|s) \cdot \exp\left(\frac{1}{\lambda} \cdot \hat\reward(s,a)\right)
        \end{aligned}\\
        &\begin{aligned}
            \Longleftrightarrow\quad \sum_{a \ne a_s} (\reward(s,a) - \reward_L(s)) &\cdot \policy_\mref(a|s) \cdot \exp\left(\frac{1}{\lambda} \cdot \reward(s,a)\right)\\
            &\le\ (\reward_L(s) - \reward(s,a_s)) \cdot \policy_\mref(a_s|s) \cdot \exp\left(\frac{1}{\lambda} \cdot \hat\reward(s,a_s)\right)
        \end{aligned}\\
        &\Longleftrightarrow\quad \lambda \cdot \log\left[\frac{\sum_{a \ne a_s} (\reward(s,a) - \reward_L(s)) \cdot \policy_\mref(a|s) \cdot \exp\left(\frac{1}{\lambda} \cdot \reward(s,a)\right)}{(\reward_L(s) - \reward(s,a_s)) \cdot \policy_\mref(a_s|s)}\right]\ \le\ \hat\reward(s,a_s).
    \end{align}
\end{proof}

We can now use this lemma to prove a more general result:

\begin{proposition}\label{proposition:rlhf_negative_results_app}
    Let $\MBandit$ be a contextual bandit. 
    
    Given a lower regret bound $L \in [0,1)$, we define for every state $s\in \States$ the reward threshold:
    \begin{equation*}
        \reward_L(s) \coloneqq (1-L) \cdot \max_{a \in \Actions} \reward(s,a) + L \cdot \min_{a \in \Actions} \reward(s,a),
    \end{equation*}
    
    Lastly, $\policy_\mref: \States \to \Actions$ be an arbitrary reference policy for which it holds that for every state $s\in \States$, $\policy_\mref(a | s) > 0$ and there exists at least one action $a_s \in \Actions$ such that:
    \begin{itemize}
        \item[a)] $\policy_\mref(a_s|s)$ is small enough, that the following inequality holds:
        \begin{equation}\label{eq:rlhf_negative_results2_app}
            \log\left[\sum_{a \ne a_s} \policy_\mref(a|s) \cdot \exp\left(\frac{1}{\lambda} \cdot (\reward(s,a) - \reward(s,a_s))\right) \cdot \frac{\reward(s,a) - \reward_L(s)} {\reward_L(s) - \reward(s,a_s)}\right]\ \le \frac{\epsilon \cdot \range{\reward}}{2 \cdot \lambda \cdot \policy_\mref(a_s|s)} + \log\left(\policy_\mref(a_s|s)\right)
        \end{equation}
        \item[b)] $\reward(s,a_s) < \reward_L(s)$
    \end{itemize}
    for some $\epsilon > 0$ and $\lambda \in [0, \infty)$. Let $D_\mu^{\mref}(s,a)\coloneqq \mu \cdot \pi_{\mref}(a|s)$ be a data distribution based on the reference policy and some $\mu \in \DistributionsOverSet{\States}$. Then $D_\mu^{\mref} \in \unsafeRLHF{\epsilon}$
\end{proposition}

\begin{proof}
    According to the definition of a safe data distribution for RLHF (see \Cref{definition:unsafe_data_distribution_RLHF}), $D_\mu^{\mref} \in \unsafeRLHF{\epsilon}$ if there exists a reward function $\hat\reward: \SxA \to \Reals$, and a policy $\hat\policy: S \to \DistributionsOverSet{\Actions}$ such that:
    \begin{enumerate}
        \item $\Expect{s,a_1,a_2 \sim \mu, \policy_\mref}{\DKL{p_\reward(\cdot | s, a_1, a_2)}{p_{\hat\reward}(\cdot | s, a_1, a_2)}}\ \le\ \epsilon \cdot \range{\reward}$
        \item $\hat{\policy} \in \argmax_{\policy} \J_{\hat{\reward}}(\policy) - \lambda \cdot \DKL{\policy(a|s)}{\policy_\mref(a|s)}$
        \item $\ \Reg{\reward}{\hat\policy} \ge L$,
    \end{enumerate}
    We will prove the lemma by construction. Namely, given the assumptions $a)$ and $b)$ of \Cref{proposition:rlhf_negative_results_app}, we choose:
    \begin{equation}\label{eq:rlhf_negative_results3_reward2}
        \hat\reward(s,a)\ \coloneqq\ \begin{cases}
            \reward(s,a) & \text{if } a \ne a_s\\
            c_s \in \Reals_+ & \text{if } a = a_s
        \end{cases}
    \end{equation}
    where the different $c_s$ are some positive constants defined as follows:
    \begin{equation}\label{eq:rlhf_new_lower_bound2}
        \hat\reward(s,a_s) = c_s\ \ge\ l_s \coloneqq \max\left(\reward(s,a_s),\ \lambda \cdot \log\left[\frac{\sum_{a \ne a_s} (\reward(s,a) - \reward_L(s)) \cdot \policy_\mref(a|s) \cdot \exp\left(\frac{1}{\lambda} \cdot \reward(s,a)\right)}{(\reward_L(s) - \reward(s,a_s)) \cdot \policy_\mref(a_s|s)}\right]\right).
    \end{equation}
    Furthermore, the closed-form of the optimal policy $\hat\policy$ of the KL-regularized optimization problem is known to be $\policy_{\hat\reward, \lambda}^{\rlhf}$ (see \Cref{definition:rlhf_optimal}). We now claim that this choice of $\hat\reward$ and $\hat\policy$ fulfills properties (1) and (3) of the above list (property (2) is true by assumption). 

    Property (3) is true because every reference policy $\policy_\mref$ and corresponding reward function $R$ that fulfills the conditions of this proposition also fulfills the conditions of
    \Cref{lemma:rlhf_regret_L}. Hence, we can directly apply \Cref{lemma:rlhf_regret_L} and get the guarantee that $\Reg{\reward}{\hat\policy} \ge L$.
    
    All that remains to be shown, is that condition (1) can be satisfied by using the definition of $\hat\reward$ and the lower bounds in Equation \Cref{eq:rlhf_new_lower_bound2}. First, note that we can reformulate the expected error definition in condition (1) as follows:
    \begin{align*}
        &\ \Expect{s,a_1,a_2 \sim \mu, \policy_\mref}{\DKL{p_\reward(\cdot | s, a_1, a_2)}{p_{\hat\reward}(\cdot | s, a_1, a_2)}}\\
        =&\ \sum_{s\in \States}\mu(s) \cdot \sum_{a_1,a_2 \in \Actions\times\Actions}\policy_\mref(a_1|s) \cdot \policy_\mref(a_2|s) \cdot \sum_{i,j \in \{1,2\}} \sigma(\reward(s,a_i) - \reward(s,a_j)) \cdot \log\left(\frac{\sigma(\reward(s,a_i) - \reward(s,a_j))}{\sigma(\hat\reward(s,a_i) - \hat\reward(s,a_j))}\right)\\
        =&\ 2 \cdot \sum_{s\in \States}\mu(s) \cdot \sum_{a_1,a_2 \in \Actions\times\Actions}\policy_\mref(a_1|s) \cdot \policy_\mref(a_2|s) \cdot \underbrace{\sigma(\reward(s,a_1) - \reward(s,a_2)) \cdot \log\left(\frac{\sigma(\reward(s,a_1) - \reward(s,a_2))}{\sigma(\hat\reward(s,a_1) - \hat\reward(s,a_2))}\right)}_{\eqqcolon \mathcal{IS}(a_1,a_2)}\\
        =&\ 2 \cdot \sum_{s\in \States}\mu(s) \cdot \sum_{a_1,a_2 \in \Actions\times\Actions}\policy_\mref(a_1|s) \cdot \policy_\mref(a_2|s) \cdot \mathcal{IS}(a_1,a_2).
    \end{align*}
    Next, note that for every tuple $(a_1, a_2) \in \Actions$, the sum $\mathcal{IS}(a_1,a_2) + \mathcal{IS}(a_2,a_1)$ can be reformulated as follows:
    \begin{align*}
        &\mathcal{IS}(a_1,a_2)\ +\ \mathcal{IS}(a_2,a_1)\\
        &\begin{aligned}[t]
            =\ \sigma(\reward(s,a_1) - \reward(s,a_2)) &\cdot \log\left(\frac{\sigma(\reward(s,a_1) - \reward(s,a_2))}{\sigma(\hat\reward(s,a_1) - \hat\reward(s,a_2))}\right)\\
            &+ \sigma(\reward(s,a_2) - \reward(s,a_1)) \cdot \log\left(\frac{\sigma(\reward(s,a_2) - \reward(s,a_1))}{\sigma(\hat\reward(s,a_2) - \hat\reward(s,a_1))}\right) 
        \end{aligned}\\
        &\begin{aligned}[t]
            =\ \sigma(\reward(s,a_1) - \reward(s,a_2)) &\cdot \log\left(\frac{\sigma(\reward(s,a_1) - \reward(s,a_2))}{\sigma(\hat\reward(s,a_1) - \hat\reward(s,a_2))}\right)\\
            &+ \biggl(1 - \sigma(\reward(s,a_1) - \reward(s,a_2))\biggr) \cdot \log\left(\frac{\sigma(\reward(s,a_2) - \reward(s,a_1))}{\sigma(\hat\reward(s,a_2) - \hat\reward(s,a_1))}\right)
        \end{aligned}\\
        &\begin{aligned}[t]
            =\ \sigma(\reward(s,a_1) - \reward(s,a_2))\ &\cdot\ \underbrace{\left[\log\left(\frac{\sigma(\reward(s,a_1) - \reward(s,a_2))}{\sigma(\hat\reward(s,a_1) - \hat\reward(s,a_2))}\right)\ -\ \log\left(\frac{\sigma(\reward(s,a_2) - \reward(s,a_1))}{\sigma(\hat\reward(s,a_2) - \hat\reward(s,a_1))}\right)\right]}_{(A)}\\
            &+ \underbrace{\log\left(\frac{\sigma(\reward(s,a_2) - \reward(s,a_1))}{\sigma(\hat\reward(s,a_2) - \hat\reward(s,a_1))}\right)}_{(B)}.
        \end{aligned}
    \end{align*}
    The term (A) can now be simplified as follows:
    \begin{align*}
         &\ \log\left(\frac{\sigma(\reward(s,a_1) - \reward(s,a_2))}{\sigma(\hat\reward(s,a_1) - \hat\reward(s,a_2))}\right)\ -\ \log\left(\frac{\sigma(\reward(s,a_2) - \reward(s,a_1))}{\sigma(\hat\reward(s,a_2) - \hat\reward(s,a_1))}\right)\\
         =&\ \log\left(\frac{\sigma(\reward(s,a_1) - \reward(s,a_2))}{1 - \sigma(\reward(s,a_1) - \reward(s,a_2))}\right)\ +\ \log\left(\frac{1 - \sigma(\hat\reward(s,a_1) - \hat\reward(s,a_2))}{\sigma(\hat\reward(s,a_1) - \hat\reward(s,a_2))}\right)\\
         =&\ [\reward(s,a_1) - \reward(s,a_2)] - [\hat\reward(s,a_1) - \hat\reward(s,a_2)],
    \end{align*}
    where we used the definition of the inverse of the logistic function. Similarly, the term (B) can be simplified as follows:
    \begin{align*}
        &\log\left(\frac{\sigma(\reward(s,a_2) - \reward(s,a_1))}{\sigma(\hat\reward(s,a_2) - \hat\reward(s,a_1))}\right)\\
        =&\ \log\left(\frac{\exp(\reward(s,a_2) - \reward(s,a_1))}{1+\exp(\reward(s,a_2) - \reward(s,a_1)} \cdot \frac{1 + \exp(\hat\reward(s,a_2) - \hat\reward(s,a_1)}{\exp(\hat\reward(s,a_2) - \hat\reward(s,a_1)}\right)\\
        =&\ [\reward(s,a_2) - \reward(s,a_1)] - [\hat\reward(s,a_2) - \hat\reward(s,a_1)] + \log\left(\frac{1 + \exp(\hat\reward(s,a_2) - \hat\reward(s,a_1))}{1 + \exp(\reward(s,a_2) - \reward(s,a_1))}\right)
    \end{align*}
    These expressions, together with the fact that $\mathcal{IS}(a,a) = 0$ for all $a\in\Actions$, allow us to choose an arbitrary ordering $\prec$ on the set of actions $\Actions$, and then re-express the sum:
    \begin{equation}
        \sum_{a_1, a_2 \in \Actions \times \Actions} \policy_\mref(a_1 | s) \cdot \policy_\mref(a_2 | s) \cdot \mathcal{IS}(a_1,a_2)\ =\ \sum_{\substack{a_1, a_2 \in \Actions \times \Actions\\ a_1 \prec a_2}} \policy_\mref(a_1 | s) \cdot \policy_\mref(a_2 | s) \cdot \bigl(\mathcal{IS}(a_1, a_2) + \mathcal{IS}(a_2, a_1)\bigr).
    \end{equation}
    Summarizing all the equations above, we get:
    \begin{align}
        &\ \Expect{s,a_1,a_2 \sim \mu, \policy_\mref}{\DKL{p_\reward(\cdot | s, a_1, a_2)}{p_{\hat\reward}(\cdot | s, a_1, a_2)}}\nonumber\\
        &=\ 2 \cdot \sum_{s\in \States}\mu(s) \cdot \sum_{a_1,a_2 \in \Actions\times\Actions}\policy_\mref(a_1|s) \cdot \policy_\mref(a_2|s) \cdot \mathcal{IS}(a_1,a_2)\nonumber\\
        &\begin{aligned}[t]
            =\ 2 \cdot \sum_{s\in \States}\mu(s) \cdot \sum_{\substack{a_1, a_2 \in \Actions \times \Actions\\ a_1 \prec a_2}}& \policy_\mref(a_1|s) \cdot \policy_\mref(a_2|s)\cdot \Biggl[\biggl([\reward(s,a_1) - \reward(s,a_2)] - [\hat\reward(s,a_1) - \hat\reward(s,a_2)]\biggr)\\
            \cdot&\ \biggl(\sigma(\reward(s,a_1) - \reward(s,a_2))\ - 1\biggr) + \log\left(\frac{1 + \exp(\hat\reward(s,a_2) - \hat\reward(s,a_1))}{1 + \exp(\reward(s,a_2) - \reward(s,a_1))}\right)\Biggr].\label{eq:rlhf_negative_results3_closeness1}
        \end{aligned}
    \end{align}
    Now, by using our particular definition of $\hat\reward$ (see \Cref{eq:rlhf_negative_results3_reward2}), we notice that whenever both $a_1 \ne a_s$, and $a_2 \ne a_s$, the inner summand of \Cref{eq:rlhf_negative_results3_closeness1}is zero. What remains of \Cref{eq:rlhf_negative_results3_closeness1} can be restated as follows:
    \begin{align}
        &\begin{aligned}[t]
            =\ 2 \cdot \sum_{s\in \States}\mu(s) \cdot \policy_\mref(a_s|s) \cdot \sum_{a \in \Actions} \policy_\mref(a|s)\cdot \Biggl[\bigl(&\reward(s,a_s) - c_s\bigr)\ \cdot\  \biggl(\sigma(\reward(s,a_s) - \reward(s,a))\ - 1\biggr)\\
            +&\  \log\left(\frac{1 + \exp(\reward(s,a) - c_s)}{1 + \exp(\reward(s,a) - \reward(s,a_s))}\right)\Biggr]\label{eq:rlhf_negative_results3_closeness2}
        \end{aligned}
    \end{align}
    To prove property (1), we must show that \Cref{eq:rlhf_negative_results3_closeness2} is smaller or equal to $\epsilon \cdot \range{\reward}$. We do this in two steps. First, note that for all states $s$ it holds that $c_s \ge \reward(s,a_s)$ (this is obvious from the definition of $c_s$, see \Cref{eq:rlhf_new_lower_bound2}). This allows us to simplify \Cref{eq:rlhf_negative_results3_closeness2} by dropping the logarithm term. 
    
    \begin{align}
        &\Expect{s,a_1,a_2 \sim \mu, \policy_\mref}{\DKL{p_\reward(\cdot | s, a_1, a_2)}{p_{\hat\reward}(\cdot | s, a_1, a_2)}}\nonumber\\
        &\begin{aligned}[t]
            =\ 2 \cdot \sum_{s\in \States}\mu(s) \cdot \policy_\mref(a_s|s) \cdot \sum_{a \in \Actions} \policy_\mref(a|s)\cdot \Biggl[\bigl(&\reward(s,a_s) - c_s\bigr)\ \cdot\  \biggl(\sigma(\reward(s,a_s) - \reward(s,a))\ - 1\biggr)\\
            +&\  \log\left(\frac{1 + \exp(\reward(s,a) - c_s)}{1 + \exp(\reward(s,a) - \reward(s,a_s))}\right)\Biggr]
        \end{aligned}\nonumber\\
        &\begin{aligned}[t]
            =\ 2 \cdot \sum_{s\in \States}\mu(s)& \cdot \policy_\mref(a_s|s) \cdot \bigl(c_s - \reward(s,a_s)\bigr)\ \cdot \sum_{a \in \Actions} \policy_\mref(a|s)\cdot \biggl(1 - \sigma(\reward(s,a_s) - \reward(s,a))\biggr)\\
            +&\  2 \cdot \sum_{s\in \States}\mu(s) \cdot \policy_\mref(a_s|s) \cdot \sum_{a \in \Actions} \policy_\mref(a|s)\cdot \log\left(\frac{1 + \exp(\reward(s,a) - c_s)}{1 + \exp(\reward(s,a) - \reward(s,a_s))}\right).
        \end{aligned}\label{line:rlhf_negative_results3_closeness2.5}
    \end{align}
    Now, we choose to define $c_s \coloneqq l_s + \delta_s$, where $l_s$ is defined in \Cref{eq:rlhf_new_lower_bound2} and $\delta_s \ge 0$ such that:
    \begin{align}
        &\begin{aligned}[t]
            2 \cdot \sum_{s\in \States}\mu(s)& \cdot \policy_\mref(a_s|s) \cdot \bigl(l_s + \delta_s - \reward(s,a_s)\bigr)\ \cdot \sum_{a \in \Actions} \policy_\mref(a|s)\cdot \underbrace{\biggl(1 - \sigma(\reward(s,a_s) - \reward(s,a))\biggr)}_{< 1 }\\
            +&\  2 \cdot \sum_{s\in \States}\mu(s) \cdot \policy_\mref(a_s|s) \cdot \sum_{a \in \Actions} \policy_\mref(a|s)\cdot \underbrace{\log\left(\frac{1 + \exp(\reward(s,a) - l_s - \delta_s)}{1 + \exp(\reward(s,a) - \reward(s,a_s))}\right)}_{\le 0\ (\text{because } c_s \coloneqq l_s + \delta_s \ge \reward(s,a_s))}
        \end{aligned}\nonumber\\
        &\le\ \ 2 \cdot \sum_{s\in \States}\mu(s) \cdot \policy_\mref(a_s|s) \cdot \bigl(l_s - \reward(s,a_s)\bigr)\ \overset{!}{\le}\ \epsilon \cdot \range{\reward}.\label{line:rlhf_negative_results3_closeness3.7}
    \end{align}
    Note that the first inequality is always feasible, as we could just choose $\delta_s = 0$ for all $s \in \States$ in which case the inequality must hold due to the last term in the first line being smaller than one and the last term in the second line being negative.
    Now, to prove \Cref{line:rlhf_negative_results3_closeness3.7}, we prove the sufficient condition that for every state $s \in \States$:
    \begin{equation}\label{eq:rlhf_negative_results3_1}
        \policy_\mref(a_s|s) \cdot (l_s - \reward(s,a_s))\ \overset{!}{\le}\ \frac{ \epsilon \cdot \range{\reward}}{2}.
    \end{equation}
    In case that $l_s = \reward(s,a_s)$, the left-hand side of \Cref{eq:rlhf_negative_results3_1} cancels and the inequality holds trivially. We can therefore focus on the case where $l_s > \reward(s,a_s)$. In this case, we get:
    \begin{align*}
        &\policy_\mref(a_s | s) \cdot \lambda \cdot \log\left[\frac{\sum_{a \ne a_s} (\reward(s,a) - \reward_L(s)) \cdot \policy_\mref(a|s) \cdot \exp\left(\frac{1}{\lambda} \cdot \reward(s,a)\right)}{(\reward_L(s) - \reward(s,a_s)) \cdot \policy_\mref(a_s|s) \cdot \exp\left(\frac{1}{\lambda} \cdot \reward(s,a_s)\right)}\right]\quad \overset{!}{\le}\quad \frac{ \epsilon \cdot \range{\reward}}{2}\\
        \Longleftrightarrow\quad &\begin{aligned}[t]
            \log&\left[\sum_{a \ne a_s}   \policy_{\mref}(a|s) \cdot \exp\left(\frac{1}{\lambda} \cdot (\reward(s,a) - \reward(s,a_s))\right) \cdot \frac{\reward(s,a) - \reward_L(s)} {\reward_L(s) - \reward(s,a_s)} \right] \\
            &\overset{!}{\le}\quad \frac{\epsilon\cdot \range{\reward}}{2 \cdot \lambda \cdot \policy_\mref(a_s|s)} + \log(\policy_\mref(a_s|s))
        \end{aligned}
    \end{align*}
    which holds by assumption (a) of the lemma. Therefore, property (1) of the lemma must hold as well which concludes the proof.
\end{proof}

\begin{theorem}\label{corollary:rlhf_negative_results_simpler}
  Let $\MBandit$ be a contextual bandit. Given $L \in [0,1)$, we define for every state $s\in \States$ the reward threshold:$\reward_L(s) \coloneqq (1-L) \cdot \max_{a \in \Actions} \reward(s,a) + L \cdot \min_{a \in \Actions} \reward(s,a)$. Lastly, let $\policy_\mref: \States \to \Actions$ be an arbitrary reference policy for which it holds that for every $(s,a)\in \SxA$, $\policy_\mref(a | s) > 0$, and there exists at least one action $a_s \in \Actions$ such that $\reward(s,a_s) < \reward_L(s)$ and $\policy_\mref(a_s | s)$ satisfies the following inequality:
        \begin{equation*}\label{eq:rlhf_negative_results_simpler_app}
	  \policy_\mref(a_s|s)\ \le\ \frac{(\reward_L(s) - \reward(s,a_s)) \cdot \range{\reward}}{L \cdot \exp\left(\frac{1}{\lambda} \cdot \range{\reward}\right)} \cdot \frac{\epsilon^2}{4 \cdot \lambda^2}.
        \end{equation*}
    Let $D_\mu^\mref(s,a) \coloneqq \mu(s) \cdot \policy_\mref(a|s)$ for some $\mu \in \DistributionsOverSet{S}$. Then\\ $D_\mu^\mref \in \unsafeRLHF{\epsilon}$
\end{theorem}
\begin{proof}
    We begin by showing that every $\policy_\mref$ that fulfills the conditions of the theorem also satisfies properties $a)$ and $b)$ of \Cref{proposition:rlhf_negative_results_app}.
    Let $s$ be an arbitrary state and $a_s$ the corresponding action that fulfills the conditions stated in the theorem. We show that condition $a)$ of \Cref{proposition:rlhf_negative_results_app} holds via direct derivation:
    \begin{align*}
        &\policy_\mref(a_s | s)\quad \le\quad \frac{\reward_L(s) - \reward(s,a_s)}{L} \cdot \frac{\range{\reward}}{\exp\left(\frac{1}{\lambda} \cdot \range{\reward}\right)} \cdot \frac{\epsilon^2}{4 \cdot \lambda^2}\\
        \Longrightarrow\quad & \frac{1}{\sqrt{\range{\reward}}} \cdot \lambda \cdot \sqrt{\frac{\policy_\mref(a_s | s) \cdot L \cdot \exp\left(\frac{1}{\lambda} \cdot \range{\reward}\right)}{\reward_L(s) - \reward(s,a_s)}}\quad \le\quad \frac{\epsilon}{2}\\
        \Longrightarrow\quad & \policy_\mref(a_s|s)\cdot \lambda \cdot \sqrt{\frac{L \cdot \range{\reward} \cdot \exp\left(\frac{1}{\lambda} \cdot \range{\reward}\right)}{(\reward_L(s) - \reward(s,a_s)) \cdot \policy_\mref(a_s|s) }}\quad \le\quad \frac{\epsilon \cdot \range{\reward}}{2}
    \end{align*}
    We continue by lower-bounding the square-root term as follows:
    \begin{align*}
        &\lambda \cdot \sqrt{\frac{L \cdot \range{\reward} \cdot \exp\left(\frac{1}{\lambda} \cdot \range{\reward}\right)}{(\reward_L(s) - \reward(s,a_s)) \cdot \policy_\mref(a_s|s)}}\\
        \ge\quad & \lambda \cdot \log\left[\frac{L \cdot \range{\reward} \cdot \exp\left(\frac{1}{\lambda} \cdot \range{\reward}\right)}{(\reward_L(s) - \reward(s,a_s)) \cdot \policy_\mref(a_s|s)}\right]\\
        \ge\quad & \lambda \cdot \log\left[\frac{L \cdot \range{\reward} \cdot \exp\left(\frac{1}{\lambda} \cdot \big[\max_{a \in \Actions}\reward(s,a) - \reward(s,a_s)\big]\right)}{(\reward_L(s) - \reward(s,a_s)) \cdot \policy_\mref(a_s|s)}\right]\\
        \ge\quad & \lambda \cdot \log\left[\frac{(\max_{a \in \Actions}\reward(s,a) - \reward_L(s)) \cdot \exp\left(\frac{1}{\lambda} \cdot \max_{a \in \Actions}\reward(s,a)\right)}{(\reward_L(s) - \reward(s,a_s)) \cdot \policy_\mref(a_s|s) \cdot \exp\left(\frac{1}{\lambda} \cdot \reward(s,a_s)\right)}\right]\\
        \ge\quad & \lambda \cdot \log\left[\frac{\sum_{a \ne a_s} (\reward(s,a) - \reward_L(s)) \cdot \policy_\mref(a|s) \cdot \exp\left(\frac{1}{\lambda} \cdot \reward(s,a)\right)}{(\reward_L(s) - \reward(s,a_s)) \cdot \policy_\mref(a_s|s) \cdot \exp\left(\frac{1}{\lambda} \cdot \reward(s,a_s)\right)}\right]\\
    \end{align*}
    By applying this lower bound, we can finish the derviation of condition $a)$ of \Cref{proposition:rlhf_negative_results_app}:
    \begin{align*}
        &\policy_\mref(a_s | s)\quad \le\quad \frac{\reward_L(s) - \reward(s,a_s)}{L} \cdot \frac{\range{\reward}}{\exp\left(\frac{1}{\lambda} \cdot \range{\reward}\right)} \cdot \frac{\epsilon^2}{4 \cdot \lambda^2}\\
        \Longrightarrow\quad & \policy_\mref(a_s|s)\cdot \lambda \cdot \sqrt{\frac{L \cdot \range{\reward} \cdot \exp\left(\frac{1}{\lambda} \cdot \range{\reward}\right)}{(\reward_L(s) - \reward(s,a_s)) \cdot \policy_\mref(a_s|s) \cdot}}\quad \le\quad \frac{\epsilon \cdot \range{\reward}}{2}\\
        \Longrightarrow\quad & \policy_\mref(a_s|s)\cdot \lambda \cdot \log\left[\frac{\sum_{a \ne a_s} (\reward(s,a) - \reward_L(s)) \cdot \policy_\mref(a|s) \cdot \exp\left(\frac{1}{\lambda} \cdot \reward(s,a)\right)}{(\reward_L(s) - \reward(s,a_s)) \cdot \policy_\mref(a_s|s) \cdot \exp\left(\frac{1}{\lambda} \cdot \reward(s,a_s)\right)}\right] \quad \le\quad \frac{\epsilon \cdot \range{\reward}}{2}\\
        \Longrightarrow\quad &\begin{aligned}[t]
            \log\Bigg[\sum_{a \ne a_s}   \policy_{\mref}(a|s) \cdot \exp\Bigg(\frac{1}{\lambda} \cdot (\reward(s,a) &- \reward(s,a_s))\Bigg) \cdot \frac{\reward(s,a) - \reward_L(s)} {\reward_L(s) - \reward(s,a_s)} \Bigg] \\
            &\le\quad \frac{\epsilon\cdot \range{\reward}}{2 \cdot \lambda \cdot \policy_\mref(a_s|s)} + \log(\policy_\mref(a_s|s))
        \end{aligned}
    \end{align*}
    Before moving on, note that condition $b)$ of \Cref{proposition:rlhf_negative_results_app} holds directly by assumption of this theorem. Therefore, we have shown that every $\policy_\mref$ that fulfills the conditions of this theorem also fulfills the two conditions of \Cref{proposition:rlhf_negative_results_app}. This must mean that $D_\mu^\mref \in \unsafeRLHF{\epsilon}$ for arbitrary $\mu \in \DistributionsOverSet{S}$ thereby concluding the proof.
\end{proof}

\subsubsection{Another negative result with regularization}\label{sec:another_negative_result}

\begin{proposition}\label{proposition:rlhf_l1_closeness_app}
    Let $\MBandit$ be a contextual bandit.
    
    Given a lower regret bound $L \in [0,1)$, we define for every state $s\in \States$ the reward threshold:
    \begin{equation*}
        \reward_L(s) \coloneqq (1-L) \cdot \max_{a \in \Actions} \reward(s,a) + L \cdot \min_{a \in \Actions} \reward(s,a),
    \end{equation*}

   Lastly, let $\policy_\mref: \States \to \Actions$ be an arbitrary reference policy for which it holds that for every state $s\in \States$, $\policy_\mref(a | s)  >0$ and there exists at least one action $a_s \in \Actions$ such that:
    \begin{itemize}
        \item[a)]  $\policy_\mref(a_s|s)$ is small enough, that the following inequality holds:
        \begin{equation}
            \policy_\mref(a_s|s)\ \le\ \frac{(\reward_L(s) - \reward(s,a_s))}{L} \cdot \frac{\range{\reward}}{\exp\left(\frac{1}{\lambda} \cdot \range{\reward}\right)} \cdot \frac{\epsilon^2}{\lambda^2}
        \end{equation}
        \item[b)] $\reward(s,a_s) < \reward_L(s)$
    \end{itemize}
    Let $\D{\mref}(s, a) \coloneqq \mu_0(s) \cdot \policy_{\mref}(a \mid s)$. Then $\D{\mref} \in \unsafeDR$.
\end{proposition}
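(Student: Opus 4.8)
The plan is to reuse the adversarial construction from \Cref{proposition:rlhf_negative_results_app} verbatim and merely re-verify the closeness property under the mean-absolute-error metric that underlies \Cref{definition:unsafe_data_distribution}, rather than the choice-probability KL metric. Concretely, for each state $s$ fix an action $a_s$ satisfying (a) and (b), and define the learned reward
\begin{equation*}
  \hat\reward(s,a) \coloneqq \begin{cases} \reward(s,a), & a \ne a_s, \\ c_s, & a = a_s, \end{cases}
\end{equation*}
with $c_s \coloneqq l_s$ set equal to the lower bound of \Cref{lemma:rlhf_regret_L}, and let $\hat\policy \coloneqq \policy_{\hat\reward, \lambda}^{\rlhf}$ be the corresponding KL-regularized optimum from \Cref{definition:rlhf_optimal}. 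Since here $\omega = \DKL{\cdot}{\policy_\mref}$, this $\hat\policy$ is exactly $(\lambda,\omega)$-optimal for $\hat\reward$, which gives property (2); and because the hypotheses on $\policy_\mref$ and $a_s$ coincide with those of \Cref{lemma:rlhf_regret_L}, that lemma applies directly to give $\Reg{\reward}{\hat\policy} \ge L$, which is property (3). The only remaining task is to establish property (1) in its MAE form, namely $\Expect{(s,a)\sim \D{\mref}}{|\hat\reward(s,a) - \reward(s,a)|/\range{\reward}} \le \epsilon$.

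First I would compute this expectation directly. Because $\hat\reward$ agrees with $\reward$ at every $(s,a)$ with $a \ne a_s$ and $c_s \ge \reward(s,a_s)$, the expectation collapses to
\begin{equation*}
  \Expect{(s,a)\sim \D{\mref}}{\frac{|\hat\reward(s,a) - \reward(s,a)|}{\range{\reward}}} = \sum_{s \in \States} \InitStateDistribution(s)\, \policy_\mref(a_s \mid s)\, \frac{c_s - \reward(s,a_s)}{\range{\reward}}.
\end{equation*}
Hence it suffices to prove the per-state bound $\policy_\mref(a_s\mid s)\,(l_s - \reward(s,a_s)) \le \epsilon \cdot \range{\reward}$ for every $s$, since then the right-hand side is bounded by $\sum_s \InitStateDistribution(s)\,\epsilon = \epsilon$. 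When $l_s = \reward(s,a_s)$ this is trivial, so I may assume $l_s > \reward(s,a_s)$ and rewrite $l_s - \reward(s,a_s)$ as a single logarithm by folding the factor $\exp(\tfrac{1}{\lambda}\reward(s,a_s))$ into the denominator, exactly as in the closing steps of \Cref{proposition:rlhf_negative_results_app}.

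The main work, and the step I expect to be the genuine obstacle, is converting hypothesis (a) into this per-state logarithmic bound. This is precisely where the difference between the MAE and KL metrics surfaces: the MAE computation above carries no factor of $2$, whereas the choice-probability bound in \Cref{proposition:rlhf_negative_results_app} targeted $\tfrac{\epsilon\cdot\range{\reward}}{2}$ and therefore required $\tfrac{\epsilon^2}{4\lambda^2}$ in \Cref{corollary:rlhf_negative_results_simpler}. I would run the identical chain of inequalities as in the proof of \Cref{corollary:rlhf_negative_results_simpler} — rearranging hypothesis (a), applying $\log x \le \sqrt{x}$ for all $x>0$, and upper-bounding the exponential and the $\reward_L(s)$ factors in terms of $\range{\reward}$ — but now aiming at the target $\epsilon \cdot \range{\reward}$ instead of $\tfrac{\epsilon\cdot\range{\reward}}{2}$. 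Formally this is exactly the substitution $\epsilon \mapsto 2\epsilon$ in that derivation, under which the hypothesis $\policy_\mref(a_s\mid s) \le \frac{(\reward_L(s)-\reward(s,a_s))}{L}\cdot \frac{\range{\reward}}{\exp(\frac{1}{\lambda}\range{\reward})}\cdot\frac{(2\epsilon)^2}{4\lambda^2}$ becomes precisely condition (a) with its $\frac{\epsilon^2}{\lambda^2}$ factor. Establishing this per-state bound closes the verification of property (1), and properties (1)--(3) together certify $\D{\mref}\in\unsafeDR$.
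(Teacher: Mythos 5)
Your proposal is correct and follows essentially the same route as the paper's own proof: the paper constructs the identical $\hat\reward$ with $c_s \coloneqq \max\bigl(\reward(s,a_s),\, l_s\bigr)$, invokes \Cref{lemma:rlhf_regret_L} for the regret property, reduces the MAE condition to the per-state bound $\policy_\mref(a_s|s)\,(c_s - \reward(s,a_s)) \le \epsilon \cdot \range\reward$, and closes it with the same $\log x \le \sqrt{x}$ estimate. The only (cosmetic) difference is that the paper re-derives the inequality chain inline rather than citing the derivation in \Cref{corollary:rlhf_negative_results_simpler} under the substitution $\epsilon \mapsto 2\epsilon$, which is exactly the rescaling you identify as the source of the $\frac{\epsilon^2}{\lambda^2}$ versus $\frac{\epsilon^2}{4\lambda^2}$ discrepancy.
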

\begin{proof}
To prove the proposition we show that there exists some reward function $\hat\reward$, as well as a policy $\hat\policy$ such that the following properties hold:
    \begin{enumerate}
        \item $\Expect{(s, a) \sim \D{\mref}}{\frac{|\reward(s, a) - \hat{\reward}(s, a)|}{\range \reward}} \le \epsilon$.
        \item $\hat{\policy} \in \argmax_{\policy} \J_{\hat{\reward}}(\policy) - \lambda \reg(\policy)$
        \item $\Reg{\reward}{\hat{\policy}} \geq L$.
    \end{enumerate}
    In particular, we choose:
    \begin{equation}\label{eq:kl_negative_l1_reward}
        \hat\reward(s,a)\ \coloneqq\ \begin{cases}
            \reward(s,a) & \text{if } a \ne a_s\\
            c_s \in \Reals_+ & \text{if } a = a_s
        \end{cases},
    \end{equation}
    where the different $c_s$ are some positive constants defined as follows:
    \begin{equation}\label{eq:cs_lower_bound}
        \hat\reward(s,a_s) = c_s\ \coloneqq \max\left(\reward(s,a_s),\ \lambda \cdot \log\left[\frac{\sum_{a \ne a_s} (\reward(s,a) - \reward_L(s)) \cdot \policy_\mref(a|s) \cdot \exp\left(\frac{1}{\lambda} \cdot \reward(s,a)\right)}{(\reward_L(s) - \reward(s,a_s)) \cdot \policy_\mref(a_s|s)}\right]\right).
    \end{equation}
    Furthermore, the closed-form of the optimal policy $\hat\policy$ of the KL-regularized optimization problem is known to be $\policy_{\hat\reward, \lambda}^{\rlhf}$ (see \Cref{definition:rlhf_optimal}). We now claim that this choice of $\hat\reward$ and $\hat\policy$ fulfills properties (1) and (3) of the lemma (property (2) is true by assumption). 

    Property (3) is true because every reference policy $\policy_\mref$ and corresponding reward function $R$ that fulfills the conditions of this proposition also fulfills the conditions of
    \Cref{lemma:rlhf_regret_L}. Hence, we can directly apply \Cref{lemma:rlhf_regret_L} and get the guarantee that $\Reg{\reward}{\hat\policy} \ge L$.
    
    All that remains to be shown, is that condition (1) can be satisfied by using the definition of $\hat\reward$ and in particular, the definition of the individual $c_s$ (see  \Cref{eq:cs_lower_bound}). The expected error expression in condition (1) can be expanded as follows:
    \begin{equation*}
    \Expect{(s, a) \sim \D{\mref}}{\frac{|\reward(s, a) - \hat{\reward}(s, a)|}{\range \reward}}\ =\ \sum_{(s,a) \in \SxA} \InitStateDistribution(s) \cdot \policy_\mref(a|s) \cdot \frac{|\reward(s,a) - \hat\reward(s,a)|}{\range{\reward}}\ \overset{!}{\le}\ \epsilon.
    \end{equation*}
    We show the sufficient condition that for each state $s \in \States$ it holds:
    \begin{equation*}
         \sum_{a \in \Actions} \policy_\mref(a|s) \cdot \frac{|\reward(s,a) - \hat\reward(s,a)|}{\range{\reward}}\quad \overset{!}{\le}\quad \epsilon.
    \end{equation*}
    By using our definition of $\hat\reward$ (see \Cref{eq:kl_negative_l1_reward}), this further simplifies as follows:
    \begin{equation}\label{eq:property1_simplified}
        \sum_{a \in \Actions} \policy_\mref(a|s) \cdot \frac{|\reward(s,a) - \hat\reward(s,a)|}{\range{\reward}}\ =\ \policy_\mref(a_s|s) \cdot \frac{\hat\reward(s,a_s) - \reward(s,a_s)}{\range{\reward}}\ \overset{!}{\le}\quad \epsilon.
    \end{equation}
    In the last equation, we were able to drop the absolute value sign because our definition of the constants $c_s$ (see \Cref{eq:cs_lower_bound}) guarantees that $\hat\reward(s,a_s) \ge \reward(s,a_s)$. 

    Next, note that whenever $\hat\reward(s,a_s) = \reward(s,a_s)$ the left-hand side of \Cref{eq:property1_simplified} cancels out and so the inequality holds trivially. In the following, we will therefore only focus on states where $\hat\reward(s,a_s) > \reward(s,a_s)$. Note that this allows us to drop the $\max$ statement in the definition of the $c_s$ constants (see \Cref{eq:cs_lower_bound}).

    We continue by upper-bounding the difference $\hat\reward(s,a_s) - \reward(s,a_s)$. By making use of the following identity:
    \begin{equation*}
        \reward(s,a_s)\quad =\quad \lambda \cdot \log\left[\exp\left(\frac{1}{\lambda} \cdot \reward(s,a_s)\right)\right],
    \end{equation*}
    we can move the $\reward(s,a_s)$ term into the logarithm term of the $c_s$ constants, and thereby upper-bounding the difference $\hat\reward(s,a_s) - \reward(s,a_s)$ as follows:
    \begin{align*}
        &\hat\reward(s,a_s) - \reward(s,a_s)\\
        =\quad & \lambda \cdot \log\left[\frac{\sum_{a \ne a_s} (\reward(s,a) - \reward_L(s)) \cdot \policy_\mref(a|s) \cdot \exp\left(\frac{1}{\lambda} \cdot \reward(s,a)\right)}{(\reward_L(s) - \reward(s,a_s)) \cdot \policy_\mref(a_s|s) \cdot \exp\left(\frac{1}{\lambda} \cdot \reward(s,a_s)\right)}\right]\\
        \le\quad & \lambda \cdot \log\left[\frac{(\max_{a \in \Actions}\reward(s,a) - \reward_L(s)) \cdot \exp\left(\frac{1}{\lambda} \cdot \max_{a \in \Actions}\reward(s,a)\right)}{(\reward_L(s) - \reward(s,a_s)) \cdot \policy_\mref(a_s|s) \cdot \exp\left(\frac{1}{\lambda} \cdot \reward(s,a_s)\right)}\right]\\
        \le\quad & \lambda \cdot \log\left[\frac{L \cdot \range{\reward} \cdot \exp\left(\frac{1}{\lambda} \cdot \big[\max_{a \in \Actions}\reward(s,a) - \reward(s,a_s)\big]\right)}{(\reward_L(s) - \reward(s,a_s)) \cdot \policy_\mref(a_s|s)}\right]\\
        \le\quad & \lambda \cdot \log\left[\frac{L \cdot \range{\reward} \cdot \exp\left(\frac{1}{\lambda} \cdot \range{\reward}\right)}{(\reward_L(s) - \reward(s,a_s)) \cdot \policy_\mref(a_s|s)}\right]\\
        \le\quad & \lambda \cdot \sqrt{\frac{L \cdot \range{\reward} \cdot \exp\left(\frac{1}{\lambda} \cdot \range{\reward}\right)}{(\reward_L(s) - \reward(s,a_s)) \cdot \policy_\mref(a_s|s)}}
    \end{align*}
    We can now put this upper bound back into \Cref{eq:property1_simplified} and convert the inequality into an upper bound for $\policy_\mref(a_s | s)$ as follows:
    \begin{align*}
        &\policy_\mref(a_s|s) \cdot \frac{\hat\reward(s,a_s) - \reward(s,a_s)}{\range{\reward}}\\
        \le\quad & \frac{\policy_\mref(a_s|s)}{\range{\reward}} \cdot \lambda \cdot \sqrt{\frac{L \cdot \range{\reward} \cdot \exp\left(\frac{1}{\lambda} \cdot \range{\reward}\right)}{(\reward_L(s) - \reward(s,a_s)) \cdot \policy_\mref(a_s|s)}}\\
        =\quad & \frac{1}{\sqrt{\range{\reward}}} \cdot \lambda \cdot \sqrt{\frac{\policy_\mref(a_s | s) \cdot L \cdot \exp\left(\frac{1}{\lambda} \cdot \range{\reward}\right)}{\reward_L(s) - \reward(s,a_s)}}\quad \overset{!}{\le}\quad \epsilon\\
        \Longrightarrow\quad & \policy_\mref(a_s | s)\quad \le\quad \frac{\reward_L(s) - \reward(s,a_s)}{L} \cdot \frac{\range{\reward}}{\exp\left(\frac{1}{\lambda} \cdot \range{\reward}\right)} \cdot \frac{\epsilon^2}{\lambda^2}.
    \end{align*}
    The last line in the previous derivation holds by assumption of the proposal. That was to show.
\end{proof}

\subsection{A regularized negative result for general MDPs}\label{sec:general_negative_result}

Throughout, let $\MDP$ be an MDP.
Additionally, assume there to be a data distribution $D \in \DistributionsOverSet{\SxA}$ used for learning the reward function.
We do a priori \emph{not assume} that $D$ is induced by a reference policy, but we will specialize to that case later on.

We also throughout fix $\epsilon > 0, \lambda > 0, L \in (0,1)$, which will represent, respectively, an approximation-error for the reward function, the regularization strength, and a lower regret bound. 
Furthermore, let $\reg: \Policies \to \Reals$ be any continuous regularization function of policies with $\reg(\policy) \geq 0$ for all $\policy \in \Policies$.
For example, if there is a nowhere-zero reference policy $\policy_{\mref}$, then $\reg$ could be given by $\reg(\policy) = \DKL{\policy}{\policy_{\mref}}$. 
For any reward function $\hat{\reward}$, a policy $\hat{\policy}$ exists that is optimal with respect to regularized maximization of reward:
\begin{equation*}
  \hat{\policy} \in \argmax_{\policy} \J_{\hat{\reward}}(\policy) - \lambda \reg(\policy).
\end{equation*}
We will try to answer the following question:
Do there exist realistic conditions on $\reg$ and $D$ for which there exists $\hat{\reward}$ together with $\hat{\policy}$ such that the following properties hold?
\begin{itemize}
  \item $\Expect{(s, a) \sim D}{\frac{|\hat{\reward}(s, a) - \reward(s, a)|}{\range \reward}} \leq \epsilon$.
  \item $\Reg{\reward}{\hat{\policy}} \geq L$. \\
\end{itemize}

Furthermore, we now fix $\policy_{*}$, a worst-case policy for $\reward$, meaning that $\Reg{\reward}{\policy_{*}} = 1$.
We assume $\policy_{*}$ to be deterministic.

\begin{lemma}
  \label{lem:close_to_worst_case_then_high_regret}
  Define $C(L, \reward) \coloneqq \frac{(1 - L) \cdot \range \J_{\reward}}{\|\reward\|}$. 
  Then the following implication holds:
  \begin{equation*}
    \|\D{\policy} - \D{\policy_{*}}\| \leq C(L, \reward) 
    \quad \Longrightarrow \quad
    \Reg{\reward}{\policy} \geq L.
  \end{equation*}
\end{lemma}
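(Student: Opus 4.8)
The plan is to convert the regret bound into a linear inequality on the value $\J_{\reward}(\policy)$ and then control the value gap to the worst-case policy $\policy_{*}$ using the distance hypothesis. First I would unfold the definition of regret: $\Reg{\reward}{\policy} \geq L$ is equivalent to
\begin{equation*}
  \J_{\reward}(\policy) \;\leq\; (1-L)\max_{\policy'}\J_{\reward}(\policy') + L\min_{\policy'}\J_{\reward}(\policy'),
\end{equation*}
i.e.\ to $\J_{\reward}(\policy) - \min_{\policy'}\J_{\reward}(\policy') \leq (1-L)\,\range \J_{\reward}$. Since $\policy_{*}$ is a worst-case policy ($\Reg{\reward}{\policy_{*}} = 1$), we have $\J_{\reward}(\policy_{*}) = \min_{\policy'}\J_{\reward}(\policy')$, so it suffices to upper bound the value gap $\J_{\reward}(\policy) - \J_{\reward}(\policy_{*})$ by $(1-L)\,\range\J_{\reward}$.

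The key step is to express this gap as an inner product and bound it by Cauchy--Schwarz. Using the linearity $\J_{\reward}(\policy) = \eta^{\policy}\cdot\vec{\reward}$ and the relation $\D{\policy} = (1-\gamma)\,\eta^{\policy}$, I would write
\begin{equation*}
  \J_{\reward}(\policy) - \J_{\reward}(\policy_{*}) \;=\; \big(\eta^{\policy} - \eta^{\policy_{*}}\big)\cdot \vec{\reward} \;=\; \tfrac{1}{1-\gamma}\big(\D{\policy} - \D{\policy_{*}}\big)\cdot \vec{\reward},
\end{equation*}
and then apply Cauchy--Schwarz to get $\J_{\reward}(\policy) - \J_{\reward}(\policy_{*}) \leq \tfrac{1}{1-\gamma}\,\norm{\D{\policy} - \D{\policy_{*}}}\cdot\norm{\reward}$. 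Substituting the hypothesis $\norm{\D{\policy} - \D{\policy_{*}}} \leq C(L,\reward)$ together with the definition of $C(L,\reward)$ then collapses the right-hand side to $(1-L)\,\range\J_{\reward}$, which by the first paragraph yields $\Reg{\reward}{\policy}\geq L$.

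The main obstacle is the bookkeeping of the normalization factor relating the occupancy measure $\eta^{\policy}$ to the induced distribution $\D{\policy} = (1-\gamma)\,\eta^{\policy}$: the constant $C(L,\reward)$ must be precisely the threshold that makes the Cauchy--Schwarz estimate meet $(1-L)\,\range\J_{\reward}$, so the $(1-\gamma)$ from this conversion and the $\range\J_{\reward}$ appearing in $C$ have to be tracked carefully. Two minor points also deserve attention: Cauchy--Schwarz is used with the Euclidean norm $\norm{\cdot}$, and one can sharpen the estimate by observing that $\D{\policy}-\D{\policy_{*}}$ is orthogonal to the all-ones vector (both are probability distributions), so $\vec{\reward}$ may be re-centered before applying the inequality if a tighter constant is desired. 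Everything else is a direct substitution.
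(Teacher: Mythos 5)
Your strategy mirrors the paper's proof of this lemma --- rewrite $\Reg{\reward}{\policy} \geq L$ as $\J_{\reward}(\policy) - \J_{\reward}(\policy_{*}) \leq (1-L)\cdot\range \J_{\reward}$ using $\J_{\reward}(\policy_{*}) = \min \J_{\reward}$, express the value gap as an inner product with an occupancy-measure difference, and apply Cauchy--Schwarz --- but your final substitution does not go through, and it fails on exactly the normalization bookkeeping you flagged as ``the main obstacle'' and then waved away. With your (correct) identity $\J_{\reward}(\policy) - \J_{\reward}(\policy_{*}) = \frac{1}{1-\gamma}\big(\D{\policy} - \D{\policy_{*}}\big)\cdot \vec{\reward}$, Cauchy--Schwarz plus the hypothesis gives
\begin{equation*}
  \J_{\reward}(\policy) - \J_{\reward}(\policy_{*}) \;\leq\; \frac{1}{1-\gamma}\, C(L,\reward)\,\norm{\reward} \;=\; \frac{(1-L)\cdot\range \J_{\reward}}{1-\gamma},
\end{equation*}
not $(1-L)\cdot\range \J_{\reward}$: nothing in $C(L,\reward)$ cancels the $\frac{1}{1-\gamma}$, and since $\frac{1}{1-\gamma} > 1$ the chain only yields $\Reg{\reward}{\policy} \geq 1 - \frac{1-L}{1-\gamma}$, which is weaker than $L$ and can be vacuous. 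Moreover, with this constant the implication cannot be rescued by a cleverer estimate: take a single state with two actions of reward $1$ and $0$, $\gamma = 0.9$, $L = \frac{1}{2}$; then $\range \J_{\reward} = 10$, $\norm{\reward} = 1$, so $C(L,\reward) = 5$, while \emph{every} policy satisfies $\norm{\D{\policy} - \D{\policy_{*}}} \leq \sqrt{2} \leq 5$ --- including the optimal policy, whose regret is $0 < \frac{1}{2}$.

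What your more careful accounting has actually surfaced is a factor-of-$(1-\gamma)$ mismatch between the lemma's constant and the paper's conventions. The paper's own proof writes $\J_{\reward}(\policy) - \J_{\reward}(\policy_{*}) = \big(\D{\policy} - \D{\policy_{*}}\big)\cdot \reward$, silently dropping the $\frac{1}{1-\gamma}$ forced by the definitions $\J_{\reward}(\policy) = \eta^{\policy}\cdot\vec{\reward}$ and $\D{\policy} = (1-\gamma)\,\eta^{\policy}$; under that (unnormalized) identity the substitution does collapse exactly as you claim. Your argument therefore proves the implication verbatim once $C(L,\reward)$ is replaced by $(1-\gamma)\cdot C(L,\reward) = (1-\gamma)(1-L)\cdot\range\J_{\reward}/\norm{\reward}$, and a repaired statement would propagate downstream: in \Cref{thm:putting_negativity_together_app} the factor $(1-\gamma)$ in the definition of $\delta$ is already consumed in passing from the policy bound to $\norm{\D{\hat{\policy}} - \D{\policy_{*}}} \leq \sqrt{|\SxA|}\cdot\delta/(1-\gamma)$, so the corrected lemma would require an extra factor of $(1-\gamma)$ in $\delta$ there (a constant-factor change that leaves the theorem's structure intact). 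Your side remark about re-centering $\vec{\reward}$ --- valid because $\D{\policy} - \D{\policy_{*}}$ is orthogonal to the all-ones vector --- would indeed sharpen the constant, but it is orthogonal to the gap above.
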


\begin{proof}
  Using the Cauchy-Schwarz inequality, the left side of the implication implies:
  \begin{align*}
    \J_{\reward}(\policy) - \min \J_{\reward} &= 
    \J_{\reward}(\policy) - \J_{\reward}(\policy_{*}) \\
    &= \big( \D{\policy} - \D{\policy_{*}} \big) \cdot \reward \\
    &\leq \| \D{\policy} - \D{\policy_{*}}\| \cdot \|\reward\| \\
    &\leq (1 - L) \cdot \range \J_{\reward}.
  \end{align*}
  By subtracting $\range \J_{\reward} = \max \J_{\reward} - \min \J_{\reward}$ from both sides, then multiplying by $-1$, and then dividing by $\range \reward$, we obtain the result.
\end{proof}

\begin{lemma}
  \label{lem:refactored_result}
  For any $(s, a)$, we have
  \begin{equation*}
    \frac{\D{\policy}(s, a)}{1 - \gamma} = \sum_{t = 0}^{\infty} \gamma^t \sum_{s_0, a_0, \dots, s_{t - 1}, a_{t - 1}} \tau(s_0, a_0, \dots, s_{t - 1}, a_{t - 1}, s) \cdot  \policy(s_0, a_0, \dots, s_{t - 1}, a_{t - 1}, s, a),
  \end{equation*}
  where 
  \begin{equation*}
    \tau(s_0, a_0, \dots, s) \coloneqq \mu_0(s_0) \cdot \left[ \prod_{i = 1}^{t - 1} \tau(s_i \mid s_{i - 1}, a_{i - 1}) \right] \cdot \tau(s \mid s_{t - 1}, a_{t - 1}),
  \end{equation*}
  which is the part in the probability of a trajectory that does not depend on the policy, and
  \begin{equation*}
    \policy(s_0, a_0, \dots, s, a) \coloneqq \policy(a \mid s) \cdot \prod_{i = 0}^{t - 1} \policy(a_i \mid s_i).
  \end{equation*}
\end{lemma}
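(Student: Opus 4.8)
The plan is to unfold the relevant definitions and reduce the claim to a routine marginalization over trajectory prefixes, followed by the Markov factorization of a trajectory's probability. First I would recall that, by the definition of the policy-induced distribution, $\D{\policy}(s,a)/(1-\gamma) = \eta^\policy(s,a)$, and that the occupancy measure is defined by $\eta^\policy(s,a) = \sum_{t=0}^\infty \gamma^t \cdot P(s_t = s, a_t = a \mid \xi \sim \pi)$. Thus it suffices to establish, for each fixed $t$, the per-timestep identity
\[
P(s_t = s, a_t = a \mid \xi \sim \pi) = \sum_{s_0,a_0,\dots,s_{t-1},a_{t-1}} \tau(s_0, a_0, \dots, s)\cdot \pi(s_0, a_0, \dots, s, a),
\]
after which multiplying by $\gamma^t$ and summing over $t$ yields the lemma.

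Next I would prove this per-timestep identity by marginalizing over all length-$t$ prefixes: the event $\{s_t = s,\, a_t = a\}$ decomposes as a disjoint union over the choices of $(s_0,a_0,\dots,s_{t-1},a_{t-1})$, so $P(s_t=s,a_t=a\mid\pi)$ equals the sum over all such prefixes of the probability of the corresponding length-$(t+1)$ trajectory segment. By the Markov property of the MDP, this joint probability factorizes as
\[
\mu_0(s_0)\cdot\pi(a_0\mid s_0)\cdot\prod_{i=1}^{t-1}\big[\tau(s_i\mid s_{i-1},a_{i-1})\,\pi(a_i\mid s_i)\big]\cdot\tau(s\mid s_{t-1},a_{t-1})\,\pi(a\mid s).
\]
Grouping the initial-state and transition factors on one side and the policy factors on the other recovers exactly the two quantities $\tau(s_0, a_0, \dots, s)$ and $\pi(s_0, a_0, \dots, s, a)$ as defined in the statement.

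Finally I would verify the boundary case $t=0$, where the prefix is empty: the transition product collapses to $\mu_0(s)$ and the policy product collapses to $\pi(a\mid s)$, giving $P(s_0=s,a_0=a\mid\pi)=\mu_0(s)\,\pi(a\mid s)$, consistent with both sides. Summing the per-timestep identities against $\gamma^t$ and invoking $\eta^\policy(s,a) = \D{\policy}(s,a)/(1-\gamma)$ then completes the argument. I do not expect a genuine obstacle here, since the content is purely bookkeeping; the only point requiring care is keeping the index ranges of the two products aligned with the definitions of $\tau(\cdot)$ and $\pi(\cdot)$ — in particular the mild asymmetry that the transition product runs from $i=1$ while the policy product runs from $i=0$.
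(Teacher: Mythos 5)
Your proposal is correct and follows essentially the same route as the paper's proof: unfold the definition of $\eta^\policy$, marginalize the event $\{s_t = s, a_t = a\}$ over all length-$t$ prefixes, apply the Markov factorization of the prefix probability, and regroup the factors into the policy-independent part $\tau(\cdot)$ and the policy-dependent part $\policy(\cdot)$. Your explicit check of the $t = 0$ boundary case is a harmless addition the paper leaves implicit.
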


\begin{proof}
  We have
  \begin{align*}
    & \frac{\D{\policy}(s, a)}{1 - \gamma} = \sum_{t = 0}^{\infty} \gamma^t \Pro(s_t = s, a_t = a \mid \xi \sim \policy) \\
    &= \sum_{t = 0}^{\infty} \gamma^t \sum_{s_0, a_0, \dots, s_{t-1}, a_{t - 1}} \Pro(s_0, a_0, \dots, s_{t-1}, a_{t-1}, s, a \mid \policy)  \\
    &=  \sum_{t = 0}^{\infty} \gamma^t \sum_{s_0, a_0, \dots, s_{t-1}, a_{t - 1}}  
    \mu_0(s_0) \policy(a_0 \mid s_0) \left[ \prod_{i = 1}^{t - 1} \tau(s_i \mid s_{i - 1}, a_{i - 1}) \policy(a_i \mid s_i) \right] \tau(s \mid s_{t-1}, a_{t - 1}) \policy(a \mid s) \\
    &=  \sum_{t = 0}^{\infty} \gamma^t \sum_{s_0, a_0, \dots, s_{t - 1}, a_{t - 1}} \tau(s_0, a_0, \dots, s_{t - 1}, a_{t - 1}, s) \cdot  \policy(s_0, a_0, \dots, s_{t - 1}, a_{t - 1}, s, a).
  \end{align*}
\end{proof}

\begin{lemma}
  \label{lem:bound_policy_implies_bound_occupancy}
  Let $1 \geq \delta > 0$.
  Assume that $\policy(a \mid s) \geq 1 - \delta$ for all $(s, a) \in \supp \D{\policy_{*}}$ and that $\policy_{*}$ is a deterministic policy.\footnote{In this lemma, one does not need the assumption that $\policy_{*}$ is a worst-case policy, but this case will be the only application later on.} 
  Then for all $(s, a) \in \SxA$, one has
  \begin{equation}\label{eq:first_policy_inequalities}
    \D{\policy_{*}}(s, a) - \delta \cdot (1 - \gamma) \cdot \frac{\partial}{\partial \gamma}\left( \frac{\gamma}{1 - \gamma} \D{\policy_{*}}(s, a) \right)  \leq  \D{\policy}(s, a) \leq \D{\policy_{*}}(s, a) + \frac{\delta}{1 - \gamma}.
  \end{equation}
  This also results in the following two inequalities:
  \begin{equation}\label{eq:second_policy_inequalities} 
    \D{\policy}(\supp \D{\policy_{*}}) \geq 1 - \frac{\delta}{1 - \gamma}, \quad  \|\D{\policy} - \D{\policy_{*}}\| \leq \sqrt{|\SxA|} \cdot \frac{\delta}{1 - \gamma}.
  \end{equation}
\end{lemma}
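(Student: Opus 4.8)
The plan is to reduce everything to the per-timestep visitation probabilities $p_t^{\policy}(s,a) \coloneqq \Pro(s_t = s, a_t = a \mid \xi \sim \policy)$, so that $\D{\policy}(s,a) = (1 - \gamma)\sum_{t \geq 0} \gamma^t p_t^{\policy}(s,a)$, and to prove the two pointwise bounds in \eqref{eq:first_policy_inequalities} term by term in $t$; the displayed consequences \eqref{eq:second_policy_inequalities} then follow by summing. Throughout I use that $\policy_{*}$ is deterministic: writing $a^{*}(s)$ for its action in state $s$, the support $\supp \D{\policy_{*}}$ is exactly the set of $(s, a^{*}(s))$ with $s$ reachable under $\policy_{*}$, and every $\policy_{*}$-trajectory stays inside this support almost surely, so $p_t^{\policy_{*}}(s, a^{*}(s)) = \Pro(s_t = s \mid \policy_{*})$.

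For the lower bound, fix $(s,a) \in \supp \D{\policy_{*}}$ (outside the support both sides vanish and the claim is trivial). I lower-bound $p_t^{\policy}(s, a^{*}(s))$ by restricting the sum in \Cref{lem:refactored_result} to the single ``good'' action sequence agreeing with $a^{*}$ at every visited state. The transition factors are then identical to those of $p_t^{\policy_{*}}$, and each of the $t+1$ policy factors is $\geq 1 - \delta$ by hypothesis, giving $p_t^{\policy}(s, a^{*}(s)) \geq (1-\delta)^{t+1}\, p_t^{\policy_{*}}(s, a^{*}(s))$. Bernoulli's inequality $(1-\delta)^{t+1} \geq 1 - (t+1)\delta$ (valid as $\delta \leq 1$) then yields $p_t^{\policy}(s,a) \geq [1 - (t+1)\delta]\, p_t^{\policy_{*}}(s,a)$. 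Multiplying by $(1-\gamma)\gamma^t$ and summing, the $-\delta (1-\gamma)\sum_t (t+1)\gamma^t p_t^{\policy_{*}}$ part equals $-\delta(1-\gamma)\,\partial_\gamma\big(\tfrac{\gamma}{1-\gamma}\D{\policy_{*}}(s,a)\big)$, since $p_t^{\policy_{*}}$ is independent of $\gamma$ and $\tfrac{\gamma}{1-\gamma}\D{\policy_{*}}(s,a) = \sum_t \gamma^{t+1} p_t^{\policy_{*}}(s,a)$. This recovers the lower bound exactly.

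For the upper bound, the governing quantity is the ``leakage'' mass $\ell_t \coloneqq \Pro(\text{the trajectory leaves } \supp \D{\policy_{*}} \text{ at some step} \leq t \mid \policy)$. The same good-sequence estimate shows the probability of staying good through step $t$ is $\geq (1-\delta)^{t+1}$, so $\ell_t \leq 1 - (1-\delta)^{t+1} \leq (t+1)\delta$. For $(s,a) \notin \supp \D{\policy_{*}}$ a visit at time $t$ forces a leak, so $\sum_{(s,a)\notin\supp}p_t^{\policy}(s,a) \leq \ell_t$, and summing $(1-\gamma)\gamma^t$ against $(t+1)\delta$ with $\sum_t (t+1)\gamma^t = (1-\gamma)^{-2}$ gives $\D{\policy}(s,a) \leq \delta/(1-\gamma)$. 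For $(s,a) = (s, a^{*}(s)) \in \supp$ I split $p_t^{\policy}(s, a^{*}(s))$ into the contribution of trajectories good up to step $t-1$ — which is $\leq p_t^{\policy_{*}}(s, a^{*}(s))$ because dropping the policy factors $\leq 1$ raises it to the $\policy_{*}$ value — plus the leaked contribution $\leq \ell_{t-1} \leq t\delta$; summing yields the excess $\leq \gamma\delta/(1-\gamma) \leq \delta/(1-\gamma)$.

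Finally I read off \eqref{eq:second_policy_inequalities}. Summing the lower bound over $(s,a) \in \supp \D{\policy_{*}}$ and using $\D{\policy_{*}}(\supp \D{\policy_{*}}) = 1$ gives $\D{\policy}(\supp \D{\policy_{*}}) \geq 1 - \delta(1-\gamma)\,\partial_\gamma\big(\tfrac{\gamma}{1-\gamma}\big) = 1 - \delta/(1-\gamma)$. For the norm bound, the upper deviation is $\leq \delta/(1-\gamma)$ by \eqref{eq:first_policy_inequalities}, and the lower deviation $\delta(1-\gamma)\sum_t(t+1)\gamma^t p_t^{\policy_{*}}(s,a)$ is also $\leq \delta/(1-\gamma)$ because $p_t^{\policy_{*}} \leq 1$ and $(1-\gamma)^2\sum_t(t+1)\gamma^t = 1$; hence every coordinate of $\D{\policy} - \D{\policy_{*}}$ has absolute value $\leq \delta/(1-\gamma)$ and the Euclidean norm is $\leq \sqrt{|\SxA|}\cdot\delta/(1-\gamma)$. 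The main obstacle is the upper bound on support pairs, where one must cleanly separate the good and leaked parts of the visitation probability and check that the good part is dominated by $p_t^{\policy_{*}}$; the lower bound and both consequences are then Bernoulli plus geometric-series bookkeeping.
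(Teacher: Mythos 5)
Your proof is correct, and while your lower bound follows the paper's route exactly (restricting the sum of \Cref{lem:refactored_result} to $\policy_{*}$-compatible sequences, applying $(1-\delta)^{t+1}\geq 1-(t+1)\delta$, and recognizing $\sum_t (t+1)\gamma^t p_t^{\policy_{*}}$ as the derivative term), your upper bound takes a genuinely different path. The paper never analyzes leakage: it derives the upper bound \emph{from} the lower bound by conservation of mass, writing $\D{\policy}(s,a) = 1 - \sum_{(s',a')\neq(s,a)}\D{\policy}(s',a')$, discarding non-support terms, and applying the already-proved lower bound to the remaining support coordinates, with the derivative terms summing to $\delta/(1-\gamma)$. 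Your argument instead decomposes the visitation probability directly into a ``good'' part (trajectories $\policy_{*}$-compatible so far, dominated by $p_t^{\policy_{*}}$ after discarding policy factors $\leq 1$) and a ``leaked'' part bounded by $\ell_{t-1}\leq t\delta$. The paper's trick is more economical --- one estimate does all the work, and no new event analysis is needed --- whereas your leakage decomposition is more transparent probabilistically, gives the non-support bound $\D{\policy}(s,a)\leq \delta/(1-\gamma)$ without detour, and even yields the marginally sharper excess $\gamma\delta/(1-\gamma)$ on support pairs. One point your argument leans on, which you should make explicit since the paper only uses it implicitly (in the induction of \Cref{lem:how_set_R_hat}): the support is closed under good actions, i.e., from any $(s,a^{*}(s))\in\supp\D{\policy_{*}}$, every successor state $s'$ with $\tau(s'\mid s,a^{*}(s))>0$ is again $\policy_{*}$-reachable, so the hypothesis $\policy(a^{*}(s')\mid s')\geq 1-\delta$ is available at every step of the conditioning that gives $\ell_t \leq 1-(1-\delta)^{t+1}$; without this closure property the step-by-step bound would not go through. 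Your norm-bound bookkeeping also differs harmlessly from the paper's: you bound the per-coordinate lower deviation via $p_t^{\policy_{*}}\leq 1$, while the paper notes the derivative terms are non-negative and sum to $\delta/(1-\gamma)$ --- both are valid.
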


\begin{proof}
  Let $(s, a) \in \supp \D{\policy_{*}}$.
  We want to apply the summation formula in Lemma~\ref{lem:refactored_result}, which we recommend to recall.
    For simplicity, in the following we will write $s_0, a_0, \dots$ when we implicitly mean trajectories up until $s_{t - 1}, a_{t - 1}$. 
  Now, we will write ``$\policy_{*}$-comp'' into a sum to indicate that we only sum over states and actions that make the whole trajectory-segment \emph{compatible} with policy $\policy_{*}$, meaning all transitions have positive probability and the actions are deterministically selected by $\policy_{*}$. 
  Note that if we restrict to such summands, then each consecutive pair $(s_i, a_i) \in \supp \D{\policy_{*}}$ is in the support of $\D{\policy_{*}}$, and thus we can use our assumption $\policy(a_i \mid s_i) \geq 1 - \delta$ on those.
  We can use this strategy for a lower-bound:
  \begin{align}\label{eq:to_be_continued}
    \begin{split}
    \frac{\D{\policy}(s, a)}{1 - \gamma} & \geq
    \sum_{t = 0}^{\infty} \gamma^t \sum_{\substack{s_0, a_0, \dots \\ \policy_{*}-\mathrm{comp}}} \tau(s_0, a_0, \dots, s) \cdot  \policy(s_0, a_0, \dots, s, a) \\
    &\geq \sum_{t = 0}^{\infty} \gamma^t \sum_{\substack{s_0, a_0, \dots \\ \policy_{*}-\mathrm{comp}}} \tau(s_0, a_0, \dots, s) \cdot (1 - \delta)^{t + 1} \\
    &\geq \sum_{t = 0}^{\infty} \gamma^t \sum_{\substack{s_0, a_0, \dots \\ \policy_{*}-\mathrm{comp}}} \tau(s_0, a_0, \dots, s) \cdot \big(1 - \delta \cdot (t+1)\big).
  \end{split}
  \end{align}
  In the last step, we used the classical formula $(1 - \delta)^t \geq 1 - \delta \cdot t$, which can easily be proved by induction over $t$.
  Now, we split the sum up into two parts.
  For the first part, we note:
  \begin{align}\label{eq:first_ingredient_new}
    \begin{split}
    \sum_{t = 0}^{\infty} \gamma^t \sum_{\substack{s_0, a_0, \dots \\ \policy_{*}-\mathrm{comp}}} \tau(s_0, a_0, \dots, s) \cdot 1 &=
    \sum_{t = 0}^{\infty} \gamma^t \sum_{\substack{s_0, a_0, \dots \\ \policy_{*}-\mathrm{comp}}} \tau(s_0, a_0, \dots, s) \cdot \policy_{*}(s_0, a_0, \dots, s, a) \\
    &=  \sum_{t = 0}^{\infty} \gamma^t \sum_{\substack{s_0, a_0, \dots }} \tau(s_0, a_0, \dots, s) \cdot \policy_{*}(s_0, a_0, \dots, s, a) \\
    &= \frac{\D{\policy_{*}}(s, a)}{1 - \gamma}.
  \end{split}
  \end{align}
  For the second part, we similarly compute:
  \begin{align}\label{eq:just_an_ingredient}
    \begin{split}
    \sum_{t=0}^{\infty} (t+1) \gamma^t \sum_{\substack{s_0, a_0, \dots \\ \policy_*-\text{comp}}} \tau(s_0, a_0, \dots, s)
    &= \sum_{t=0}^{\infty} \frac{\partial}{\partial \gamma} \gamma^{t+1} \Pro(s_t = s, a_t = a \mid \policy_{*}) \\
    &= \frac{\partial}{\partial \gamma}\left( \frac{\gamma}{1 - \gamma} \cdot \D{\policy_{*}}(s, a) \right).
  \end{split}
  \end{align}
  Putting~\Cref{eq:first_ingredient_new,eq:just_an_ingredient} into ~\Cref{eq:to_be_continued} gives the first equation of~\Cref{eq:first_policy_inequalities} for the case that $(s, a) \in \supp \D{\policy_{*}}$. 
  For the case that $(s, a) \notin \supp \D{\policy_*}(s, a)$, the inequality is trivial since then $\D{\policy_{*}}(s, a) = 0$ and since the stated derivative is easily shown to be non-negative by writing out the occupancy explicitly (i.e., by reversing the previous computation).

  This then implies
  \begin{align*}
    \D{\policy}(\supp \D{\policy_{*}}) 
    &= \sum_{(s, a) \in \supp \D{\policy_{*}}} \D{\policy}(s, a) \\
    & \geq \sum_{(s, a) \in \supp \D{\policy_{*}}} \left(\D{\policy_{*}}(s, a) - \delta \cdot (1 - \gamma) \cdot \frac{\partial}{\partial \gamma} \left( \frac{\gamma}{1 - \gamma} \D{\policy_{*}}(s, a) \right) \right) \\
    &= 1 - \delta \cdot (1 - \gamma) \cdot \frac{\partial}{\partial \gamma} \left( \frac{\gamma}{1 - \gamma} \sum_{(s, a) \in \supp \D{\policy_{*}}} \D{\policy_{*}}(s, a) \right) \\
    &= 1 - \delta \cdot (1 - \gamma) \cdot \frac{1}{(1 - \gamma)^2} \\
    &= 1 - \frac{\delta}{1 - \gamma}.
  \end{align*}
  This shows the first inequality in~\Cref{eq:second_policy_inequalities}.
  To show the second inequality in~\Cref{eq:first_policy_inequalities}, we use the first one and compute:
  \begin{align*}
    \D{\policy}(s, a) &= 1- \sum_{(s', a') \neq (s, a)} \D{\policy}(s', a')  \\
    &\leq 1 - \sum_{(s', a') \in \supp \D{\policy_{*}}\setminus \{(s, a)\}} \D{\policy}(s', a') \\
    &\leq 1 - \sum_{(s', a') \in \supp \D{\policy_{*}}\setminus \{(s, a)\}} \D{\policy_{*}}(s', a')  \\
    & \ \ \ \ + \sum_{(s', a') \in \supp \D{\policy_{*}} \setminus \{(s, a)\}} \delta \cdot (1 - \gamma) \cdot \frac{\partial}{\partial \gamma} \left( \frac{\gamma}{1 - \gamma} \D{\policy_{*}}(s', a') \right) \\
    &\leq \D{\policy_{*}}(s, a) + \frac{\delta}{1 - \gamma},
  \end{align*}
  where in the last step we again used the trick of the previous computation of pulling the sum through the derivative. 
  Finally, we prove the second inequality in~\Cref{eq:second_policy_inequalities}, using what we know so far.
  First, note that
  \begin{equation*}
    \delta \cdot (1 - \gamma) \cdot \frac{\partial}{\partial \gamma}\left( \frac{\gamma}{1 - \gamma} \D{\policy_{*}}(s, a) \right) \leq \frac{\delta}{1 - \gamma}
  \end{equation*}
  since we showed that the left-hand-side is non-negative and sums to the right-hand-side over all $(s, a)$. 
  Consequently, we obtain:
  \begin{align*}
    \|\D{\policy} - \D{\policy_{*}}\| &= \sqrt{\sum_{(s, a)} \big( \D{\policy}(s, a) - \D{\policy_{*}}(s, a) \big)^2} \\
    &\leq \sqrt{\sum_{(s, a)}\left| \frac{\delta}{1 - \gamma} \right|^2} \\
    &= \sqrt{|\SxA|} \cdot \frac{\delta}{1 - \gamma}.
  \end{align*}
  This finishes the proof.
\end{proof}

We now fix more constants and notation.
Define $\States_0 \coloneqq \supp \mu_0$ as the support of $\mu_0$, and more generally $\States_t$ as the states reachable within $t$ timesteps using the fixed worst-case policy $\policy_{*}$:
\begin{equation*}
  \States_t \coloneqq \Big\lbrace s \ \  \big| \ \ \exists \policy_{*}-\text{compatible sequence } s_0, a_0, \dots, s_{k-1}, a_{k-1}, s \text{ for $k \leq t$}  \Big\rbrace.
\end{equation*}
Since there are only finitely many states and $\States_{t} \subseteq \States_{t+1}$, there is a $t_0$ such that $\States_{t_0}$ is maximal.
Set $\D{\policy_{*}}(s) \coloneqq \sum_{a} \D{\policy_{*}}(s, a)$.
Recall the notation $\tau$ from Lemma~\ref{lem:refactored_result}.
Define the following constant which, given the MDP, only depends on $\delta > 0$ and $\policy_{*}$:
\begin{equation}\label{eq:constant_for_negative_regularized_results}
  C(\delta, \policy_{*}, \InitStateDistribution, \TransitionDistribution, \gamma) \coloneqq \min_{\substack{t \in [0:t_0] \\  s_0, a_0, \dots, s_{t-1}, a_{t-1}, s: \ \policy_{*}-\text{comp}}} \gamma^t \tau(s_0, a_0 \dots, s) \cdot (1 - \delta)^t \cdot \delta > 0.
\end{equation}
We get the following result:
\begin{lemma}
  \label{lem:how_set_R_hat}
  Define the reward function $\hat{\reward}: \SxA \to \Reals$ as follows:
  \begin{equation}\label{eq:definition_reward}
    \hat{\reward}(s, a) \coloneqq 
    \begin{cases}
      \reward(s, a), \ \ (s, a) \notin \supp \D{\policy_{*}}, \\
      \max \reward + \frac{\lambda}{C(\delta, \policy_{*}, \InitStateDistribution, \TransitionDistribution, \gamma)} \cdot \reg(\policy_{*}), \ \text{else}.
    \end{cases}
  \end{equation}
  Assume that $\hat{\policy}$ is $(\lambda, \reg)$-RLHF optimal with respect to $\hat{\reward}$.
  Then for all $(s, a) \in \supp \D{\policy_{*}}$, we have $\hat{\policy}(a \mid s) \geq 1 - \delta$.
\end{lemma}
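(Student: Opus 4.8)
The plan is to argue by contradiction. I assume $\hat\policy$ is $(\lambda,\reg)$-optimal for $\hat\reward$ but that the conclusion fails, so that there is a state $s^*$ reachable under $\policy_*$ with $\hat\policy(\policy_*(s^*)\mid s^*) < 1-\delta$. I then show that $\policy_*$ itself attains a strictly larger regularized objective $\J_{\hat\reward}(\cdot) - \lambda\reg(\cdot)$ than $\hat\policy$, contradicting optimality. The construction of $\hat\reward$ in \eqref{eq:definition_reward} is engineered precisely so that the reward bonus $\frac{\lambda}{C}\reg(\policy_*)$ granted on $\supp\D{\policy_*}$ compensates, through the constant $C$ from \eqref{eq:constant_for_negative_regularized_results}, for any regularizer saving $\hat\policy$ might gain by leaking occupancy off the support.

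Writing $\J_{\hat\reward}(\policy) = \frac{1}{1-\gamma}\D{\policy}\cdot\vec{\hat\reward}$ and using that $\D{\policy_*}$ is supported entirely on $\supp\D{\policy_*}$ (so it sums to $1$ there), I first obtain $\J_{\hat\reward}(\policy_*) = \frac{1}{1-\gamma}\big(\max\reward + \frac{\lambda}{C}\reg(\policy_*)\big)$. For $\hat\policy$, setting $p := \D{\hat\policy}(\supp\D{\policy_*})$ and bounding $\hat\reward\leq\max\reward$ off the support gives $\J_{\hat\reward}(\hat\policy)\leq\frac{1}{1-\gamma}\big(\max\reward + p\cdot\frac{\lambda}{C}\reg(\policy_*)\big)$. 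Subtracting, the reward gap is at least $\frac{(1-p)\lambda\reg(\policy_*)}{(1-\gamma)C}$, whereas the regularizer can save at most $\lambda\reg(\policy_*)$ since $\reg(\hat\policy)\geq 0$. So everything reduces to lower-bounding the leaked mass $1-p = \D{\hat\policy}(\SxA\setminus\supp\D{\policy_*})$ by $(1-\gamma)C$.

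This leaked-mass bound is the crux and where I expect the main difficulty. Since $\policy_*$ is deterministic, every $a'\neq\policy_*(s^*)$ satisfies $(s^*,a')\notin\supp\D{\policy_*}$, so $1-p\geq\sum_{a'\neq\policy_*(s^*)}\D{\hat\policy}(s^*,a') = (1-\gamma)\big(1-\hat\policy(\policy_*(s^*)\mid s^*)\big)\sum_{t}\gamma^t\Pro(s_t=s^*\mid\hat\policy) > (1-\gamma)\delta\sum_t\gamma^t\Pro(s_t=s^*\mid\hat\policy)$. To lower-bound the discounted visitation of $s^*$ I would take $s^*$ to be a violating state reachable by a \emph{shortest} $\policy_*$-compatible path $s_0,a_0,\dots,s^*$: minimality forces every interior state to be non-violating, i.e. $\hat\policy(a_i\mid s_i) = \hat\policy(\policy_*(s_i)\mid s_i)\geq 1-\delta$, and it forces the length $t\leq t_0$ because every reachable state lies in $\States_{t_0}$. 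Feeding this single path into the expansion of Lemma~\ref{lem:refactored_result} yields $\Pro(s_t=s^*\mid\hat\policy)\geq\tau(s_0,a_0,\dots,s^*)(1-\delta)^t$, so $1-p > (1-\gamma)\,\gamma^t\tau(s_0,a_0,\dots,s^*)(1-\delta)^t\delta\geq(1-\gamma)C$ by the very definition of $C$ as the minimum of exactly these quantities over $\policy_*$-compatible segments of length at most $t_0$.

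Combining the two estimates, the reward gap $\frac{(1-p)\lambda\reg(\policy_*)}{(1-\gamma)C} > \lambda\reg(\policy_*)$ strictly exceeds the maximal regularizer saving $\lambda\reg(\policy_*)$, whence $\J_{\hat\reward}(\policy_*)-\lambda\reg(\policy_*) > \J_{\hat\reward}(\hat\policy)-\lambda\reg(\hat\policy)$, the desired contradiction. This strict comparison relies on $\reg(\policy_*) > 0$; the degenerate case $\reg(\policy_*)=0$ (which does not arise for the KL regularizer against a nowhere-zero reference $\policy_\mref$ unless $\policy_\mref$ already coincides with $\policy_*$ on reachable states) would be handled separately, observing that then $\hat\reward\equiv\max\reward$ on the support forces any optimum to concentrate its occupancy there.
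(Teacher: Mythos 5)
Your proposal is correct and is essentially the paper's own proof in different clothing: your ``shortest $\policy_{*}$-compatible path to a violating state'' argument is exactly the paper's induction over reachability depth recast as a minimal-counterexample argument, and your leaked-mass comparison (lower-bounding $1-p$ by the single-state leak $\sum_{a' \neq \policy_{*}(s^*)} \D{\hat{\policy}}(s^*, a')$, then by $(1-\gamma)\gamma^t \tau(\cdot)(1-\delta)^t \delta \geq (1-\gamma) C$) reproduces the paper's central computation with the same constant $C$ from Equation~\eqref{eq:constant_for_negative_regularized_results}. If anything you are slightly more careful than the paper: you keep the $(1-\gamma)$ normalization factors explicit and consistent (they cancel against the $\frac{\lambda}{C}$ bonus, as your final estimate shows), and you correctly flag that the strict contradiction requires $\reg(\policy_{*}) > 0$ --- a condition the paper's proof uses silently, since its chain of inequalities becomes non-strict (hence yields no contradiction) when $\reg(\policy_{*}) = 0$.
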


\begin{proof}
  We show this statement by induction over the number of timesteps that $\policy_{*}$ needs to reach a given state.
  Thus, first assume $s \in \States_{0}$ and $a = \policy_{*}(s)$.
  We do a proof by contradiction. 
  Thus, assume that $\hat{\policy}(a \mid s) < 1 - \delta$.
  This means that $\sum_{a' \neq a} \hat{\policy}(a' \mid s) \geq \delta$, and consequently 
  \begin{equation}\label{eq:inequality_on_occu_measure}
    \sum_{a' \neq a}\D{\hat{\policy}}(s, a') \geq \mu_0(s) \cdot \delta \geq C(\delta, \policy_{*}, \InitStateDistribution, \TransitionDistribution, \gamma). 
  \end{equation}
  We now claim that from this it follows that $\policy_{*}$ is more optimal than $\hat{\policy}$ with respect to RLHF, a contradiction to the optimality of $\hat{\policy}$.
  Indeed:
  \begin{align}\label{eq:looong_computation}
    \begin{split}
      \J_{\hat{\reward}}(\hat{\policy}) - \lambda \reg(\hat{\policy}) &  \overset{(1)}{\leq}
    \J_{\hat{\reward}}(\hat{\policy}) \\
    & \overset{(2)}{=} \sum_{a' \neq a} \D{\hat{\policy}}(s, a') \cdot \reward(s, a') + \sum_{(s', a') \notin \{s\} \times \Actions \setminus \{a\}} \D{\hat{\policy}}(s', a') \cdot \hat{\reward}(s', a') \\
    & \overset{(3)}{\leq}  \sum_{a' \neq a} \D{\hat{\policy}}(s, a') \cdot \max \reward + \hat{\reward}(s, a) \cdot \sum_{(s', a') \notin \{s\} \times \Actions \setminus \{a\}} \D{\hat{\policy}}(s', a'') \\
    &= \sum_{a' \neq a} \D{\hat{\policy}}(s, a') \cdot \max \reward + \left( 1 - \sum_{a' \neq a}\D{\hat{\policy}}(s, a') \right) \cdot \hat{\reward}(s, a) \\
    &\overset{(4)}{\leq} C(\delta, \policy_{*}, \InitStateDistribution, \TransitionDistribution, \gamma) \cdot \max \reward + \big( 1 - C(\delta, \policy_{*}, \InitStateDistribution, \TransitionDistribution, \gamma) \big) \cdot \hat{\reward}(s, a) \\
    &\overset{(5)}{=} \J_{\hat{\reward}}(\policy_{*}) + C(\delta, \policy_{*}, \InitStateDistribution, \TransitionDistribution, \gamma) \cdot \Big( \max \reward - \hat{\reward}(s, a) \Big) \\
    &\overset{(6)}{=} \J_{\hat{\reward}}(\policy_{*}) - C(\delta, \policy_{*}, \InitStateDistribution, \TransitionDistribution, \gamma) \cdot \frac{\lambda}{C(\delta, \policy_{*}, \InitStateDistribution, \TransitionDistribution, \gamma)} \cdot \reg(\policy_{*}) \\
    &= \J_{\hat{\reward}}(\policy_{*}) - \lambda \reg(\policy_{*}).
  \end{split}
  \end{align}
  In step (1), we use the non-negativity of $\omega$. 
  In step (2), we use that $(s, a') \notin \supp \D{\policy_{*}}$, and so $\hat{\reward}(s, a') = \reward(s, a')$.
  In the right term in step (3), we use that $(s, a) \in \supp \D{\policy_{*}}$, and thus $\hat{\reward}(s, a) \geq \hat{\reward}(s', a')$, by definition of $\hat{\reward}$.
  In step (4), we use that $\hat{\reward}(s, a) \geq \max \reward$ and~\Cref{eq:inequality_on_occu_measure}.
  Step (5) uses that $\J_{\hat{\reward}}(\policy_{*}) = \hat{\reward}(s, a)$, following from the fact that $\hat{\reward}$ is constant for policy $\policy_{*}$.
  Step (6) uses the concrete definition of $\hat{\reward}$.
  Thus, we have showed a contradiction to the RLHF-optimality of $\hat{\policy}$, from which it follows that $\hat{\policy}(a \mid s) \geq 1 - \delta$.

  Now assume the statement is already proven for $t - 1$ and let $s \in \States_{t} \setminus \States_{t - 1}$.
  Then there exists a $\policy_{*}$-compatible sequence $s_0, a_0, \dots, s_{t-1}, a_{t-1}$ leading to $s$.
  We necessarily have $s_i \in \States_{i}$ for all $i = 0, \dots, t-1$, and so we obtain $\hat{\policy}(a_i \mid s_i) \geq 1 - \delta$ by the induction hypothesis.
  Now, let $a \coloneqq \policy_{*}(s)$ and assume we had $\hat{\policy}(a \mid s) < 1 - \delta$.
  As before, we then have $\sum_{a' \neq a} \hat{\policy}(a' \mid s) \geq \delta$.
  Consequently, we get
  \begin{align*}
    \sum_{a' \neq a} \D{\hat{\policy}}(s, a') &\geq \gamma^t \cdot \sum_{a' \neq a} \tau(s_0, a_0, \dots, s) \cdot \hat{\policy}(s_0, a_0, \dots, s, a') \\
    &\geq \gamma^t \cdot \tau(s_0, a_0, \dots, s) \cdot (1 - \delta)^{t} \cdot \delta  \\
    &\geq C(\delta, \policy_{*}, \InitStateDistribution, \TransitionDistribution, \gamma)
  \end{align*}
  Then the same computation as in~\Cref{eq:looong_computation} leads to the same contradiction again, and we are done.
\end{proof}

\begin{theorem}
  \label{thm:putting_negativity_together_app}
  Define
  \begin{equation*}
  \delta \coloneqq \frac{(1 - \gamma)\cdot (1 - L) \cdot \range \J_{\reward}}{\sqrt{|\SxA|} \cdot \|\reward\|} > 0.
  \end{equation*}
  Let $\mathcal{M} = \MDP$ be our MDP.
  Set
  \begin{equation}
    \label{eq:to_link_from_main}
    C \coloneqq C(\mathcal{M}, \pi_{*}, L, \lambda, \reg) \coloneqq  \frac{\lambda \cdot \reg(\policy_{*})}{\range \reward \cdot C(\delta, \policy_{*}, \InitStateDistribution, \TransitionDistribution, \gamma)} < \infty,
  \end{equation}
  with the ``inner'' $C(\delta, \pi_*, \mu_0, \tau, \gamma)$ defined in Equation~\eqref{eq:constant_for_negative_regularized_results}.
  Assume that
  \begin{equation}\label{eq:most_general_condition_on_D_app}
    D(\supp \D{\policy_{*}}) \leq \frac{\epsilon}{1 + C} .
  \end{equation}
  Then $D \in \unsafeDR$.
\end{theorem}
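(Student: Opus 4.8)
The plan is to exhibit one reward model $\hat{\reward}$ together with a $(\lambda, \reg)$-optimal policy $\hat{\policy}$ that jointly witness $D \in \unsafeDR$, by assembling the four preceding lemmas rather than doing fresh analysis. First I would set $\hat{\reward}$ to be exactly the reward function of~\Cref{lem:how_set_R_hat} (\Cref{eq:definition_reward}): it agrees with $\reward$ off $\supp \D{\policy_{*}}$ and is inflated to the constant value $\max \reward + \frac{\lambda}{C(\delta, \policy_{*}, \InitStateDistribution, \TransitionDistribution, \gamma)} \cdot \reg(\policy_{*})$ on $\supp \D{\policy_{*}}$. I would then let $\hat{\policy}$ be any $(\lambda, \reg)$-optimal policy for $\hat{\reward}$, which makes Property~2 of the unsafety definition hold by construction. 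It then remains to verify the high-regret Property~3 and the low-error Property~1.

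For the regret, \Cref{lem:how_set_R_hat} applied to this $\hat{\reward}$ immediately yields $\hat{\policy}(a \mid s) \geq 1 - \delta$ for every $(s, a) \in \supp \D{\policy_{*}}$. Feeding this into the second inequality of~\Cref{eq:second_policy_inequalities} in~\Cref{lem:bound_policy_implies_bound_occupancy} gives $\|\D{\hat{\policy}} - \D{\policy_{*}}\| \leq \sqrt{|\SxA|} \cdot \frac{\delta}{1 - \gamma}$. The key observation is that the specific choice $\delta = \frac{(1 - \gamma)(1 - L)\,\range \J_{\reward}}{\sqrt{|\SxA|}\,\|\reward\|}$ is engineered precisely so that this bound collapses to $\frac{(1 - L)\,\range \J_{\reward}}{\|\reward\|} = C(L, \reward)$, the exact threshold appearing in~\Cref{lem:close_to_worst_case_then_high_regret}. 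That lemma (with $\policy_{*}$ playing the role of the worst-case policy, so $\min \J_{\reward} = \J_{\reward}(\policy_{*})$) then delivers $\Reg{\reward}{\hat{\policy}} \geq L$.

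For the error, I would use that $\hat{\reward}$ differs from $\reward$ only on $\supp \D{\policy_{*}}$, where $|\hat{\reward}(s,a) - \reward(s,a)| = \max \reward - \reward(s,a) + \frac{\lambda}{C(\delta, \policy_{*}, \InitStateDistribution, \TransitionDistribution, \gamma)}\,\reg(\policy_{*}) \leq \range \reward + \frac{\lambda}{C(\delta, \policy_{*}, \InitStateDistribution, \TransitionDistribution, \gamma)}\,\reg(\policy_{*})$, using non-negativity of $\reg$. Dividing by $\range \reward$ bounds the per-transition relative error by $1 + C$, where $C$ is exactly the outer constant of~\Cref{eq:to_link_from_main}. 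Summing against $D$ (only the support matters) and invoking the hypothesis~\Cref{eq:most_general_condition_on_D_app} then gives an expected error of at most $D(\supp \D{\policy_{*}}) \cdot (1 + C) \leq \frac{\epsilon}{1 + C} \cdot (1 + C) = \epsilon$, establishing Property~1 and hence $D \in \unsafeDR$.

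The genuine analytic work has already been front-loaded into~\Cref{lem:how_set_R_hat} (the induction over reachable states showing the optimal policy concentrates on $\policy_{*}$) and~\Cref{lem:bound_policy_implies_bound_occupancy} (translating a per-state policy bound into an occupancy-measure bound). At the level of this theorem, the only real subtlety is the constant bookkeeping that makes the two pieces line up: checking that the definition of $\delta$ converts the occupancy bound into precisely $C(L,\reward)$, and that the definition of the outer $C$ converts the inflated reward gap into precisely $1 + C$. I expect no further obstacle, only the need to carry the $\sqrt{|\SxA|}$, the $(1 - \gamma)$, and the $\range \reward$ factors through without error.
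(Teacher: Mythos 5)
Your proposal is correct and follows the paper's own proof essentially step for step: the same construction of $\hat{\reward}$ from \Cref{lem:how_set_R_hat}, the same chaining of \Cref{lem:bound_policy_implies_bound_occupancy} into \Cref{lem:close_to_worst_case_then_high_regret} via the engineered value of $\delta$, and the same error computation bounding the expected relative error by $D(\supp \D{\policy_{*}}) \cdot (1 + C) \leq \epsilon$. The constant bookkeeping you flag as the only subtlety is exactly where the paper's proof also spends its effort, so there is nothing to add.
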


\begin{proof}
  We prove the theorem by showing that for every data distribution $D \in \DistributionsOverSet{\SxA}$ that fulfills the conditions of \Cref{thm:putting_negativity_together_app}, there exists a reward function $\hat{\reward}$ together with a $(\lambda, \reg)$-RLHF optimal policy $\hat{\policy}$ with respect to $\hat{\reward}$ such that
  \begin{itemize}
    \item $\Expect{(s, a) \sim D}{\frac{|\hat{\reward}(s, a) - \reward(s, a)|}{\range \reward}} \leq \epsilon$,
    \item $\Reg{\reward}{\hat{\policy}} \geq L$.
  \end{itemize}
  Towards that goal, define $\hat{\reward}$ as in~\Cref{eq:definition_reward} and $\hat{\policy}$ as a $(\lambda, \reg)$-RLHF optimal policy for $\hat{\reward}$.
  Then~\Cref{lem:how_set_R_hat} shows that $\hat{\policy}(s \mid a) \geq 1 - \delta$ for all $(s, a) \in \supp \D{\policy_{*}}$.
  Consequently,~\Cref{lem:bound_policy_implies_bound_occupancy} implies that
  \begin{equation*}
    \|\D{\hat{\policy}} - \D{\policy_{*}}\|  \leq \sqrt{|\SxA|} \cdot \frac{\delta}{1 - \gamma} = \frac{(1 - L) \cdot \range \J_{\reward}}{\|\reward\|}.
  \end{equation*}
  Consequently,~\Cref{lem:close_to_worst_case_then_high_regret} shows that $\Reg{\reward}{\hat{\policy}} \geq L$, and thus the second claim.
  For the first claim, note that
  \begin{align*}
    \Expect{(s, a) \sim D}{|\hat{\reward}(s, a) - \reward(s, a)|} &= 
    \sum_{(s, a) \in \supp \D{\policy_{*}}} D(s, a) \cdot \left( \max \reward + \frac{\lambda}{C(\delta, \policy_{*}, \InitStateDistribution, \TransitionDistribution, \gamma)} \reg(\policy_{*}) - \reward(s, a) \right) \\
    &\leq D(\supp \D{\policy_{*}}) \cdot \left( \range \reward + \frac{\lambda}{C(\delta, \policy_{*}, \InitStateDistribution, \TransitionDistribution, \gamma)} \reg(\policy_{*}) \right) \\
      &\leq \epsilon \cdot \range \reward, 
  \end{align*}
  where the last claim follows from the assumed inequality in $D(\supp \D{\policy_{*}})$.
\end{proof}

We obtain the following corollary, which is very similar to~\Cref{lemma:interpretable_negative_result_app}. 
The main difference is that the earlier result only assumed a poliy of regret $L$ and not regret $1$:

\begin{corollary}
  \label{cor:specialize_to_lambda_0}
  ~\Cref{thm:putting_negativity_together_app} specializes as follows for the case $\lambda = 0$:
  Assume $D(\supp \D{\policy_{*}}) \leq \epsilon$.
  Then there exists a reward function $\hat{\reward}$ together with an optimal policy $\hat{\policy}$ that satisfies the two inequalities from the previous result.
\end{corollary}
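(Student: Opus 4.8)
The plan is to exploit the fact that setting $\lambda = 0$ collapses all of the regularization-specific machinery in the proof of~\Cref{thm:putting_negativity_together_app}. First I would observe that the constant $C$ from~\eqref{eq:to_link_from_main} carries $\lambda$ in its numerator, so $C = 0$ when $\lambda = 0$; the admissibility condition~\eqref{eq:most_general_condition_on_D_app} then reads $D(\supp \D{\policy_{*}}) \le \epsilon/(1+0) = \epsilon$, which is exactly the hypothesis of the corollary. Simultaneously, the reward construction~\eqref{eq:definition_reward} degenerates: the bonus term $\frac{\lambda}{C(\delta,\policy_{*},\InitStateDistribution,\TransitionDistribution,\gamma)}\reg(\policy_{*})$ vanishes, leaving $\hat{\reward}(s,a) = \reward(s,a)$ off $\supp \D{\policy_{*}}$ and $\hat{\reward}(s,a) = \max \reward$ on it --- precisely the construction already used in~\Cref{lemma:interpretable_negative_result_app}.

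Rather than route through~\Cref{lem:how_set_R_hat}, I would exhibit the witnessing optimal policy directly as $\hat{\policy} \coloneqq \policy_{*}$. To see that $\policy_{*}$ is optimal for $\hat{\reward}$, write $\J_{\hat{\reward}}(\policy) = \eta^{\policy}\cdot \vec{\hat{\reward}}$ and note that every entry of $\vec{\hat{\reward}}$ is at most $\max \reward$, while $\sum_{(s,a)}\eta^{\policy}(s,a) = \frac{1}{1-\gamma}$ for every $\policy$; hence $\J_{\hat{\reward}}(\policy) \le \frac{\max \reward}{1-\gamma}$ for all $\policy$. Since $\eta^{\policy_{*}}$ is supported exactly on $\supp \D{\policy_{*}}$, where $\hat{\reward} = \max \reward$, this upper bound is attained by $\policy_{*}$, so $\policy_{*}$ is optimal; and with $\lambda = 0$, optimality for $\hat{\reward}$ coincides with $(\lambda,\reg)$-optimality.

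It then remains to verify the two inequalities. The regret inequality is immediate: $\policy_{*}$ is by standing assumption a deterministic worst-case policy, so $\Reg{\reward}{\hat{\policy}} = \Reg{\reward}{\policy_{*}} = 1 \ge L$. The error inequality follows from the same computation as at the end of the proof of~\Cref{thm:putting_negativity_together_app} with $C = 0$ (equivalently, the computation in~\Cref{lemma:interpretable_negative_result_app}): since $\hat{\reward}$ and $\reward$ agree off $\supp \D{\policy_{*}}$,
\begin{equation*}
  \Expect{(s,a)\sim D}{\frac{|\hat{\reward}(s,a) - \reward(s,a)|}{\range \reward}} = \sum_{(s,a)\in \supp \D{\policy_{*}}} D(s,a)\cdot\frac{\max\reward - \reward(s,a)}{\range\reward} \le D(\supp \D{\policy_{*}}) \le \epsilon.
\end{equation*}

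The one genuine subtlety --- and the part I expect to require care in the write-up --- is justifying that we may specialize at $\lambda = 0$ at all, given that~\Cref{thm:putting_negativity_together_app} is stated for $\lambda \in (0,\infty)$ and its proof invokes~\Cref{lem:how_set_R_hat}. That lemma's contradiction argument needs the strictly positive reward bonus to force \emph{every} RLHF-optimal policy to concentrate on $\policy_{*}$, and this argument genuinely degenerates at $\lambda = 0$, where $\policy_{*}$ is only one of possibly many optimal policies. This is exactly why the corollary weakens the conclusion to the \emph{existence} of a high-regret optimal policy rather than a statement about all of them; sidestepping~\Cref{lem:how_set_R_hat} by taking $\hat{\policy} = \policy_{*}$ outright is what makes the degenerate case go through cleanly.
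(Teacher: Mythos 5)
Your proof is correct, and it uses the same reward construction the paper records (the simplified $\hat{\reward}$ equal to $\reward$ off $\supp \D{\policy_{*}}$ and $\max \reward$ on it), but your route to the conclusion is genuinely different from --- and more careful than --- the paper's. The paper's proof is the single line ``this directly follows from $\lambda = 0$,'' which implicitly asks the reader to run the proof of \Cref{thm:putting_negativity_together_app} at $\lambda = 0$; as you correctly observe, that cannot be done literally, since \Cref{lem:how_set_R_hat}'s contradiction argument needs the strictly positive bonus $\frac{\lambda}{C(\delta, \policy_{*}, \InitStateDistribution, \TransitionDistribution, \gamma)}\reg(\policy_{*})$ to make $\pi_{*}$ strictly preferable, and at $\lambda = 0$ the chain of inequalities in Equation~\eqref{eq:looong_computation} only yields $\J_{\hat{\reward}}(\hat{\policy}) \leq \J_{\hat{\reward}}(\policy_{*})$, which is no contradiction (e.g., if $\reward$ attains $\max \reward$ off $\supp \D{\policy_{*}}$, other optimal policies exist that do not concentrate on $\policy_{*}$). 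Your repair --- exhibiting $\hat{\policy} = \policy_{*}$ directly, checking optimality via $\J_{\hat{\reward}}(\policy) = \eta^{\policy} \cdot \vec{\hat{\reward}} \leq \frac{\max \reward}{1-\gamma}$ with equality on $\supp \D{\policy_{*}}$, and getting $\Reg{\reward}{\policy_{*}} = 1 \geq L$ for free --- is exactly what the existential form of the corollary permits, and it recovers the argument of \Cref{lemma:interpretable_negative_result_app} as a special case. What your approach buys is a self-contained, rigorous proof that does not lean on machinery stated only for $\lambda > 0$; what it gives up (necessarily, as you note) is the stronger ``all optimal policies'' property that \Cref{lem:how_set_R_hat} delivers in the regularized case, which is fine because both the corollary and the definition of unsafety are existential.
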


\begin{proof}
  This directly follows from $\lambda = 0$.
  For completeness, we note that the definition of $\hat{\reward}$ also simplifies, namely to
  \begin{equation*}
    \hat{\reward}(s, a) =
    \begin{cases}
      \reward(s, a), \ (s, a) \notin \supp \D{\policy_{*}} \\
      \max \reward, \ \text{else}.
    \end{cases}
  \end{equation*}
\end{proof}

We now present another specialization of Theorem~\ref{thm:putting_negativity_together_app}. 
Namely, from now on, assume that $D = \D{\policy_{\mref}}$ and $\reg(\policy) = \DKL{\policy}{\policy_{\mref}}$.
In other words, the dataset used to evaluate the reward function is sampled from the same (safe) policy used in KL-regularization.
This leads to the following condition specializing the one from Equation~\eqref{eq:most_general_condition_on_D_app}:
\begin{equation}
  \label{eq:specialized_condition}
  \D{\policy_{\mref}}(\supp \D{\policy_{*}}) \leq \frac{\epsilon}{1 + \frac{\lambda \cdot \DKL{\policy_{*}}{\policy_{\mref}}}{\range \reward \cdot C(\delta, \policy_{*}, \InitStateDistribution, \TransitionDistribution, \gamma)}}.
\end{equation}
$\policy_{\mref}$ now appears on both the left and right side of the equation, and so one can wonder whether it is ever possible that the inequality holds.
After all, if $\D{\policy_{\mref}}(\supp \D{\policy_{*}})$ ``gets smaller'', then $\DKL{\policy_{*}}{\policy_{\mref}}$ should usually get ``larger''.
However, halfing each of the probabilities $\D{\policy_{\mref}}(s, a)$ for $(s, a) \in \supp \D{\policy_{*}}$ leads to only an increase by the addition of $\log 2$ of $\DKL{\policy_{*}}{\policy_{\mref}}$.
Thus, intuitively, we expect the inequality to hold when the left-hand-side is very small.
An issue is that the KL divergence can disproportionately blow up in size if some \emph{individual} probabilities $\D{\policy_{\mref}}(s, a)$ for $(s, a) \in \supp \D{\policy_{*}}$ are very small compared to other such probabilities.
This can be avoided by a bound in the proportional difference of these probabilities.
We thus obtain the following sufficient condition for a ``negative result'':\footnote{The condition is quite strong and we would welcome attempts to weaken it.}

\begin{corollary}
  \label{cor:proportionality_aware_sufficient_condition_app}
  Let the notation be as in Theorem~\ref{thm:putting_negativity_together_app} and assume $D = \D{\policy_{\mref}}$ and $\reg(\policy) = \DKL{\policy}{\policy_{\mref}}$.
  Let $K \geq 0$ be a constant such that
  \begin{equation*}
    \max_{(s, a) \in \supp \D{\policy_{*}}} \D{\policy_{\mref}}(s, a) \leq K \cdot \min_{(s, a) \in \supp \D{\policy_{*}}} \D{\policy_{\mref}}(s, a).
  \end{equation*}
  Assume that
  \begin{equation}\label{eq:condition_on_small_prob_app}
    \min_{(s, a) \in \supp \D{\policy_{*}}}\D{\policy_{\mref}}(s, a) \leq \left( \frac{\epsilon}{K \cdot |\States| \cdot \left(1 + \frac{\lambda}{\range \reward \cdot C(\delta, \policy_{*}, \InitStateDistribution, \TransitionDistribution, \gamma)}\right)} \right)^2.
  \end{equation}
  Then Equation~\eqref{eq:most_general_condition_on_D_app} holds, and the conclusion of the theorem thus follows.
\end{corollary}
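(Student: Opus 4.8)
The plan is to verify the single hypothesis \eqref{eq:most_general_condition_on_D_app} of Theorem~\ref{thm:putting_negativity_together_app}; once this holds in the specialized form \eqref{eq:specialized_condition}, the conclusion $\D{\mref} \in \unsafeDR$ follows immediately from that theorem. Throughout I would abbreviate $C \coloneqq C(\delta, \policy_{*}, \InitStateDistribution, \TransitionDistribution, \gamma)$ for the inner constant from \eqref{eq:constant_for_negative_regularized_results}, and set $p_{\min} \coloneqq \min_{(s,a) \in \supp \D{\policy_*}} \D{\policy_\mref}(s,a)$, $p_{\max} \coloneqq \max_{(s,a) \in \supp \D{\policy_*}} \D{\policy_\mref}(s,a)$, and $P \coloneqq \D{\policy_\mref}(\supp \D{\policy_*})$. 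Clearing the denominator, the target \eqref{eq:specialized_condition} is equivalent to
\begin{equation*}
  P \cdot \left( 1 + \frac{\lambda \cdot \DKL{\policy_*}{\policy_\mref}}{\range \reward \cdot C} \right) \leq \epsilon,
\end{equation*}
so it suffices to control $P$ and the product $P \cdot \DKL{\policy_*}{\policy_\mref}$ separately.

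For $P$, I would use that $\policy_*$ is deterministic, so $\D{\policy_*}$ places mass on at most one action per state and hence $|\supp \D{\policy_*}| \leq |\States|$; combined with the proportionality hypothesis this gives $P \leq |\States| \cdot p_{\max} \leq |\States| \cdot K \cdot p_{\min}$. For the divergence, determinism collapses each per-state term to $\DKL{\policy_*(\cdot \mid s)}{\policy_\mref(\cdot \mid s)} = -\log \policy_\mref(\policy_*(s) \mid s)$. Since $\policy_\mref(\policy_*(s) \mid s) = \D{\policy_\mref}(s,\policy_*(s)) / \D{\policy_\mref}(s) \geq \D{\policy_\mref}(s,\policy_*(s)) \geq p_{\min}$ — using that the state-marginal satisfies $\D{\policy_\mref}(s) \leq 1$ and that $(s,\policy_*(s)) \in \supp \D{\policy_*}$ for every state reachable under $\policy_*$ — and since the state weights of the occupancy-weighted divergence sum to one, I obtain $\DKL{\policy_*}{\policy_\mref} \leq \log(1/p_{\min})$.

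The two estimates then combine through the elementary inequality $x \log(1/x) \leq \sqrt{x}$ for $x \in (0,1]$ (equivalently $2u \log(1/u) \leq 2/e < 1$ with $u = \sqrt{x}$) together with $p_{\min} \leq \sqrt{p_{\min}}$:
\begin{align*}
  P \cdot \left( 1 + \frac{\lambda \cdot \DKL{\policy_*}{\policy_\mref}}{\range \reward \cdot C} \right)
  &\leq |\States| K p_{\min} + \frac{\lambda}{\range \reward \cdot C} |\States| K \cdot p_{\min} \log(1/p_{\min}) \\
  &\leq |\States| K \sqrt{p_{\min}} \left( 1 + \frac{\lambda}{\range \reward \cdot C} \right).
\end{align*}
Writing $A \coloneqq K |\States| (1 + \tfrac{\lambda}{\range \reward \cdot C})$ for the denominator appearing in \eqref{eq:condition_on_small_prob_app}, the hypothesis is exactly $\sqrt{p_{\min}} \leq \epsilon / A$, so the right-hand side is at most $A \sqrt{p_{\min}} \leq \epsilon$, establishing \eqref{eq:specialized_condition} and hence \eqref{eq:most_general_condition_on_D_app}.

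I expect the main obstacle to be the divergence bound: one must fix the precise (occupancy-weighted) meaning of $\DKL{\policy_*}{\policy_\mref}$ so that its state weights provably sum to one — this is what keeps a single factor of $|\States|$ rather than $|\States|^2$ and matches the stated condition — and one must dispose of the degenerate case $p_{\min} = 0$, where the proportionality hypothesis forces $p_{\max} = 0$, hence $P = 0$, making \eqref{eq:specialized_condition} trivially true. The step $p_{\min} \log(1/p_{\min}) \leq \sqrt{p_{\min}}$ is the only genuinely non-mechanical estimate; it is precisely what converts the logarithmic growth of the KL divergence into the square-root form that the squared right-hand side of \eqref{eq:condition_on_small_prob_app} is designed to absorb.
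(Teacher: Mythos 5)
Your proposal is correct and follows essentially the same route as the paper's proof: bound $\D{\policy_{\mref}}(\supp \D{\policy_{*}}) \leq |\States| \cdot K \cdot p_{\min}$ via determinism of $\policy_{*}$, bound the divergence by $\log(1/p_{\min})$, and absorb the logarithm through $x \log(1/x) \leq \sqrt{x}$ so that the squared hypothesis in Equation~\eqref{eq:condition_on_small_prob_app} closes the argument. The only (cosmetic) difference is that you decompose $\DKL{\policy_{*}}{\policy_{\mref}}$ into occupancy-weighted per-state terms $-\log \policy_{\mref}(\policy_{*}(s) \mid s)$, while the paper bounds it directly as a sum of occupancy-ratio terms $\D{\policy_{*}}(s,a) \log \bigl( \D{\policy_{*}}(s,a) / \D{\policy_{\mref}}(s,a) \bigr)$ using $\D{\policy_{*}}(s,a) \leq 1$; both yield the same $\log(1/p_{\min})$ bound.
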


\begin{proof}
  As argued before, the equation to show can be written as Equation~\eqref{eq:specialized_condition}.
  We can upper-bound the left-hand-side as follows:
  \begin{align}\label{eq:lhs_bound}
    \begin{split}
    \D{\policy_{\mref}}(\supp \D{\policy_{*}}) &= \sum_{(s, a) \in \supp \D{\policy_{*}}} \D{\policy_{\mref}}(s, a) \\
    &\leq |\supp \D{\policy_{*}}| \cdot \max_{(s, a) \in \supp \D{\policy_{*}}} \D{\policy_{\mref}}(s, a) \\
    &\leq |\States| \cdot K \cdot \min_{(s, a) \in \supp \D{\policy_{*}}} \D{\policy_{\mref}}(s, a).
  \end{split}
  \end{align}
  In one step, we used that $\policy_{*}$ is assumed to be deterministic, which leads to a bound in the size of the support. 
  Now, we lower-bound the other side by noting that
  \begin{align*}
    \DKL{\policy_{*}}{\policy_{\mref}} &= \sum_{(s, a) \in \supp \D{\policy_{*}}} \D{\policy_{*}}(s, a) \cdot \log \frac{ \D{\policy_{*}}(s, a)}{\D{\policy_{\mref}}(s, a)} \\
    &\leq \sum_{(s, a) \in \supp \D{\policy_{*}}}\D{\policy_{*}}(s, a) \cdot \log \frac{1}{\min_{(s', a') \in \supp \D{\policy_{*}}} \D{\policy_{\mref}}(s', a')} \\
    &= \log \frac{1}{\min_{(s, a) \in \supp \D{\policy_{*}}} \D{\policy_{\mref}}(s, a)}.
  \end{align*}
  Thus, for the right-hand-side, we obtain
  \begin{equation}\label{eq:rhs_bound}
    \frac{\epsilon}{1 + \frac{\lambda \cdot \DKL{\policy_{*}}{\policy_{\mref}}}{\range \reward \cdot C(\delta, \policy_{*}, \InitStateDistribution, \TransitionDistribution, \gamma)}} \geq \frac{\epsilon}{1 + \frac{\lambda}{\range \reward \cdot C(\delta, \policy_{*}, \InitStateDistribution, \TransitionDistribution, \gamma)} \cdot \log \frac{1}{\min_{(s, a) \in \supp \D{\policy_{*}}} \D{\policy_{\mref}}(s, a)}}
  \end{equation}
  Now, set $A \coloneqq |\States| \cdot K$, $B \coloneqq \frac{\lambda}{\range \reward \cdot C(\delta, \policy_{*}, \InitStateDistribution, \TransitionDistribution, \gamma)}$ and $x \coloneqq \min_{(s, a) \in \supp \D{\policy_{*}}} \D{\policy_{\mref}}(s, a)$.
  Then comparing with Equations~\eqref{eq:lhs_bound} and~\eqref{eq:rhs_bound}, we are left with showing the following, which we also equivalently rewrite:
  \begin{align*}
    & A \cdot x \leq \frac{\epsilon}{1 + B \cdot \log \frac{1}{x}} \\
    \Longleftrightarrow & A\cdot \left( x + Bx \log \frac{1}{x}\right) \leq \epsilon. \\
  \end{align*}
  Now, together with the assumed condition on $x$ from Equation~\eqref{eq:condition_on_small_prob_app}, and upper-bounding the logarithm with a square-root, and $x$ by $\sqrt{x}$ since $x \leq 1$, we obtain:
  \begin{align*}
    A \cdot \left( x + Bx \log \frac{1}{x} \right) &\leq A \cdot \left( x + B \sqrt{x} \right) \\
    &\leq A \cdot \left( (1 + B) \cdot \sqrt{x} \right) \\
    &\leq A \cdot (1 + B) \cdot \frac{\epsilon}{A \cdot (1 + B)} \\
    &= \epsilon.
  \end{align*}
  That was to show.
\end{proof}

\section{Requirements for safe optimization}\label{sec:requirements_safe}

In this section, we answer the question under which circumstances we can guarantee a safe optimization of a given reward function. Wherever applicable, we make the same assumptions as stated in \Cref{sec:unsafe_existance_assumptions}.

\subsection{Applying Berge's maximum theorem}\label{sec:Berge's_maximum_theorem}

\begin{definition}
  [Correspondence]
  \label{def:correspondence}
  Let $X, Y$ be two sets. 
  A \emph{correspondence} $\C: X \rightrightarrows Y$ is a function $X \to \POS(Y)$ from $X$ to the power set of $Y$.
\end{definition}

\begin{definition}
  [Upper Hemicontinuous, Lower Hemicontinuous, Continuous, Compact-Valued]
  \label{def:upper_lower_hemi}
  Let $\C: X \rightrightarrows Y$ be a correspondence where $X$ and $Y$ are topological spaces.
  Then:
  \begin{itemize}
    \item $\C$ is called \emph{upper hemicontinuous} if for every $x \in X$ and every open set $V \subseteq Y$ with $\C(x) \subseteq V$, there exists an open set $U \subseteq X$ with $x \in U$ and such that for all $x' \in U$ one has $\C(x') \subseteq V$.
    \item $\C$ is called \emph{lower hemicontinuous} if for every $x \in X$ and every open set $V \subseteq Y$ with $\C(x) \cap V \neq \emptyset$, there exists an open set $U \subseteq X$ with $x \in U$ and such that for all $x' \in U$ one has $\C(x') \cap V \neq \emptyset$.
    \item $\C$ is called \emph{continuous} if it is both upper and lower hemicontinuous.
    \item $\C$ is called \emph{compact-valued} if $\C(x)$ is a compact subset of $Y$ for all $x \in X$.
  \end{itemize} 
\end{definition}

\begin{theorem}
  [Maximum Theorem, \citep{Berge1963}]
  \label{thm:maximum_theorem}
  Let $\Theta$ and $X$ be topological spaces, $f: \Theta \times X \to \Reals$ a continuous function, and $\C: \Theta \rightrightarrows X$ be a continuous, compact-valued correspondence such that $\C(\theta) \neq \emptyset$ for all $\theta \in \Theta$. 
  Define the \emph{optimal value function} $f^*: \Theta \to \Reals$ by
  \begin{equation*}
    f^*(\theta) \coloneqq \max_{x \in \C(\theta)} f(\theta, x)
  \end{equation*}
  and the \emph{maximizer function} $\C^*: \Theta \rightrightarrows X$ by
  \begin{equation*}
    \C^*(\theta) \coloneqq \argmax_{x \in \C(\theta)} f(\theta, x) = \big\lbrace x \in \C(\theta) \mid f(\theta, x) = f^*(\theta) \big\rbrace.
  \end{equation*}
  Then $f^*$ is continuous and $\C^*$ is a compact-valued, upper hemicontinuous correspondence with nonempty values, i.e. $\C^*(\theta) \neq \emptyset$ for all $\theta \in \Theta$.
\end{theorem}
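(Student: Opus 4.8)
The plan is to establish the three conclusions in turn, building each on the previous one: first the nonemptiness and compactness of $\C^*(\theta)$, then the continuity of $f^*$, and finally the upper hemicontinuity of $\C^*$, which will consume the first two as inputs.

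First I would fix $\theta \in \Theta$ and note that $\C(\theta)$ is nonempty and compact while $x \mapsto f(\theta, x)$ is continuous; the extreme value theorem then guarantees that the maximum defining $f^*(\theta)$ is attained, so $f^*$ is well-defined and $\C^*(\theta) \neq \emptyset$. Since $\C^*(\theta)$ is the intersection of $\C(\theta)$ with the preimage of the singleton $\{f^*(\theta)\}$ under the continuous map $f(\theta, \cdot)$, it is closed in the compact set $\C(\theta)$ and hence compact. This disposes of the compact-valuedness and nonemptiness of $\C^*$.

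Next I would prove that $f^*$ is continuous by showing lower and upper semicontinuity separately. For \emph{lower} semicontinuity at $\theta_0$, I pick a maximizer $x_0 \in \C^*(\theta_0)$, use continuity of $f$ to find a neighborhood of $(\theta_0, x_0)$ on which $f > f^*(\theta_0) - \epsilon$, and then invoke \emph{lower} hemicontinuity of $\C$ to guarantee that for $\theta$ near $\theta_0$ the set $\C(\theta)$ meets a neighborhood of $x_0$, producing a feasible point whose value exceeds $f^*(\theta_0) - \epsilon$. For \emph{upper} semicontinuity I would cover the compact set $\C(\theta_0)$ by finitely many product neighborhoods on which $f < f^*(\theta_0) + \epsilon$, take the union $V$ of their $X$-factors and the intersection of their $\Theta$-factors, and use \emph{upper} hemicontinuity of $\C$ to shrink to a neighborhood on which $\C(\theta) \subseteq V$; then every feasible point for such $\theta$ lies in some covering slice and has value below $f^*(\theta_0) + \epsilon$. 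Compactness of $\C(\theta_0)$ is essential here to reduce to a finite cover.

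Finally, for the upper hemicontinuity of $\C^*$ at $\theta_0$, given an open $V \supseteq \C^*(\theta_0)$, I would split into the cases $\C(\theta_0) \subseteq V$ (handled at once by upper hemicontinuity of $\C$) and $\C(\theta_0) \setminus V \neq \emptyset$. In the second case the maximum of $f(\theta_0, \cdot)$ over the compact set $\C(\theta_0) \setminus V$ is strictly below $f^*(\theta_0)$ by some gap $\delta > 0$, since all true maximizers lie in $V$. I would then use continuity of $f$ to find, via a finite cover, an open set $B \supseteq \C(\theta_0) \setminus V$ and a neighborhood $A$ of $\theta_0$ on which $f < f^*(\theta_0) - \delta/2$, apply upper hemicontinuity of $\C$ to the open cover $V \cup B$ of $\C(\theta_0)$, and invoke the already-established continuity of $f^*$ to force $f^*(\theta) > f^*(\theta_0) - \delta/2$ nearby; combining these, any maximizer $x \in \C^*(\theta)$ for $\theta$ in the intersection of these neighborhoods cannot lie in $B$ and must therefore lie in $V$. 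I expect this last step to be the main obstacle, since it is the only part that simultaneously marshals all the hypotheses — continuity of $f$, compactness and both hemicontinuities of $\C$, and the freshly proved continuity of $f^*$ — and needs the careful bookkeeping of the $\delta$-gap to prevent near-maximizers from escaping $V$.
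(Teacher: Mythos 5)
The paper does not prove this theorem at all: it is imported verbatim from \citet{Berge1963} and used as a black box (in \Cref{cor:applying_maximum-theorem} and \Cref{theorem:positive_result_KL_regularized_app}), so there is no in-paper proof to compare against. Your argument is the standard, correct textbook proof of Berge's maximum theorem, and all three stages are sound: the extreme value theorem plus the fact that a closed subset of a compact set is compact (true without any Hausdorff assumption) handles nonemptiness and compact-valuedness of $\C^*$; your split of continuity of $f^*$ into lower semicontinuity (via lower hemicontinuity of $\C$ applied to the $X$-factor of a product neighborhood where $f > f^*(\theta_0) - \epsilon$) and upper semicontinuity (via a finite subcover of $\C(\theta_0)$ and upper hemicontinuity applied to the union $V$) is exactly right, with compactness correctly identified as the crux of the second half; and the $\delta$-gap argument for upper hemicontinuity of $\C^*$ is complete, including the key contradiction that a maximizer landing in $B$ would have value below $f^*(\theta_0) - \delta/2 < f^*(\theta)$. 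Two small remarks for a full write-up: the reduction to product neighborhoods is legitimate because sets $U \times W$ form a neighborhood basis in the product topology, which is worth saying explicitly since $\Theta$ and $X$ are arbitrary topological spaces; and in the final step you only use \emph{lower} semicontinuity of $f^*$ (to force $f^*(\theta) > f^*(\theta_0) - \delta/2$), so the logical dependence is slightly weaker than ``the freshly proved continuity'' — the upper-semicontinuity half is not consumed there. Neither point is a gap.
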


We now show that this theorem corresponds to our setting.
Namely, replace $X$ be by $\Pi$, the set of all policies. 
Every policy $\pi \in \Pi$ can be viewed as a vector $\vec{\pi} = \big( \pi(a \mid s) \big)_{s \in \States, a \in \Actions} \in \Reals^{\SxA}$, and so we view $\Pi$ as a subset of $\Reals^{\SxA}$.
$\Pi$ inherits the standard Euclidean metric and thus topology from $\Reals^{\SxA}$.
Replace $\Theta$ by $\Rewards$, the set of all reward functions.
We can view each reward function $\reward \in \Rewards$ as a vector $\vec{\reward} = \big( R(s, a) \big)_{(s, a) \in \SxA} \in \Reals^{\SxA}$.
So we view $\Rewards$ as a subset of $\Reals^{\SxA}$ and thus a topological space.
Replace $f$ by the function $\J: \Rewards \times \Pi \to \Reals$ given by
\begin{equation*}
  \J(\reward, \pi) \coloneqq \J^{\reward}(\pi) = \eta^\pi \cdot \vec{\reward}.
\end{equation*}
Take as the correspondence $\C: \Rewards \rightrightarrows \Pi$ the trivial function $\C(\reward) \coloneqq \Pi$ that maps every reward function to the full set of policies.

\begin{proposition}
  \label{pro:berge_satisfied}
  These definitions satisfy the conditions of Theorem~\ref{thm:maximum_theorem}, that is:
  \begin{enumerate}
    \item $\J: \Rewards \times \Pi \to \Reals$ is continuous.
    \item $\C: \Rewards \rightrightarrows \Pi$ is continuous and compact-valued with non-empty values.
  \end{enumerate}
\end{proposition}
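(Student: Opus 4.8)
The plan is to verify the two conditions of Berge's maximum theorem separately, checking that the specific objects we substituted (the policy evaluation function $\J$ and the trivial correspondence $\C$) meet the hypotheses. Since $\C$ is constant, equal to $\Pi$ everywhere, most of the work reduces to showing that $\Pi$ is a nice enough subset of $\Reals^{\SxA}$ and that $\J$ is continuous as a function of both arguments jointly.

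For continuity of $\J$, I would use the formula $\J(\reward, \pi) = \eta^\pi \cdot \vec{\reward}$. The key observation is that the occupancy measure $\eta^\pi$ depends continuously (in fact, polynomially/rationally) on $\pi$: writing $\eta^\pi$ via the usual closed form $\eta^\pi = (I - \gamma P_\pi)^{-1} \mu_0$-style expression, each entry is a rational function of the entries of $\vec\pi$ with nonvanishing denominator (invertibility of $I - \gamma P_\pi$ is guaranteed since $\gamma < 1$ and $P_\pi$ is stochastic). Hence $\pi \mapsto \eta^\pi$ is continuous on $\Pi$, and therefore $(\reward, \pi) \mapsto \eta^\pi \cdot \vec{\reward}$ is continuous as a product/sum of continuous functions of the jointly-varying pair. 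This gives condition (1).

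For condition (2), since $\C(\reward) = \Pi$ is independent of $\reward$, I would argue that a constant correspondence is automatically both upper and lower hemicontinuous: for any open $V \supseteq \Pi$ (resp. any $V$ meeting $\Pi$) one simply takes $U = \Rewards$, and the defining inclusion (resp. nonempty-intersection) condition holds trivially for all $\reward' \in U$. Non-emptiness is clear since $\Pi \neq \emptyset$. The remaining point is compact-valuedness: I must show $\Pi$ is compact in $\Reals^{\SxA}$. This follows because $\Pi$ is the product over states $s$ of the probability simplices $\DistributionsOverSet{\Actions}$, each of which is closed and bounded in $\Reals^{\Actions}$ (finite $\Actions$), so $\Pi$ is a finite product of compact sets, hence closed and bounded in the finite-dimensional space $\Reals^{\SxA}$, and thus compact by Heine–Borel.

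The main obstacle is really just the continuity of $\pi \mapsto \eta^\pi$; everything about $\C$ is routine once one notices it is constant. I would make the occupancy-measure continuity rigorous by exhibiting the matrix-inverse formula and invoking that matrix inversion is continuous on the open set of invertible matrices, with $I - \gamma P_\pi$ always landing in that set. I expect no genuine difficulty beyond bookkeeping, so the write-up can state the two conditions, dispatch (2) immediately via constancy and Heine–Borel, and spend the bulk of the argument on the explicit continuity of the occupancy measure feeding into condition (1).
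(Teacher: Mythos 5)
Your proposal is correct, and your treatment of condition (2) — constancy of $\C$ giving both hemicontinuity properties trivially via $U = \Rewards$, plus compactness of $\Pi$ as a finite product of simplices — coincides with the paper's argument. For condition (1), however, you take a genuinely different route to the crucial step, the continuity of $\pi \mapsto \eta^\pi$. The paper works directly from the definition $\eta^\pi(s,a) = \sum_{t=0}^\infty \gamma^t P(s_t = s, a_t = a \mid \xi \sim \pi)$: each partial term $P_t(s,a)$ is, by expanding over length-$t$ trajectory prefixes, a polynomial in the entries $\pi(a \mid s)$, hence continuous, and since $0 \le P_t(s,a) \le 1$ the series converges uniformly (dominated by $\sum_t \gamma^t$), so the uniform limit theorem yields continuity of each component $\eta(s,a)$. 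You instead invoke the resolvent closed form: $I - \gamma P_\pi$ is invertible because the spectral radius of $\gamma P_\pi$ is at most $\gamma < 1$, matrix inversion is continuous on the open set of invertible matrices (equivalently, by Cramer's rule each entry of the inverse is a rational function with nonvanishing determinant in the denominator), and the entries of $P_\pi$ are linear in $\pi$. Both arguments are sound. Your approach buys strictly more: it shows $\pi \mapsto \eta^\pi$ is not merely continuous but rational, hence real-analytic on $\Pi$, and it is shorter once the closed form is set up. The paper's series argument buys independence from the linear-algebraic machinery — it needs no invertibility claim and no closed form, only the uniform limit theorem, and it transparently mirrors the probabilistic definition of the occupancy measure. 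One bookkeeping caution on your side: the standard closed form gives the \emph{state} occupancy, roughly $d^\pi = (I - \gamma P_\pi^\top)^{-1}\mu_0$, from which the state-action occupancy is recovered as $\eta^\pi(s,a) = d^\pi(s)\,\pi(a \mid s)$; the extra factor $\pi(a \mid s)$ is itself continuous in $\pi$, so your conclusion stands, but a careful write-up should make this two-step factorization explicit rather than writing $\eta^\pi$ directly as a matrix inverse applied to $\mu_0$. Your passage from continuity of $\eta$ to joint continuity of $(\reward,\pi) \mapsto \eta^\pi \cdot \vec{\reward}$ matches the paper's (continuity of the scalar product composed with continuous maps) and is fine.
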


\begin{proof}
  Let us prove 1. 
  Since the scalar product is continuous, it is enough to show that $\eta: \Pi \to \Reals^{\SxA}$ is continuous.
  Let $(s, a) \in \SxA$ be arbitrary.
  Then it is enough to show that each componentfunction $\eta(s, a): \Pi \to \Reals$ given by
  \begin{equation*}
    \big[\eta(s, a)\big](\pi) \coloneqq \eta^{\pi}(s, a)
  \end{equation*}
  is continuous. 

  Now, for any $t \geq 0$, define the function $P_t(s, a): \Pi \to \Reals$ by
  \begin{equation*}
    \big[P_t(s, a)\big](\pi) \coloneqq P(s_t = s, a_t = a \mid \Trajectory \sim \pi).
  \end{equation*}
  We obtain 
  \begin{equation*}
    \eta(s, a) = \sum_{t = 0}^{\infty} \gamma^t P_t(s, a).
  \end{equation*}
  Furthermore, this convergence is uniform since $\big[P_t(s, a)\big](\pi) \leq 1$ for all $\pi$ and since $\sum_{t = 0}^{\infty} \gamma^t$ is a convergent series.
  Thus, by the uniform limit theorem, it is enough to show that each $P_t(s, a)$ is a continuous function.

  Concretely, we have
  \begin{align*}
   & \big[ P_t(s, a) \big](\pi) = \sum_{s_0, a_0, \dots, s_{t-1}, a_{t-1}} P\big(s_0, a_0, \dots, s_{t-1}, a_{t-1}, s, a \mid \Trajectory \sim \pi \big) \\
    & = \sum_{s_0, a_0, \dots, s_{t-1}, a_{t-1}} \mu_0(s_0) \cdot \pi(a_0 \mid s_0) \cdot \Bigg[ \prod_{l = 1}^{t - 1} \tau(s_l \mid s_{l-1}, a_{l-1}) \cdot \pi(a_l \mid s_l) \Bigg] \cdot \tau(s \mid s_{t-1}, a_{t-1}) \cdot \pi(a \mid s).
  \end{align*}
  Since $\States$ and $\Actions$ are finite, this whole expression can be considered as a polynomial with variables given by all $\pi(a \mid s)$ for all $(s, a) \in \SxA$ and coefficients specified by $\mu_0$ and $\tau$.
  Since polynomials are continuous, this shows the result.

  Let us prove $2$.
  Since $\Pi \neq \emptyset$, $\C$ has non-empty values.
  Furthermore, $\Pi$ is compact because it is a finite cartesian product of compact simplices. 
  And finally, since $\C$ is constant, it is easily seen to be continuous. 
  That was to show.
\end{proof}

Define the optimal value function $\J^*: \Rewards \to \Reals$ by
  \begin{equation*}
    \J^*(\reward) \coloneqq \max_{\pi \in \Pi} \J^{\reward}(\pi)
  \end{equation*}
  and the maximizer function $\Pi^*: \Rewards \rightrightarrows \Pi$ by
  \begin{equation*}
    \Pi^*(\reward) \coloneqq \argmax_{\pi \in \Pi} \J^{\reward}(\pi) = \big\lbrace \pi \in \Pi \mid \J^{\reward}(\pi) = \J^*(\reward) \big\rbrace.
  \end{equation*}

\begin{corollary}
  \label{cor:applying_maximum-theorem}
  $\J^*$ is continuous and $\Pi^*$ is upper hemicontinuous and compact-valued with non-empty values.
\end{corollary}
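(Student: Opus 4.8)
The plan is to directly apply the Maximum Theorem (\Cref{thm:maximum_theorem}) to the specific setup verified in \Cref{pro:berge_satisfied}. The key observation is that \Cref{cor:applying_maximum-theorem} is nothing more than the conclusion of Berge's theorem, instantiated with the particular choices $\Theta = \Rewards$, $X = \Pi$, $f = \J$, and $\C(\reward) = \Pi$. Since \Cref{pro:berge_satisfied} establishes that these choices satisfy all the hypotheses of the Maximum Theorem --- namely, that $\J: \Rewards \times \Pi \to \Reals$ is continuous, and that $\C$ is continuous, compact-valued, with non-empty values --- the conclusion follows immediately.

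Concretely, I would first note that the optimal value function $\J^*$ and maximizer correspondence $\Pi^*$ defined just above the corollary are precisely the objects $f^*$ and $\C^*$ from \Cref{thm:maximum_theorem}, specialized to this setting. Indeed, since $\C(\reward) = \Pi$ for every $\reward$, the constrained maximization $\max_{\pi \in \C(\reward)} \J(\reward, \pi)$ collapses to the unconstrained $\max_{\pi \in \Pi} \J^{\reward}(\pi)$, matching the definition of $\J^*(\reward)$; likewise $\C^*(\reward) = \argmax_{\pi \in \C(\reward)} \J(\reward, \pi)$ becomes $\Pi^*(\reward)$.

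The proof is therefore a one-line invocation: by \Cref{pro:berge_satisfied}, the hypotheses of \Cref{thm:maximum_theorem} hold, so the theorem guarantees that $f^* = \J^*$ is continuous and that $\C^* = \Pi^*$ is a compact-valued, upper hemicontinuous correspondence with non-empty values. This is exactly the claim of the corollary.

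There is essentially no obstacle here --- all the genuine work was already done in \Cref{pro:berge_satisfied}, which handled the continuity of $\J$ (via the uniform limit theorem and the polynomial structure of the occupancy-measure components) and the compactness/continuity of the constant correspondence $\C$. The only thing to be careful about is bookkeeping: making the identification of $(\J^*, \Pi^*)$ with $(f^*, \C^*)$ explicit so the reader sees that the corollary is a direct corollary and not a separate argument. I expect the write-up to consist of a short sentence pointing at \Cref{pro:berge_satisfied} and \Cref{thm:maximum_theorem} and stating that the conclusion transfers verbatim.
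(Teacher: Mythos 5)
Your proposal is correct and matches the paper's proof exactly: the paper's argument is the one-line invocation ``This follows from Theorem~\ref{thm:maximum_theorem} and Proposition~\ref{pro:berge_satisfied}'', with the identification of $(\J^*, \Pi^*)$ with $(f^*, \C^*)$ left implicit. Your additional bookkeeping --- noting that the constant correspondence $\C(\reward) = \Pi$ collapses the constrained maximization to the unconstrained one --- is a faithful and slightly more explicit rendering of the same argument.
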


\begin{proof}
  This follows from Theorem~\ref{thm:maximum_theorem} and Proposition~\ref{pro:berge_satisfied}.
\end{proof}

In particular, every reward function has a compact and non-empty set of optimal policies, and their value changes continuously with the reward function. 
The most important part of the corollary is the upper hemicontinuity, which has the following consequence:

\begin{corollary}
  \label{cor:regret_bound_consequence}
  Let $\reward$ be a fixed, non-trivial reward function, meaning that $\max \J^{\reward} \neq \min \J^{\reward}$.
  Let $U \in (0, 1]$ be arbitrary. 
  Then there exists $\epsilon > 0$ such that for all $\hat{\reward} \in \B{\epsilon}{\reward}$ and all $\hat{\pi} \in \Pi^*(\hat{\reward})$, we have $\Reg{\reward}{\hat{\pi}} < U$.
\end{corollary}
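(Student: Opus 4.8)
The plan is to derive the statement directly from the upper hemicontinuity of the maximizer correspondence $\Pi^*$ established in Corollary~\ref{cor:applying_maximum-theorem}, together with the observation that regret, viewed as a function of the policy for the fixed reward $\reward$, is continuous. The guiding idea is that the set of policies with regret below $U$ is an open neighbourhood of $\Pi^*(\reward)$, and upper hemicontinuity forces the optimal policies of all nearby reward functions to stay inside this neighbourhood.

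First I would define the map $g: \Pi \to \Reals$, $g(\pi) \coloneqq \Reg{\reward}{\pi}$, and argue it is continuous. This uses that $\J^{\reward}(\pi) = \eta^{\pi} \cdot \vec{\reward}$ is continuous in $\pi$ --- exactly the componentwise continuity of $\eta$ proved inside Proposition~\ref{pro:berge_satisfied} --- while $\max \J^{\reward}$ and $\min \J^{\reward}$ are constants, and the denominator $\max \J^{\reward} - \min \J^{\reward}$ is nonzero by the assumed non-triviality of $\reward$. Hence $g$ is an affine function of the continuous quantity $\J^{\reward}(\pi)$ and is itself continuous. Consequently the set
\begin{equation*}
  V \coloneqq \big\{ \pi \in \Pi \mid \Reg{\reward}{\pi} < U \big\} = g^{-1}\big((-\infty, U)\big)
\end{equation*}
is open in $\Pi$.

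Next I would check that $\Pi^*(\reward) \subseteq V$. Any $\pi \in \Pi^*(\reward)$ satisfies $\J^{\reward}(\pi) = \max \J^{\reward}$, so $\Reg{\reward}{\pi} = 0$, and since $U > 0$ this gives $\pi \in V$. Now I would invoke upper hemicontinuity of $\Pi^*$ at the point $\reward$ (Corollary~\ref{cor:applying_maximum-theorem}): because $V$ is open and contains $\Pi^*(\reward)$, there is an open set $W \subseteq \Rewards$ with $\reward \in W$ such that $\Pi^*(\hat{\reward}) \subseteq V$ for every $\hat{\reward} \in W$. Finally, since $W$ is open and contains $\reward$, I would choose $\epsilon > 0$ with $\B{\epsilon}{\reward} \subseteq W$; then for every $\hat{\reward} \in \B{\epsilon}{\reward}$ and every $\hat{\pi} \in \Pi^*(\hat{\reward})$ we get $\hat{\pi} \in V$, i.e.\ $\Reg{\reward}{\hat{\pi}} < U$, as required.

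There is no serious obstacle here; the result is essentially a translation of upper hemicontinuity into the regret language. The only points demanding care are the two small ingredients feeding the argument: the continuity of $g$ (which rests on the non-triviality hypothesis so that the regret denominator does not vanish) and the correct reading of the definition of upper hemicontinuity, namely that it is precisely the ``open set containing the image'' formulation that lets us pass from $\Pi^*(\reward) \subseteq V$ to $\Pi^*(\hat{\reward}) \subseteq V$ for nearby $\hat{\reward}$. Strictness of the final inequality is automatic because $V$ is defined by a strict inequality and $U > 0$ guarantees that $\Pi^*(\reward)$, whose members have regret $0$, indeed lies in $V$.
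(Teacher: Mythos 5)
Your proposal is correct and follows essentially the same route as the paper's own proof: continuity of the regret function $\Regf{\reward}$ (via continuity of $\J^{\reward}$ and non-triviality of $\reward$), openness of the sublevel set $V$, the inclusion $\Pi^*(\reward) \subseteq V$, and upper hemicontinuity of $\Pi^*$ from Corollary~\ref{cor:applying_maximum-theorem} to obtain the open neighbourhood $W$ and then the $\epsilon$-ball. There is nothing to add or fix.
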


\begin{proof}
  The condition $\max \J^{\reward} \neq \min \J^{\reward}$ ensures that the regret function $\Regf{\reward}: \Pi \to [0, 1]$ is well-defined. 
  Recall its definition:
  \begin{equation*}
    \Reg{\reward}{\pi} = \frac{\max \J^{\reward} - \J^{\reward}(\pi)}{\max \J^{\reward} - \min \J^{\reward}}.
  \end{equation*}
  Since $\J^{\reward}$ is continuous by Proposition~\ref{pro:berge_satisfied}, the regret function $\Regf{\reward}$ is continuous as well.
  Consequently, the set $V \coloneqq \big(\Regf{\reward}\big)^{-1}\big( [0, U) \big)$ is open in $\Pi$. 

  Notice that $\Pi^*(\reward) \subseteq V$ (optimal policies have no regret). 
  Thus, by Corollary~\ref{cor:applying_maximum-theorem}, there exists an open set $W \subseteq \Rewards$ with $\reward \in W$ such that for all $\hat{\reward} \in W$ we have $\Pi^*(\hat{\reward}) \subseteq V$.
  Consequently, for all $\hat{\pi} \in \Pi^*(\hat{\reward})$, we get $\Reg{\reward}{\hat{\pi}} < U$.
  Since $W$ is open, it contains a whole epsilon ball around $\reward$, showing the result. 
\end{proof}

Now we translate the results to the distance defined by $D$, a data distribution.
Namely, let $D \in \DistributionsOverSet{\SxA}$ a distribution that assigns a positive probability to each transition. 
Then define the $D$-norm by
\begin{equation}\label{eq:d_norm_def}
  \distD{D}(\reward) \coloneqq \Expect{(s, a) \sim D}{\big| \reward(s, a) \big|}.
\end{equation}
This is indeed a norm, i.e.: 
for all $\alpha \in \Reals$ and all $\reward, \reward' \in \Rewards$, we have
\begin{itemize}
  \item $\distD{D}(\reward + \reward') \leq \distD{D}(\reward) + \distD{D}(\reward')$;
  \item $\distD{D}(\alpha \cdot R) = |\alpha| \cdot \distD{D}(\reward)$;
  \item $\distD{D}(\reward) = 0$ if and only if $\reward = 0$.
\end{itemize}
For the third property, one needs the assumption that $D(s, a) > 0$ for all $(s, a) \in \SxA$.

This norm then induces a metric that we denote the same way:
\begin{equation}\label{eq:metric}
  \distD{D}(\reward, \reward') \coloneqq \distD{D}(\reward - \reward').
\end{equation}
We obtain:

\begin{corollary}
  \label{cor:distance_adaptation_cor}
  Let $\MDP$ be an arbitrary non-trivial MDP, meaning that $\max \J^{\reward} \neq \min \J^{\reward}$. Furthermore, let $L \in (0, 1]$ be arbitrary, and $D \in \DistributionsOverSet{\SxA}$ a positive data distribution, i.e., a distribution $D$ such that $\forall (s,a) \in \SxA$, $D(s,a) > 0$. 
  Then there exists $\epsilon > 0$ such that $D \in \safeD$
\end{corollary}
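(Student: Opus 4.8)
The plan is to reduce Corollary~\ref{cor:distance_adaptation_cor} to Corollary~\ref{cor:regret_bound_consequence}, which I have already available. The key conceptual point is that Corollary~\ref{cor:regret_bound_consequence} gives me an $\epsilon$-ball guarantee in the \emph{standard Euclidean} topology on $\Rewards \subseteq \Reals^{\SxA}$: there is some $\epsilon' > 0$ such that any $\hat{\reward}$ with $\|\hat{\reward} - \reward\| < \epsilon'$ has all its optimal policies achieving regret below $L$. What I want instead is a guarantee phrased in terms of the $D$-weighted distance $\distD{D}$ appearing in the definition of $\safeD$ (via Equation~\eqref{eq:closeness_metric}). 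So the whole argument is really a statement about the equivalence of these two norms, and then a translation of the resulting ball inclusion into the language of safe data distributions.

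First I would observe that $\distD{D}$ is genuinely a norm on $\Reals^{\SxA}$, which is already verified in the text preceding the corollary, and that this crucially uses the positivity assumption $D(s,a) > 0$ for all $(s,a)$. Since $\Reals^{\SxA}$ is finite-dimensional, all norms on it are equivalent; in particular there is a constant $c > 0$ (depending on $D$) with $\|x\| \leq c \cdot \distD{D}(x)$ for all $x$. Concretely one can take $c = 1/\min_{(s,a)} D(s,a)$ after relating the weighted $L^1$ norm to the Euclidean norm, but for this existence statement I do not need the sharp constant — equivalence of norms suffices. Applying Corollary~\ref{cor:regret_bound_consequence} with $U = L$ yields an $\epsilon' > 0$ controlling the Euclidean ball; I then set $\epsilon \coloneqq \epsilon' / c$ (or more carefully, any positive value small enough that $\distD{D}(\hat{\reward}, \reward) \leq \epsilon \cdot \range\reward$ forces $\|\hat{\reward} - \reward\| < \epsilon'$) so that $\distD{D}$-closeness implies Euclidean closeness.

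Next I would unwind the definitions to confirm this gives membership in $\safeD$. By Definition~\ref{definition:unsafe_data_distribution} in the unregularized case ($\lambda = 0$), $D \in \safeD$ means that every $\hat{\reward}$ satisfying $\Expect{(s,a)\sim D}{|\hat{\reward}(s,a) - \reward(s,a)| / \range\reward} \leq \epsilon$ has every optimal $\hat{\policy}$ achieving $\Reg{\reward}{\hat{\policy}} < L$. The error condition is exactly $\distD{D}(\hat{\reward}, \reward) \leq \epsilon \cdot \range\reward$, which by my choice of $\epsilon$ places $\hat{\reward}$ in the Euclidean $\epsilon'$-ball $\B{\epsilon'}{\reward}$; Corollary~\ref{cor:regret_bound_consequence} then delivers $\Reg{\reward}{\hat{\pi}} < L$ for all $\hat{\pi} \in \Pi^*(\hat{\reward})$, which is precisely the low-regret conclusion. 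I should also note that the non-triviality hypothesis $\max\J^{\reward} \neq \min\J^{\reward}$ is carried over verbatim, so the regret function is well-defined and Corollary~\ref{cor:regret_bound_consequence} applies.

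I do not expect any serious obstacle here; the corollary is essentially a repackaging of Corollary~\ref{cor:regret_bound_consequence} through norm equivalence. The only point requiring a little care is ensuring the positivity of $D$ is explicitly invoked (it is what makes $\distD{D}$ a norm rather than a seminorm, and hence what makes it equivalent to the Euclidean norm) — if $D$ had a zero coordinate, $\distD{D}$ would fail to separate points and the equivalence, and the whole argument, would collapse. This is exactly why the negative results (e.g.~\Cref{theorem:interpretable_negative_result}) can produce unsafe distributions precisely by starving some transitions of probability.
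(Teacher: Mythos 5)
Your proposal is correct and matches the paper's own proof essentially step for step: both reduce to Corollary~\ref{cor:regret_bound_consequence} (the Berge-based Euclidean-ball guarantee), use the positivity of $D$ to bound the Euclidean norm by $\distD{D}$ via the constant $\min_{(s,a)} D(s,a)$ (the paper writes $c \cdot \|\cdot\| \leq \distD{D}(\cdot)$ with $c = \min_{(s,a)} D(s,a)$, the reciprocal of your constant), rescale by $\range \reward$ to set $\epsilon$, and unwind Definition~\ref{definition:unsafe_data_distribution}. Your parenthetical care about making the non-strict condition $\distD{D}(\reward,\hat{\reward}) \leq \epsilon \cdot \range\reward$ still land strictly inside the Euclidean $\epsilon'$-ball is a minor point the paper glosses over, but it changes nothing substantive.
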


\begin{proof}
  To prove the corollary, we will show that there exists $\epsilon > 0$ such that for all $\hat{\reward} \in \Rewards$ with 
  \begin{equation*}
    \frac{\distD{D}(\reward, \hat{\reward})}{\range \reward} < \epsilon
  \end{equation*}
  and all $\hat{\pi} \in \Pi^*(\hat{\reward})$ we have $\Reg{\reward}{\hat{\pi}} < L$.
  We know from Corollary~\ref{cor:regret_bound_consequence} that there is $\epsilon' > 0$ such that for all $\hat{R} \in \B{\epsilon'}{\reward}$ and all $\hat{\pi} \in \Pi^*(\hat{\reward})$, we have $\Reg{\reward}{\hat{\pi}} < L$.
  Now, let $c > 0$ be a constant such that
  \begin{equation*}
    c \cdot \| \reward' - \reward'' \| \leq \distD{D}(\reward', \reward'')
  \end{equation*}
  for all $\reward', \reward'' \in \Rewards$, where $\| \cdot \|$ is the standard Euclidean norm.
  This exists since all norms in $\Reals^{\SxA}$ are equivalent, but one can also directly argue that
  \begin{equation*}
    c \coloneqq \min_{(s, a) \in \SxA} D(s, a)
  \end{equation*}
  is a valid choice.
  Then, set
  \begin{equation*}
    \epsilon \coloneqq \epsilon' \cdot \frac{c}{\range \reward}.
  \end{equation*}
  Then for all $\hat{\reward} \in \Rewards$  with
  \begin{equation*}
    \frac{\distD{D}(\reward, \hat{\reward})}{\range \reward} < \epsilon
  \end{equation*}
  we obtain
  \begin{align*}
    \|\reward - \hat{\reward}\| & \leq \frac{\distD{D}(\reward, \hat{\reward})}{c} \\
    & = \frac{\distD{D}(\reward, \reward')}{\range \reward} \cdot \frac{\range \reward}{c} \\
    & \leq \epsilon \cdot \frac{\range \reward}{c} \\
    & = \epsilon'.
  \end{align*}
  Thus, for all $\hat{\pi} \in \Pi^*(\hat{\reward})$, we obtain $\Reg{\reward}{\hat{\pi}} < L$, showing the result.
\end{proof}

\begin{remark}
  \label{rem:tiny_D}
  If $c \coloneqq \min_{(s, a) \in \SxA} D(s, a)$ is very small, then the proof of the preceding corollary shows that $\distD{D}(\reward, \hat{\reward})$ must be correspondingly smaller to guarantee a low regret of $\hat{\pi} \in \Pi^*(\hat{\reward})$.
  This makes sense since a large effective distance between $\reward$ and $\hat{\reward}$ can ``hide'' in the regions where $D$ is small when distance is measured via $\distD{D}$. 
\end{remark}

\subsection{Elementary proof of a regret bound}\label{sec:tightening_the_result}

In this section, we provide another elementary proof of a regret bound, but without reference to Berge's theorem.
This will also lead to a better quantification of the bound.
In an example, we will show that the bound we obtain is tight.

Define the cosine of an angle between two vectors ad hoc as usual:
\begin{equation*}
  \cos\Big( \ang \big( v, w \big) \Big) \coloneqq \frac{v \cdot w}{\|v\| \cdot \|w\|},
\end{equation*}
where $v \cdot w$ is the dot product. 

\begin{lemma}
  \label{lem:strengthen_lemma}
  Let $\reward$, $\hat{\reward}$ be two reward functions. 
  Then for any policy $\pi$, we have
  \begin{equation*}
    \J^{\reward}(\pi) - \J^{\hat{\reward}}(\pi) = \frac{1}{1 - \gamma} \cdot \|\D{\pi}\| \cdot \big\| \reward - \hat{\reward} \big\| \cdot \cos\Big( \ang \big(\eta^{\pi}, \vec{\reward} - \vec{\hat{\reward}}\big)\Big).
  \end{equation*}
\end{lemma}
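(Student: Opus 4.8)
The plan is to simply unfold the definitions; this lemma is an identity, and I expect no genuine obstacle beyond bookkeeping. Three facts do all the work: the evaluation function is linear in the reward via the occupancy measure, $\J^{\reward}(\pi) = \eta^{\pi} \cdot \vec{\reward}$; the cosine of the angle between two vectors is, by the ad hoc definition given just above the lemma, their normalized dot product; and the policy-induced distribution satisfies $\D{\pi} = (1-\gamma)\,\eta^{\pi}$.

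First I would use linearity of $\J$ in the reward to collapse the difference into a single dot product:
\[
  \J^{\reward}(\pi) - \J^{\hat{\reward}}(\pi) = \eta^{\pi}\cdot\vec{\reward} - \eta^{\pi}\cdot\vec{\hat{\reward}} = \eta^{\pi}\cdot\big(\vec{\reward} - \vec{\hat{\reward}}\big).
\]
Next I would rewrite this dot product via the cosine definition, applied with $v = \eta^{\pi}$ and $w = \vec{\reward}-\vec{\hat{\reward}}$, namely $v\cdot w = \|v\|\,\|w\|\cos\big(\ang(v,w)\big)$:
\[
  \eta^{\pi}\cdot\big(\vec{\reward}-\vec{\hat{\reward}}\big) = \|\eta^{\pi}\|\cdot\big\|\vec{\reward}-\vec{\hat{\reward}}\big\|\cdot\cos\Big(\ang\big(\eta^{\pi},\vec{\reward}-\vec{\hat{\reward}}\big)\Big).
\]
Finally I would substitute $\|\eta^{\pi}\| = \tfrac{1}{1-\gamma}\,\|\D{\pi}\|$, which follows from $\D{\pi} = (1-\gamma)\,\eta^{\pi}$ together with $1-\gamma > 0$, to recover the claimed formula exactly.

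The only point worth flagging is the degenerate case in which one of the two vectors vanishes, since then $\cos\big(\ang(\cdot,\cdot)\big)$ is not defined by the given formula. The occupancy measure $\eta^{\pi}$ is never the zero vector (it scales to a probability distribution), so the only concern is $\vec{\reward}-\vec{\hat{\reward}} = 0$; in that case both sides of the identity are zero, so the statement holds trivially under the natural convention that the right-hand product is zero. There is no hard step here: the lemma is a direct consequence of the definitions of $\J$, of the cosine, and of the normalization $\D{\pi} = (1-\gamma)\,\eta^{\pi}$.
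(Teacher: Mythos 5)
Your proof is correct and is essentially identical to the paper's: both expand $\J^{\reward}(\pi) - \J^{\hat{\reward}}(\pi)$ as the dot product $\eta^{\pi} \cdot \big(\vec{\reward} - \vec{\hat{\reward}}\big)$, rewrite it via the cosine definition, and substitute $\eta^{\pi} = \frac{1}{1-\gamma} \cdot \D{\pi}$. Your explicit handling of the degenerate case $\vec{\reward} = \vec{\hat{\reward}}$, where the cosine is undefined but both sides vanish, is a small point of care that the paper's one-line proof silently omits.
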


\begin{proof}
  We have
  \begin{equation*}
    \J^{\reward}(\pi) - \J^{\hat{\reward}}(\pi) = \eta^{\pi} \cdot \big(\vec{\reward} - \vec{\hat{\reward}}\big) = \|\eta^{\pi}\| \cdot \big\| \vec{\reward} - \vec{\hat{\reward}} \big\| \cdot \cos\Big( \ang \big(\eta^{\pi}, \vec{\reward} - \vec{\hat{\reward}}\big)\Big).
  \end{equation*}
  The result follows from $\eta^{\pi} = \frac{1}{1 - \gamma} \cdot \D{\pi}$.
\end{proof}

we will make use of another lemma:

\begin{lemma}
  \label{lem:pure_trigonometry}
  Let $a, \hat{a}$, and $r$ be three vectors.
  Assume $a \cdot \hat{a} \geq 0$, where $\cdot$ is the dot product. 
  Then
  \begin{equation*}
    \cos\big( \ang(a, r) \big) - \cos\big( \ang(\hat{a}, r) \big) \leq \sqrt{2}.
  \end{equation*}
\end{lemma}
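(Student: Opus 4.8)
The plan is to reduce the statement to unit vectors and then apply the Cauchy--Schwarz inequality together with the law of cosines. First I would normalize: assuming $a, \hat{a}, r$ are nonzero (so that the angles are defined), set $u \coloneqq a / \|a\|$, $v \coloneqq \hat{a}/\|\hat{a}\|$, and $w \coloneqq r/\|r\|$. By the definition of the cosine of the angle between two vectors, this gives $\cos\big(\ang(a,r)\big) = u \cdot w$ and $\cos\big(\ang(\hat{a}, r)\big) = v \cdot w$. Moreover, dividing the hypothesis $a \cdot \hat{a} \geq 0$ by the positive scalar $\|a\| \cdot \|\hat{a}\|$ turns it into the clean constraint $u \cdot v \geq 0$.

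Next I would rewrite the quantity of interest as a single inner product,
\begin{equation*}
  \cos\big(\ang(a,r)\big) - \cos\big(\ang(\hat{a}, r)\big) = u \cdot w - v \cdot w = (u - v) \cdot w,
\end{equation*}
and bound it via Cauchy--Schwarz: since $\|w\| = 1$, we get $(u - v)\cdot w \leq \|u - v\| \cdot \|w\| = \|u - v\|$. It then remains only to control $\|u - v\|$. Because $u$ and $v$ are unit vectors, expanding the square yields
\begin{equation*}
  \|u - v\|^2 = \|u\|^2 + \|v\|^2 - 2\, (u \cdot v) = 2 - 2\,(u \cdot v) \leq 2,
\end{equation*}
where the final inequality is exactly where the hypothesis $u \cdot v \geq 0$ enters. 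Hence $\|u - v\| \leq \sqrt{2}$, and chaining the two bounds gives the claim.

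There is essentially no serious obstacle here: the whole content is the observation that the difference of the two cosines collapses to $(u-v)\cdot w$, after which Cauchy--Schwarz handles the dependence on $r$ and the constraint $a \cdot \hat{a} \geq 0$ directly caps $\|u - v\|$ at $\sqrt{2}$. The only point requiring a word of care is the degenerate case of a zero vector, which I would simply exclude by noting the angles (and hence the statement) are only meaningful for nonzero $a, \hat{a}, r$; in the intended application these arise as an occupancy measure and a reward difference, both assumed nonzero. It is worth remarking that the bound $\sqrt{2}$ is genuinely the best possible under the constraint, attained when $u \perp v$ and $w \propto u - v$, which reassures that no step has been lossy.
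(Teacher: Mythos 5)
Your proof is correct and matches the paper's argument essentially step for step: both normalize to unit vectors, collapse the difference of cosines to the inner product $(u - v)\cdot w$, apply Cauchy--Schwarz, and use $u \cdot v \geq 0$ when expanding $\|u - v\|^2 = 2 - 2\,(u\cdot v) \leq 2$. Your added remarks on the degenerate zero-vector case and the tightness of the constant $\sqrt{2}$ are sound but not needed for the result.
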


\begin{proof}
  None of the angles change by replacing any of the vectors with a normed version. 
  We can thus assume $\|a\| = \|\hat{a}\| = \|r\| = 1$. 
  We obtain
  \begin{align*}
    \big| \cos\big( \ang(a, r) \big) - \cos\big( \ang(\hat{a}, r) \big) \big|^2 &= \big| a \cdot r - \hat{a} \cdot r  \big|^2 \\
    &= \big| (a - \hat{a}) \cdot r \big|^2 \\
    &\leq \|a - \hat{a}\|^2 \cdot \|r\|^2 \\
    &= \| a - \hat{a}\|^2 \\
    &= \|a\|^2 + \|\hat{a}\|^2 - 2 a \cdot \hat{a} \\
    &\leq 2.
  \end{align*}
  In the first, fourth, and sixth step, we used that all vectors are normed. 
  In the third step, we used the Cauchy-Schwarz inequality.
  Finally, we used that $a \cdot \hat{a} \geq 0$.
  The result follows.
\end{proof}

Recall that for two vectors $v, w$, the projection of $v$ onto $w$ is defined by
\begin{equation*}
  \proj_w v \coloneqq \frac{v \cdot w}{\|w\|^2} w.
\end{equation*}
This projection is a multiple of $w$, and it minimizes the distance to $v$:
\begin{equation*}
  \big\|v - \proj_w v\big\| = \min_{\alpha \in \Reals} \big\|v - \alpha w\big\|.
\end{equation*}
We can now formulate and prove our main regret bound:

\begin{theorem}
  \label{thm:strengthen_bound}
  Let $\reward$ be a fixed, non-trivial reward function, meaning that $\max \J^{\reward} \neq \min \J^{\reward}$. 
  Then for all $\hat{\reward} \in \Rewards$ and all $\hat{\pi} \in \Pi^*(\hat{\reward})$, we have
  \begin{equation*}
    \Reg{\reward}{\hat{\pi}} \leq \frac{\sqrt{2}}{(1 - \gamma) \cdot (\max \J^{\reward} - \min \J^{\reward})} \cdot \big\| \vec{\reward} - \vec{\hat{\reward}} \big\|.
  \end{equation*}
  Furthermore, if $\vec{\reward} \cdot \vec{\hat{\reward}} \geq 0$, then we also obtain the following stronger bound:
  \begin{equation*}
    \Reg{\reward}{\hat{\pi}} \leq \frac{\sqrt{2}}{(1 - \gamma) \cdot (\max \J^{\reward} - \min \J^{\reward})} \cdot \Big\| \vec{\reward} - \proj_{\vec{\hat{\reward}}} \vec{\reward} \Big\|.
  \end{equation*}
  Now, let $D \in \DistributionsOverSet{\SxA}$ be a positive data distribution (positive meaning $D(s, a) > 0$ for all $(s,a) \in \States \times \Actions$).
  Then we obtain the following consequence:
  \begin{equation*}
    \Reg{\reward}{\hat{\pi}} \leq \frac{\sqrt{2}}{(1 - \gamma) \cdot \big( \max \J^{\reward} - \min \J^{\reward} \big) \cdot \min_{(s, a) \in \SxA} D(s, a)} \cdot \distD{D}\big(\reward, \hat{\reward}).
  \end{equation*} 
\end{theorem}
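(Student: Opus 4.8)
The plan is to reduce the regret numerator to a single inner product of occupancy measures, control it by Cauchy--Schwarz, and extract the constant $\sqrt{2}$ from the fact that policy-induced distributions live on the probability simplex.

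First I would fix an optimal policy $\pi^*$ for $\reward$, so that the numerator of $\Reg{\reward}{\hat{\pi}}$ is $\J^{\reward}(\pi^*) - \J^{\reward}(\hat{\pi})$. Writing values through occupancy measures via $\J^{\reward}(\pi) = \eta^{\pi}\cdot\vec{\reward}$, and using that $\hat{\pi}$ is optimal for $\hat{\reward}$ (hence $\eta^{\hat{\pi}}\cdot\vec{\hat{\reward}} \ge \eta^{\pi^*}\cdot\vec{\hat{\reward}}$), the mixed terms cancel and I obtain
\begin{equation*}
  \J^{\reward}(\pi^*) - \J^{\reward}(\hat{\pi}) \le \big(\eta^{\pi^*} - \eta^{\hat{\pi}}\big)\cdot\big(\vec{\reward} - \vec{\hat{\reward}}\big).
\end{equation*}
This is the key algebraic step: it trades two unknown values for one difference of occupancy measures paired against $\vec{\reward}-\vec{\hat{\reward}}$.

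Next I would substitute $\eta^{\pi} = \tfrac{1}{1-\gamma}\D{\pi}$ and apply Cauchy--Schwarz, which yields the factor $\tfrac{1}{1-\gamma}\|\D{\pi^*} - \D{\hat{\pi}}\|\cdot\|\vec{\reward}-\vec{\hat{\reward}}\|$; this is exactly Lemma~\ref{lem:strengthen_lemma} read as the projection identity $\D{\pi}\cdot v = \|\D{\pi}\|\,\|v\|\cos(\ang(\eta^{\pi},v))$. Since $\D{\pi^*}$ and $\D{\hat{\pi}}$ are probability vectors, $\|\D{\pi^*}\|^2,\|\D{\hat{\pi}}\|^2 \le 1$ and $\D{\pi^*}\cdot\D{\hat{\pi}}\ge 0$, so $\|\D{\pi^*}-\D{\hat{\pi}}\|^2 = \|\D{\pi^*}\|^2 + \|\D{\hat{\pi}}\|^2 - 2\,\D{\pi^*}\cdot\D{\hat{\pi}} \le 2$; equivalently, Lemma~\ref{lem:pure_trigonometry} applied to $\eta^{\pi^*},\eta^{\hat{\pi}}$ (whose inner product is nonnegative, being nonnegative vectors) produces the same $\sqrt{2}$. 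Dividing by $\max\J^{\reward}-\min\J^{\reward}$ then gives the first displayed bound.

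For the stronger bound I would exploit scale invariance of optimal policies: for every $\alpha>0$ we have $\Pi^*(\hat{\reward}) = \Pi^*(\alpha\hat{\reward})$, so applying the first bound with $\alpha\hat{\reward}$ in place of $\hat{\reward}$ gives $\Reg{\reward}{\hat{\pi}} \le \tfrac{\sqrt{2}}{(1-\gamma)(\max\J^{\reward}-\min\J^{\reward})}\,\|\vec{\reward}-\alpha\vec{\hat{\reward}}\|$ for all $\alpha>0$. Taking the infimum over $\alpha>0$ yields $\|\vec{\reward}-\proj_{\vec{\hat{\reward}}}\vec{\reward}\|$, where the hypothesis $\vec{\reward}\cdot\vec{\hat{\reward}}\ge 0$ guarantees the minimizing $\alpha^{*}=\vec{\reward}\cdot\vec{\hat{\reward}}/\|\vec{\hat{\reward}}\|^{2}$ is nonnegative (the degenerate case $\vec{\reward}\cdot\vec{\hat{\reward}}=0$, where the projection vanishes, is handled by letting $\alpha\to 0^{+}$). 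Finally, the $\distD{D}$-consequence follows by substituting $\|\vec{\reward}-\vec{\hat{\reward}}\|_{2} \le \|\vec{\reward}-\vec{\hat{\reward}}\|_{1} \le \tfrac{1}{\min_{(s,a)}D(s,a)}\,\distD{D}(\reward,\hat{\reward})$ into the first bound. I expect the main obstacle to be the optimality bookkeeping in the first step together with justifying the sharp constant $\sqrt{2}$ cleanly (the probability-simplex geometry / Lemma~\ref{lem:pure_trigonometry}); the scaling refinement and its boundary case are the other delicate points.
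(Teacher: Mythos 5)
Your proposal is correct and takes essentially the same approach as the paper: your single Cauchy--Schwarz step with $\|\D{\pi^*}-\D{\hat{\pi}}\|^2=\|\D{\pi^*}\|^2+\|\D{\hat{\pi}}\|^2-2\,\D{\pi^*}\cdot\D{\hat{\pi}}\le 2$ is precisely the computation inside the paper's Lemma~\ref{lem:pure_trigonometry}, just applied directly to the occupancy-measure difference rather than through the cosine bookkeeping of Lemma~\ref{lem:strengthen_lemma} and its sign case split. Your infimum over positive scalings $\alpha\hat{\reward}$ is the same mechanism as the paper's observation that $\hat{\pi}\in\Pi^*\big(\proj_{\vec{\hat{\reward}}}\vec{\reward}\big)$ when $\vec{\reward}\cdot\vec{\hat{\reward}}\ge 0$, and your $\distD{D}$ step via $\|\cdot\|_2\le\|\cdot\|_1$ is identical to the paper's.
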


\begin{proof}
  We start with the first claim.
  First, notice that the inequality we want to show is equivalent to the following:
  \begin{equation}\label{eq:to_show_third_attempt}
    \J^{\reward}(\hat{\pi}) \geq \max \J^{\reward} - \frac{\sqrt{2}}{1 - \gamma} \cdot \big\|\vec{\reward} - \vec{\hat{\reward}}\big\|.
  \end{equation}
  From Lemma~\ref{lem:strengthen_lemma}, we obtain 
  \begin{equation*}
    \J^{\reward}(\hat{\pi}) = \J^{\hat{\reward}}(\hat{\pi}) + \frac{1}{1 - \gamma} \cdot \|\D{\hat{\pi}}\| \cdot \big\| \vec{\reward} - \vec{\hat{\reward}} \big\| \cdot \cos\Big( \ang\big( \eta^{\hat{\pi}}, \reward - \hat{\reward} \big) \Big).
  \end{equation*}
  Now, let $\pi \in \Pi^*(\reward)$ be an optimal policy for $\reward$. 
  Then also from Lemma~\ref{lem:strengthen_lemma}, we obtain
  \begin{align*}
    \max \J^{\reward} = \J^{\reward}(\pi) & = \J^{\hat{\reward}}(\pi) + \frac{1}{1 - \gamma} \cdot \|\D{\pi}\| \cdot \big\| \vec{\reward} - \vec{\hat{\reward}} \big\| \cdot \cos\Big( \ang\big( \eta^{\pi}, \reward - \hat{\reward} \big) \Big) \\
    &\leq \J^{\hat{\reward}}(\hat{\pi}) + \frac{1}{1 - \gamma} \cdot \|\D{\pi}\| \cdot \big\| \vec{\reward} - \vec{\hat{\reward}} \big\| \cdot \cos\Big( \ang\big( \eta^{\pi}, \reward - \hat{\reward} \big) \Big).
  \end{align*}
  In the last step, we used that $\hat{\pi} \in \Pi^*(\vec{\reward})$ and so $\J^{\hat{\reward}}(\pi) \leq \J^{\hat{\reward}}(\hat{\pi})$.
  Combining both computations, we obtain:
  \begin{equation*}
    \J^{\reward}(\hat{\pi}) \geq \max \J^{\reward} - \frac{1}{1 - \gamma} \cdot \big\| \vec{\reward} - \vec{\hat{\reward}} \big\| \cdot \bigg[ \|\D{\pi}\|\cdot \cos\Big( \ang \big( \eta^{\pi}, \reward - \hat{\reward} \big) \Big) - \|\D{\hat{\pi}}\| \cdot \cos\Big( \ang\big( \eta^{\hat{\pi}}, \reward - \hat{\reward} \big) \Big) \bigg]
  \end{equation*}
  Since we want to show Equation~\eqref{eq:to_show_third_attempt}, we are done if we can bound the big bracket by $\sqrt{2}$. 
  By the Cauchy-Schwarz inequality, $\cos\Big( \ang\big( v, w \big) \Big) \in [-1, 1]$ for all vectors $v, w$. 
  Thus, if the first cosine term is negative or the second cosine term is positive, then since $\|\D{\pi}\| \leq \|\D{\pi}\|_1 = 1$, the bound by $\sqrt{2}$ is trivial. 
  Thus, assume that the first cosine term is positive and the second is negative. 
  We obtain
  \begin{align*}
   \|\D{\pi}\|& \cdot  \cos\Big( \ang \big( \eta^{\pi}, \reward - \hat{\reward} \big) \Big) - \|\D{\hat{\pi}}\| \cdot \cos\Big( \ang\big( \eta^{\hat{\pi}}, \reward - \hat{\reward} \big) \Big) \\
    & \leq   \cos\Big( \ang \big( \eta^{\pi}, \reward - \hat{\reward} \big) \Big) -  \cos\Big( \ang\big( \eta^{\hat{\pi}}, \reward - \hat{\reward} \big) \Big) \\
    & \leq \sqrt{2}
  \end{align*}
  by Lemma~\ref{lem:pure_trigonometry}.
  Here, we used that $\eta^{\pi}$ and $\eta^{\hat{\pi}}$ have only non-negative entries and thus also nonnegative dot product $\eta^{\pi} \cdot \eta^{\hat{\pi}} \geq 0$.
  
  For the second claim, notice the following:
  if $\vec{\reward} \cdot \vec{\hat{\reward}} \geq 0$, then $\proj_{\vec{\hat{\reward}}} \vec{\reward} = \alpha \cdot \vec{\hat{\reward}}$ for some constant $\alpha \geq 0$. 
  Consequently, we have $\hat{\pi} \in \Pi^*\Big(\proj_{\vec{\hat{\reward}}} \vec{\reward}\Big)$.
  The claim thus follows from the first result.

  For the third claim, notice that
  \begin{align*}
    \min_{(s, a) \in \SxA} D(s, a) \cdot \big\|\vec{\reward} - \vec{\hat{\reward}}\big\| &\leq \min_{(s, a) \in \SxA} D(s, a) \cdot \big\| \vec{\reward} - \vec{\hat{\reward}} \big\|_{1} \\
    &= \min_{(s, a) \in \SxA} D(s, a) \cdot  \sum_{(s, a) \in \SxA} \big| \reward(s, a) - \hat{\reward}{(s, a)} \big| \\
    &\leq \sum_{(s, a) \in \SxA} D(s, a) \cdot \big| \reward(s, a) - \hat{\reward}(s, a) \big| \\
    &= \distD{D}(\reward, \hat{\reward}).
  \end{align*}
  So the first result implies the third.
\end{proof}

\begin{corollary}\label{cor:obtain_main_paper_prop}
  The theorem implies~\Cref{pro:tight_regret_bound_main}.
\end{corollary}

\begin{proof}
Let $L \in (0, 1]$ and assume $\epsilon > 0$ satisfies
\begin{equation*}
\epsilon < \frac{1 - \gamma}{\sqrt{2}} \cdot \frac{\range \J^R}{\range R} \cdot \min_{(s, a) \in \States \times \Actions} D(s, a) \cdot L.
\end{equation*}
We want to show $D \in \safe(R, \epsilon, L)$.
For this aim, assume that $\hat{R}$ and $\hat{\pi}$ are given with 
\begin{equation*}
\Expect{(s, a) \sim D}{\frac{|\hat{R}(s, a) - R(s, a)|}{\range R}} \leq \epsilon
\end{equation*}
and such that $\hat{\pi}$ is optimal for $\hat{R}$, i.e. (in the notation of the appendix): $\hat{\pi} \in \Pi^*(\hat{R})$.
Then the last claim in~\Cref{thm:strengthen_bound} implies
\begin{align*}
\Reg{R}{\hat{\pi}} &\leq \frac{\sqrt{2}}{(1 - \gamma) \cdot \big( \max \J^{\reward} - \min \J^{\reward} \big) \cdot \min_{(s, a) \in \SxA} D(s, a)} \cdot \distD{D}\big(\reward, \hat{\reward}) \\
&= \frac{\sqrt{2}}{(1 - \gamma) \cdot \range \J^R \cdot \min_{(s, a) \in \SxA} D(s, a)} \cdot \Expect{(s, a) \sim D}{|R(s, a) - \hat{R}(s, a)|} \\
&= \frac{\sqrt{2}}{1 - \gamma} \cdot \frac{\range R}{\range \J^R} \cdot \frac{1}{\min_{(s, a) \in \States \times \Actions} D(s, a)} \cdot \Expect{(s, a) \sim D}{\frac{|R(s, a) - \hat{R}(s, a)|}{\range R}} \\
& \leq \frac{\sqrt{2}}{1 - \gamma} \cdot \frac{\range R}{\range \J^R} \cdot \frac{1}{\min_{(s, a) \in \States \times \Actions} D(s, a)} \cdot \epsilon \\
&< L.
\end{align*}
In the first step, we used~\Cref{thm:strengthen_bound}.
Then, we substituted the definition of $\distD{D}$ from~\Cref{eq:d_norm_def,eq:metric}. 
Then we expanded the term by multiplying with $\range R$ in both the numerator and denominator.
Then, we used the assumption that $\hat{R}$ is $\epsilon$-close to $R$ in the data distribution $D$.
Finally, we used the assumed bound on $\epsilon$ from~\Cref{pro:tight_regret_bound_main}.
Overall, this shows $\Reg{R}{\hat{\pi}} < L$, and thus, since $\hat{R}$ and $\hat{\pi}$ were arbitrary, we obtain $D \in \safe(R, \epsilon, L)$. 
This is precisely the claim from~\Cref{pro:tight_regret_bound_main}.
\end{proof}

We now include more discussion of~\Cref{thm:strengthen_bound}:

\begin{remark}
  \label{rem:remark_on_sphere}
  As one can easily see geometrically, but also prove directly, there is the following equality of sets for a reward function $\reward$
  \begin{equation*}
    \Big\lbrace \proj_{\vec{\hat{\reward}}} \vec{\reward} \ \ \big| \ \ \hat{\reward} \in \Rewards   \Big\rbrace = \bigg\lbrace \frac{1}{2} \vec{\reward} + \frac{1}{2} \|\vec{\reward}\|  v \ \ \big| \ \ v \in \Reals^{\SxA}, \ \|v\| = 1 \bigg\rbrace.
  \end{equation*}
  In other words, the projections form a sphere of radius $\frac{1}{2} \| \vec{\reward}\|$ around the midpoint $\frac{1}{2} \vec{\reward}$.
\end{remark}

We now show that the regret bound in~\Cref{thm:strengthen_bound} is tight:

\begin{example}\label{ex:non_mab_tightness}
  Let $U \in [0, 1]$ and $\gamma \in [0, 1)$ be arbitrary. 
  Then there exists an MDP $\MDP$ together with a reward function $\hat{\reward}$ with $\vec{\reward} \cdot \vec{\hat{\reward}} \geq 0$ and a policy $\hat{\pi} \in \Pi^*(\hat{\reward})$ such that 
  \begin{equation*}
    U = \Reg{\reward}{\hat{\pi}} = \frac{\sqrt{2}}{(1 - \gamma) \cdot \big(\max \J^{\reward} - \min \J^{\reward}\big)} \cdot \Big\|\vec{\reward} - \proj_{\vec{\hat{\reward}}} \vec{\reward} \Big\|.
  \end{equation*}
  Furthermore, there exists a data distribution $D \in \DistributionsOverSet{\SxA}$ such that
  \begin{equation*}
    \Reg{\reward}{\hat{\pi}} = \frac{1}{(1 - \gamma) \cdot \big( \max \J^{\reward} - \min \J^{\reward} \big) \cdot \min_{(s, a) \in \SxA} D(s, a)} \cdot \distD{D}\big( \reward, \hat{\reward} \big).
  \end{equation*}
\end{example}

\begin{proof}
  If $U = 0$ then $\hat{\reward} = \reward$ always works.
  If $U > 0$, then set $\States = \{\star\}$ and $\Actions = \{a, b, c\}$.
  This determines $\tau$ and $\mu_0$. 
  Define $\reward(x) \coloneqq \reward(\star, x, \star)$ for any action $x \in \Actions$.
  Let $\reward(a) > \reward(b)$ be arbitrary and set
  \begin{equation*}
    \reward(c) \coloneqq \reward(a) - \frac{\reward(a) - \reward(b)}{U} \leq \reward(b).
  \end{equation*}
  Define 
  \begin{equation*}
    \hat{\reward}(a) \coloneqq \hat{\reward}(b) \coloneqq \frac{\reward(a) + \reward(b)}{2}, \quad \hat{\reward}(c) \coloneqq \reward(c).
  \end{equation*}
  For a policy $\pi$, define $\pi(x) \coloneqq \pi(x \mid \star)$ for any action $x \in \Actions$ and set the policy $\hat{\pi}$ by $\hat{\pi}(b) = 1$.
   
  We obtain:
  \begin{align*}
    \big\|\vec{\reward} - \vec{\hat{\reward}}\big\| &= 
    \sqrt{\big(\reward(a) - \hat{\reward}(a)\big)^2 + \big(\reward(b) - \hat{\reward}(b)\big)^2 + \big(\reward(c) - \hat{\reward}(c)\big)^2} \\
    &= \frac{1}{2} \cdot \sqrt{\big(\reward(a) - \reward(b)\big)^2 + \big(\reward(b) - \reward(a)\big)^2} \\
    &= \frac{1}{\sqrt{2}} \cdot \big( \reward(a) - \reward(b) \big) \\
    &= U \cdot \frac{ \reward(a) - \reward(c)}{\sqrt{2}} \\
    &= U \cdot  \frac{\max \reward - \min \reward }{\sqrt{2}} \\
    &= U \cdot \frac{(1 - \gamma) \cdot \big( \max \J^{\reward} - \min \J^{\reward} \big)}{\sqrt{2}}.
  \end{align*}
  Furthermore, we have 
  \begin{align*}
    \Reg{\reward}{\hat{\pi}} &= \frac{\frac{1}{1 - \gamma} \cdot \reward(a) - \frac{1}{1 - \gamma} \cdot \reward(b)}{\frac{1}{1 - \gamma} \cdot \reward(a) - \frac{1}{1 - \gamma} \cdot \reward(c)} \\
    & = U. 
  \end{align*}
  This shows 
  \begin{equation*}
    U = \Reg{\reward}{\hat{\pi}} = \frac{\sqrt{2}}{(1 - \gamma) \cdot \big( \max \J^{\reward} - \min \J^{\reward} \big)} \cdot \big\| \vec{\reward} - \vec{\hat{\reward}} \big\|.
  \end{equation*}
  We are done if we can show that $\proj_{\vec{\hat{\reward}}}\vec{\reward} = \vec{\hat{\reward}}$. 
  This is equivalent to 
  \begin{equation*}
    \vec{\hat{\reward}} \cdot \vec{\reward} = \big\| \vec{\hat{\reward}} \big\|^2,
  \end{equation*}
  which is in turn equivalent to
  \begin{equation*}
    \vec{\hat{\reward}} \cdot \Big[ \vec{\reward} - \vec{\hat{\reward}} \Big] = 0.
  \end{equation*}
  This can easily be verified.

  Finally, for the claim about the data distribution, simply set $D(a) = D(b) = D(c) = \frac{1}{3}$. 
  Then one can easily show that
  \begin{equation*}
    \sqrt{2} \cdot \big\| \vec{\reward} - \vec{\hat{\reward}} \big\| = \reward(a) - \reward(b) =  \frac{\distD{D}(\reward, \hat{\reward})}{\min_{(s, a) \in \SxA} D(s, a)}.
  \end{equation*}
  That shows the result.
\end{proof}

\subsection{Safe optimization via approximated choice probabilities}\label{sec:Safe_Optimization_Choice_Probabilities}

In this section, we will show that for any chosen upper regret bound $U$, there is an $\epsilon > 0$ s.t. if the choice probabilities of $\hat{\reward}$ are $\epsilon$-close to those of $\reward$, the regret of an optimal policy for $\hat{\reward}$ is bounded by $U$.

Assume a finite time horizon $T$.
Trajectories are then given by $\Trajectory = s_0, a_0, s_1, \dots, a_{T-1}, s_T$.
Let $\Xi$ be the set of all trajectories of length $T$.
Let $D \in \DistributionsOverSet{\Xi}$ be a distribution. 
Assume that the human has a true reward function $\reward$ and makes choices in trajectory comparisons given by
\begin{equation}\label{eq:choice_prob_definition}
  \choice{\reward}\big(1 \mid \Trajectory_1, \Trajectory_2  \big) = \frac{\exp \big(\RLReturn(\Trajectory_1)\big)}{\exp\big(\RLReturn(\Trajectory_1)\big) + \exp\big(\RLReturn(\Trajectory_2)\big)}.
\end{equation}
Here, the return function $\RLReturn$ is given by
\begin{equation*}
  \RLReturn(\xi) = \sum_{t = 0}^{T-1} \gamma^t R(s_t, a_t, s_{t+1}).
\end{equation*}
We can then define the choice distance of proxy reward $\hat{\reward}$ to true reward $\reward$ as
\begin{equation*}
  \distDKL{D}(\reward, \hat{\reward}) \coloneqq \Expect{\Trajectory_1, \Trajectory_2 \sim D \times D}
  {\KL\Big( \choice{\reward}\big( \cdot \mid \Trajectory_1, \Trajectory_2 \big) \ \big\| \ \choice{\hat{\reward}}\big(\cdot \mid \Trajectory_1, \Trajectory_2\big) \Big)}
\end{equation*}
Here, $\KL\Big( \choice{\reward}\big( \cdot \mid \Trajectory_1, \Trajectory_2 \big) \ \big\| \ \choice{\hat{\reward}}\big(\cdot \mid \Trajectory_1, \Trajectory_2\big) \Big)$ is the Kullback-Leibler divergence of two binary distributions over values ${1, 2}$. 
Explicitly, for $P \coloneqq \choice{\reward}\big( \cdot \mid \Trajectory_1, \Trajectory_2 \big)$ and similarly $\hat{P}$, we have
\begin{align}\label{eq:KL_decomposition}
  \begin{split}
  \KL\big( P \ \| \ \hat{P} \big) &= P(1) \log \frac{P(1)}{\hat{P}(1)} + \big(1 - P(1) \big) \log \frac{1 - P(1)}{1 - \hat{P}(1)} \\
  &= 
  - \Big[P(1) \log \hat{P}(1) + \big(1 - P(1) \big) \log \big(1 - \hat{P}(1)\big) \big] - \Ent \big(P(1)\big).
\end{split}
\end{align}
Here, $\Ent(p) \coloneqq - \big[ p \log p + (1 - p) \log (1 - p) \big]$ is the binary entropy function.

Fix in this whole section the true reward function $\reward$ with $\max \J^{\reward} \neq \min \J^{\reward}$ in a fixed MDP. 

The goal of this section is to prove the following proposition:
\begin{proposition}
  \label{pro:positive_result_KL_app}
  Let $U \in (0, 1]$.
  Then there exists an $\epsilon > 0$ such that for all $\hat{\reward}$ with
  \begin{equation*}
    \distDKL{D}(\reward, \hat{\reward}) < \epsilon
  \end{equation*}
  and all $\hat{\pi} \in \Pi^*(\hat{\reward})$ we have $\Reg{\reward}{\hat{\pi}} < U$. 
\end{proposition}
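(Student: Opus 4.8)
The plan is to run a contradiction argument that reduces the choice-probability setting to the Euclidean regret bound already obtained from Berge's theorem in \Cref{cor:regret_bound_consequence}. Suppose the claim fails: then there is a sequence of reward functions $\hat{\reward}_n$ with $\distDKL{D}(\reward, \hat{\reward}_n) \to 0$, together with policies $\hat{\pi}_n \in \Pi^*(\hat{\reward}_n)$ satisfying $\Reg{\reward}{\hat{\pi}_n} \geq U$. Since $\States$, $\Actions$, and the horizon $T$ are finite, $\Xi$ is a finite set, so $\distDKL{D}(\reward, \hat{\reward}_n)$ is a finite nonnegative sum $\sum_{\xi_1, \xi_2} D(\xi_1) D(\xi_2)\, \KL\big(\choice{\reward}(\cdot \mid \xi_1,\xi_2) \,\|\, \choice{\hat{\reward}_n}(\cdot \mid \xi_1,\xi_2)\big)$. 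Assuming (as in the transition-level positive results) that $D$ has full support on $\Xi$, every summand tends to $0$; by Pinsker's inequality the choice probabilities converge, $\choice{\hat{\reward}_n}(1 \mid \xi_1, \xi_2) \to \choice{\reward}(1 \mid \xi_1, \xi_2)$, and since each target value $\sigma(\RLReturn_\reward(\xi_1) - \RLReturn_\reward(\xi_2))$ lies in the open interval $(0,1)$ where $\sigma^{-1}$ is continuous, the return differences converge:
\[
  \RLReturn_{\hat{\reward}_n}(\xi_1) - \RLReturn_{\hat{\reward}_n}(\xi_2) \;\longrightarrow\; \RLReturn_{\reward}(\xi_1) - \RLReturn_{\reward}(\xi_2) \qquad \text{for all } \xi_1, \xi_2 \in \Xi.
\]

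Next I would promote this convergence of return \emph{differences} into genuine Euclidean convergence of a normalized reward with unchanged optimal policies. Consider the linear return map $L \colon \Rewards \to \Reals^{\Xi}$, $L(\reward') = (\RLReturn_{\reward'}(\xi))_{\xi \in \Xi}$. Fixing a reference trajectory $\xi_0$, the display says $L(\hat{\reward}_n) - b_n \mathbf{1} \to L(\reward)$, where $b_n \coloneqq \RLReturn_{\hat{\reward}_n}(\xi_0) - \RLReturn_{\reward}(\xi_0)$ and $\mathbf{1} \in \Reals^{\Xi}$ is the all-ones vector. Because a constant reward $c_n$ realizes $L(c_n) = b_n \mathbf{1}$ (every length-$T$ trajectory has the same discount sum), we get $L(\hat{\reward}_n - c_n) \to L(\reward)$. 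I would then fix any continuous linear right inverse $L^+$ of $L$ onto its image and set $\tilde{\reward}_n \coloneqq \reward + L^+\big(L(\hat{\reward}_n - c_n) - L(\reward)\big)$. Then $\tilde{\reward}_n \to \reward$ in Euclidean norm, while $L(\tilde{\reward}_n) = L(\hat{\reward}_n - c_n)$; that is, $\tilde{\reward}_n$ has exactly the same return function as $\hat{\reward}_n - c_n$.

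Identical return functions give identical policy evaluations, since $\J(\pi) = \Expect{\xi \sim \pi}{\RLReturn(\xi)} = \sum_{\xi} P(\xi \mid \pi)\, \RLReturn(\xi)$ depends on the reward only through $L$. Subtracting the constant $c_n$ merely shifts $\J$ by a policy-independent amount, so $\Pi^*(\tilde{\reward}_n) = \Pi^*(\hat{\reward}_n - c_n) = \Pi^*(\hat{\reward}_n)$, and in particular $\hat{\pi}_n \in \Pi^*(\tilde{\reward}_n)$. Applying \Cref{cor:regret_bound_consequence} to the fixed reward $\reward$ yields $\epsilon' > 0$ such that every optimal policy of any $\hat{\reward}' \in \B{\epsilon'}{\reward}$ has regret below $U$; since $\tilde{\reward}_n \to \reward$, for large $n$ we obtain $\tilde{\reward}_n \in \B{\epsilon'}{\reward}$ and hence $\Reg{\reward}{\hat{\pi}_n} < U$, contradicting $\Reg{\reward}{\hat{\pi}_n} \geq U$.

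I expect the main obstacle to be precisely this middle step: the choice distance is only a \emph{pseudometric} on reward functions — it is blind to additive constants and, more importantly, to the entire kernel of the return map $L$ — so small choice distance does not by itself control $\|\reward - \hat{\reward}\|$. The substantive work lies in extracting, via the right inverse $L^+$, a representative $\tilde{\reward}_n$ that both converges to $\reward$ in norm and shares the optimal-policy set (equivalently, the return function) of $\hat{\reward}_n$, so that the Euclidean bound of \Cref{cor:regret_bound_consequence} can be invoked. Secondary care is needed in the analytic passage from $\KL \to 0$ to convergence of return differences (Pinsker together with continuity of $\sigma^{-1}$ on the open interval), which is where finiteness of $\Xi$ and full support of $D$ enter.
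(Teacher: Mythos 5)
Your proof is correct, and it takes a genuinely different route in its second half. The analytic front end is essentially the paper's chain of lemmas in qualitative, sequence form: using full support of $D$ to force each KL summand to be small is \Cref{lem:mu_to_epsilon}, Pinsker's inequality is \Cref{lem:delta_to_mu}, and continuity of the inverse sigmoid is \Cref{lem:sigma_to_delta} (the paper's $g_q$ is exactly the logit). The divergence is in the back end. The paper never returns to reward space: having produced a constant $c$ with $\max_{\Trajectory \in \Xi} |\hat{\RLReturn}(\Trajectory) - \RLReturn(\Trajectory) - c| < \sigma$ (\Cref{cor:intermediate_corollary}), it concludes via the elementary \Cref{lem:U_to_sigma}, which bounds regret directly from uniform closeness of return functions up to an additive constant, with the explicit constant $\sigma(U) = \frac{\max \J^{\reward} - \min \J^{\reward}}{2} \cdot U$. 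You instead lift return-function convergence back to reward space through a right inverse $L^+$ of the return map so you can invoke the Berge-based Euclidean bound of \Cref{cor:regret_bound_consequence}, inside a compactness/contradiction frame. Both work, and your handling of the kernel issue (choice distance is blind to additive constants and to $\ker L$) via the representative $\tilde{\reward}_n$ with $L(\tilde{\reward}_n) = L(\hat{\reward}_n - c_n)$ is sound, since $\J$ and hence $\Pi^*$ factor through $L$. What each approach buys: the paper's proof is fully quantitative --- $\epsilon$ is an explicit composition $\epsilon(\mu(\delta(\cdot,\sigma(U))))$ --- and sidesteps reward representatives entirely because \Cref{lem:U_to_sigma} only ever consumes the return function; your proof is softer (no explicit $\epsilon$, no rate) but modular, reusing an already-proved regret bound. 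Two repairs you should make explicit. First, \Cref{cor:regret_bound_consequence} is proved in the infinite-horizon discounted setting, while this section is finite-horizon; either note that the Berge argument carries over verbatim, or, more economically, run Berge directly on the space of return functions $\Reals^{\Xi}$ with the sup norm, exactly as the paper does in \Cref{thm:new_positive_result_choice_and_omega} --- this makes your $L^+$ detour unnecessary, since once you have $L(\hat{\reward}_n - c_n) \to L(\reward)$ you are already in the space on which the optimizer correspondence is upper hemicontinuous. Second, your full-support assumption on $D$ over trajectories should be stated as a hypothesis; the paper implicitly requires it as well, since its constant $\epsilon(\mu) = \mu \cdot \min_{\Trajectory_1, \Trajectory_2 \in \Xi} D(\Trajectory_1) \cdot D(\Trajectory_2)$ is positive only in that case.
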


We prove this by chaining together four lemmas.
The first of the four lemmas needs its own lemma, so we end up with five lemmas overall:

\begin{lemma}
  \label{lem:two_return_functions_small_difference}
  Assume $\reward, \hat{\reward}$ are two reward functions and $\pi$ a policy.
  Then
  \begin{equation*}
    \big| \J^{\reward}(\pi) - \J^{\hat{\reward}}(\pi) \big|
    \leq \max_{\Trajectory \in \Xi} \big| \RLReturn(\Trajectory) - \hat{\RLReturn}(\Trajectory) \big|.
  \end{equation*}
\end{lemma}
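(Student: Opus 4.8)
The plan is to unfold the definition of the evaluation function as an expectation over trajectories and then push the absolute value inside the expectation. Recall that in the finite-horizon setting we have $\J^{\reward}(\pi) = \Expect{\Trajectory \sim \pi}{\RLReturn(\Trajectory)}$ and $\J^{\hat{\reward}}(\pi) = \Expect{\Trajectory \sim \pi}{\hat{\RLReturn}(\Trajectory)}$, where $\RLReturn$ and $\hat{\RLReturn}$ are the return functions associated with $\reward$ and $\hat{\reward}$, respectively. Since both expectations are taken with respect to the \emph{same} trajectory distribution induced by $\pi$ (the reward function does not affect how trajectories are sampled), linearity of expectation lets me combine the two terms into a single expectation $\Expect{\Trajectory \sim \pi}{\RLReturn(\Trajectory) - \hat{\RLReturn}(\Trajectory)}$.

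From here I would apply the triangle inequality for expectations (equivalently, Jensen's inequality for the convex absolute-value function), which yields $\big|\Expect{\Trajectory \sim \pi}{\RLReturn(\Trajectory) - \hat{\RLReturn}(\Trajectory)}\big| \leq \Expect{\Trajectory \sim \pi}{\big|\RLReturn(\Trajectory) - \hat{\RLReturn}(\Trajectory)\big|}$. Finally, since the expectation of a nonnegative quantity never exceeds its pointwise maximum, I bound the right-hand side by $\max_{\Trajectory \in \Xi} \big|\RLReturn(\Trajectory) - \hat{\RLReturn}(\Trajectory)\big|$, which is precisely the claimed inequality.

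There is essentially no hard step in this argument: it is a three-line chain of standard inequalities. The only point that deserves care is the very first one, namely that the trajectory-sampling distribution is identical for the two return functions, so that the two evaluation expectations can be merged \emph{before} taking the absolute value; if one instead bounded each evaluation separately one would lose the desired tightness. Well-definedness of the maximum over $\Xi$ is immediate because the horizon $T$ is finite and $\States, \Actions$ are finite, so $\Xi$ is a finite set. This lemma then feeds into the subsequent chain of lemmas used to prove~\Cref{pro:positive_result_KL_app}, where closeness in choice probabilities is converted into closeness of returns and ultimately into a regret bound.
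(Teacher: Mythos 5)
Your proposal is correct and matches the paper's proof essentially step for step: the paper likewise writes $\J^{\reward}(\pi) - \J^{\hat{\reward}}(\pi)$ as a single sum $\sum_{\Trajectory \in \Xi} \DD{\pi}(\Trajectory)\,\big(\RLReturn(\Trajectory) - \hat{\RLReturn}(\Trajectory)\big)$ over the common trajectory distribution, applies the triangle inequality, and bounds by the maximum times $\sum_{\Trajectory} \DD{\pi}(\Trajectory) = 1$. The only difference is notational (expectation notation versus an explicit dot product with the trajectory distribution), and your remark about merging the expectations before taking the absolute value is exactly the point the paper's first two lines encode.
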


\begin{proof}
  We have
  \begin{align*}
    \big| \J^{\reward}(\pi) - \J^{\hat{\reward}}(\pi) \big|
    &= \big| \DD{\pi} \cdot \big( \RLReturn - \hat{\RLReturn} \big) \big| \\
    &= \Bigg| \sum_{\Trajectory \in \Xi} \DD{\pi}(\Trajectory) \cdot \big(\RLReturn(\Trajectory) - \hat{\RLReturn}(\Trajectory)\big) \Bigg| \\
    &\leq \sum_{\Trajectory \in \Xi} \DD{\pi}(\Trajectory) \cdot \big| \RLReturn(\Trajectory) - \hat{\RLReturn}(\Trajectory) \big| \\
    &\leq \max_{\Trajectory \in \Xi} \big| \RLReturn(\Trajectory) - \hat{\RLReturn}(\Trajectory) \big| \cdot \sum_{\Trajectory \in \Xi} \DD{\pi}(\Trajectory) \\
    &= \max_{\Trajectory \in \Xi} \big| \RLReturn(\Trajectory) - \hat{\RLReturn}(\Trajectory) \big|
  \end{align*}
  In the last step, we used that distributions sum to one. 
\end{proof}

\begin{lemma}
  \label{lem:U_to_sigma}
  Let $U \in (0, 1]$.
  Then there exists $\sigma(U) > 0$ such that for all $\hat{\reward}$ and $\hat{\pi} \in \Pi^*(\hat{\reward})$ for which there exists $c \in \Reals$ such that $\max_{\Trajectory \in \Xi} \big| \hat{\RLReturn}(\Trajectory) - \RLReturn(\Trajectory) - c \big| < \sigma(U)$, we have $\Reg{\reward}{\hat{\pi}} < U$.
  
  Concretely, we can set $\sigma(U) \coloneqq \frac{\max \J^{\reward} - \min \J^{\reward}}{2} \cdot U$.
\end{lemma}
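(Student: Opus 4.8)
The plan is to exploit the fact that adding a constant to every trajectory return leaves both the set of optimal policies and the regret functional unchanged; this lets the free constant $c$ be absorbed and reduces the statement to a short comparison of policy evaluations, controlled by the same expectation estimate as in Lemma~\ref{lem:two_return_functions_small_difference}. Concretely, I would fix $\hat{\reward}$, an optimal policy $\hat{\pi} \in \Pi^*(\hat{\reward})$, and the constant $c \in \Reals$ supplied by the hypothesis, so that $\max_{\xi \in \Xi} |\hat{\RLReturn}(\xi) - \RLReturn(\xi) - c| < \sigma(U)$. Throughout, set $\sigma(U) \coloneqq \tfrac{1}{2}(\max \J^{\reward} - \min \J^{\reward}) \cdot U$, which is positive by the standing nontriviality assumption $\max \J^{\reward} \neq \min \J^{\reward}$.

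First I would upgrade the uniform bound on returns to a uniform bound on evaluations. For any policy $\pi$, since $c = \Expect{\xi \sim \pi}{c}$, we have $\J^{\hat{\reward}}(\pi) - \J^{\reward}(\pi) - c = \Expect{\xi \sim \pi}{\hat{\RLReturn}(\xi) - \RLReturn(\xi) - c}$, and pulling the absolute value inside the expectation (exactly as in the proof of Lemma~\ref{lem:two_return_functions_small_difference}) gives $|\J^{\hat{\reward}}(\pi) - \J^{\reward}(\pi) - c| \le \max_{\xi}|\hat{\RLReturn}(\xi) - \RLReturn(\xi) - c| < \sigma(U)$ for \emph{every} policy $\pi$. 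Call this estimate $(\star)$; the key feature is that the additive constant $c$ is the same for all $\pi$.

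Next I would pick an optimal policy $\pi^\star$ for $\reward$, so $\J^{\reward}(\pi^\star) = \max \J^{\reward}$, and estimate the unnormalized regret $\J^{\reward}(\pi^\star) - \J^{\reward}(\hat{\pi})$. Applying $(\star)$ as an upper bound on $\J^{\reward}(\pi^\star)$ and a lower bound on $\J^{\reward}(\hat{\pi})$, the two copies of $c$ cancel and yield $\J^{\reward}(\pi^\star) - \J^{\reward}(\hat{\pi}) < \big(\J^{\hat{\reward}}(\pi^\star) - \J^{\hat{\reward}}(\hat{\pi})\big) + 2\sigma(U)$. Because $\hat{\pi} \in \Pi^*(\hat{\reward})$ is optimal for $\hat{\reward}$, the parenthesized difference is $\le 0$, so $\J^{\reward}(\pi^\star) - \J^{\reward}(\hat{\pi}) < 2\sigma(U)$. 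Dividing by $\max \J^{\reward} - \min \J^{\reward} > 0$ then gives $\Reg{\reward}{\hat{\pi}} < \tfrac{2\sigma(U)}{\max \J^{\reward} - \min \J^{\reward}} = U$, where the last equality is just the substitution of the chosen $\sigma(U)$.

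There is no substantive obstacle here; the proof is essentially a one-shot optimality-gap argument. The only point needing care is the sign bookkeeping across the two applications of $(\star)$, so that the constant $c$ cancels cleanly (this is precisely why the hypothesis allows an arbitrary shift $c$ rather than demanding $\hat{\RLReturn} \approx \RLReturn$ directly), together with the observation that the factor $2$ in the unnormalized bound is exactly compensated by the factor $\tfrac{1}{2}$ in the definition of $\sigma(U)$.
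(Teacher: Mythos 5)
Your proof is correct and takes essentially the same route as the paper's: two applications of the uniform return-difference bound (your estimate $(\star)$ is exactly Lemma~\ref{lem:two_return_functions_small_difference} applied to the shifted return), the optimality of $\hat{\pi}$ under $\hat{\reward}$, and the cancellation of the factor $2$ against the $\tfrac{1}{2}$ in $\sigma(U)$. The only cosmetic difference is that you absorb the constant via $c = \Expect{\Trajectory \sim \pi}{c}$ directly at the level of policy evaluations, whereas the paper constructs the shifted reward $\hat{\reward} - \tilde{c}$ whose return function is $\hat{\RLReturn} - c$ and notes that $\hat{\pi}$ remains optimal for it; your variant sidesteps that (slightly error-prone) bookkeeping with the explicit constant $\tilde{c}$ without changing the substance of the argument.
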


\begin{proof}
  Set $\sigma(U)$ as stated and let $\hat{\reward}$, $\hat{\pi}$ and $c$ have the stated properties.
  The regret bound we want to show is equivalent to the following statement:
  \begin{equation}
    \label{eq:new_regret_bound_to_show}
    \J^{\reward}(\hat{\pi}) > \max \J^{\reward} - \big( \max \J^{\reward} - \min \J^{\reward} \big) \cdot U = \max \J^{\reward} - 2 \sigma(U).
  \end{equation}
  Let $\tilde{c}$ be the constant such that $\hat{\RLReturn} - c$ is the return function of $\hat{\reward} - \tilde{c}$.
  Concretely, one can set $\tilde{c} = \frac{1 - \gamma}{1 - \gamma^{T+1}} \cdot c$.
  Lemma~\ref{lem:two_return_functions_small_difference} ensures that
  \begin{equation}
    \label{eq:new_ingredient_eins}
    \J^{\reward}(\hat{\pi}) > \J^{\hat{\reward} - \tilde{c}}(\hat{\pi}) - \sigma(U).
  \end{equation}
  Now, let $\pi$ be an optimal policy for $\reward$.
  Again, Lemma~\ref{lem:two_return_functions_small_difference} ensures
  \begin{equation}\label{eq:new_ingredient_zwei}
    \max \J^{\reward} = \J^{\reward}(\pi) < \J^{\hat{\reward} - \tilde{c}}(\pi) + \sigma(U) \leq \J^{\hat{\reward} - \tilde{c}}(\hat{\pi}) + \sigma(U).
  \end{equation}
  In the last step, we used that $\hat{\pi}$ is optimal for $\hat{\reward}$ and thus also $\hat{\reward} - \tilde{c}$.
  Combining Equations~\eqref{eq:new_ingredient_eins} and~\eqref{eq:new_ingredient_zwei}, we obtain the result, Equation~\eqref{eq:new_regret_bound_to_show}.
\end{proof}

\begin{lemma}
  \label{lem:sigma_to_delta}
  For $q \in (0, 1)$, define $g_q: (-q, 1-q) \to \Reals$ by 
  \begin{equation*}
    g_q(x) \coloneqq \log \frac{q + x}{1 - (q + x)}.
  \end{equation*}
  Then for all $\sigma > 0$ there exists $\delta(q, \sigma) > 0$ such that for all $x \in (-q, 1 - q)$ with $|x| < \delta(q, \sigma)$, we have $|g_q(x) - g_q(0)| < \sigma$.

  Concretely, one can choose
  \begin{equation*}
    \delta(q, \sigma) \coloneqq \big( \exp(\sigma) - 1 \big) \cdot \min \Bigg\lbrace \frac{1}{\frac{1}{q} + \frac{\exp(\sigma)}{1 - q}}, \ \ \frac{1}{\frac{1}{1 - q} + \frac{\exp(\sigma)}{q}} \Bigg\rbrace
  \end{equation*}
\end{lemma}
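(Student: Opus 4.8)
The plan is to exploit that $g_q$ is, up to an affine reparametrization of its argument, the logit function, and hence strictly increasing and continuous on $(-q,1-q)$. Continuity at $0$ already guarantees that \emph{some} admissible $\delta>0$ exists; to pin down the \emph{concrete} value it suffices to invert the monotone difference $h(x)\coloneqq g_q(x)-g_q(0)$ at the two endpoints $\pm\sigma$. So I would first record that $g_q(x)=\log(q+x)-\log(1-q-x)$ has derivative $g_q'(x)=\frac{1}{q+x}+\frac{1}{1-q-x}>0$ on its domain, so $h$ is strictly increasing with $h(0)=0$. Therefore $\{x:|h(x)|<\sigma\}$ is an open interval $(x_-,x_+)$ containing $0$, with $x_\pm$ the unique solutions of $h(x_\pm)=\pm\sigma$, and any $\delta$ with $(-\delta,\delta)\subseteq(x_-,x_+)$ works; in particular $\delta=\min\{x_+,|x_-|\}$.

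Next I would solve for the endpoints explicitly. Writing
\begin{equation*}
  h(x)=\log\frac{q+x}{1-(q+x)}-\log\frac{q}{1-q}=\log\frac{(q+x)(1-q)}{q(1-q-x)},
\end{equation*}
the equation $h(x)=\sigma$ becomes $\frac{q+x}{1-q-x}=e^\sigma\,\frac{q}{1-q}$, which is linear in $x$ after clearing denominators and solves to
\begin{equation*}
  x_+=\frac{q(1-q)(e^\sigma-1)}{(1-q)+e^\sigma q}.
\end{equation*}
A direct rearrangement of the denominator, using $\frac{1}{q}+\frac{e^\sigma}{1-q}=\frac{(1-q)+e^\sigma q}{q(1-q)}$, shows $x_+=(e^\sigma-1)\big/\big(\tfrac1q+\tfrac{e^\sigma}{1-q}\big)$, i.e. exactly the first entry of the stated minimum. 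Solving $h(x)=-\sigma$ the same way gives a negative root $x_-$ with $|x_-|=\frac{q(1-q)(e^\sigma-1)}{q+e^\sigma(1-q)}=(e^\sigma-1)\big/\big(\tfrac{1}{1-q}+\tfrac{e^\sigma}{q}\big)$, the second entry. Hence the stated $\delta$ is precisely $\min\{x_+,|x_-|\}$, and it is strictly positive because $e^\sigma>1$ and $q\in(0,1)$ make all the displayed terms positive. One also checks $x_+<1-q$ and $x_->-q$ (each amounts to $0<1-q$ resp. $0<q$), so the interval stays inside the domain.

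Finally, for $|x|<\delta$ I would chain the monotonicity: since $\delta\le|x_-|$ and $\delta\le x_+$ we have $x_-\le-\delta<x<\delta\le x_+$, and strict monotonicity of $h$ gives
\begin{equation*}
  -\sigma=h(x_-)\le h(-\delta)<h(x)<h(\delta)\le h(x_+)=\sigma,
\end{equation*}
that is $|g_q(x)-g_q(0)|<\sigma$, as required. The only real work is the endpoint algebra and keeping the inequality directions consistent; I do not expect any conceptual obstacle, since monotonicity reduces the whole statement to inverting a single linear equation and then matching the two closed forms against the given minimum.
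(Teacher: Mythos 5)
Your proof is correct and follows essentially the same route as the paper: both arguments exploit monotonicity of the logit to reduce the claim to solving the linear equations obtained from the thresholds $e^{\pm\sigma}$, yielding exactly the two entries of the stated minimum (the paper organizes this as two chains of equivalences for $x \ge 0$ and $x \le 0$, while you solve for the interval endpoints $x_\pm$ explicitly, which is the same computation). The only nit is your parenthetical that the domain checks reduce to $0 < 1-q$ and $0 < q$ --- they in fact reduce to $0<1$, and are redundant anyway since $h$ maps $(-q,1-q)$ bijectively onto $\Reals$ --- but this does not affect the argument.
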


\begin{proof}
  If one does not care about the precise quantification, 
  then the result is simply a reformulation of the continuity of $g_q$ at the point $x_0 = 0$. 

  Now we show more specifically that $\delta(q, \sigma)$, as defined above, has the desired property.
  Namely, notice the following sequence of equivalences (followed by a one-sided implication) that holds whenever $x \geq 0$:
  \begin{align*}
    \big| g_q(x) - g_q(0) \big| < \sigma \quad  &\Longleftrightarrow
    \quad \log \frac{(q + x) \cdot (1 - q)}{\big(1 - (q + x)\big) \cdot q} < \sigma \\
    &\Longleftrightarrow \quad \frac{(q + x) \cdot (1 - q)}{\big(1 - (q + x)\big) \cdot q} < \exp(\sigma) \\
    &\Longleftrightarrow \quad (q + x) < (1 - q - x) \cdot \frac{q}{1 - q} \cdot \exp(\sigma) \\
    &\Longleftrightarrow \quad \bigg(1 + \frac{q}{1 - q} \cdot \exp(\sigma)\bigg) \cdot x < q \cdot \big(\exp(\sigma) - 1\big) \\
    &\Longleftrightarrow \quad x < \frac{\exp(\sigma) - 1}{\frac{1}{q} + \frac{\exp(\sigma)}{1 - q}} \\
    &\Longleftarrow \quad |x| < \delta(q, \sigma).
  \end{align*}
  In the first step, we used the monotonicity of $g_q$ to get rid of the absolute value. 
  Similarly, whenever $x \leq 0$, we have 
  \begin{align*}
    \big| g_q(x) - g_q(0) \big| < \sigma \quad &\Longleftrightarrow \quad x > \frac{1 - \exp(\sigma)}{\frac{1}{1 - q} + \frac{\exp(\sigma)}{q}} \\
    &\Longleftarrow \quad  |x| < \delta(q, \sigma).
  \end{align*}
  This shows the result.
\end{proof}

\begin{lemma}
  \label{lem:delta_to_mu}
  For $q \in (0, 1)$, define $f_q: (0, 1) \to \Reals$ by
  \begin{equation*}
    f_q(p) \coloneqq - \big[ q \log p + (1 - q) \log (1 - p) \big].
  \end{equation*}
  Then for all $\delta > 0$ there exists $\mu(\delta) > 0$ such that for all $p \in (0, 1)$ with $f_q(p) < H(q) + \mu(\delta)$, we have $|p - q| < \delta$.
  Concretely, one can choose $\mu(\delta) \coloneqq 2 \delta^2$.
\end{lemma}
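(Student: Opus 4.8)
The plan is to recognize $f_q$ as a cross-entropy and thereby reduce the statement to Pinsker's inequality for Bernoulli distributions. First I would observe that, writing $H(q) = -[q \log q + (1-q) \log(1-q)]$ for the binary entropy, the difference $f_q(p) - H(q)$ is exactly the Kullback--Leibler divergence between the Bernoulli distributions $(q, 1-q)$ and $(p, 1-p)$:
\begin{equation*}
  f_q(p) - H(q) = q \log \frac{q}{p} + (1-q) \log \frac{1-q}{1-p} = \KL\big( (q, 1-q) \ \| \ (p, 1-p) \big).
\end{equation*}
This is a one-line algebraic rearrangement using $\log \frac{q}{p} = \log q - \log p$. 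Consequently, the hypothesis $f_q(p) < H(q) + \mu(\delta)$ is equivalent to $\KL\big((q,1-q) \| (p,1-p)\big) < \mu(\delta)$.

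The key step is then Pinsker's inequality specialized to binary distributions, which gives
\begin{equation*}
  \KL\big( (q,1-q) \ \| \ (p,1-p) \big) \geq 2 (p - q)^2,
\end{equation*}
since the total variation distance between $(q,1-q)$ and $(p,1-p)$ equals $|p-q|$. Combining this with the equivalence above: if $f_q(p) < H(q) + 2\delta^2$, then $2(p-q)^2 \leq \KL\big((q,1-q) \| (p,1-p)\big) < 2\delta^2$, whence $|p-q| < \delta$. This shows that $\mu(\delta) \coloneqq 2\delta^2$ has the required property.

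The only part requiring genuine work is Pinsker's inequality, which is the main obstacle; the rest is bookkeeping. If I did not want to cite it, I would prove this one-dimensional version directly. Fixing $q$ and defining $\phi(p) \coloneqq \KL\big((q,1-q) \| (p,1-p)\big) - 2(p-q)^2$, I would check $\phi(q) = 0$ and compute
\begin{equation*}
  \phi'(p) = -\frac{q}{p} + \frac{1-q}{1-p} - 4(p-q) = (p-q) \left( \frac{1}{p(1-p)} - 4 \right).
\end{equation*}
Since $p(1-p) \leq \tfrac14$ the bracketed factor is non-negative, so $\phi'(p)$ has the same sign as $p-q$. Hence $\phi$ is decreasing for $p < q$ and increasing for $p > q$, attaining its global minimum value $0$ at $p = q$, so $\phi \geq 0$ everywhere. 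This establishes the inequality and completes the proof.
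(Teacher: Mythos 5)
Your proof is correct and is essentially the paper's own argument: both identify $f_q(p) - H(q)$ as the binary KL divergence $\KL\big((q,1-q)\,\|\,(p,1-p)\big)$ and apply Pinsker's inequality to get $2(p-q)^2 \leq f_q(p) - H(q) < 2\delta^2$. The only difference is that you additionally supply a self-contained calculus proof of the binary Pinsker inequality (which is sound --- the computation $\phi'(p) = (p-q)\big(\tfrac{1}{p(1-p)} - 4\big)$ checks out), whereas the paper simply cites the inequality.
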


\begin{proof}
  Let $\delta > 0$ and define $\mu(\delta) \coloneqq 2 \delta^2$.
  Assume that $f_q(p) < H(q) + \mu(\delta)$.
  By Pinker's inequality, we have
  \begin{align*}
    2 (p - q)^2 &\leq q \log \frac{q}{p} + (1 - q) \cdot \log \frac{1 - q}{1 - p} \\
    &= - H(q) + f_q(p) \\
    &< \mu(\delta) \\
    &= 2 \delta^2.
  \end{align*}
  Consequently, we have $|p - q| < \delta$.
\end{proof}

\begin{lemma}
  \label{lem:mu_to_epsilon}
  Define $f_q(p)$ as in Lemma~\ref{lem:delta_to_mu}.
  Then for all $\mu > 0$ there exists $\epsilon(\mu) > 0$ such that for all $\hat{\reward}$ with $\distDKL{D}(\reward, \hat{\reward}) < \epsilon(\mu)$, we have the following for all $\Trajectory_1, \Trajectory_2 \in \Xi$:
  \begin{equation*}
    f_{\choice{\reward}(1 \mid \Trajectory_1, \Trajectory_2)}\big( \choice{\hat{\reward}}(1 \mid \Trajectory_1, \Trajectory_2) \big) < \Ent\big(\choice{\reward}(1 \mid \Trajectory_1, \Trajectory_2)  \big) + \mu.
  \end{equation*}
  Concretely, we can set $\epsilon(\mu) \coloneqq \mu \cdot \min_{\Trajectory_1, \Trajectory_2 \in \Xi} D(\Trajectory_1) \cdot D(\Trajectory_2)$
\end{lemma}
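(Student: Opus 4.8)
The plan is to recognize that the left-hand quantity $f_q(p) - \Ent(q)$ is itself exactly a KL divergence, and then to extract a single summand from the expectation defining $\distDKL{D}$ by exploiting the non-negativity of the KL divergence.

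First I would fix a pair $\Trajectory_1, \Trajectory_2 \in \Xi$ and abbreviate $q \coloneqq \choice{\reward}(1 \mid \Trajectory_1, \Trajectory_2)$ and $p \coloneqq \choice{\hat{\reward}}(1 \mid \Trajectory_1, \Trajectory_2)$. Comparing the definition of $f_q$ in Lemma~\ref{lem:delta_to_mu} with the decomposition in Equation~\eqref{eq:KL_decomposition}, one reads off directly that
\begin{equation*}
  f_q(p) - \Ent(q) = \KL\big( \choice{\reward}(\cdot \mid \Trajectory_1, \Trajectory_2) \ \big\| \ \choice{\hat{\reward}}(\cdot \mid \Trajectory_1, \Trajectory_2) \big).
\end{equation*}
Hence the inequality we must establish, $f_q(p) < \Ent(q) + \mu$, is equivalent to the statement that this \emph{pointwise} KL divergence is strictly less than $\mu$.

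The remaining step is to bound the pointwise KL by the averaged one. Since every summand of the expectation $\distDKL{D}(\reward, \hat{\reward}) = \sum_{\Trajectory_1', \Trajectory_2'} D(\Trajectory_1') D(\Trajectory_2') \KL(\cdots)$ is non-negative, I would drop all summands except the one indexed by the fixed pair $\Trajectory_1, \Trajectory_2$, obtaining
\begin{equation*}
  D(\Trajectory_1) D(\Trajectory_2) \cdot \KL\big( \choice{\reward}(\cdot \mid \Trajectory_1, \Trajectory_2) \ \big\| \ \choice{\hat{\reward}}(\cdot \mid \Trajectory_1, \Trajectory_2)\big) \le \distDKL{D}(\reward, \hat{\reward}) < \epsilon(\mu).
\end{equation*}
Dividing through by $D(\Trajectory_1) D(\Trajectory_2)$ and substituting the concrete choice $\epsilon(\mu) \coloneqq \mu \cdot \min_{\Trajectory_1, \Trajectory_2} D(\Trajectory_1) D(\Trajectory_2)$ then forces the pointwise KL below $\mu$, which is the claim.

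There is no genuine obstacle here: the content is just the identity above plus an averaging-to-pointwise argument, and neither requires any estimate on $f_q$ or $\Ent$ directly. The one caveat worth flagging is that the stated $\epsilon(\mu)$ is positive only when $\min_{\Trajectory_1, \Trajectory_2} D(\Trajectory_1) D(\Trajectory_2) > 0$, i.e.\ when $D$ has full support on $\Xi$; if some trajectory carried zero probability the bound would be vacuous there, so the conclusion is most naturally read as quantifying over pairs in the support of $D \times D$, which is exactly where it is needed for the subsequent chaining into Lemma~\ref{lem:delta_to_mu} and ultimately Proposition~\ref{pro:positive_result_KL_app}.
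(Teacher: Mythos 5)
Your proof is correct and matches the paper's argument essentially verbatim: both identify $f_q(p) - \Ent(q)$ with the pointwise KL divergence via Equation~\eqref{eq:KL_decomposition}, and both lower-bound the expectation $\distDKL{D}(\reward,\hat{\reward})$ by $\min_{\Trajectory,\Trajectory'} D(\Trajectory)D(\Trajectory')$ times the pointwise KL using non-negativity of the remaining terms. Your caveat about $D$ needing full support for $\epsilon(\mu) > 0$ is a fair observation that the paper leaves implicit.
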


\begin{proof}
  We have the following for all $\Trajectory_1, \Trajectory_2 \in \Xi$:
  \begin{align*}
    \mu \cdot \min_{\Trajectory, \Trajectory'} D(\Trajectory) \cdot D(\Trajectory) 
    &= \epsilon(\mu) \\
    &> \distDKL{D}(\reward, \hat{\reward}) \\
    &= \Expect{\Trajectory, \Trajectory' \sim D \times D}
  {\KL\Big( \choice{\reward}\big( \cdot \mid \Trajectory, \Trajectory' \big) \ \big\| \ \choice{\hat{\reward}}\big(\cdot \mid \Trajectory, \Trajectory'\big) \Big)} \\
  &\geq \Big( \min_{\Trajectory, \Trajectory'} D(\Trajectory) \cdot D(\Trajectory') \Big) \cdot \KL\Big( \choice{\reward}\big( \cdot \mid \Trajectory_1, \Trajectory_2 \big) \ \big\| \ \choice{\hat{\reward}}\big(\cdot \mid \Trajectory_1, \Trajectory_2\big) \Big)
  \end{align*}
  Now, Equation~\eqref{eq:KL_decomposition} shows that
  \begin{equation*}
    \KL\Big( \choice{\reward}\big( \cdot \mid \Trajectory_1, \Trajectory_2 \big) \ \big\| \ \choice{\hat{\reward}}\big(\cdot \mid \Trajectory_1, \Trajectory_2\big) \Big) =  
    f_{\choice{\reward}(1 \mid \Trajectory_1, \Trajectory_2)}\big( \choice{\hat{\reward}}(1 \mid \Trajectory_1, \Trajectory_2) \big) - \Ent\big(\choice{\reward}(1 \mid \Trajectory_1, \Trajectory_2)  \big).
  \end{equation*}
  The result follows.
\end{proof}

\begin{corollary}
  \label{cor:intermediate_corollary}
  Let $\sigma > 0$.
  Then there exists $\epsilon \coloneqq \epsilon(\sigma) > 0$ such that $\distDKL{D}(\reward, \hat{\reward}) < \epsilon$ implies that there exists $c \in \Reals$ such that $\big\|\RLReturn - \big(\hat{\RLReturn}- c\big)\big\|_{\infty} < \sigma$.
\end{corollary}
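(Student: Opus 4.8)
The plan is to run Lemmas~\ref{lem:mu_to_epsilon},~\ref{lem:delta_to_mu} and~\ref{lem:sigma_to_delta} in reverse, turning the target accuracy $\sigma$ into a small enough choice-distance threshold $\epsilon$. The bridge between these lemmas and the statement is the observation that $f_q$ and $g_q$ are exactly the binary cross-entropy and the logit of the choice probabilities. Write $q_{12} \coloneqq \choice{\reward}(1 \mid \Trajectory_1, \Trajectory_2)$ and $\hat{q}_{12} \coloneqq \choice{\hat{\reward}}(1 \mid \Trajectory_1, \Trajectory_2)$. By the definition~\eqref{eq:choice_prob_definition}, $\log\frac{q_{12}}{1 - q_{12}} = \RLReturn(\Trajectory_1) - \RLReturn(\Trajectory_2)$, so setting $x \coloneqq \hat{q}_{12} - q_{12}$ we get $g_{q_{12}}(0) = \RLReturn(\Trajectory_1) - \RLReturn(\Trajectory_2)$ and $g_{q_{12}}(x) = \hat{\RLReturn}(\Trajectory_1) - \hat{\RLReturn}(\Trajectory_2)$; moreover $f_{q_{12}}(\hat{q}_{12}) - \Ent(q_{12})$ is precisely $\KL\big(\choice{\reward}(\cdot \mid \Trajectory_1, \Trajectory_2) \,\|\, \choice{\hat{\reward}}(\cdot \mid \Trajectory_1, \Trajectory_2)\big)$ by the decomposition~\eqref{eq:KL_decomposition}.

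Given these identifications, the chain is mechanical. Starting from $\sigma$, Lemma~\ref{lem:sigma_to_delta} supplies a $\delta$, Lemma~\ref{lem:delta_to_mu} turns it into $\mu = 2\delta^2$, and Lemma~\ref{lem:mu_to_epsilon} turns that into $\epsilon$. Then $\distDKL{D}(\reward, \hat{\reward}) < \epsilon$ forces, for every pair $(\Trajectory_1, \Trajectory_2)$, first $f_{q_{12}}(\hat{q}_{12}) < \Ent(q_{12}) + \mu$ (Lemma~\ref{lem:mu_to_epsilon}), then $|x| = |\hat{q}_{12} - q_{12}| < \delta$ (Lemma~\ref{lem:delta_to_mu}), and finally $|g_{q_{12}}(x) - g_{q_{12}}(0)| < \sigma$ (Lemma~\ref{lem:sigma_to_delta}), which by the previous paragraph reads
\[
  \Big| \big(\hat{\RLReturn}(\Trajectory_1) - \hat{\RLReturn}(\Trajectory_2)\big) - \big(\RLReturn(\Trajectory_1) - \RLReturn(\Trajectory_2)\big) \Big| < \sigma .
\]
To extract a single additive constant from this pairwise estimate, I would fix any reference trajectory $\Trajectory_0$ and set $c \coloneqq \hat{\RLReturn}(\Trajectory_0) - \RLReturn(\Trajectory_0)$; specializing the display to $\Trajectory_2 = \Trajectory_0$ gives $\big|\RLReturn(\Trajectory) - (\hat{\RLReturn}(\Trajectory) - c)\big| < \sigma$ for all $\Trajectory$, i.e.\ $\big\|\RLReturn - (\hat{\RLReturn} - c)\big\|_{\infty} < \sigma$, as required.

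The main obstacle is making the threshold uniform over all trajectory pairs, since the $\delta$ produced by Lemma~\ref{lem:sigma_to_delta} depends on the base point $q$. Here I would use that $\Xi$ is finite (as $\States$, $\Actions$ and the horizon $T$ are finite): the choice probabilities $q_{12}$ take only finitely many values, each strictly inside $(0,1)$, so $\delta^\star \coloneqq \min_{q} \delta(q, \sigma) > 0$ is a legitimate uniform choice, and the downstream $\mu = 2(\delta^\star)^2$ and $\epsilon(\mu)$ are already $q$-independent in Lemmas~\ref{lem:delta_to_mu} and~\ref{lem:mu_to_epsilon}. One should also note that $\epsilon(\mu) = \mu \cdot \min_{\Trajectory_1, \Trajectory_2} D(\Trajectory_1) D(\Trajectory_2)$ is positive exactly when $D$ has full support on $\Xi$, which is what lets the expected KL control every individual pair; this is precisely the content of Lemma~\ref{lem:mu_to_epsilon} and requires no extra work beyond invoking it.
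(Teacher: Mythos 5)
Your proposal is correct and follows essentially the same route as the paper's own proof: the identical chain $\sigma \mapsto \delta \mapsto \mu \mapsto \epsilon$ through Lemmas~\ref{lem:sigma_to_delta}, \ref{lem:delta_to_mu}, and~\ref{lem:mu_to_epsilon}, the same identification of $g_q$ and $f_q$ with the logit and cross-entropy of the choice probabilities, and the same extraction of $c$ via a fixed reference trajectory (the paper likewise takes $\delta$ as a minimum over the finitely many pairs $(\Trajectory_1, \Trajectory_2)$, exactly as you do with $\delta^\star$). Your explicit remarks on why the minimum is strictly positive and on the full-support requirement for $D$ make implicit steps of the paper's argument precise, but introduce no new ideas.
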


\begin{proof}
  Set
  \begin{equation*}
  \delta \coloneqq \min_{\Trajectory_1, \Trajectory_2 \in \Xi \times \Xi} \delta \Big( \choice{\reward}(1 \mid \Trajectory_1, \Trajectory_2), \sigma \Big), \
    \mu \coloneqq \mu(\delta), \
    \epsilon \coloneqq \epsilon(\mu),
  \end{equation*}
  with the constants satisfying the properties from Lemmas~\ref{lem:sigma_to_delta},~\ref{lem:delta_to_mu}, and~\ref{lem:mu_to_epsilon}.
  Now, let $\hat{\reward}$ be such that $\distDKL{D}(\reward, \hat{\reward}) < \epsilon$.

  First of all, Lemma~\ref{lem:mu_to_epsilon} ensures that
  \begin{equation*}
    f_{\choice{\reward}(1 \mid \Trajectory_1, \Trajectory_2)}\big( \choice{\hat{\reward}}(1 \mid \Trajectory_1, \Trajectory_2) \big) < \Ent\big(\choice{\reward}(1 \mid \Trajectory_1, \Trajectory_2)  \big) + \mu
  \end{equation*}
  for all $\Trajectory_1, \Trajectory_2 \in \Xi$.
  Then Lemma~\ref{lem:delta_to_mu} shows that 
  \begin{equation*}
    \big| \choice{\hat{\reward}}(1 \mid \Trajectory_1, \Trajectory_2) - \choice{\reward}(1 \mid \Trajectory_1, \Trajectory_2) \big| < \delta
  \end{equation*}
  for all $\Trajectory_1, \Trajectory_2 \in \Xi$.
  From Lemma~\ref{lem:sigma_to_delta}, we obtain that
  \begin{equation}\label{eq:sigma_relation}
    \bigg| g_{\choice{\reward}(1 \mid \Trajectory_1, \Trajectory_2)}\Big( \choice{\hat{\reward}}\big(1 \mid \Trajectory_1, \Trajectory_2\big) - \choice{\reward}\big(1 \mid \Trajectory_1, \Trajectory_2\big)\Big) - g_{\choice{\reward}(1 \mid \Trajectory_1, \Trajectory_2)}(0) \bigg| < \sigma
  \end{equation}
  for all $\Trajectory_1, \Trajectory_2 \in \Xi$.
  Now, note that
  \begin{equation*}
    g_{\choice{\reward}(1 \mid \Trajectory_1, \Trajectory_2)}\Big( \choice{\hat{\reward}}\big(1 \mid \Trajectory_1, \Trajectory_2\big) - \choice{\reward}\big(1 \mid \Trajectory_1, \Trajectory_2\big)\Big) =
    g_{\choice{\hat{\reward}}(1 \mid \Trajectory_1, \Trajectory_2)}(0) .
  \end{equation*}
  Furthermore, for $\reward' \in \{\reward, \hat{\reward}\}$, Equation~\eqref{eq:choice_prob_definition} leads to the following computation:
  \begin{align*}
    g_{\choice{\reward'}(1 \mid \Trajectory_1, \Trajectory_2)}(0) &= 
    \log \frac{\choice{\reward'}(1 \mid \Trajectory_1, \Trajectory_2)}{\choice{\reward'}(2 \mid \Trajectory_1, \Trajectory_2)} \\
    & = \log \frac{\exp\big(\RLReturn'(\Trajectory_1)\big)}{\exp\big(\RLReturn'(\Trajectory_2)\big)} \\
    & = \RLReturn'(\Trajectory_1) - \RLReturn'(\Trajectory_2).
  \end{align*}
  Therefore, Equation~\eqref{eq:sigma_relation} results in
  \begin{equation*}
    \Big|\big( \hat{\RLReturn}(\Trajectory_1) - \RLReturn(\Trajectory_1) \big) - \big( \hat{\RLReturn}(\Trajectory_2) - \RLReturn(\Trajectory_2) \big) \Big| = 
    \Big|\big( \hat{\RLReturn}(\Trajectory_1) - \hat{\RLReturn}(\Trajectory_2) \big) - \big( \RLReturn(\Trajectory_1) - \RLReturn(\Trajectory_2) \big) \Big| < \sigma
  \end{equation*}
  for all $\Trajectory_1, \Trajectory_2 \in \Xi$.
  Now, let $\Trajectory^* \in \Xi$ be any reference trajectory.
  Define $c \coloneqq \hat{\RLReturn}(\Trajectory^*) - \RLReturn(\Trajectory^*)$. 
  Then the preceding equation shows that
  \begin{equation*}
    \big| \hat{\RLReturn}(\Trajectory) - \RLReturn(\Trajectory) - c \big| < \sigma
  \end{equation*}
  for all $\Trajectory \in \Xi$.
  That shows the claim.
\end{proof}

\begin{proof}[Proof of Proposition~\ref{pro:positive_result_KL_app}]
  We prove Proposition~\ref{pro:positive_result_KL_app} by chaining together the constants from the preceding results.
  We have $U \in (0, 1]$ given.
  Then, set $\sigma \coloneqq \sigma(U)$ and $\epsilon \coloneqq \epsilon(\sigma)$ as in Lemma~\ref{lem:U_to_sigma} and Corollary~\ref{cor:intermediate_corollary}.
  Now, let $\hat{\reward}$ be such that $\distDKL{D}(\reward, \hat{\reward}) < \epsilon$ and let $\hat{\pi} \in \Pi^*(\hat{\reward})$.
  Our goal is to show that $\Reg{\reward}{\hat{\pi}} < U$.

  By Corollary~\ref{cor:intermediate_corollary}, there is $c > 0$ such that $\max_{\xi \in \Xi} \big| \hat{\RLReturn}(\xi) - \RLReturn(\xi) - c \big| < \sigma$.
  Consequently, Lemma~\ref{lem:U_to_sigma} ensures that $\Reg{\reward}{\hat{\pi}} < U$.
  This was to show.
\end{proof}

\subsection{Positive result for regularized RLHF}\label{sec:positive_regularized}

Here, we present simple positive results for regularized RLHF, both in a version with the expected reward distance, and in a version using the distance in choice probabilities.
Some of it will directly draw from the positive results proved before.

\begin{theorem}
    \label{theorem:positive_result_KL_regularized_app}
  Let $\lambda \in (0, \infty)$ be given and fixed.
  Assume we are given an MDP $\MDP$, and a data distribution $D \in \SxA$ which assigns positive probability to all transitions, i.e., $\forall (s,a) \in \SxA,\ D(s,a) > 0$.
Let $\reg: \Policies \to \Reals$ be a continuous regularization function that has a reference policy $\policy_{\mref}$ as one of its minima.\footnote{E.g., if $\policy_{\mref}(a \mid s) > 0$ for all $(s, a) \in \SxA$ and $\reg(\policy) \coloneqq \DKL{\policy}{\policy_{\mref}}$, then the minimum is given by $\policy_{\mref}$.} 
Assume that $\policy_{\mref}$ is \emph{not} $(\lambda,\reg)$-optimal for $\reward$ and let $L = \Reg{\reward}{\policy_{\mref}}$. Then there exists $\epsilon > 0$ such that D $\in \safeDR$.
\end{theorem}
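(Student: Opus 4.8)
The plan is to mirror the Berge-theorem argument used for the unregularized positive result (Corollaries~\ref{cor:applying_maximum-theorem}--\ref{cor:distance_adaptation_cor}), but applied to the \emph{regularized} objective. First I would define $f: \Rewards \times \Pi \to \Reals$ by $f(\hat\reward, \policy) \coloneqq \J^{\hat\reward}(\policy) - \lambda \reg(\policy)$ together with the associated maximizer correspondence $\Pi^{\reg}_\lambda: \Rewards \rightrightarrows \Pi$, $\Pi^{\reg}_\lambda(\hat\reward) \coloneqq \argmax_{\policy \in \Pi} f(\hat\reward, \policy)$. Since $\J^{\hat\reward}(\policy) = \eta^\policy \cdot \vec{\hat\reward}$ is jointly continuous (Proposition~\ref{pro:berge_satisfied}) and $\reg$ is continuous by hypothesis, $f$ is continuous; taking the trivial correspondence $\C(\hat\reward) = \Pi$ (continuous, compact-valued, nonempty) and invoking Berge's maximum theorem (Theorem~\ref{thm:maximum_theorem}), $\Pi^{\reg}_\lambda$ is upper hemicontinuous, compact-valued, and has nonempty values.

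The conceptual heart of the argument --- and the step that differs from the unregularized case --- is to show that every $(\lambda,\reg)$-optimal policy for the \emph{true} reward $\reward$ already has regret strictly below $L = \Reg{\reward}{\policy_\mref}$. (In the unregularized case the reward-optimal policies had regret $0$; here they need not, so one must exhibit a strict gap.) Concretely, let $\policy^* \in \Pi^{\reg}_\lambda(\reward)$. Since $\policy_\mref$ is \emph{not} $(\lambda,\reg)$-optimal for $\reward$, it achieves strictly less than the maximum, so $\J^{\reward}(\policy^*) - \lambda \reg(\policy^*) > \J^{\reward}(\policy_\mref) - \lambda \reg(\policy_\mref)$. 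Rearranging and using that $\policy_\mref$ is a minimum of $\reg$, so $\reg(\policy^*) - \reg(\policy_\mref) \ge 0$, gives $\J^{\reward}(\policy^*) > \J^{\reward}(\policy_\mref) + \lambda(\reg(\policy^*) - \reg(\policy_\mref)) \ge \J^{\reward}(\policy_\mref)$, i.e.\ $\Reg{\reward}{\policy^*} < \Reg{\reward}{\policy_\mref} = L$. Thus $\Pi^{\reg}_\lambda(\reward) \subseteq V$, where $V \coloneqq \{\policy \in \Pi : \Reg{\reward}{\policy} < L\}$, and $V$ is open because $\Regf{\reward}$ is continuous (the MDP is non-trivial, which is in fact forced by the non-optimality of $\policy_\mref$).

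Next I would propagate this to nearby rewards via upper hemicontinuity exactly as in Corollary~\ref{cor:regret_bound_consequence}: since $\Pi^{\reg}_\lambda(\reward) \subseteq V$ with $V$ open, there is an open $W \subseteq \Rewards$ with $\reward \in W$ and $\Pi^{\reg}_\lambda(\hat\reward) \subseteq V$ for all $\hat\reward \in W$; hence every $(\lambda,\reg)$-optimal $\hat\policy$ for any $\hat\reward \in W$ satisfies $\Reg{\reward}{\hat\policy} < L$. Finally I would convert the Euclidean neighborhood into a $\distD{D}$-ball as in Corollary~\ref{cor:distance_adaptation_cor}: picking $\epsilon' > 0$ with $\B{\epsilon'}{\reward} \subseteq W$ and $c \coloneqq \min_{(s,a)} D(s,a) > 0$ (positive since $D$ is a positive distribution), the bound $c\|\reward - \hat\reward\| \le \distD{D}(\reward,\hat\reward)$ shows that any $\epsilon < \epsilon' c / \range\reward$ makes $\distD{D}(\reward,\hat\reward)/\range\reward \le \epsilon$ imply $\hat\reward \in W$. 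This is precisely the statement $D \in \safeDR$.

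The main obstacle is the middle step: one must recognize that, unlike the unregularized setting, the regularized-optimal policies of the true reward carry nonzero regret, so the conclusion $\Reg{\reward}{\hat\policy} < L$ can only hold because the two hypotheses on $\policy_\mref$ (being a $\reg$-minimizer and being non-optimal for $\reward$) jointly force a \emph{strict} separation $\Pi^{\reg}_\lambda(\reward) \subseteq V$ away from the closed set $\{\Reg{\reward}{\cdot} \ge L\}$; everything else is a direct reapplication of the Berge machinery already established in the excerpt.
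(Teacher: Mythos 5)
Your proposal is correct and follows essentially the same route as the paper's proof: the same Berge argument applied to $f(\hat\reward,\policy) = \J^{\hat\reward}(\policy) - \lambda\reg(\policy)$ with the trivial correspondence, the same key strict-separation step combining $\policy_\mref$'s non-optimality with its being a minimizer of $\reg$ to get $\Pi^{\reg}_\lambda(\reward) \subseteq \{\Regf{\reward} < L\}$, and the same conversion of the Euclidean neighborhood into a $\distD{D}$-ball via $c = \min_{(s,a)} D(s,a) > 0$. If anything, you make explicit two points the paper leaves implicit --- the norm-equivalence constant in the final step and the observation that non-optimality of $\policy_\mref$ forces non-triviality of $\reward$ --- but the argument is the same.
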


\begin{proof}
  We prove the theorem by showing that for every $D \in \DistributionsOverSet{\SxA}$ such that $D(s, a) > 0$ for all $(s, a) \in \SxA$, there exists $\epsilon > 0$ such that for all $\hat{\reward}$ with $\Expect{(s, a) \sim D}{\frac{|\hat{\reward}(s, a) - \reward(s, a)|}{\range \reward}} < \epsilon$ and all policies $\hat{\policy}$ that are $(\lambda, \reg)$-RLHF optimal wrt. $\hat{\reward}$, we have $\Reg{\reward}{\hat{\policy}} < \Reg{\reward}{\policy_{\mref}}$. Because $L = \Reg{\reward}{\hat{\policy}} < \Reg{\reward}{\policy_{\mref}}$ this proves that then $D \in \safeDR$.

  The proof is an application of Berge's maximum Theorem, Theorem~\ref{thm:maximum_theorem}.
  Namely, define the function
  \begin{equation*}
    f: \Rewards \times \Policies \to \Reals, \quad f(\reward, \policy) \coloneqq \J_{\reward}(\policy) - \lambda \reg(\policy).
  \end{equation*} 
  Furthermore, define the correspondence $C: \Rewards \rightrightarrows \Policies$ as the trivial map $C(\reward) = \Policies$.
  Let $f^*: \Rewards \to \Reals$ map a reward function to the value of a $(\lambda, \reg)$-RLHF optimal policy, i.e., $f^*(\reward) \coloneqq \max_{\policy \in \Policies} f(\reward, \policy)$.
  Define $C^*$ as the corresponding argmax, i.e., $C^*(\reward) \coloneqq \big\lbrace \policy \mid f(\reward, \policy) = f^*(\reward) \big\rbrace$.
  Assume on $\Rewards$ we have the standard Euclidean topology.
  Since $\reg$ is assumed continuous and by Proposition~\ref{pro:berge_satisfied} also $\J$ is continuous, it follows that $f$ is continuous.
  Thus, Theorem~\ref{thm:maximum_theorem} implies that $C^*$ is upper hemicontinuous, see Definition~\ref{def:upper_lower_hemi}.
  The rest of the proof is simply an elaboration of why upper hemicontinuity of $C^*$ gives the result.

  Now, define the set
  \begin{equation*}
    \mathcal{V} \coloneqq \big\lbrace \policy' \in \Policies \mid \Reg{\reward}{\policy'} < \Reg{\reward}{\policy_{\mref}} \big\rbrace.
  \end{equation*}
  Since the regret is a continuous function, this set is open.
  Now, let $\policy \in C^*(R)$ be $(\lambda, \reg)$-RLHF optimal with respect to $\reward$.
  It follows
  \begin{align*}
    \J_{\reward}(\policy) &= f(\reward, \policy) + \lambda \reg(\policy) \\
    &> f(\reward, \policy_{\mref}) + \lambda \omega(\policy_{\mref}) \\
    &= \J_{\reward}(\policy_{\mref}),
  \end{align*}
  where we used the optimality of $\policy$ for $f$, that $\policy_{\mref}$ is not optimal for it, and that $\policy_{\mref}$ is the minimum of $\omega$.
  So overall, this shows $C^*(\reward) \subseteq \mathcal{V}$.
  
  Since $C^*$ is upper hemicontinuous, this means there exists an open set $\mathcal{U} \subseteq \Rewards$ with $\reward \in \mathcal{U}$ and such that for all $\hat{\reward} \in \mathcal{U}$, we have $C^*(\hat{\reward}) \subseteq \mathcal{V}$.
  Let $\epsilon > 0$ be so small that all reward functions $\hat{\reward}$ with $\Expect{(s, a) \sim D}{\frac{|\hat{\reward}(s, a) - \reward(s, a)|}{\range \reward}} < \epsilon$ satisfy $\hat{\reward} \in \mathcal{U}$ --- which exists since $\mathcal{U}$ is open in the Euclidean topology. 
  Then for all such $\hat{\reward}$ and any policy $\hat{\policy}$ that is $(\lambda, \reg)$-RLHF optimal wrt. $\hat{\reward}$, we by definition have
  \begin{equation*}
    \hat{\policy} \in C^*(\hat{\reward}) \subseteq \mathcal{V},
  \end{equation*}
  and thus, by definition of $\mathcal{V}$, the desired regret property.
  This was to show.
\end{proof}

Now, we show the same result, but with the choice distance instead of expected reward distance:

\begin{theorem}
  \label{thm:new_positive_result_choice_and_omega}
  Let $\lambda \in (0, \infty)$ be given and fixed.
  Assume we are given an MDP $\MDP$, and a data distribution $D \in \SxA$ which assigns positive probability to all transitions, i.e., $\forall (s,a) \in \SxA,\ D(s,a) > 0$.
Let $\reg: \Policies \to \Reals$ be a continuous regularization function that has a reference policy $\policy_{\mref}$ as one of its minima. 
Assume that $\policy_{\mref}$ is \emph{not} $(\lambda,\reg)$-optimal for $\reward$ and let $L = \Reg{\reward}{\policy_{\mref}}$. Then there exists $\epsilon > 0$ such that $D \in \mathrm{\mathbf{safe}}^{\mathbb{D_{\text{KL}}}}\big(\reward, \epsilon, L, \lambda, \omega \big)$.
\end{theorem}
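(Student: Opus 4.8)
The plan is to prove Theorem~\ref{thm:new_positive_result_choice_and_omega} by mimicking the structure of the proof of Theorem~\ref{theorem:positive_result_KL_regularized_app}, replacing the role of the expected-reward distance $\distD{D}$ with the choice distance $\distDKL{D}$, and importing the technical machinery already developed in Section~\ref{sec:Safe_Optimization_Choice_Probabilities} to translate closeness in choice probabilities into closeness in reward. The core observation is that both positive results rest on the same topological fact: by Berge's maximum theorem (Theorem~\ref{thm:maximum_theorem}), the $(\lambda,\reg)$-optimal correspondence $C^*(\hat\reward) = \argmax_{\policy}\J_{\hat\reward}(\policy)-\lambda\reg(\policy)$ is upper hemicontinuous in $\hat\reward$ with respect to the Euclidean topology on $\Rewards$. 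The only thing that changes between the two theorems is how we guarantee that $\hat\reward$ lies in a small Euclidean neighborhood of $\reward$ given a bound on the distance measure.

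First I would set up exactly the same Berge apparatus as in the proof of Theorem~\ref{theorem:positive_result_KL_regularized_app}: define $f(\reward,\policy)\coloneqq\J_\reward(\policy)-\lambda\reg(\policy)$, the trivial correspondence $C(\reward)\coloneqq\Policies$, and the maximizer correspondence $C^*$. Since $\reg$ is continuous by assumption and $\J$ is continuous by Proposition~\ref{pro:berge_satisfied}, $f$ is continuous, so Theorem~\ref{thm:maximum_theorem} gives upper hemicontinuity of $C^*$. Define the open set $\mathcal{V}\coloneqq\{\policy'\in\Policies \mid \Reg{\reward}{\policy'}<\Reg{\reward}{\policy_{\mref}}\}$, which is open because regret is continuous. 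The same computation as before --- using that any $\policy\in C^*(\reward)$ strictly beats $\policy_{\mref}$ in $f$ (because $\policy_{\mref}$ is not $(\lambda,\reg)$-optimal but minimizes $\reg$), hence $\J_\reward(\policy)>\J_\reward(\policy_{\mref})$ --- shows $C^*(\reward)\subseteq\mathcal{V}$. Upper hemicontinuity then yields an open Euclidean neighborhood $\mathcal{U}\subseteq\Rewards$ of $\reward$ with $C^*(\hat\reward)\subseteq\mathcal{V}$ for all $\hat\reward\in\mathcal{U}$.

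The step that genuinely differs, and which I expect to be the main obstacle, is producing an $\epsilon>0$ such that $\distDKL{D}(\reward,\hat\reward)<\epsilon$ forces $\hat\reward\in\mathcal{U}$. Unlike the mean-absolute-error case, the map $\hat\reward\mapsto\distDKL{D}(\reward,\hat\reward)$ is not a norm and does not directly control the Euclidean distance $\|\reward-\hat\reward\|$; moreover it is invariant under adding a constant to $\hat\reward$ (choice probabilities depend only on return \emph{differences}), so closeness in $\distDKL{D}$ only pins $\hat\reward$ down up to an additive constant. The resolution is to invoke Corollary~\ref{cor:intermediate_corollary}: for the appropriate $\sigma$, small $\distDKL{D}(\reward,\hat\reward)$ guarantees the existence of a constant $c$ with $\|\RLReturn-(\hat\RLReturn-c)\|_\infty<\sigma$. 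Since regret, the value function, and the $(\lambda,\reg)$-optimal policies are all invariant under adding a constant to the reward, I would work with the shifted reward $\hat\reward-\tilde c$ (where $\tilde c$ is the per-step constant corresponding to $c$, as in the proof of Lemma~\ref{lem:U_to_sigma}) in place of $\hat\reward$; this shifted reward has bounded return difference from $\reward$, which in the finite-horizon setting translates to a bounded Euclidean reward distance, placing $\hat\reward-\tilde c$ inside $\mathcal{U}$ for $\sigma$ chosen small enough.

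Finally I would assemble the constants in the correct order: given $L=\Reg{\reward}{\policy_{\mref}}$ and the open neighborhood $\mathcal{U}$, choose a Euclidean radius so that the ball around $\reward$ lies in $\mathcal{U}$; translate this radius into an $\infty$-bound $\sigma$ on return differences (using that the finite-horizon return is a fixed linear function of the reward vector, so the two norms are comparable up to an MDP-dependent constant); then set $\epsilon\coloneqq\epsilon(\sigma)$ via Corollary~\ref{cor:intermediate_corollary}. For any $\hat\reward$ with $\distDKL{D}(\reward,\hat\reward)<\epsilon$ and any $(\lambda,\reg)$-optimal $\hat\policy$, the shift-invariance gives $\hat\policy\in C^*(\hat\reward-\tilde c)\subseteq\mathcal{V}$, hence $\Reg{\reward}{\hat\policy}<\Reg{\reward}{\policy_{\mref}}=L$, which is exactly the statement that $D\in\mathrm{\mathbf{safe}}^{\mathbb{D_{\text{KL}}}}\big(\reward,\epsilon,L,\lambda,\omega\big)$. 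The one delicate bookkeeping point is ensuring the shift constant $c$ is handled consistently between the Berge argument (which needs a genuine reward in $\mathcal{U}$) and the choice-distance bound (which only controls differences); isolating this via the translation-invariance of both $C^*$ and the regret is what makes the argument go through cleanly.
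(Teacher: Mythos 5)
Your argument is correct in substance and reaches the right conclusion, but it takes a genuinely different route from the paper's. The paper applies Berge's maximum theorem (\Cref{thm:maximum_theorem}) directly on the space of \emph{return functions} $\RLReturns = \Reals^{\Xi}$ with the sup norm: it sets $f(\RLReturn, \policy) = \J^{\RLReturn}(\policy) - \lambda \reg(\policy)$, gets upper hemicontinuity of the maximizer correspondence $C^*: \RLReturns \rightrightarrows \Policies$, shows $C^*(\RLReturn) \subseteq \mathcal{V}$, extracts $\sigma > 0$ with $\big\| \RLReturn - \hat{\RLReturn} \big\|_\infty < \sigma \Rightarrow C^*(\hat{\RLReturn}) \subseteq \mathcal{V}$, and closes with \Cref{cor:intermediate_corollary} together with the shift-invariance $C^*(\hat{\RLReturn}) = C^*(\hat{\RLReturn} - c)$. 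You instead rerun the reward-space Berge argument of \Cref{theorem:positive_result_KL_regularized_app} verbatim and interpose an extra conversion step: turning the sup-norm bound on return differences from \Cref{cor:intermediate_corollary} into a Euclidean bound on the shifted reward $\hat{\reward} - \tilde{c}$, so that it lands in the Euclidean neighborhood $\mathcal{U}$. What the paper's formulation buys is precisely the avoidance of that conversion --- by taking $\Theta = \RLReturns$ it never needs to invert the linear map $\reward \mapsto \RLReturn_{\reward}$. Your identification of the additive-constant issue, and your resolution via the per-step constant $\tilde{c}$ and translation invariance of regret and of the regularized argmax, matches the paper's final step exactly.

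The one place your write-up needs care is the asserted norm comparability, i.e.\ that $\big\| \RLReturn_{\reward} - \RLReturn_{\hat{\reward} - \tilde{c}} \big\|_\infty < \sigma$ yields $\big\| \reward - (\hat{\reward} - \tilde{c}) \big\| \leq K \sigma$ for an MDP-dependent $K$. This holds if and only if the reward-to-return map is injective. Injectivity does hold when $\Xi$ is the full combinatorial set of length-$T$ sequences (differencing two trajectories that disagree at a single state or action pins down $\reward$ completely), and that reading is the one consistent with \Cref{lem:mu_to_epsilon}, whose constant $\min_{\Trajectory_1, \Trajectory_2} D(\Trajectory_1) \cdot D(\Trajectory_2)$ must be positive. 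But if $\Xi$ were restricted to dynamically feasible trajectories, the map can have a nontrivial kernel and your inequality fails as stated. The repair is routine: returns, hence $\J$, hence $C^*$ are constant along the kernel, so one may replace $\hat{\reward} - \tilde{c}$ by its projection modulo the kernel, on whose complement comparability does hold. You should either state this explicitly or adopt the paper's device of running Berge on return space, which sidesteps the inversion entirely.
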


\begin{proof}
  Let $\RLReturns \coloneqq \Reals^{\Xi}$ be the vector space of return functions, which becomes a topological space when equipped with the infinity norm.
  Define the function
  \begin{align*}
    & f  : \RLReturns \times \Policies \to \Reals, \quad f(\RLReturn, \policy) \coloneqq \J^{\RLReturn}(\policy) - \lambda \omega(\policy),
  \end{align*}
  where $\J^{\RLReturn}(\policy) \coloneqq \Expect{\xi \sim \policy}{\RLReturn(\xi)}$ is the policy evaluation function of the return function $\RLReturn$.
  $f$ is continuous.
  Define the correspondence $C: \RLReturns \rightrightarrows \Policies$ as the trivial map $C(\RLReturn) = \Policies$.
  Let $f^*: \RLReturns \to \Reals$ map a return function to the value of a $(\lambda, \omega)$-optimal policy, i.e., $f^*(\RLReturn) \coloneqq \max_{\policy \in \Policies} f(\RLReturn, \policy)$.
  Define $C^*$ as the corresponding argmax.
  Then Theorem~\ref{thm:maximum_theorem} implies that $C^*$ is upper hemicontinuous, see Definition~\ref{def:upper_lower_hemi}.
  As in the previous proof, the rest is an elaboration of why this gives the desired result.

  Set $\RLReturn$ as the return function corresponding to $\reward$.
  Define
  \begin{equation*}
    \mathcal{V} \coloneqq \big\lbrace \policy' \in \Policies \ | \ \Reg{\reward}{\policy'} < L \big\rbrace.
  \end{equation*}

  We now claim that $C^*(\RLReturn) \subseteq \mathcal{V}$.
  Indeed, let $\policy \in C^*(\RLReturn)$.
  Then
  \begin{align*}
    \J^{\reward}(\policy) &= f(\RLReturn, \policy) + \lambda \omega(\policy) \\
    & > f(\RLReturn, \policy_{\mref}) + \lambda \omega(\policy_{\mref}) \\
    &= \J^{\reward}(\policy_{\mref}).
  \end{align*}
  Note that we used the optimality of $\policy$ for $f$, that $\policy_{\mref}$ is not optimal for it, and also that $\policy_{\mref}$ minimizes $\omega$ by assumption.
  This shows $\Reg{\reward}{\policy} < \Reg{\reward}{\policy_{\mref}} = L$, and thus the claim.

  Since $C^*$ is upper hemicontinuous and $\mathcal{V}$ an open set, this implies that there exists $\sigma > 0$ such that for all $\hat{\RLReturn} \in \RLReturns$ with $\big\| \RLReturn - \hat{\RLReturn}  \big\|_{\infty} < \sigma$, we have $C^*(\hat{\RLReturn}) \subseteq \mathcal{V}$.

  Now, define $\epsilon \coloneqq \epsilon(\sigma)$ as in Corollary~\ref{cor:intermediate_corollary} and let $\hat{\reward}$ be any reward function with $\distDKL{D}(\reward, \hat{\reward}) < \epsilon$.
  Then by that corollary, there exists $c \in \Reals$ such that $\big\|  \RLReturn - \big( \hat{\RLReturn} - c \big) \big\|_{\infty} < \sigma$.
  Consequently, we have $C^*(\hat{\RLReturn}) = C^*(\hat{\RLReturn} - c) \subseteq \mathcal{V}$ by what we showed before, which shows the result.
\end{proof}


\end{document}
